 \documentclass[mnsc,nonblindrev]{informs3} 

\OneAndAHalfSpacedXI                                            

\usepackage{natbib}
 \bibpunct[, ]{(}{)}{,}{a}{}{,}%
 \def\bibfont{\small}%
 \def\bibsep{\smallskipamount}%
 \def\bibhang{24pt}%
\TheoremsNumberedThrough     
\ECRepeatTheorems

\EquationsNumberedThrough    

\MANUSCRIPTNO{}

\usepackage{bbm,cases,xr}
\usepackage{amsfonts, amssymb, amsmath,comment}
\usepackage{blindtext}
\usepackage[ruled,vlined]{algorithm2e}
\usepackage{graphicx}
\usepackage{caption}
\usepackage{subcaption}
\usepackage[inline]{enumitem}
\usepackage{hyperref}
\usepackage{url}
\usepackage[title]{appendix}
\usepackage{cleveref}
\usepackage{mathrsfs}
\usepackage{xcolor}
\usepackage{thmtools, thm-restate}
\usepackage{amsmath}
\usepackage{amssymb}
\usepackage{mathtools}
\usepackage{minitoc}
\usepackage{tikz}
\usepackage{booktabs}
\usepackage{pifont}   
\newcommand{\cmark}{\ding{51}}
\newcommand{\xmark}{\ding{55}}
\newcommand{\cxmark}{{\ding{51}{\small\kern-0.7em\ding{55}}}}

\usetikzlibrary{arrows.meta, positioning, calc}

\newcommand{\bx} {{\boldsymbol{x}}}
\newcommand{\ba} {{\boldsymbol{a}}}
\newcommand{\bA} {{\boldsymbol{A}}}
\newcommand{\bB} {{\boldsymbol{B}}}

\newcommand{\bv} {{\boldsymbol{v}}}
\newcommand{\bp} {{\boldsymbol{p}}}

\newcommand{\bz} {{\boldsymbol{z}}}
\newcommand{\bzt}{{\boldsymbol{z}_t}}

\newcommand{\btheta} {{\boldsymbol{\theta}}}

\newcommand{\phigh}{p_h}
\newcommand{\plow}{p_l}
\newcommand{\Real}{\mathbb{R}}
\newcommand{\AssortSet}{\mathscr{S}}
\newcommand{\PriceSet}{\mathscr{P}}
\newcommand{\PolicySet}{\mathcal{A}}
\newcommand{\ChoiceProb}{q}
\newcommand{\ChoiceProbt}[2]{\ChoiceProb_{t}(#2;#1)}
\newcommand{\ChoiceProbn}[2]{\ChoiceProb_{n}(#2;#1)}
\newcommand{\PMNL}{\texttt{PMNL}}
\newcommand{\MNL}{\texttt{MNL}}
\newcommand{\FM}{\texttt{FM23}}

\newcommand{\Poi}{\texttt{Poi}}
\newcommand{\BaseArrival}{\Lambda}
\newcommand{\Arrival}{\Lambda}

\newcommand{\Feature}{{\bz}}
\newcommand{\Featuret}{{\bzt}}
\newcommand{\DimFeature}{{d_\bz}}
\newcommand{\DimRank}{{d_\bx}}
\newcommand{\xs}{\bx(S_s, \bp_s)}

\newcommand{\Choice}[2]{{C_{#1}^{(#2)}}}
\newcommand{\Choiceti}{{\Choice{t}{i}}}
\newcommand{\LowerC}[1]{c_{l,#1}}
\newcommand{\revenue}[2][]{%
  \if\relax\detokenize{#2}\relax
    {{r(S, \bp, \bz_t; #2)}}%
  \else
    {r(S_{#1}, \bp_{#1}, \bz_t; #2)}%
  \fi
}
\newcommand{\Revenue}[1][]{%
  \if\relax\detokenize{#1}\relax
    {{R_t(S, \bp)}}%
  \else
    {R_t(S_{#1}, \bp_{#1})}%
  \fi
}
\newcommand{\RevenueUCB}[1][]{%
  \if\relax\detokenize{#1}\relax
    {{\bar{R}_t(S, \bp)}}%
  \else
    {\bar{R}_t(S_{#1}, \bp_{#1})}%
  \fi
}

\newcommand{\hv}[1][]{%
  \if\relax\detokenize{#1}\relax
    \widehat{\bv}
  \else
    \widehat{\bv}_{#1}
  \fi
}

\newcommand{\htheta}[1][]{%
  \if\relax\detokenize{#1}\relax
    \widehat{\btheta}
  \else
    \widehat{\btheta}_{#1}
  \fi
}

\newcommand{\Lik}[1]{\gL_{#1}(\btheta, \bv) }
\newcommand{\LikPoi}[1]{\mathcal{L}^{\Poi}_{#1}(\btheta) }
\newcommand{\likPoi}[1]{\ell^{\Poi}_{#1}(\btheta) }
\newcommand{\LikMNL}[1]{\gL^{\MNL}_{#1}(\bv) }
\newcommand{\likMNL}[1]{\ell^{\MNL}_{#1}(\bv) }

\definecolor{myblue}{RGB}{0, 51, 153}

\crefname{assumption}{assumption}{assumptions}
\Crefname{assumption}{Assumption}{Assumptions}
\newtheorem{instance}{Instance}[section]

\usepackage{amsmath,amsfonts,bm}

\def\eqref#1{(\ref{#1})}

\def\ceil#1{\lceil #1 \rceil}

\def\1{\bm{1}}

\DeclareMathAlphabet{\mathsfit}{\encodingdefault}{\sfdefault}{m}{sl}
\SetMathAlphabet{\mathsfit}{bold}{\encodingdefault}{\sfdefault}{bx}{n}

\def\gE{{\mathcal{E}}}

\def\gL{{\mathcal{L}}}

\newcommand{\E}{\mathbb{E}}

\newcommand{\R}{\mathbb{R}}

\newcommand{\ran}[1]{{\color{red}  #1 [Ran]}}

\newenvironment{carlist}
 {\begin{list}{$\bullet$}
 {\setlength{\topsep}{0in} \setlength{\partopsep}{0in}
  \setlength{\parsep}{0in} \setlength{\itemsep}{\parskip}
  \setlength{\leftmargin}{0.07in} \setlength{\rightmargin}{0.08in}
  \setlength{\listparindent}{0in} \setlength{\labelwidth}{0.08in}
  \setlength{\labelsep}{0.1in} \setlength{\itemindent}{0in}}}
 {\end{list}}

\newcommand{\bcar}{\begin{carlist}}
\newcommand{\ecar}{\end{carlist}}
\makeatletter
\long\def\@makecaption#1#2{
        \vskip 0.8ex
        \setbox\@tempboxa\hbox{\small {\bf #1:} #2}
        \parindent 1.5em  
        \dimen0=\hsize
        \advance\dimen0 by -3em
        \ifdim \wd\@tempboxa >\dimen0
                \hbox to \hsize{
                        \parindent 0em
                        \hfil 
                        \parbox{\dimen0}{\def\baselinestretch{0.96}\small
                                {\bf #1.} #2
                                } 
                        \hfil}
        \else \hbox to \hsize{\hfil \box\@tempboxa \hfil}
        \fi
        }
\makeatother

\newcommand{\Prob}{\ensuremath{\mathbb{P}}}
\newcommand{\Exs}{\ensuremath{\mathbb{E}}}
\newcommand{\Exp}{\ensuremath{\Exs}}

\newcommand{\Regret}[2]{\ensuremath{\mathcal{R}^{#2}(#1)}}

\usepackage{algorithmic}
\newcommand{\matsnorm}[2]{\left|\!\left|\!\left| #1 \right|\!\right|\!\right|_{{#2}}}

\newcommand{\opnorm}[1]{\ensuremath{\matsnorm{#1}{\text{\mbox{op}}}}}

\newcommand{\defn}{\ensuremath{:\,\! =}}

\let\svthefootnote\thefootnote
\newcommand\blfootnote[1]{%
  \let\thefootnote\relax%
  \footnotetext{#1}%
  \let\thefootnote\svthefootnote%
}

\begin{document}



\RUNTITLE{Poisson MNL}

\TITLE{Poisson-MNL Bandit: \\Nearly Optimal Dynamic Joint Assortment and Pricing with Decision-Dependent Customer Arrivals}

\ARTICLEAUTHORS{%
\AUTHOR{Junhui Cai}
\AFF{Department of Information Technology, Analytics, and Operations,
University of Notre Dame,
\EMAIL{jcai2@nd.edu}}
\AUTHOR{Ran Chen}
\AFF{Department of Statistics and Data Science, Washington University in St. Louis,
\EMAIL{ran.c@wustl.edu}}
\AUTHOR{Qitao Huang}
\AFF{Department of  Mathematics, Tsinghua University,
\EMAIL{qitaohuang@tsinghua.edu.cn}}
\AUTHOR{Linda Zhao}
\AFF{Department of Statistics and Data Science,
University of Pennsylvania,
\EMAIL{lzhao@wharton.upenn.edu}}
\AUTHOR{Wu Zhu}
\AFF{Department of Finance, Tsinghua University,
\EMAIL{zhuwu@sem.tsinghua.edu.cn}}
\footnote{Authorship is in alphabetic order.}

} 

\ABSTRACT{ 
We study dynamic joint assortment and pricing where a seller updates decisions at regular accounting/operating intervals to maximize the cumulative per-period revenue over a horizon 
$T$. 
In many settings, assortment and prices affect not only what an arriving customer buys but also how many customers arrive within the period, whereas classical multinomial logit (MNL) models assume arrivals as fixed, potentially leading to suboptimal decisions. 
We propose a Poisson–MNL model that couples a contextual MNL choice model with a Poisson arrival model whose rate depends on the offered assortment and prices.
Building on this model, we develop an efficient algorithm $\PMNL$ based on the idea of upper confidence bound (UCB). We establish its (near) optimality by proving a non-asymptotic regret bound of order $\sqrt{T\log{T}}$ and a matching lower bound (up to $\log{T}$). 
Simulation studies underscore the importance of accounting for the dependency of arrival rates on assortment and pricing: $\PMNL$ effectively learns customer choice and arrival models and provides joint assortment-pricing decisions that outperform others that assume fixed arrival rates.
}

\KEYWORDS{
 contextual bandits; 
 dynamic assortment; 
 dynamic pricing;
 customer arrival;
  on-line decision-making.
} 

\maketitle
\thispagestyle{empty}
\newpage
\setcounter{page}{1}

\pagebreak
\section{Introduction}
\label{sec:intro}

Assortment (what products to offer) and pricing (at what prices) are among the central decision problems in revenue management. 
In many retail and platform settings, these decisions are made at regular ``accounting'' intervals---daily, weekly, or on other operational cycles \citep{ma2018dynamic, brown2023competition, aparicio2023algorithmic}. Over such a period, the offered assortment and prices influence revenue through two channels: \emph{how many customers arrive} and \emph{what those customers purchase}.
Individual customer purchase behavior is often modeled via discrete choice models, most notably the multinomial logit (MNL) models, focusing on optimizing expected revenue per arriving customer, while the arrival process is typically taken as fixed. 
In practice, however, customer arrivals are affected by assortment and pricing decisions: a more attractive selection or more competitive prices can pull in additional traffic \citep{kahn1995consumer, wang2021consumer}. 
Therefore, models that ignore such decision-dependent arrivals can lead to suboptimal decisions for per-period revenue maximization.

In this paper, we incorporate customer arrivals into the choice-based modeling for the dynamic joint assortment-pricing problem. 
Our goal is to sequentially determine assortment and pricing decisions $\{S_t, \bp_t\}_{t=1}^T$ 
among $N$ products to maximize cumulative expected revenue over a time horizon $T$.
Under a classical MNL model, given an assortment set $S\subseteq [N] = \{1,2,\ldots,N\}$ and prices $\bp = \{p_j\}_{j\in S}$, an arriving customer purchases product $j\in S$ with probability $\ChoiceProb_j(S, \bp) = \frac{\exp(v_j - p_j)}{1+\sum_{k\in S} \exp(v_k - p_k)}$, where $v_j$ denotes the intrinsic value of product $j$ and can be further contextualized by $\DimFeature$-dimensional product features $\Feature$.
Consequently, the classical approach aims at maximizing the expected customer-wise cumulative reward based on the \emph{per-customer} expected reward $\sum_{j\in S} \ChoiceProb_j(S, \bp)p_j$.
However, a key aspect is overlooked: the number of customer arrivals in each period is random and depends on the assortment-pricing decision.
To capture the effect of assortment and pricing on arrivals, we model the number of arrivals in each period as a Poisson count with mean proportional to a decision-dependent arrival rate $\lambda(S, \bp)$, i.e., the arrival rate is a function of the assortment $S$ and prices $\bp$. 
Combining the arrival model and choice model together, given an assortment $S$ and prices $\bp$, the conditional mean of the \emph{per-period} reward for this period is:
\begin{align*}
    \E[R | S, \bp\,] \propto \lambda(S, \bp) \cdot \sum_{j\in S} \ChoiceProb_j(S, \bp)p_j.
\end{align*}

This intuitive form highlights the coupling between per-customer revenue and per-period revenue induced by an assortment-pricing decision: a decision can improve conversion or margins yet reduce customer arrivals enough to lower total period revenue; conversely, one that appears worse on a per-customer revenue can be optimal if it attracts substantially more customers. For example, discounting a ``magnet'' product may reduce sales of other products through substitution, but still increase overall revenue by drawing in more arrivals during the period; similarly, expanding the assortment by adding an item may dilute purchase probabilities, yet increase arrivals by making the offer set more attractive.
These examples underscore that the effect of assortment and pricing on arrivals can take many forms.
The key, then, is to develop a flexible model for the arrival rate that captures the dependency of arrivals on the assortment-pricing decision. 
To this end, instead of confining this dependency to a specific functional form, we parametrize $\lambda(S, \bp)$ through a rich set of basis functions of $(S, \bp)$, allowing the model to be expressive enough to accommodate a wide range of dependence structures. 

Since neither customer arrival nor choice behavior is known a priori, both must be learned from data generated under past decisions while maximizing cumulative reward over time. 
Specifically, in each period, the firm chooses an assortment and prices, observes the realized customer arrivals and purchases, and updates its inference on the model parameters using all available observations.
Such an online learning problem falls under the umbrella of \textit{bandit problems} \citep{lattimore2020bandit}, where a key challenge is to balance ``exploration'' and ``exploitation'' over action space, i.e., feasible assortments and prices, so as to maximize cumulative reward, or equivalently, minimize \textit{cumulative regret} relative to an oracle policy that knows the model parameters and always selects the optimal action. 
We propose a new algorithm, Poisson-MNL (\PMNL), which jointly learns the parameters of the Poisson arrival model and the MNL choice model and provides dynamic assortment and pricing decisions using an upper confidence bound (UCB) approach.
We establish regret bounds showing that $\PMNL$ is optimal up to a logarithmic factor in the time horizon $T$ 
and show in simulation studies that it outperforms benchmarks that assume a fixed arrival rate when customer arrival is influenced by the offered assortment and pricing.

Let us summarize some of our main contributions:
\begin{enumerate}
\item \textbf{A decision-dependent arrival-choice model for joint assortment-pricing.} 
We propose a Poisson-multinomial logit model (Poisson-MNL) that captures customer \emph{arrivals} and customer \emph{choices}.
Specifically, we model customer arrivals within each decision period via a Poisson model with a rate that depends on the offered assortment and corresponding prices, while we model customer purchase behavior using a contextual MNL model incorporating product features and prices. 
This formulation captures how the assortment-pricing decision influence both the volume of customer arrivals and their purchase outcomes. 
Ignoring the dependency of customer arrivals on assortment and pricing, as in classical MNL models, can lead to suboptimal policies when decisions are made per-period rather than per-customer. 

Our model is also flexible and expressive.
For the arrival model, we do not impose a pre-specified relationship between the arrival rate and the assortment-pricing decision. Instead, it suffices to identify a rich enough set of basis functions such that the log arrival rate is linear in these bases.
For the choice model, we allow products to be characterized by a set of observable features, enabling learning at the attribute level and generalization to newly introduced products.
We further allow these features (e.g., customer ratings) to change over time. 
Our framework nests the standard MNL model as a special case when the arrival rate is assumed to be fixed (i.e., the arrival model coefficients are zero).

\item \textbf{A nearly optimal efficient algorithm ($\PMNL$) for online learning.}
We develop an efficient online algorithm that sequentially provides joint assortment-pricing decisions based on observations available up to each period so as to maximize the cumulative expected revenue over a given time horizon $T$. 
The challenge lies in efficiently learning both the Poisson arrival and the MNL choice parameters based on highly dependent observations, while strategically taking actions (joint assortment-pricing) that balance the exploration-exploitation trade-off.

Our algorithm adopts a two-stage design based on maximum likelihood estimation (MLE). The first stage conducts $O(\log T)$ rounds of exploration to obtain sufficiently accurate initial parameter estimates. In the second stage, we take an upper confidence bound (UCB) approach to explore and exploit: in each round, it selects an assortment and prices by maximizing an upper confidence bound on the per-period expected reward.

Constructing both the estimator and the upper bound are substantially more challenging than in standard MNL bandit models, due to three intertwined difficulties: (i) Poisson arrivals introduce unbounded and non-sub-Gaussian random variables that requires new tools beyond those used in standard MNL analyses; (ii) arrival randomness complicates the analysis of the choice model parameter estimation by bringing in extra randomness to the observations of purchase outcomes in addition to the choice randomness; and (iii) the unknown arrival parameters enter MNL error bounds, whereas the algorithm requires confidence bounds to be free from these unknowns. We address these challenges by leveraging concentration inequalities for martingales with increments satisfying Bernstein conditions; carefully designing and analyzing estimators and statistics whose randomness is dominated by that induced by the choice model, and deriving fully data-dependent error bounds. 
We further sharpen the constants in the error bounds with new analytical tools, resulting in significantly improved practical performance.

\item \textbf{Non-asymptotic regret bounds.} 
We establish a non-asymptotic upper bound for the expected regret of $\PMNL$ of order $\sqrt{T\log{T}}$ for all $T$, and a matching non-asymptotic lower bound of the order $\Omega(\sqrt{T})$, implying that our algorithm is nearly minimax optimal. 

To our knowledge, both bounds are the first of their kind for dynamic joint assortment and pricing with contextual information and decision-dependent arrivals. To compare with classical MNL bounds, note that under specific configurations of assumptions for our model parameters, the arrival rate is decision-irrelevant, the choice model is price-irrelevant, and the difference with the classical MNL framework reduces to whether the assortment and pricing decisions can be changed per-customer or per-period. Due to the constant arrival rate, the upper and lower bounds under our setup can be naturally translated to the classical MNL framework while retaining the rate, after reindexing $T$ from periods to customers. Notably, our upper bound improves upon the state-of-the-art results ($O(\sqrt{T}\log{T})$) and our lower bound is free from the stringent conditions required by existing MNL lower bounds. In addition to the non-asymptotic lower bound, we also provide an asymptotic lower bound showing dependence on parameter dimensions.

\end{enumerate}

\subsection{Related Literature}


\paragraph{Dynamic Assortment and Pricing.}

Our paper first contributes to the extensive literature on dynamic assortment and pricing. 
In dynamic assortment, discrete choice model is widely accepted, in particular the multinomial logit (MNL) choice model \citep{rusmevichientong2010dynamic, chen2017note, agrawal2019mnl, chen2020dynamic}. The MNL framework has been extended to contextual settings by incorporating product features \citep{cheung2017thompson, chen2020dynamic, miao2022online, lee2025low}, as well as to adversarial contextual settings \citep{perivier2022dynamic,lee2024nearly}. 
For contextual MNL bandits, \cite{chen2020dynamic} and \cite{oh2021multinomial} establish a regret bound of $O(d_z \sqrt{T}\log T)$ using different proof strategies. 
For lower bounds, \cite{chen2020dynamic} establish an asymptotic lower bound of $\Omega(d_z \sqrt{T}/K)$ and \cite{lee2024nearly} establish $\Omega(d_z \sqrt{T/K})$ for a class of algorithms that select the same product $K$ times in each period.
Our Poisson-MNL model is motivated by practical operational settings where decisions are made per-period instead of per-customer and incorporates 
a Poisson model to account for the random arrival number that depends on the decision.
Under specific configurations of assumptions for our model, our regret bound can be naturally translated to the contextual MNL frameworks with a non-asymptotic upper bound of order $d_z \sqrt{T\log T}$, a non-asymptotic lower bound of order $\sqrt{T}$, and an asymptotic lower bound of order $d_z \sqrt{T}$ ($K$ holds constant) all under weaker assumptions.




Dynamic pricing is another major area in revenue management
\citep{kleinberg2003value, araman2009dynamic, besbes2009dynamic, broder2012dynamic, den2014simultaneously, keskin2014dynamic}. 
Recent research in dynamic pricing further considers customer characteristics \citep{ban2021personalized, chen2021nonparametric, bastani2022meta} and product features \citep{qiang2016dynamic, javanmard2019dynamic, cohen2020feature, miao2022context, fan2022policy}. Much of this literature focuses on single-product settings, while many sellers must price multiple products simultaneously. 
The (multinomial/nested) logit models has been applied to multi-product pricing problems \citep{akccay2010joint, gallego2014multiproduct}
and more recent work incorporates product features into the demand model \citep{javanmard2020multi, ferreira2023demand}.
For example, \cite{javanmard2020multi} establish a regret bound of $O(\log(T d_z) (\sqrt{T} + d_z\log T))$ and a lower bound of $\Omega(\sqrt{T})$. 
Similar to dynamic assortment, our bounds can be translated to such dynamic pricing settings and are better. More importantly, our focus is on dynamic \textit{joint} assortment and pricing decisions made \textit{per-period}, with a random number of customer arrivals whose rate depends on the decision.

While dynamic assortment and pricing problems have been studied
separately and extensively, research on the joint
assortment-pricing problem is relatively sparse.
\cite{chen2022statistical} study this problem in an offline
setting, and \citet{miao2021dynamic} propose a Thompson-sampling based algorithm
using an MNL choice model with product-specific mean utilities and price sensitivities.
In contrast, we incorporate the product feature and, more importantly,  allow customer arrivals to vary depending on the offered assortment and prices.
\Cref{tab:assortment_pricing_comparison} compares our methods with related MNL-based methods.




\begin{table}[t]
\centering
\caption{Comparison of $\PMNL$ and related MNL methods for dynamic assortment or/and pricing problems with $T$ rounds, $N$ products, maximum assortment size $K$, and if any, $d_z$-dimensional product features (contexts) and $d_x$-dimensional set of basis functions on assortment-pricing for the arrival rate model.}
\label{tab:assortment_pricing_comparison}
\scalebox{0.8}{
\begin{tabular}{lcccccc}
\toprule
Method / Paper 
& Assortment
& Pricing 
& Arrival 
& Context
& Upper bound 
& Lower bound \\
\midrule
This paper ($\PMNL$) 
& \cmark & \cmark & \cmark & \cmark & $O((d_z+d_x) \sqrt{T\log T})$ & $\Omega((d_z+\sqrt{d_x})\sqrt{T})$ \\ 

\cite{agrawal2019mnl} & \cmark & \xmark & \xmark & \xmark & $O( \sqrt{NT\log (NT)})$ & $\Omega(\sqrt{NT/K})$  \\

\cite{chen2020dynamic} 
& \cmark & \xmark & \xmark & \cmark & $O(d_z \sqrt{T}\log T)$ & $\Omega(d_z \sqrt{T}/K)$  \\

\cite{oh2021multinomial} 
& \cmark & \xmark & \xmark & \cmark & $O(d_z \sqrt{T}\log T)$ & $\Omega(d_z \sqrt{T}/K)$  \\

\cite{lee2024nearly} & \cmark & \xmark & \xmark & \cmark & 
\begin{tabular}{@{}c@{}}$O(d_z \sqrt{T/K})$ \\(Adversarial) 
\end{tabular}  
& \textcolor{gray}{$\Omega(d_z \sqrt{T/K})$} \\


\cite{javanmard2020multi} & \xmark & \cmark & \xmark & \cmark & $O(\log(T d_z) (\sqrt{T} + d_z\log T))$ & $\Omega(\sqrt{T})$ \\


\cite{miao2021dynamic} &  \cmark & \cmark & \xmark & \xmark & $O(\sqrt{NT}\log (NT))$ & -- \\

\cite{ferreira2023demand} & \xmark & \cmark & \cxmark & \cmark & -- & --  \\

\bottomrule
\end{tabular}
}
\end{table}

\paragraph{Bandits.}

The dynamic assortment and pricing problems are closely related to the bandit problem, which dates back to the seminal work
of~\citet{robbins1952some}.
In each round, a decision-maker chooses
an action (arm) and then observes a reward. The goal is to act strategically to minimize cumulative regret.
There is now an extensive literature
on the bandit problem, including multi-armed bandits, contextual bandits, linear bandits, and generalized linear bandit; we
refer the reader to the comprehensive book
by~\citet{lattimore2020bandit} and references therein for more
background. 
The dynamic assortment problem can be casted as a multi-armed bandit problem: for example, each feasible assortment can be treated as an arm. Such a naive formulation, however, results in $N\choose K$ arms and thus suffers from the curse of dimensionality. 
MNL bandits provide one tractable way to impose structure on this combinatorial action space through an MNL choice model \citep{agrawal2019mnl}
and contextual MNL bandits further account for product features \citep{chen2020dynamic}. While the MNL structure is widely appreciated in the operations management community due to its connection to utility theory, it is highly specified and therefore requires careful, model-specific analysis, as already demonstrated by aforementioned assortment/pricing literature, and even more so for our Poisson–MNL model built upon it.

\paragraph{Customer Arrivals.}
Customer arrival is a key component of any service system and often modeled using Poisson models \citep{poisson1837recherches, kingman1992poisson}. In assortment and pricing optimization, there exists a stream literature that combines the MNL model with Poisson arrival models \citep{vulcano2012estimating, abdallah2021demand, wang2021consumer}. Their focus is on the offline estimation problem instead of the dynamic assortment and pricing decision problem, and they typically impose restrictive parametric forms on the arrival rate. In contrast, we focus on the dynamic decision problem, and our arrival model is designed to be flexible and expressive: we only need to identify a set of basis functions on assortment-pricing such that the log arrival rate is linear in these bases. 



Recently, \cite{ferreira2023demand} propose a demand learning and dynamic pricing algorithm for varying assortments in each accounting period. 
They also use a Poisson model for customer arrivals, assuming the arrival rate is an unknown absolute constant. 
They adopt a learn-then-earn approach: the first learning stage (``price to learn'') learns the parameters in the choice model by offering prices that maximize the expected information gain, i.e., by maximizing the determinant of the Fisher information matrix; and the second stage (``price to earn'') chooses prices in a greedy fashion to maximize the expected revenue. In each period of the second stage, after observing rewards,  both the choice and arrival models are updated using all available data. 
They demonstrate the effectiveness of their algorithm through a controlled field experiment with an industrial partner by benchmarking it against their baseline policies, but do not provide regret bounds. 
Our setting and results differ in several respects. First, we allow the Poisson arrival rate to depend on the assortment-pricing decision, whereas they assume the arrival rate to be fixed. 
Second, our method makes joint assortment-pricing decisions, while theirs optimizes prices for varying assortments that are given at the beginning of each period. 
Finally, we prove that our algorithm is nearly optimal.

\subsection{Notation}
\label{subsec:notation}
We use bold lowercase letters denote vectors (e.g., \(\ba\)) and bold uppercase letters denote matrices (e.g., \(\bA\)).
The Euclidean norm of \(\ba\) is \(\|\ba\|_2\).
For a matrix \(\bA\), its operator (spectral) norm is
\(
\|\bA\|_{\mathrm{op}} := \sup_{\|\bx\|_2 = 1}\,\|\bA \bx\|_2 .
\)
For a symmetric matrix \(\bA\), let \(\lambda_{\min}(\bA)\) and \(\lambda_{\max}(\bA)\) denote its smallest and largest eigenvalues, respectively.
For matrices \(\bA,\bB\) of the same dimension, we write \(\bA \preceq \bB\) if \(\bB-\bA\) is positive semidefinite, and \(\bA \prec \bB\) if \(\bB-\bA\) is positive definite.
For any symmetric positive definite matrix \(\bA\), define the \(\bA\)-weighted norm by
\(
\|\bx\|_{\bA} := \sqrt{\bx^\top \bA \bx}.
\)
For an integer \(N \ge 1\), let \([N] := \{1,2,\ldots,N\}\).
We use \(\mathbf{1}\{\cdot\}\) for the indicator function.
We adopt the standard Landau symbols \(O(\cdot)\) and \(\Omega(\cdot)\) to denote asymptotic upper and exact bounds, respectively.
We use \(\Poi\) for the Poisson arrival model and \(\MNL\) for the multinomial logit (MNL) choice model.

\subsection{Outline}

The remainder of the paper is organized as follows. In \Cref{sec:model}, we formally describe the generalized MNL model with an unknown and time-sensitive customer arrival Poisson process. \Cref{sec:policy_learning} presents our $\PMNL$ algorithm for demand learning and dynamic joint assortment-pricing decision-making. \Cref{sec:main_results} provides the regret bound and a matching lower bound (up to $\log(T)$) for our algorithm. In \Cref{sec:numerical_simulation}, we evaluate the performance of our algorithm and compare it with other existing algorithms. \Cref{sec:conclusion} concludes. Details of the proofs are deferred to the Appendix.

\section{Problem Formulation} \label{sec:model}

\subsection{Choice Model with Poisson Arrival}
\label{Subsec:model}

Consider a retailer selling $N$ available products, indexed by $j \in [N]$.
The retailer makes assortment-pricing decisions over a horizon of $T$ periods, indexed by
$t \in [T]$. The time period can be of any
predetermined \emph{granularity} (e.g., by day, week, month, or by the
arrival of one customer). 

At the start of each time period $t$, the retailer observes a set of $\DimFeature$-dimensional product features 
$\Feature_t = \{\Feature_{jt}\}_{j=1}^N$ where $\Feature_{jt} \in \mathbb{R}^{\DimFeature}$. 
Using the features and past observations up to time $t$, the retailer offers an assortment $S_t \in \AssortSet \subset [N]$ under the cardinality constraint $|S_t| = K$ with prices $\bp_t = (p_{jt})_{j=1}^N \in \PriceSet \subset \mathbb{R}^{N}_{++}$ where \(\mathbb{R}^N_{++}\) represents the set of positive real numbers in \(N\) dimensions.
The retailer then observes $n_t$ customers arriving during the time period where each customer $i \in [n_t]$ either purchases one item from the assortment, i.e., $\Choiceti \in S_t$, or does not purchase, i.e., $\Choiceti = 0$. If product $j\in S_t$  is chosen, the retailer earns revenue $r_{i} = p_{jt}$; otherwise, if $\Choiceti = 0$, then $r_{i} = 0$.
Note that the assortment and prices are fixed within each period and only change across periods, which reflects the practice of many retailers, who prefer to adjust decisions at regular intervals, as discussed previously.
To capture both customer arrivals and their subsequent purchasing behavior, we combine a Poisson arrival model with the choice model described below.

\paragraph{Poisson arrival model.}

In each period $t$, the number of customers arriving $n_t$ follows a Poisson distribution with mean arrival rate $\Arrival_t$:
\begin{align}
\label{eq:pois}
n_t \sim \text{Poisson}(\Arrival_t)
\end{align}
where $\Arrival_t = \BaseArrival \lambda_t$ with $\BaseArrival \in \Real_{++}$ being a known positive base arrival rate, which depends on the predetermined granularity, and \(\lambda_t\) captures the arrival rate per unit time, which depends on the current assortment \(S_t\) and the price vector \(\bp_t\). 
The base arrival rate $\BaseArrival$ acts as a scaling factor that adjusts for the length of time intervals.
For instance, if the retailer wants to change the granularity from day to week, we can simply adjust the base arrival rate by multiplying seven. 

As noted before, most existing literature fails to consider the dependence of customer arrival on the assortment/prices and often assumes a fixed arrival rate, often normalized to one, i.e., $\lambda_t = 1$,  which may lead to less profitable decisions.
To account for the dependency, we model the arrival rate to explicitly depend on the current assortment $S_t$ and prices $\bp_t$.
In particular, we assume that the unit arrival rate $\lambda_t$ takes a log-linear form
\begin{equation}
\label{eq:arrival-rate}
\lambda_t := \lambda(S_t,\bp_t; \btheta^*) = \exp(\btheta^{*\top} \bx(S_t,\bp_t))
\end{equation}
where 
$\bx(S_t,\bp_t)$ is a set of sufficient statistics that fully captures the dependence of arrival rate on the assortment and prices and
$\btheta^* \in \Real^\DimRank$ is the unknown parameter of dimension $\DimRank$.
Without loss of generality, we assume $\text{span}(\{ \bx(S_t, \bp_t) ~|~ S\in \mathscr{S}, \bp \in \mathscr{P} \})$ is full rank with rank $\DimRank$ (see more discussions on the rank in Appendix \ref{app:x.independent}).

Such a log-linear form is simple yet flexible. It is commonly-used in Poisson regression to model the relationships between predictor and count outcomes \citep{brown1986fundamentals, winkelmann2008econometric}.
The set of sufficient statistics $\bx(S_t,\bp_t)$ can be flexibly defined based on the specific context.
For example, $\bx(S_t,\bp_t)$ can include terms that capture various aspects of assortment and pricing structure: the inherent attractiveness of individual products, price effects, pairwise interactions between items, and when available product features. We showcase two specific forms of $\lambda_t$ in the following remark.

\begin{remark}(Two examples of $\lambda_t$)
\label{rmk:lambda}
Our proposed succinct form of the arrival rate is flexible enough to incorporate economic principals in the literature. 
For example, one can consider the following form to account for price sensitivity and product variety:
\begin{equation}
\label{eq:lambda-t-2}
\lambda_t := \prod_{i\in S_t} \left(\frac{p_i}{\phigh}\right)^{-\alpha_i} = \exp\left(-\sum_{i\in S_t} \alpha_i \log  \left(\frac{p_i}{\phigh}\right)\right),
\end{equation}
where $\phigh$ denotes the highest feasible price.
The price sensitivity is modeled through the negative dependence of $\lambda_t$ on $p_{it}$, with lower prices attracting more customers, and the magnitude of this effect governed by item-specific $\alpha_i>0$.
The summation captures assortment variety, as offering more products can attract higher customer arrivals, 
which aligns with the marketing literature on the positive impact of product variety on customer attraction \citep{lancaster1990economics, kahn1995consumer}.

This simple model is effective in capturing the overall impact of assortment and price, but ignores dependencies between products, such as complementarity or substitution. An alternative form that incorporates these dependencies is given by
\begin{align}
\label{eq:pop}
    \lambda_t := \exp\left(\sum_{i \in S_t} \frac{\alpha_i}{p_{it}} + \sum_{\substack{i,j \in S_t \\ i \neq j}}  \beta_{ij} \frac{p_{it}}{p_{jt}} \right)
\end{align}
where the first sum captures the individual effects and the second sum captures the pairwise effects. 

The individual effects account for the inherent utility of each product, represented by $\alpha_i$, and the price sensitivity.
This summation also reflects the product variety effect \citep{kahn1995consumer}, as the arrival rate increases with the inclusion of more products in the assortment.

The second sum captures the effect of relative prices of item pairs. 
When $\beta_{ij}$ is positive, the pairwise effect is complementary, meaning that the presence of both items in the assortment increases the arrival rate, and the effect is more pronounced if the price of product $i$ is high relative to product $j$, i.e., when $\frac{p_{it}}{p_{jt}}$ is large.
This complementary effect enables interesting dynamics \citep{wang2021consumer}.
Including a new product in the assortment may cannibalize the existing products, yet the total sales of the existing products may increase if the arrival rate increases largely enough. 
On the other hand, reducing the price of a product may allow other products to take a ``free-rider'' advantage, benefiting from the higher arrival rate and potentially boosting their total sales. 
The summation also captures a price variety effect: higher price variation within the assortment, as represented by the price ratios, can lead to a higher arrival rate.
When $\beta_{ij}$ is negative, the pairwise effect becomes substitutable, meaning that the presence of both items in the assortment reduces the arrival rate. This substitutive effect can arise from customer perceptions of redundancy and choice overload. Specifically, when similar products have large price differences, customers may become skeptical of the pricing strategy, potentially eroding trust and leading to reduced arrivals.


\end{remark}

\begin{remark} (Exogenous factors for $\lambda_t$)
For simplicity, we focus on the case where \(\lambda_t\) depends only on the assortment and pricing; however, our model can be extended to incorporate other exogenous factors that influences arrivals, such as macroeconomic conditions and seasonal effects.
\end{remark}

\vspace{.1in}
\paragraph{MNL choice model.}

During period $t$, each customer $i \in [n_t]$ either purchases a product or makes no purchase, i.e., $\Choiceti \in S_t \cup \{0\}$, according to an MNL model.
Specifically, the probability of customer \(i\) choosing product \(j\) is given by
\begin{equation}
\label{eq:mnl}
    \ChoiceProb(j,S_t, \bp_t, \Featuret; \bv^*) =
    \mathbb{P}(C^{(i)}_t = j|S_t, \bp_t, \Featuret; \bv^*) 
= \begin{cases}
    \displaystyle \frac{\exp(\bv^{*\top} \Feature_{jt} - p_{jt})}{1 + \sum_{k \in S_t} \exp(\bv^{*\top} \Feature_{kt} - p_{kt})}, & \forall j  \in S_t; \\
    \displaystyle \frac{1}{1 + \sum_{k \in S_t} \exp(\bv^{*\top} \Feature_{kt} - p_{kt})}, &  j = 0,
\end{cases}
\end{equation}
where $\bv^* \in \Real^\DimFeature$ are unknown preference parameters that characterize the impact of product features on the intrinsic value of the products. 
We allow the product features to change over time since product features, such as ratings and popularity scores, are not static and can change over time, thereby allowing the utility of products to evolve over time.
For notation simplicity, when there is no ambiguity, we use $\ChoiceProbt{\bv^*}{j}$ to denote the choice probability of product $j$ for $j\in S_t$ and the non-purchase probability when $j=0$ at time $t$.

Under the choice model, the expected revenue of each customer $i$ is
\begin{align}
\label{eq:r-it}
r(S_t, \bp_t, \Featuret;\bv^*) = \mathbb{E}\left[\sum_{j \in S_t} p_{jt} \mathbf{1}(C^{(i)}_{t} = j | S_t, \bp_t, \Featuret; \bv^*)  \right] = \sum_{j \in S_t} p_{jt} \ChoiceProbt{\bv^*}{j}.
\end{align}
Then the expected revenue at time period \(t\) across all $n_t$ customers  is given by
\begin{align}
\label{eq:r-t}
\Revenue[t] = 
R(S_t, \bp_t, \Featuret; \bv^*, \btheta^*) 
= \E \left[ n_t  r(S_t, \bp_t, \Featuret; \bv^*) \right]
= \Lambda \lambda(S_t, \bp_t; \btheta^*)  \sum_{j \in S_t} p_{jt} \ChoiceProbt{\bv^*}{j},
\end{align}
which is the expected number of customers multiplied by the expected revenue of each customer.

\subsection{Retailer's Objective and Regret}
\label{Subsec:policy}

The objective of the retailer is to design a policy $\pi$ that chooses a sequence of history-dependent actions 
$(S_1, \bp_1, S_2, \bp_2,\ldots, S_T, \bp_T)$ so as to maximize the \emph{expected cumulative revenue} over $T$ periods
$\E_{\pi} \left [\sum_{t=1}^T  \Revenue[t] \right ].$
Formally, a policy is a sequence of (stochastic) functions $\pi = \{ \pi_t \}_{t=1}^T$, where each $\pi_t$ maps a history of actions and observed outcomes up to time $t$ to the assortment and pricing decision at time $t$ in a stochastic sense, i.e., $\pi_t: H_t \rightarrow (S_t, \bp_t)$, where $H_t$ represents the history up to time $t$, and is defined as
\begin{align}\label{df:H_t}
    H_t = \left(  C^{(1)}_{1},  C^{(2)}_{1}, \ldots, C^{(n_1)}_{1}, \ldots, C^{(1)}_{t-1}, \ldots , C^{(n_{t-1})}_{t-1}, S_1, \ldots, S_{t-1}, \bp_1, \ldots, \bp_{t-1}, \Feature_1, \ldots, \Feature_t \right).
\end{align}
Note that $\pi_t$ can be stochastic in the sense that its action output has randomness, i.e., $\pi(H_t)$ is a random variable. Given a policy $\pi$, we use $\mathbb{P}_\pi\{\cdot\}$ and $\mathbb{E}_\pi\{\cdot\}$ to denote the probability measure and expectation if we take actions following policy $\pi$.


If the parameters associated with the arrival model $\btheta^*$ and the choice model $\bv^*$ were known a priori, 
then the retailer could choose an optimal assortment $S_t^* \in \AssortSet$ and prices $\bp_t^* \in \PriceSet$ that
maximizes the expected revenue \eqref{eq:r-t} for each period, i.e., 
\mbox{$(S^*_t, \bp^*_t) \defn \argmax_{S, \bp}  \Revenue $.}
This optimal solution yields an optimal cumulative revenue over time horizon $T$: $\sum_{t=1}^T R_t(S^*_t, \bp^*_t)$, or equivalently, $\sum_{t=1}^T \BaseArrival \lambda(S^*_t, \bp^*_t; \btheta^*) r(S^*_t, \bp^*_t, \Featuret; \bv^*)$. 
This optimal value is not attainable because $(\btheta^*, \bv^*)$ is unknown in practice, 
but it serves as a useful benchmark for performance of any algorithm.

Using this benchmark, we evaluate a policy $\pi$ by \emph{cumulative regret}---that is, the deficit between the expected cumulative revenue over the time horizon $T$ of the optimal solution and that of $\pi$:
\begin{equation}\label{eq:regret_equal_form}
    \mathcal{R}^{\pi}(T) = \left\{\E \left [\sum_{t=1}^T \lambda(S^*_t, \bp^*_t; \btheta^*) r(S^*_t, \bp^*_t, \Featuret; \bv^*)\right] - \E_\pi \left[ \sum_{t=1}^T  \lambda(S_t, \bp_t; \btheta^*) r(S_t, \bp_t,  \Featuret; \bv^*) \right]\right\} \Lambda.
\end{equation}
To estimate the unknown parameters $(\btheta^*, \bv^*)$, we need to design an algorithm that simultaneously learns the Poisson arrival model and the MNL choice model on the fly (exploration) as well as minimizing the cumulative revenue (exploitation).
This exploration–exploitation problem for dynamic assortment and pricing, to which we refer as \emph{Poisson-MNL bandit}, is our focus.

\subsection{Assumptions}
\label{Subsec:assumption}

Before presenting the algorithm, we first state several assumptions. These assumptions are mild and common for MNL models, which we will discuss in detail below. Importantly, our algorithm itself is still executable without these assumptions, and we expect it to behave well even if some assumptions fail to hold. 




\begin{assumption}\label{assump:K}
The feasible assortment set $\mathscr{S}$ consists of all $K$-subsets of $[N]$.
\end{assumption}

\begin{assumption}\label{assump:price}
The feasible price vector $\bp \in \PriceSet$ lies within the range $[\plow, \phigh]^N$ for positive constants $0< \plow < \phigh$. 
\end{assumption}

\begin{assumption}\label{as:v&assortment_new}
The MNL choice model preference parameter $\bv^* \in \Real^\DimFeature$ satisfies $||\bv^*||_2 \leq \bar{v}$.
\end{assumption}

\begin{assumption}\label{assump:theta}
The Poisson arrival model parameter $\btheta^* \in \Real^\DimRank$ satisfies $||\btheta^*||_2 \leq 1$.
\end{assumption}

\begin{assumption}\label{assump:x}
For any feasible assortment $S \in \mathscr{S}$ and any feasible price $\bp \in \PriceSet$,
the sufficient statistic $\bx(S, \bp)$ is scaled such that $||\bx(S, \bp)||_2 \leq \bar{x}$ for some constant $\bar{x}>0$. 
\end{assumption}

\begin{assumption}\label{assump:combined}
The product feature sequence $\left\{\Feature_{t}\right\}_{t=1}^{T}$ is i.i.d. sampled from an unknown distribution with a density $\mu$, where $\|\Feature_{jt}\|_2 \leq 1$ for each $j \in [N]$ and $t \in [T]$. The distribution $\mu$ satisfies the following condition: we can construct a pre-determined sequence of assortments $\{S^{init}_s\}_{s = 1}^{t}$, such that for any $t \geq t_0$ where $t_0 = \max\left\{\ceil{\frac{\log(\DimFeature T)}{\sigma_0 (1 - \log 2)}},\, 2 \DimRank\right\}$,
the following holds with probability at least $1 - T^{-1}$:
\begin{equation}
\label{eq:assump-z}
    \sigma_{\min}\left(\sum_{s = 1}^{t} \sum_{j \in S^{init}_s} \Feature_{js} \Feature^\top_{js}\right) \geq \sigma_0 t,
\end{equation}
where $\sigma_0$ is a positive constant dependent on \(\mu\). Furthermore, there exists a positive constant $\sigma_1$ and a price sequence $\{\bp^{init}_s\}_{s = 1}^{t}$ such that for any $t \geq t_0$
\begin{equation}
\label{eq:assump-x}
    \sigma_{\min} \left(\sum_{s = 1}^{t} \bx(S^{init}_s, \bp^{init}_s) \bx^\top(S^{init}_s, \bp^{init}_s )\right) \geq  \sigma_1 t.
\end{equation}
\end{assumption}



\Cref{assump:K,assump:price} impose standard feasibility conditions on the assortment and price sets as in the literature \citep{chen2020dynamic,chen2021dynamic}.
Compared to \Cref{assump:K}, other related work considers a feasible set of up-to-$K$-product assortment~\citep{agrawal2019mnl, oh2021multinomial}. 
This assumption admits a larger feasible set and thus enlarges the set of legitimate algorithms. Our algorithm also works for this alternative feasible set, with the same theoretical guarantee, and our lower bound applies. 
\Cref{assump:price} defines the feasible price range as $[\plow, \phigh]$: the lower bound typically reflects either the minimal currency unit or product cost, and the upper bound comes from the idea of ``choke price'', beyond which the demand is effectively zero. 

\Cref{as:v&assortment_new} assumes the boundedness of parameter in the MNL choice model \eqref{eq:mnl}, which is standard in the contextual MNL-bandit literature \citep{chen2020dynamic, oh2021multinomial}.
By assuming $\bv^*$ is upper bounded by $\bar{v}$ in $\ell_2$ norm, we encode the common belief that the product features typically have a bounded influence on product utility.

\Cref{assump:theta,assump:x} assume boundedness for the Poisson arrival model \eqref{eq:arrival-rate}: they bound the influence of assortment-pricing decisions on the unit arrival rate $\lambda_t$ and the $\ell_2$-norm of the vector $\bx(S, \bp)$ for all feasible assortment-pricing decisions. These assumptions, together with the base arrival rate $\Lambda$, ensure that  the arrival rate $\Arrival_t$ is bounded, reflecting the reality that the market size is inherently limited.

\Cref{assump:combined} assumes that the product feature vector $\bz_{jt}$ for $j \in [N]$ and $t \in [T]$ are randomly generated from a compactly supported and non-degenerate density, with additional isotropic conditions \eqref{eq:assump-z}-\eqref{eq:assump-x} on $\bz_{jt}$ and $\bx(\cdot, \cdot)$. These isotropic conditions ensure information availability in all directions and are necessary for an algorithm to converge. 
\Cref{assump:combined} can be easily satisfied and is weaker than its counterpart in contextual MNL-bandit literature \citep{chen2021dynamic,oh2021multinomial}, which we state below.

\begin{assumption}\label{assumption::y_exist}
The feature vectors  $\Feature_{j t}$ for $j \in [N]$ and \(t \in [T]\) is i.i.d. across both \(j\)  and  \(t\) from an unknown distribution with density $\mu$ supported on $\left\{\Feature \in \mathbb{R}^\DimFeature : \|\Feature\|_2 \leq 1\right\}$. Additionally, the minimum eigenvalue of the expected covariance matrix $\mathbb{E}_{\mu}(\Feature \Feature^{\top})$  is bounded below by a positive constant $\bar{\sigma}_0$.
\end{assumption}

\Cref{assumption::y_exist} is stronger than the MNL component of \Cref{assump:combined}, but serves the same purpose. We note that the corresponding assumption in \cite{chen2021dynamic} and \cite{oh2021multinomial} is solely for assortment under an MNL model, without pricing and a Poisson arrival model. Under \Cref{assumption::y_exist}, \Cref{assump:combined} holds with $\sigma_0= \frac{K\bar{\sigma}_0}{2}$ and we show the corresponding sequence required by \Cref{assump:combined} in Appendix \ref{appendix:relation}.
\section{Algorithm}
\label{sec:policy_learning}

\begin{algorithm}
\caption{The $\PMNL$ algorithm for the Poisson-MNL bandit problem.}
\label{alg:lambda_dp_mod}
\begin{algorithmic}[2]
	\STATE \textbf{Output}: Assortment and prices $(S_{T_0+1}, \bp_{T_0+1}), \ldots, (S_{T}, \bp_{T})$.
	\STATE \textbf{Input}: Time horizon $T$, feasible assortment set $\AssortSet$, feasible price range $\PriceSet$, parameters \(\bar{x}, \bar{v}, \Lambda, \sigma_0, \sigma_1, t_0\), and dimensions \(\DimFeature, \DimRank\);
    \STATE \textbf{Initialization}: $t = 1$, $t_0 = \max\left\{\ceil{\frac{\log(\DimFeature T)}{\sigma_0 (1 - \log 2)}},\, 2 \DimRank\right\}$,  $T_0 := \max\{t_0 + 1, \lfloor\log T\rfloor\}$, \(c_0\) defined in \Cref{df:c_0}, $\tau_\btheta$ and $\tau_\bv$ in 
    \Cref{eq:tau_theta,eq:tau_v},  $\omega_\btheta$ and $\omega_\bv$ in \Cref{df:w_theta,df:w_bv};
    \STATE \emph{\textbf{Stage 1: Pure Exploration with Global MLE}}
    \WHILE{$t \leq T_0$} 
    \STATE Observe product features $\bz_t = \{\bz_{jt}\}_{j=1}^N$;
    \STATE Set  assortment and pricing as $S_t = S^{init}_{t}$ and $\bp_t = \bp^{init}_t$ as in \Cref{assump:combined};
    \STATE Observe $n_t$ customer arrivals with their decisions $C_t^{(i)} \in S_t \cup \{0\}$ for $i=1,\ldots, n_t$;
    \STATE $t \leftarrow t + 1$;
    \ENDWHILE
    \STATE Compute the pilot estimator using global MLE:
    \STATE \quad $\hat{\btheta} = \argmax_{||\btheta||_2 \leq \bar{v}} \LikPoi{T_0}$;
    $\hat{\bv} = \argmax_{||\bv||_2 \leq 1} \LikMNL{T_0}$. 
	\STATE \emph{\textbf{Stage 2: Local MLE and Upper Confidence Bound (UCB)}}
    \FOR{$t = T_0 + 1$ \TO $T$}
	\STATE Observe product features $\bz_t = \{\bz_{jt}\}_{j=1}^N$;
    \STATE Compute local MLE:
    \STATE \quad $\widehat{\btheta}_{t-1} \in \argmax_{||\btheta - \hat\btheta||_2 \leq \tau_\btheta} \LikPoi{t-1}$;
    $\widehat{\bv}_{t-1} \in \argmax_{||\bv - \hat\bv||_2 \leq \tau_\bv} \LikMNL{t-1}$;
    \STATE For every assortment $S\in\AssortSet$ and price $\bp \in \PriceSet$, compute the upper confidence bound of expected revenue:
    \begin{equation*}
        \begin{aligned}
            &\bar{R}_t(S, \bp) := \Lambda \lambda(S, \bp;\hat{\btheta}_{t - 1})r(S, \bp,\Featuret ;\hat{\bv}_{t - 1}) \\
            & + 
            \begin{multlined}[t]
            \phigh \min \Bigg\{\Lambda (e^{\bar{x}}- e^{-\bar{x}}), 
            \sqrt{
                 \Lambda e^{(2\tau_\btheta + 1) \bar{x}} \omega_\btheta
                \left\|
                    {I}_{t-1}^{\Poi\,-\frac{1}{2}}(\widehat{\btheta}_{t - 1})
                    {M}^\Poi_{t} (\widehat{\btheta}_{t - 1}|S, \bp)
                    {I}_{t-1}^{\Poi\,-\frac{1}{2}}(\widehat{\btheta}_{t - 1})
                \right\|_{\text {op }}
            }
            \Bigg\}
            \end{multlined}\\
            &+  \Lambda e^{\bar{x}} 
            \min 
            \left\{
                \phigh, 
                \sqrt{
                    c_0 \omega_\bv\left\| \widehat{I}_{t-1}^{\MNL \, -\frac{1}{2}}\left(\widehat{\bv}_{t-1}\right)
                    \widehat{M}^\MNL_{t}
                    \left(\widehat{\bv}_{t-1} \mid S, \bp\right) \widehat{I}_{t-1}^{\MNL\,-\frac{1}{2}}\left(\widehat{\bv}_{t-1}\right)
                    \right\|_{\text {op }}
                }
            \right\};
        \end{aligned}
    \end{equation*}
    \STATE Compute and set $(S_t, \bp_t) := \underset{S \in \mathscr{S}, \bp \in \mathcal{P}}\argmax ~ \bar{R}_t(S, \bp)$;
    \STATE 
    Observe $n_t$ customer arrivals with their decisions $ C_t^{(i)} \in S_t \cup \{0\}$ for $i=1,\ldots, n_t$;
    \ENDFOR
\end{algorithmic}
\end{algorithm}

In this section, we describe our $\PMNL$ algorithm for the Poisson-MNL bandit problem. 
Our algorithm involves two stages.
The first stage focuses on pure exploration to obtain a good initial estimator for $(\btheta^*, \bv^*)$, which we refer to as the ``pilot estimators'' $(\hat\btheta, \hat\bv)$, by applying the maximum likelihood estimation (MLE) to data from the first $T_0$ periods.
This stage yields a small confidence region in which the true parameters lie with high probability and is practically very important for the second stage to work well, though it does not affect the regret rate.
The second stage builds on the upper confidence bound (UCB) strategy, also known as optimism in the face of uncertainty, to balance exploration and exploitation. Specifically, in each period we make the assortment-pricing decision that maximizes the upper confidence bound of the expected revenue based on the Fisher information,
and then we update the parameters using a local MLE around the pilot estimators. 
The full procedure is described in \Cref{alg:lambda_dp_mod}.

As noted in the introduction, developing the algorithm is not a routine adaptation of standard MNL bandit algorithms and is technically more challenging. 
The Poisson arrival and MNL choice models are entangled, so the algorithm must jointly account for uncertainty from arrivals and choices. In particular, constructing estimators for $(\btheta^*, \bv^*)$ and, crucially, their corresponding upper bounds for the UCB strategy requires new technical tools. The main difficulties include:
\begin{enumerate}
    \item The Poisson arrival model produces unbounded and non-sub-Gaussian random variables, which makes the mechanisms commonly used for deriving the error bounds in assortment problems inapplicable \citep{oh2021multinomial,abbasi2011improved} and thus requiring new tools. We address this problem by utilizing concentration inequalities for martingales with increments satisfying Bernstein conditions \citep{wainwright2019high}.
    
    \item Arrival randomness affects the number of observed purchases, so the observations used to learn choice model parameters mix both arrival and choice model randomness, 
    making it more difficult to construct an estimator and its error bound for the choice model.
    We tackle this challenge by designing and analyzing statistics whose randomness primarily comes from the choice model rather than the arrival model.

    \item The unknown parameters in the Poisson arrival model inevitably enter the error bounds for the MNL model estimators, whereas the algorithm requires confidence bounds that are free of unknown quantities. We resolve this problem by further bounding the error bound with a purely data-dependent statistic.

    \item As error bounds of the estimators enter UCB type algorithms, their constants also matters for practical performance. Rather than relying on standard analytical techniques, we carry out a refined analysis and obtain better constants.
\end{enumerate}

In what follows, we first introduce the likelihood function and then describe the two stages.

\subsection{Likelihood Function}
\label{Subsec:likelihood}

Both stages of our algorithm rely on MLE to estimate the unknown parameters $(\btheta^*, \bv^*)$.
We therefore first specify the likelihood function.
The likelihood function at period $t$ can be written as
\begin{align*}
\prod_{s=1}^t \left\{
\Prob\left(\Poi(\Lambda \lambda(S_{s}, \bp_s; \btheta)) = n_s; \btheta\right) 
\prod_{i=1}^{n_s} \prod_{j \in S_s\cup\{0\}} \ChoiceProb_s(j;\bv)^{\bm{1}\left\{C^{(i)}_{s} = j\right\} }
\right\}.
\end{align*}
To obtain the maximum likelihood estimator, we usually maximize the log-likelihood function instead.
Specifically, it decomposes into the sum of the log-likelihoods of the Poisson arrival model and the MNL choice model, as follows:
\begin{equation}
\label{eq:likelihood}
\Lik{t} := \LikPoi{t} + \LikMNL{t} = 
\sum_{s=1}^t \likPoi{s} +
\sum_{s=1}^t \likMNL{s}
\end{equation}
where 
\begin{align}
     \likPoi{s} &:= n_s \log \lambda(S_{s}, \bp_s;\btheta) - \Lambda \lambda(S_{s}, \bp_s; \btheta) + n_s\log{\Lambda} - \log n_s!, \\
    \likMNL{s} & := \sum_{i = 1}^{n_{s}} \sum_{j \in S_{s} \cup\{0\}}
\bm{1}\{C^{(i)}_{s} = j\} \log \ChoiceProb_s(j;\bv).
\end{align}

\subsection{Stage 1: Pure Exploration for Pilot Estimators via Global MLE} 
\label{Subsec:exploration}

The first stage of the algorithm is to obtain a pair of the pilot estimators $(\hat\btheta, \hat\bv)$ that are close to the true parameters $(\btheta^*, \bv^*)$ at the end of the first $T_0$ exploration periods.

We begin with setting the length of this initial stage as $T_0 = \max\{t_0 + 1, \lfloor\log T\rfloor\}$, where $t_0$ is defined in \Cref{assump:combined}.
Note that $T_0$ is a very small number, meaning that this stage is very short and thus incurring only a small exploration cost.
Keeping the exploration periods short is critical to firms, particularly for start-ups or businesses in competitive markets, as substantial initial losses can potentially jeopardize survival before they can leverage the insights to generate revenue.

Next, we make assortment-pricing decisions according to the initial sequence $\{S^{init}_t, \, \bp^{init}_t\}_{t=1}^{T_0}$ from \Cref{assump:combined}. 
When the stronger condition \Cref{assumption::y_exist} holds, we can take the initial sequence as in Appendix \ref{appendix:relation}.
In practice, one can let $\bx$ be a vector of product dummies and prices, and construct the initial sequence by uniformly selecting an assortment and prices. There also exist more refined procedures to ensure \Cref{assump:combined} holds.


With an initial sequence $\{S^{init}_t, \bp^{init}_t\}_{t=1}^{T_0}$
and the observations of customer arrivals and their choices,
we proceed to estimate the pilot estimators $(\hat\btheta, \hat\bv)$ by maximizing the log-likelihood of the first stage, $\Lik{T_0}$. Given that the log-likelihood decomposes into $\LikPoi{T_0}$ and $\LikMNL{T_0}$, we can obtain the pilot estimator $(\hat\btheta, \hat\bv)$ by simply maximizing their corresponding log-likelihoods, i.e.,
\begin{align}
	\hat{\btheta} &:= \argmax_{||\btheta||_2 \leq 1} \LikPoi{T_0}, \quad \quad 
	\hat{\bv} := \argmax_{||\bv||_2 \leq \bar{v}} \LikMNL{T_0}.
\end{align}
In \Cref{subsec:bound_para}, we show that the pilot estimators are close to the true parameters $(\btheta^*, \bv^*)$.
In particular, with high probability, the estimation error $\|\hat{\btheta} - \btheta^*\|_2 \leq \tau_\btheta$ and $\left\|\hat{\bv} - \bv^*\right\|_2 \leq \tau_\bv$, where $\tau_\btheta$ and $\tau_\bv$ are defined in \Cref{eq:tau_theta,eq:tau_v},  respectively, and both are of order $O(\frac{\log T}{T_0}) = O(1)$.
These bounds serve as the basis for the local MLE and the confidence bounds in the second stage.
\subsection{Stage 2: Local MLE and Upper Confidence Bound (UCB) } 
\label{Subsec:UCB}

In the second stage, we make assortment-pricing decisions leveraging the idea of UCB to balance exploration and exploitation, and updates the estimates using local MLE, iteratively. 
Given the pilot estimator $(\hat\btheta, \hat\bv)$ with their error bounds  $\tau_\btheta$ and $\tau_\bv$, at each period $t > T_0$, we obtain the local MLE $\widehat{\btheta}_{t-1}$ (respectively, $\widehat{\bv}_{t-1}$) by maximizing the log-likelihood $\LikPoi{t-1}$ (respectively, $\LikMNL{t-1}$) within a ball centered at the pilot estimator $\hat\btheta$ (respectively, $\hat\bv$) with a radius $\tau_\btheta$ (respectively, $\tau_\bv$):
\begin{align}
\label{eq:local-mle}
	\widehat{\btheta}_{t-1} &:= \argmax_{||\btheta - \hat\btheta||_2 \leq \tau_\btheta} \LikPoi{t-1}, \quad \quad 
	\widehat{\bv}_{t-1} := \argmax_{||\bv - \hat\bv||_2 \leq \tau_\bv} \LikMNL{t-1}.
\end{align}
Given $(\widehat{\btheta}_{t-1}, \widehat{\bv}_{t-1})$ and the product features $\bz_t$, we make an assortment-pricing decision ($S_t,\bp_t)$ that maximizes an upper confidence bound of the expected revenue $R_t(S, \bp)$.

The upper confidence bound depends on both the estimation and the estimation error of the Poisson arrival and MNL choice parameters, where the estimation error can be captured by the Fisher information of $\btheta$ and $\bv$, defined as follows.


\begin{definition}\label{def_theta}
For the Poisson arrival model, we define the Fisher information  with respect to $\btheta$ for $t$ periods, $I_t^\Poi(\btheta)$, given history $\{H_s\}_{s=1}^t$ as follows:
\begin{equation}
\begin{aligned}
    I_t^\Poi(\btheta) :=& \sum_{s = 1}^t M_{s}^\Poi(\btheta), \; \mbox{where } \\
    M_{s}^\Poi(\btheta) :=& \E \big[
    -\nabla^2_{\btheta} \likPoi{s} \big|H_{s}\big] = \Lambda\lambda (S_s , \bp_s;\btheta)  \cdot \xs \xs^{\top}.
\end{aligned}  
\end{equation}
\end{definition}

\begin{definition}\label{def_v}
For the MNL choice model, we define the Fisher information with respect to $\bv$ for $t$ periods, $I_{t}^\MNL(\bv)$, given history $\{H_s\}_{s=1}^t$ as follows:
\begin{equation}
\begin{aligned}
    I_{t}^\MNL(\bv) :=& \sum_{s = 1}^t M_{s}^\MNL(\bv), \; \mbox{where } \\
    M_{s}^\MNL(\bv) :=& 
    \E \big[- \nabla^2_{\bv} \likMNL{s} \big|H_{s}\big] \\
    =& \Lambda \lambda(S_s, \bp_s;\btheta^*)
\left (\sum_{j \in S_{s}} \ChoiceProb_s(j; \bv) \Feature_{ j s} \Feature^\top_{j s} - \sum_{j, k\in S_{s}} \ChoiceProb_s(j; \bv)\ChoiceProb_s(k;  \bv) \Feature_{ j s} \Feature^\top_{ k s}\right).
\end{aligned}
\end{equation}
\end{definition}


Note that in \Cref{def_theta,def_v}, the assortment and pricing decisions $S_{s} = \pi(H_s)$ and $\bp_s = \pi(H_s)$, where the policy $\pi$ is an algorithm to be designed to make decisions $S_s$ and $\bp_s$ based on the history $H_s$.
At period $t$, given the history $H_t$ (which includes product features $\bz_t$), we need to decide $S$ and $\bp$. 
To emphasize the dependence of the Fisher information  on different choices of $S$ and $\bp$ at period $t$, we write their summands as functions of $(S,\bp)$ for $ S\in \AssortSet$ and $\bp\in\PriceSet$: 
{\small
\begin{align}
{M}_t^\Poi(\btheta | S, \bp) &:= \Lambda \lambda(S, \bp; \btheta)\bx(S, \bp) \bx^{\top}(S, \bp), \\
M_t^\MNL(\bv| S, \bp) &:= 
\begin{multlined}[t] 
\Lambda\lambda(S, \bp;\btheta^*) \underbrace{
\Bigg( \sum_{j \in S} \ChoiceProb(j, S, \bp, \Feature_t; \bv) \Feature_{jt} \Feature^\top_{jt} 
    - \sum_{j, k\in S} \ChoiceProb(j, S, \bp, \Feature_t; \bv)\ChoiceProb(k, S, \bp,\Feature_t; \bv) \Feature_{jt} \Feature^\top_{kt} \Bigg)}_{\phi(S,\bp, \bz_t; \bv)} .
\end{multlined}\label{eq:df:fisher_mnl}
\end{align}
}
Clearly, $M_{s}^\MNL(\bv) ={M}_s^\MNL(\bv | S_s, \bp_s) $.


To translate the estimation errors of $\widehat{\btheta}_{t-1}$ and $\hv_{t-1}$ into the estimation error of the reward, two quantities are involved:
\mbox{${I}_{t-1}^{\Poi\,-\frac{1}{2}}(\widehat{\btheta}_{t - 1}) {M}^\Poi_{t} (\widehat{\btheta}_{t - 1}|S, \bp){I}_{t-1}^{\Poi\,-\frac{1}{2}}(\widehat{\btheta}_{t - 1})$} and 
\mbox{${I}_{t-1}^{\MNL\,-\frac{1}{2}}(\widehat{\bv}_{t-1}) {M}^\MNL_{t} (\widehat{\bv}_{t-1}|S, \bp){I}_{t-1}^{\MNL\,-\frac{1}{2}}(\widehat{\bv}_{t-1})$}, which change the weighting matrix of the norm in which the error bounds of the estimators can be derived.
Since the latter involves the unknown ground truth $\btheta^*$ as in \Cref{eq:df:fisher_mnl}, we cannot compute it directly. 
Fortunately, we manage to construct a data-dependent bound based on the boundedness \cref{assump:theta,assump:x}, which implies $\exp\{-\bar{x}\} \leq \lambda(S, \bp; \btheta^*) = \exp\{{\btheta^*}^\top \bx(S, \bp)\} \leq \exp\{\bar{x}\}$.



\begin{definition}
\label{def_vhat}
The bounds of 
${I}_{t}^\MNL(\bv)$ and ${M}_t^\MNL(\bv | S, \bp)$
are defined as follows:
\begin{align}
\widehat{I}_{t}^\MNL(\bv) := \sum_{s = 1}^t \Lambda\exp\{-\bar{x}\}  \phi(S_s,\bp_s,\bz_s;\bv),
\quad
\widehat{M}_t^\MNL(\bv|S, \bp) := \Lambda \exp\{ \bar{x}\} \phi(S,\bp,\bz_t;\bv).
\end{align}

\end{definition}

As $\phi(S,\bp,\bz;\bv)$ is positive semidefinite, the following lemma is straightforward.
\begin{lemma}
\label{lemma_estimate_M_I}
The following inequalities hold:
\begin{alignat}{3}
    \widehat{I}_{t}^\MNL(\bv) &\preceq  I_t^\MNL(\bv), \quad 
    \widehat{M}_t^\MNL(\bv|S, \bp) &\succeq {M}_t^\MNL(\bv|S, \bp).
\end{alignat}
\end{lemma}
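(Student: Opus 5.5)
The plan is to reduce both inequalities to a scalar comparison multiplying a fixed positive semidefinite matrix. Two ingredients are needed.

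\textbf{Step 1: $\phi$ is positive semidefinite.} For any $\bv$, $S$, $\bp$ and $\bz$, I will show that $\phi(S,\bp,\bz;\bv)$ in \eqref{eq:df:fisher_mnl} satisfies $\phi \succeq 0$. Write $q_j = \ChoiceProb(j,S,\bp,\bz;\bv)$ for $j\in S$, so that $q_j\ge 0$ and $Q:=\sum_{j\in S} q_j \le 1$. For any vector $\ba$, set $a_j = \ba^\top \Feature_{j}$. Then
\[
\ba^\top \phi\, \ba = \sum_{j\in S} q_j a_j^2 - \Big(\sum_{j\in S} q_j a_j\Big)^2 .
\]
By Cauchy--Schwarz, $\big(\sum_j q_j a_j\big)^2 \le Q \sum_j q_j a_j^2$. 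Since $Q \le 1$, this is at most $\sum_j q_j a_j^2$, so the quadratic form is nonnegative. Equivalently, $\phi$ is the covariance matrix of the feature $\Feature_{C}$ under the choice distribution, with the no-purchase option assigned the zero vector.

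\textbf{Step 2: bound the arrival rate.} By \Cref{assump:theta,assump:x} and Cauchy--Schwarz, $|\btheta^{*\top}\bx(S,\bp)| \le \|\btheta^*\|_2\|\bx(S,\bp)\|_2 \le \bar{x}$. Hence
\[
e^{-\bar{x}} \le \lambda(S,\bp;\btheta^*) \le e^{\bar{x}}
\]
for every feasible $(S,\bp)$.

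\textbf{Step 3: combine.} Multiplying a PSD matrix by a nonnegative scalar preserves the ordering, and $\Lambda>0$. Therefore, for each $s$,
\[
\Lambda e^{-\bar{x}}\phi(S_s,\bp_s,\bz_s;\bv) \preceq \Lambda \lambda(S_s,\bp_s;\btheta^*)\phi(S_s,\bp_s,\bz_s;\bv) = M_s^\MNL(\bv).
\]
Summing over $s=1,\dots,t$ preserves $\preceq$, which gives $\widehat{I}_t^\MNL(\bv)\preceq I_t^\MNL(\bv)$. In the same way, $M_t^\MNL(\bv|S,\bp)=\Lambda\lambda(S,\bp;\btheta^*)\phi \preceq \Lambda e^{\bar{x}}\phi = \widehat{M}_t^\MNL(\bv|S,\bp)$.

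The only step with any substance is the positive semidefiniteness of $\phi$ in Step 1. The delicate point there is that $\sum_{j\in S} q_j<1$ because of the no-purchase option, and Cauchy--Schwarz together with $Q\le 1$ handles it. Everything else is monotonicity of the Loewner order under nonnegative scaling and summation.
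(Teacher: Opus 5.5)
Your proof is correct and follows the paper's route. The paper notes that $\phi(S,\bp,\bz;\bv)$ is positive semidefinite and combines this with $e^{-\bar{x}} \le \lambda(S,\bp;\btheta^*) \le e^{\bar{x}}$ from \Cref{assump:theta,assump:x}; your Cauchy--Schwarz argument using $\sum_{j\in S} q_j \le 1$ supplies the positive semidefiniteness step that the paper asserts without proof.
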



Now, based on our estimator $\widehat{\btheta}_{t-1}$ and $\hv_{t-1}$, given the newly observed product features $\bz_t$, we construct the following upper confidence bound for expected revenue for every $ S\in \AssortSet$ and $\bp\in\PriceSet$: 
\begin{align}
\label{eq:ucb}
&\bar{R}_t(S, \bp) := \Lambda \lambda(S, \bp;\widehat{\btheta}_{t - 1})r(S, \bp,\Featuret ;\widehat{\bv}_{t - 1}) \\
    & \quad \quad + 
    \phigh
    \min \Bigg\{ \Lambda (e^{\bar{x}}- e^{-\bar{x}}), \,
    \sqrt{
        \Lambda e^{(2\tau_\btheta + 1) \bar{x}} \omega_\btheta 
        \left\|{I}_{t-1}^{\Poi\,-\frac{1}{2}}(\widehat{\btheta}_{t - 1}) {M}^\Poi_{t} (\widehat{\btheta}_{t - 1}|S, \bp){I}_{t-1}^{\Poi\,-\frac{1}{2}}(\widehat{\btheta}_{t - 1})
        \right\|_{\text {op }}
    }
    \Bigg\} \label{eq:ucb-lambda} \\
    & \quad \quad  +  
    \Lambda e^{\bar{x}} 
    \min 
    \left\{ 
    \phigh, \,  
    \sqrt{c_0\omega_\bv  \left\| \widehat{I}_{t-1}^{\MNL \, -\frac{1}{2}}\left(\widehat{\bv}_{t-1}\right) \widehat{M}^\MNL_{t}\left(\widehat{\bv}_{t-1} \mid S, \bp\right) \widehat{I}_{t-1}^{\MNL\,-\frac{1}{2}}\left(\widehat{\bv}_{t-1}\right)\right\|_{\text {op }}}
    \right\}.\label{eq:ucb-R}
\end{align}
The first term corresponds to the estimated revenue and the last two terms \eqref{eq:ucb-lambda} and \eqref{eq:ucb-R} represent the estimation error of the reward induced by the estimation errors of $\widehat{\btheta}_{t-1}$ and $\widehat{\bv}_{t-1}$ respectively.
Specifically, the term \eqref{eq:ucb-lambda} primarily depends on $\omega_\btheta$---the upper bound of $\| \widehat{\btheta}_{t-1} - \btheta^*\|_{I_{t - 1}^\Poi(\htheta_{t-1})}$ defined in \Cref{df:w_theta}---and the upper bound of the operator norm of the matrix that changes the weighted norm $\|\cdot\|_{I_{t - 1}^\Poi(\htheta_{t - 1}) }$ to $\|\cdot\|_{M_t^\Poi(\htheta_{t - 1}|S,\bp) }$.
The term \eqref{eq:ucb-R} primarily depends on $\omega_{\bv}$---the upper bound of $\|\hv_{t-1} - \bv^*\|_{I_{t - 1}^\MNL(\hv_{t-1}) }$ defined in \Cref{df:w_bv}---and the upper bound of the operator norm of the matrix that changes the weighted norm $\|\cdot\|_{I_{t - 1}^\MNL(\hv_{t - 1}) }$ to $\|\cdot\|_{M_t^\MNL(\hv_{t - 1}|S,\bp) }$. 
The constant $c_0$ is defined as: 
\begin{equation}\label{df:c_0}
    c_0 = 
        \frac{
         (\phigh - \plow)^2}
        {\Lambda \exp(-\bar{v})} \left[3(\exp(4\tau_\bv ) - 1)(K \exp(\bar{v} - \plow) + 1) + 1\right] .
\end{equation}

\section{Regret Analysis}\label{sec:main_results}

We now turn to the theoretical analysis of our procedure, beginning in \Cref{Subsec:regret} with our main theorem on the regret bound, followed by \Cref{Subsec:proof} devoted to proofs of the regret bound, and concluding with a matching lower bound in \Cref{Subsec:lower-bound}.


\subsection{Regret  analysis}
\label{Subsec:regret}

We begin by stating a non-asymptotic bound on the expected cumulative regret incurred by \Cref{alg:lambda_dp_mod} of $O(\sqrt{T\log T}) $.


\begin{theorem} 
\label{th:regret_bound} 
For any Poisson-MNL bandit problem under \Crefrange{assump:K}{assump:combined}, 
there are universal positive constants $c_1, c_2, c_3$ such that for all $T \geq \max\left\{4 +\frac{2\log(\DimFeature) + 1}{\sigma_0 (1 - \log 2)} - \frac{2 \log(\sigma_0 (1 - \log 2))}{\sigma_0 (1 - \log 2)}, 2\DimRank + 1 \right\}$,
the expected cumulative regret of \Cref{alg:lambda_dp_mod} is bounded as 
    \begin{align}
        \mathcal{R}^{\pi}(T; \bv, \btheta) \leq &c_1 + c_2 \DimFeature \sqrt{T \log T} + c_3 \DimRank \sqrt{T \log T},
    \end{align}
 where $c_1$  depends only on \( \Lambda, \bar{x}, \phigh\); \(c_2\) only on \(\Lambda, \bar{x}, \bar{v}, \sigma_0, \plow, \phigh\); and \(c_3\) only on \(\Lambda, \bar{x},  \sigma_1, \phigh\).
\end{theorem}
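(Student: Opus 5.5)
We prove the bound with the standard UCB template: first place everything on a good event, then show that $\bar{R}_t$ is optimistic, and finally sum the confidence widths with an elliptical-potential argument. Concretely, define the good event $\mathcal{E}$ as the intersection of the following:
(i) the conclusion of \Cref{assump:combined}, that the eigenvalue conditions \eqref{eq:assump-z}--\eqref{eq:assump-x} hold;
(ii) the pilot estimators satisfy $\|\hat\btheta-\btheta^*\|_2\le\tau_\btheta$ and $\|\hat\bv-\bv^*\|_2\le\tau_\bv$;
(iii) for every $t>T_0$, the local MLEs satisfy $\|\htheta[t-1]-\btheta^*\|_{I^\Poi_{t-1}(\htheta[t-1])}^2\le\omega_\btheta$ and $\|\hv[t-1]-\bv^*\|_{I^\MNL_{t-1}(\hv[t-1])}^2\le\omega_\bv$, with $\omega_\btheta,\omega_\bv=O(\DimRank\log T)$ and $O(\DimFeature\log T)$.

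Part (ii) uses the consistency of the global MLE over $T_0\ge\log T$ exploration rounds. Part (iii) is a self-normalized bound: the score $\nabla\LikPoi{t}$ is a martingale whose increments $(n_s-\Lambda\lambda_s)\bx_s$ are sub-exponential (Bernstein), not sub-Gaussian. We therefore apply a Bernstein-type martingale concentration inequality \citep{wainwright2019high} with a covering argument, and then use local strong concavity on the $\tau$-ball, where Hessians at nearby parameters agree up to the factors $e^{2\tau_\btheta\bar{x}}$ and $e^{4\tau_\bv}$. For the MNL score we condition on the arrivals $n_s$, so that the remaining randomness is bounded choice noise; we then replace $I^\MNL$ by the computable $\widehat I^\MNL$ using \Cref{lemma_estimate_M_I}. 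The $T^{-1}$ failure probabilities, multiplied by the per-period regret bound $\Lambda e^{\bar x}\phigh$ and summed over $T$ periods, contribute only the constant $c_1$. The exploration stage costs $O(T_0)=O(\log T)$, which is absorbed into the main terms.

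On $\mathcal{E}$ we show $R_t(S,\bp)\le\bar R_t(S,\bp)\le R_t(S,\bp)+2\,\mathrm{width}_t(S,\bp)$ for every feasible $(S,\bp)$. Write
$$\lambda^* r^*-\hat\lambda\hat r=(\lambda^*-\hat\lambda)\,r^*+\hat\lambda\,(r^*-\hat r).$$
For the first term, the mean value theorem gives
$$|\lambda(\btheta^*)-\lambda(\htheta[t-1])|\le e^{(\tau_\btheta+\cdot)\bar x}\,|\bx^\top(\btheta^*-\htheta[t-1])|.$$
Cauchy--Schwarz in the $I^\Poi_{t-1}$-norm then yields exactly the square root in \eqref{eq:ucb-lambda}, and the trivial bound $e^{\bar x}-e^{-\bar x}$ supplies the minimum. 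For the second term, a second-order Taylor expansion of $r(\cdot;\bv)$ bounds $|r^*-\hat r|$ by the $\phi$-weighted norm of $\bv^*-\hv[t-1]$. The constant $c_0$ absorbs the price spread $(\phigh-\plow)^2$ and the curvature factor $3(e^{4\tau_\bv}-1)(Ke^{\bar v-\plow}+1)$. This gives \eqref{eq:ucb-R}. Optimism then implies
$$R_t(S_t^*,\bp_t^*)-R_t(S_t,\bp_t)\le\bar R_t(S_t,\bp_t)-R_t(S_t,\bp_t)\le 2\,\mathrm{width}_t(S_t,\bp_t).$$

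To sum the widths we apply Cauchy--Schwarz over $t$, which reduces the problem to bounding
$$\sum_t\min\{1,\|I_{t-1}^{-1/2}M_tI_{t-1}^{-1/2}\|_{\rm op}\}.$$
Because $M_t$ has rank one for the Poisson part (and is trace-bounded for the MNL part), we use $\|\cdot\|_{\rm op}\le\mathrm{tr}$ together with $\min\{1,u\}\le 2\log(1+u)$. The determinant identity $\det I_t=\det I_{t-1}\det(I+I_{t-1}^{-1/2}M_tI_{t-1}^{-1/2})$ then telescopes to $\log\det I_T-\log\det I_{T_0}$. The initial eigenvalue lower bounds $\sigma_1T_0$ and $\sigma_0T_0$, with $\Lambda e^{-\bar x}$ factors, from \Cref{assump:combined}, give $O(\DimRank\log T)$ and $O(\DimFeature\log T)$ respectively. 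One subtlety is that $I^\Poi_{t-1}$ and $M_t$ are evaluated at the moving estimate $\htheta[t-1]$; we compare them with the values at $\btheta^*$ up to the constant $e^{2\tau_\btheta\bar x}$, and similarly for $\hv[t-1]$, before telescoping. Combining, the summed widths are at most $\sqrt{T\,\omega_\btheta\,\DimRank\log T}+\sqrt{T\,\omega_\bv\,\DimFeature\log T}$, which is $O(\DimRank\sqrt{T\log T}+\DimFeature\sqrt{T\log T})$.

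The main obstacle is step (iii): obtaining confidence radii $\omega_\btheta,\omega_\bv$ of order $d\log T$, rather than $d\log^2T$, uniformly in $t$. For the Poisson part I would use a Freedman/Bernstein martingale bound with variance proxy $I^\Poi_{t}(\btheta^*)$, together with a peeling argument over the eigenvalue scale to avoid an extra logarithm. For the MNL part, $I^\MNL$ involves the random arrival counts, and I would control the ratio of the empirical Hessian $\sum_s n_s\phi_s$ to $I^\MNL$ by a separate Poisson concentration bound. I expect this ratio control to be the most delicate computation.
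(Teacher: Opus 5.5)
\paragraph{The gap: your final step.}
You take $\omega_\btheta\asymp\DimRank\log T$ and $\omega_\bv\asymp\DimFeature\log T$, as in \eqref{df:w_theta} and \eqref{df:w_bv}. The log-determinant potential is itself of order $\DimRank\log T$ (resp.\ $\DimFeature\log T$). Hence $\sqrt{T\,\omega_\btheta\,\DimRank\log T}\asymp\DimRank\sqrt{T}\,\log T$, not $\DimRank\sqrt{T\log T}$, and the same holds for the MNL term. Your argument therefore gives the classical $O((\DimRank+\DimFeature)\sqrt{T}\log T)$ rate, which is a factor $\sqrt{\log T}$ short of the statement. Sharpening the radius from $d\log^2 T$ to $d\log T$ (your ``main obstacle'') only brings you to this rate.

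Following the paper will not close the gap. In \Cref{lm:sum_lambda_error,lm:sum_bv_error} the radii are absorbed into $c_6$ and $c_7$, which are then treated as constants. But $\log(1+z)\le z$ gives $c_6\ge\omega_\btheta\Lambda e^{(2\tau_\btheta+1)\bar x}$, which grows like $\DimRank\log T$, and $c_7$ likewise grows with $\omega_\bv$. Reaching $\sqrt{T\log T}$ with $T$-independent $c_2,c_3$ therefore needs an idea beyond the UCB/elliptical-potential bookkeeping that you and the paper share: the radius and the potential each contribute a factor $\log T$.

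\paragraph{Where your route differs from the paper's.}
Otherwise your template matches the paper's: a good event, optimism via mean-value bounds, Cauchy--Schwarz over $t$, and log-det telescoping after moving $I_{t-1},M_t$ from the estimate to the truth at cost $e^{4\tau_\btheta\bar x}$ and $c_8^2$. The confidence ellipsoids, however, come from a different argument.

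The paper never uses the score. It bounds the log-likelihood-ratio process $\bar G_t(\cdot)-G_t(\cdot)$ uniformly over an $\epsilon$-net of the $2\tau$-ball, using a scalar Bernstein martingale inequality whose predictable variance is self-bounded: $\mathcal{SV}^{\Poi}_t\le 2|G^{\Poi}_t(\btheta)|$ and $\mathcal{SV}^{\MNL}_t\le c_5|G^{\MNL}_t(\bv)|$. Since $G_t\le 0\le\bar G_t$ at the local MLE, \Cref{lm:ineq} gives $|G_t(\widehat{\btheta}_t)|=O(d\log T)$. A Taylor expansion of $G_t$, whose Hessian is exactly $-I_t$, then converts this into the $I_t$-norm bound.

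This route needs no peeling. Moreover, $G^{\MNL}_t$ already averages over arrivals: its Hessian is $I^{\MNL}_t$, with predictable weights $\Lambda\lambda(S_s,\bp_s;\btheta^*)$. The arrival noise therefore enters only through the Bernstein moment condition, via Poisson moment bounds. The comparison of $\sum_s n_s\phi(S_s,\bp_s,\bz_s;\bv)$ with $I^{\MNL}_t$, which you expect to be the most delicate step, never arises.

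\paragraph{Two small fixes.}
First, decompose as $\hat\lambda\hat r-\lambda^*r^*=(\hat\lambda-\lambda^*)\hat r+\lambda^*(\hat r-r^*)$. The MNL bonus carries the factor $\Lambda e^{\bar x}$, which bounds $\Lambda\lambda^*$ but not $\Lambda\lambda(S,\bp;\widehat{\btheta}_{t-1})$; the latter can be as large as $\Lambda e^{(1+2\tau_\btheta)\bar x}$.

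Second, the per-customer bound needs only a first-order mean-value step, with $\nabla r\,\nabla r^\top$ dominated by a multiple of $\phi$. The factor $c_8$ inside $c_0$ comes from the Hessian comparison in \Cref{lm:I_v_close_to_I_v_star}, not from a second-order remainder.
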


\begin{remark}
\Cref{th:regret_bound} provides a non-asymptotic regret bound that holds uniformly for almost all $T$. The regret bound gives an order of $(\DimFeature + \DimRank)\sqrt{T\log{T}}$, which is nearly optimal, compared with the lower bounds provided in \Cref{Subsec:lower-bound}. Note that the condition on $T$ is very mild---the threshold only depends on the dimensions and $\sigma_0$ (i.e., of the order $\frac{1}{\sigma_0}\log{(\DimFeature/\sigma_0)} + \DimRank$). This condition essentially requires $T\ge T_0$, where $T_0$ is defined in   \Cref{alg:lambda_dp_mod} as $T_0 :=\max\{\ceil{\frac{\log(\DimFeature T)}{\sigma_0 (1 - \log 2)}} + 1,  2 \DimRank + 1,  \lfloor\log T\rfloor\}$. If $T$ is smaller than the threshold, we will be staying in Stage 1 the entire time, and the regret bound will be of the order $T_0 = O(\log{(\DimFeature T)}+\DimRank)$. 
\end{remark}

\begin{remark}
    Note that when the arrival rate is a known constant, it is equivalent to $d_x = 0$, which gives the regret bound $O(d_z\sqrt{T\log{T}})$. If we further require $\plow = \phigh=p>0$, our problem reduces to the dynamic assortment problem, and our rate is faster than the state-of-the-art rate for dynamic assortment $O(d_z\sqrt{T}\log{T})$ \citep{chen2020dynamic,oh2021multinomial}.  
\end{remark}

\subsection{Proof Sketch}
\label{Subsec:proof}

We provide a proof sketch of Theorem \ref{th:regret_bound} in this section. 
For details of the proof, please see Appendix \ref{appendix:proof_of_theta_initial_error} to \ref{appendix::proof_of_sum_bv_error}. 
The proof consists of four major steps:
\begin{itemize}
\item [1.] \textbf{Bounding the parameter estimation errors.} 
In \Cref{subsec:bound_para}, we establish the high-probability bounds on the estimation error for both the MLE pilot estimators ($\hat\btheta$ and $\hat\bv$) in Stage 1 and the local MLE estimators ($\widehat{\btheta}_{t}$ and $\widehat{\bv}_{t}$) in Stage 2.
%
\item [2.] \textbf{Bounding the arrival rate estimation error.} In \Cref{subsec:bound_arrival}, we bound the difference between the estimated arrival rate and the true arrival rate at period $t$ 
with a high probability of $1-O(1/T)$.
\item [3.] \textbf{Bounding the expected per-customer revenue error.} In \Cref{Subsec:bound_return}, we bound difference between the estimation of expected per-customer revenue with truth at period $t$  with a high probability of $1-O(1/T)$.
\item [4.] \textbf{Bounding the expected regret.} Combining the  results above, in \Cref{sec:bound-regret} we show that with high probability, the gap between the reward of the optimal assortment-pricing decision and that of our policy at each time is upper-bounded by the gap between our upper confidence bound and the true reward, uniformly. Bounding the summation of the latter gaps and accounting for the low-probability failure event yields the regret bound.
\end{itemize}


Before going into details, we review two classes of random variables: \emph{sub-Gaussian} random variables (\Cref{def:subgaussian}) and random variables satisfying the \emph{Bernstein condition} (\Cref{def:bernstein}) (cf. \citealp[Chapter 2]{wainwright2019high}). In standard MNL-bandit problems, the log-likelihood or its gradient of each time period is typically sub-Gaussian \citep{abbasi2011improved,chen2020dynamic} followed from the boundedness of the reward, which makes it easier to bound the full log-likelihood (or its gradient) thanks to the thinner tail of the summands. However, in our model, the sub-Gaussian property no longer holds due to the additional Poisson process, thereby requiring a more careful analysis. It turns out that the log-likelihood of each time period in our case instead satisfies the Bernstein condition.


\begin{definition}[Sub-Gaussian]
\label{def:subgaussian}
    A random variable $X$ with mean $\mu = \mathbb{E}[X]$ is sub-Gaussian if there exists a positive constant $\sigma$ such that
    $$
    \mathbb{E}\left[e^{c(X-\mu)}\right] \leq e^{\sigma^2 c^2 / 2} \quad \text{for all } c\in \Real.
    $$
\end{definition}

\begin{definition}[Bernstein Condition]
\label{def:bernstein}
    A random variable $X$ with mean $\mu = \mathbb{E}[X]$ is said to satisfy the Bernstein condition if there exist constants $V > 0$ and $c > 0$ such that for all integers $k \geq 2$:
    $$
    \mathbb{E}\left[ |X - \mu|^k \right] \leq \frac{k!}{2} V c^{k-2}.
    $$
\end{definition}





\subsubsection{Bounding the Parameter Estimation Errors.}\label{subsec:bound_para}

We first establish the high-probability estimation error bounds for the MLE pilot estimator $\htheta$ and $\hv$ from Stage 1 (pure exploration) in \Cref{lm:theta_initial_error,lm:bv_initial_error}, and then provide the high-probability error bounds for the local MLE $\htheta_t$ and $\hv_t$ from Stage 2 in \Cref{lm:theta_error,lm:bv_error}. The proofs and technical details are deferred to Appendix \Cref{appendix:proof_of_theta_initial_error,appendix::proof_of_bv_initial_error} as well as \Cref{appendix::proof_of_theta_error,appendix::proof_of_bv_error}, respectively.

\begin{lemma}\label{lm:theta_initial_error}
With probability at least $1 - 2T^{-1}$, the $\ell_2$ error of the pilot estimator $\htheta$ is bounded as:
\begin{equation*}
\|\htheta - \btheta^*\|_2 \leq \tau_\btheta.
\end{equation*}
Here, \(\tau_\btheta = \min\{1, \tilde{\tau}_\btheta \}\), 
where $\tilde{\tau}_\btheta $ is bounded as:
\begin{multline}
\label{eq:tau_theta}
   \tilde{\tau}_\btheta^2
        \leq \frac{2 \exp(2\bar{x})}{T \Lambda \sigma_1} \left(2 + \frac{4\log(T)}{ \Lambda \exp(\bar{x}) T_0} + \sqrt{\frac{4\log(T)}{ \Lambda \exp(\bar{x}) T_0}} + \exp(-\bar{x})\right)
         \\
         + \frac{8(2 \bar{x} c_4 + 1) \exp(\bar{x})}{T_0\Lambda\sigma_1} \left(\log T + \DimRank \log \left(3\bar{x}(\Lambda T + 1)\right)\right).
\end{multline}
Here, $c_4$ is  defined as 
    \begin{equation}
    \label{eq:df_c_2}
        c_4 = \max\left\{1, \frac{2e^2}{\log^2\left(1 + \frac{3}{\Lambda \exp(\bar{x})}\right)\sqrt{6\pi}\Lambda \exp(-\bar{x})}\right\}  \frac{2e}{\log\left(1 + \frac{3}{\Lambda \exp(\bar{x})}\right)}.
    \end{equation}
\end{lemma}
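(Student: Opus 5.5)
The plan is to use the standard M-estimation basic-inequality argument for the constrained Poisson MLE. Because Assumption~\ref{assump:theta} puts $\btheta^*$ in the feasible ball, $\LikPoi{T_0}$ evaluated at $\htheta$ is at least its value at $\btheta^*$. Write $\Delta = \htheta - \btheta^*$ and $\bx_s = \bx(S^{init}_s,\bp^{init}_s)$. A second-order Taylor expansion of the concave function $\LikPoi{T_0}$ around $\btheta^*$ gives
\begin{equation*}
0 \le \LikPoi{T_0}\big|_{\htheta} - \LikPoi{T_0}\big|_{\btheta^*} = \mathbf{g}^\top\Delta - \tfrac12 \Delta^\top \Big(\textstyle\sum_{s=1}^{T_0}\Lambda e^{\tilde\btheta^\top \bx_s}\bx_s\bx_s^\top\Big)\Delta,
\end{equation*}
where $\mathbf{g}=\sum_{s\le T_0}(n_s-\Lambda\lambda_s)\bx_s$ is the score at $\btheta^*$ and $\tilde\btheta$ lies on the segment between $\htheta$ and $\btheta^*$.

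For the curvature term, $\|\tilde\btheta\|_2\le 1$ and $\|\bx_s\|_2\le\bar x$ give $e^{\tilde\btheta^\top\bx_s}\ge e^{-\bar x}$. Since Stage~1 uses the deterministic design of Assumption~\ref{assump:combined} and $T_0\ge t_0$, \eqref{eq:assump-x} yields the curvature lower bound $\Lambda e^{-\bar x}\sigma_1 T_0\|\Delta\|_2^2/2$. This reduces the problem to a high-probability bound on the linear term $\mathbf{g}^\top\Delta$, or equivalently on $\sup_{\|u\|_2\le 1}u^\top \mathbf{g}$. The $\min\{1,\cdot\}$ in $\tau_\btheta$ is automatic, because both $\htheta$ and $\btheta^*$ lie in the unit ball.

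The main obstacle is controlling $\mathbf{g}$: its summands $(n_s-\Lambda\lambda_s)u^\top\bx_s$ are independent centered Poisson variables, which are not sub-Gaussian. I would handle this in two steps.

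\textbf{Bernstein tail for a fixed direction.} I would verify that a centered $\Poi(\Lambda\lambda_s)$ variable with $\Lambda\lambda_s\le\Lambda e^{\bar x}$ satisfies the Bernstein condition of Definition~\ref{def:bernstein}. The variance proxy would be $V\asymp \Lambda e^{\bar x}$, and the scale parameter would come from bounding the moment generating function on the interval $|c|\le \log(1+3/(\Lambda e^{\bar x}))$. Stirling's formula applied to the factorial moments then produces the constant $c_4$ in \eqref{eq:df_c_2}. Bernstein's inequality then gives, for fixed $u$, a tail for $u^\top \mathbf{g}$ of the form $\sqrt{T_0\Lambda e^{\bar x}\,\bar x^2\log(1/\delta)}+ \bar x c_4\log(1/\delta)$.

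\textbf{Union bound and discretization.} I would take an $\varepsilon$-net of the parameter ball of size $(3\bar x(\Lambda T+1))^{\DimRank}$, which corresponds to $\varepsilon\asymp 1/(\bar x(\Lambda T+1))$, and union-bound over it with $\delta = T^{-1}/\text{(net size)}$. This produces the $\log T+\DimRank\log(3\bar x(\Lambda T+1))$ factor and the $(2\bar x c_4+1)$ prefactor in the second line of \eqref{eq:tau_theta}. The discretization error is controlled by the Lipschitz constant of the log-likelihood in $\btheta$, which is at most $\bar x\sum_s (n_s+\Lambda e^{\bar x})$. I would bound $\sum_s n_s$ by a second Poisson tail bound, holding with probability $1-T^{-1}$. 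That bound is $\Lambda e^{\bar x}T_0$ multiplied by $1+4\log T/(\Lambda e^{\bar x}T_0)+\sqrt{4\log T/(\Lambda e^{\bar x}T_0)}$. Multiplied by $\varepsilon\sim 1/T$, this becomes the first, $O(1/T)$, line of \eqref{eq:tau_theta}.

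Finally, I would combine the two bounds through a union bound over the two failure events, which gives total probability $1-2T^{-1}$. Dividing by the curvature constant $\Lambda e^{-\bar x}\sigma_1T_0$ then yields the stated bound on $\tilde\tau_\btheta^2$.
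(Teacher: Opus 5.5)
Your route gives the right rate, but it does not prove the displayed bound. The step that fails is the last one.

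\textbf{The main gap.} Suppose you handle the linear term by $\mathbf g^\top\Delta\le\|\Delta\|_2\sup_{\|u\|_2\le1}u^\top\mathbf g$, with the non-adaptive variance proxy $T_0\Lambda e^{\bar x}\bar x^2$. Then $\|\mathbf g\|_2\lesssim\sqrt{T_0\Lambda e^{\bar x}\bar x^2L}+\bar xc_4L$, where $L=\log T+\DimRank\log(\cdot)$. Hence $\|\Delta\|_2\le2\|\mathbf g\|_2/(\Lambda e^{-\bar x}\sigma_1T_0)$ and
\[
\|\Delta\|_2^2\;\lesssim\;\frac{e^{3\bar x}\bar x^2}{\Lambda\sigma_1^2T_0}\,L+\frac{e^{2\bar x}\bar x^2c_4^2}{\Lambda^2\sigma_1^2T_0^2}\,L^2 .
\]
You divide by the curvature squared, not once. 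Three consequences follow:
\begin{itemize}
\item In place of $e^{\bar x}(2\bar xc_4+1)/\sigma_1$ you get $e^{3\bar x}\bar x^2/\sigma_1^2$.
\item $\bar x^2/\sigma_1\ge\DimRank$, because $\sigma_1T_0\le\operatorname{tr}(\sum_s\bx_s\bx_s^\top)/\DimRank$; and $c_4$ enters quadratically.
\item So your bound does not imply \eqref{eq:tau_theta} in general, e.g.\ for ill-conditioned designs.
\end{itemize}
Dividing by $\tfrac12\Lambda e^{-\bar x}\sigma_1T_0$ once gives the $L/T_0$ form only if the linear term is $O(L)$, rather than $O(\sqrt{T_0L})\,\|\Delta\|_2$. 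That requires localization.

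\textbf{How the paper localizes.} It works with the log-likelihood ratio $\bar G^{\Poi}_{T_0}(\btheta)$ and its compensator $G^{\Poi}_{T_0}(\btheta)$, which is minus a conditional KL divergence.
\begin{itemize}
\item Since $\bar G^{\Poi}_{T_0}(\htheta)\ge0\ge G^{\Poi}_{T_0}(\htheta)$, it gets $|G^{\Poi}_{T_0}(\htheta)|\le\bar G^{\Poi}_{T_0}(\htheta)-G^{\Poi}_{T_0}(\htheta)$.
\item It bounds this deviation uniformly over the ball with a Bernstein-type inequality whose variance is at most a multiple of $|G^{\Poi}_{T_0}(\btheta)|$ itself (\Cref{lm:V_bound,lm:g_1_convergence_rate}).
\item The resulting inequality $|G|\le a+\sqrt{b(|G|+c)}$ is solved by \Cref{lm:ineq}, giving $|G^{\Poi}_{T_0}(\htheta)|=O(L)$.
\item Only then is the curvature divided out. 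This is where the $(2\bar xc_4+1)$ prefactor and the single $\sigma_1^{-1}$ come from.
\end{itemize}

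\textbf{Your plan mixes two arguments.} The parameter-ball net, the Lipschitz constant $\bar x\sum_s(n_s+\Lambda e^{\bar x})$, the tail bound on $\sum_sn_s$, and the $O(1/T)$ first line all exist to make this $\btheta$-indexed deviation uniform. For $\sup_uu^\top\mathbf g$, a constant-resolution net of directions suffices, and none of these terms would arise.

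If you want to keep the score, self-normalize instead: use $\mathbf g^\top\Delta\le\|\mathbf g\|_{I^{\Poi\,-1}_{T_0}(\btheta^*)}\|\Delta\|_{I^{\Poi}_{T_0}(\btheta^*)}$ together with $I^{\Poi}_{T_0}(\tilde\btheta)\succeq e^{-2\bar x}I^{\Poi}_{T_0}(\btheta^*)$. This also divides by the curvature only once, at the price of different constants.

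\textbf{Two smaller points.}
\begin{itemize}
\item The cap $\min\{1,\cdot\}$ is not automatic: two points of the unit ball can be at distance $2$. The paper's argument also only yields $\|\htheta-\btheta^*\|_2\le\tilde\tau_\btheta$, so the cap needs its own justification. For the same reason, any localization should be over radius $2$ around $\btheta^*$; the paper's choice $\tau=\tfrac12$ covers only radius $1$.
\item If you borrow \Cref{lm:V_bound}, note that $\log^2y\le2(y-1-\log y)$ holds only for $y\ge1$. On $|\log y|\le2\bar x$ the correct multiple is $4\bar x^2/(e^{-2\bar x}-1+2\bar x)$, the analogue of $c_5$. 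This changes the constant multiplying $L$.
\end{itemize}
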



\begin{lemma}\label{lm:bv_initial_error}
    With probability at least $1-3T^{-1}$, the $\ell_2$ error of the pilot estimator $\hv$ is bounded as:
\begin{align*}
    \left\|\widehat{\bv}-\bv^*\right\|_2 \leq \tau_\bv.
\end{align*}
Here, \(\tau_\bv = \min\{1, \tilde{\tau}_\bv\}\), where $ \tilde{\tau}_\bv$ is bounded as:
\begin{multline}
\label{eq:tau_v}
    \tilde{\tau}_\bv^2 \leq \frac{2\exp(2\bar{x}) }{\kappa T \Lambda \sigma_0} \left(2 + \frac{4\log(T)}{\Lambda \exp(\bar{x}) T_0} + \sqrt{\frac{4\log(T)}{\Lambda \exp(\bar{x}) T_0}}\right) + 
    \frac{8 \exp(\bar{x})}{\kappa T_0 \Lambda \sigma_0} \left((\DimFeature + 1)\log T +  \DimFeature \log(6\Lambda) \right)\\
    +\frac{8 \exp(\bar{x})}{\kappa T_0 \Lambda\sigma_0} \sqrt{T_0 \Lambda \exp(\bar{x})  ((\DimFeature + 1)\log T + \DimFeature\log(6 \Lambda))},
\end{multline}
and 
\begin{equation}
\label{kappa}
\kappa = \frac{\exp( - \bar{v} - \phigh)}{(K\exp(\bar{v} - \plow) + 1)^2}.
\end{equation}
\end{lemma}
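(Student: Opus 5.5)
The plan is the standard MLE consistency argument: a quadratic lower bound on the negative log-likelihood around $\bv^*$, combined with a high-probability upper bound on the gradient (score) at $\bv^*$. Write $\LikMNL{T_0}$ as a sum over periods $s\le T_0$ and customers $i\le n_s$. Since $\hv$ maximizes over the ball $\|\bv\|_2\le\bar v$, which contains $\bv^*$ by \Cref{as:v&assortment_new}, we have $\LikMNL{T_0}(\hv)\ge \LikMNL{T_0}(\bv^*)$. A second-order Taylor expansion with integral remainder then gives
\begin{equation*}
0 \le \nabla\LikMNL{T_0}(\bv^*)^\top \Delta - \tfrac12 \Delta^\top \Big(\int_0^1 -\nabla^2 \LikMNL{T_0}(\bv^*+u\Delta)\,du\Big)\Delta, \qquad \Delta=\hv-\bv^*.
\end{equation*}

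\textbf{Curvature.} For each period, the Hessian of the MNL log-likelihood equals $-n_s\,\phi(S_s,\bp_s,\bz_s;\bv)$. It does not depend on the realized choices, only on the count $n_s$. I will lower bound $\phi$ uniformly over $\|\bv\|_2\le\bar v$ by $\kappa\sum_{j\in S_s}\bz_{js}\bz_{js}^\top$. This is the standard MNL argument: $\phi$ is the covariance of $\bz_{C}$ under the choice distribution with the no-purchase option contributing $0$, and every choice probability, including the outside option, is at least $\exp(-\bar v-\phigh)/(K\exp(\bar v-\plow)+1)$. Pairing each product with the no-purchase option gives exactly the $\kappa$ in \eqref{kappa}. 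Hence
\begin{equation*}
-\nabla^2\LikMNL{T_0}\succeq \kappa\sum_s n_s\sum_{j\in S_s}\bz_{js}\bz_{js}^\top .
\end{equation*}
To turn the random weights $n_s$ into constants, I will lower bound each $n_s$ via Poisson concentration around $\Lambda\lambda_s\ge\Lambda e^{-\bar x}$. It is cleaner to bound $\sum_s (n_s-\Lambda\lambda_s)\bz\bz^\top$ in operator norm using a Bernstein inequality for Poisson variables. Combined with \eqref{eq:assump-z}, which holds with probability $1-T^{-1}$, this yields $\lambda_{\min}\gtrsim \kappa\Lambda e^{-\bar x}\sigma_0 T_0$ with probability $1-T^{-1}$. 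The $\log T/(\Lambda e^{\bar x}T_0)$ terms in \eqref{eq:tau_v} come from this step.

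\textbf{Score.} The gradient at $\bv^*$ is $\sum_s\sum_{i\le n_s}(\bz_{C_s^{(i)}s}-\E[\bz_C])$. Conditional on the counts $n_s$, this is a sum of independent mean-zero vectors of norm at most $2$. I will apply a vector martingale/Hoeffding bound through an $\varepsilon$-net of the unit sphere in $\R^{\DimFeature}$. The $6^{\DimFeature}$-type covering explains the $\DimFeature\log(6\Lambda)$ and $(\DimFeature+1)\log T$ terms. This gives, with probability $1-T^{-1}$, that the score is bounded in norm by roughly $\sqrt{(\sum_s n_s)\,(\DimFeature\log T+\dots)}$ plus lower order terms. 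A final Poisson tail bound gives $\sum_s n_s\le T_0\Lambda e^{\bar x}+O(\sqrt{T_0\Lambda e^{\bar x}\log T}+\log T)$, with probability $1-T^{-1}$. This accounts for the square-root term in \eqref{eq:tau_v}.

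\textbf{Combination and main obstacle.} Plugging both bounds into the Taylor inequality gives $\|\Delta\|_2\le 2\|\text{score}\|/\lambda_{\min}$, and squaring yields the stated $\tilde\tau_\bv^2$ up to constants. A union bound over the three failure events gives $1-3T^{-1}$, and capping at $1$ is trivial. I expect the main obstacle to be the coupling between Poisson counts and choices: the choice noise is not sub-Gaussian unconditionally, because the number of summands is random and unbounded. I would handle this by conditioning on $(n_s)_{s\le T_0}$, which is legitimate in Stage 1 since the actions are predetermined. With that conditioning, the choice concentration is only applied given the counts, and the Poisson deviations are controlled separately with Bernstein bounds, as in \Cref{def:bernstein}.
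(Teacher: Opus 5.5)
Your score-plus-empirical-Hessian route breaks at the curvature step. You need
$\lambda_{\min}\bigl(\sum_{s\le T_0}n_s\sum_{j\in S_s}\bz_{js}\bz_{js}^\top\bigr)\gtrsim\Lambda e^{-\bar x}\sigma_0T_0$ with probability $1-T^{-1}$, but the assumptions do not give this. $T_0$ is only of order $\log(\DimFeature T)/\sigma_0$, and $\Lambda>0$ is arbitrary. Indeed $\Prob(n_1=\cdots=n_{T_0}=0)\ge\exp(-\Lambda e^{\bar x}T_0)$, which exceeds $T^{-1}$ whenever $\Lambda e^{\bar x}T_0<\log T$. On that event the empirical Hessian vanishes and $\hv$ can be any point of the feasible ball.

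Quantitatively, a Bernstein bound on $\bigl\|\sum_s(n_s-\Lambda\lambda_s)\sum_{j\in S_s}\bz_{js}\bz_{js}^\top\bigr\|_{\mathrm{op}}$ is of order $K\sqrt{\Lambda e^{\bar x}T_0\log(\DimFeature T)}+K\log(\DimFeature T)$. This falls below $\frac12\Lambda e^{-\bar x}\sigma_0T_0$ only under an extra condition such as $\Lambda\sigma_0^2T_0\gtrsim K^2e^{3\bar x}\log(\DimFeature T)$, which nothing provides.

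Even where the step goes through, your argument gives $\|\hv-\bv^*\|_2^2\lesssim e^{3\bar x}L/(\kappa^2\sigma_0^2\Lambda T_0)$, with $L$ of order $\DimFeature+\log T$. This is comparable to the \emph{square} of the last term of \eqref{eq:tau_v}. It is not \eqref{eq:tau_v} up to constants, since the powers of $\kappa$ and $\sigma_0$ differ, and it implies \eqref{eq:tau_v} only when that term is small. Your term-by-term attributions are also off. For example, the $\log T/(\Lambda e^{\bar x}T_0)$ pieces of \eqref{eq:tau_v} carry a $1/T$ prefactor, so they cannot come from a curvature step.

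The paper never lower-bounds a random Hessian. It Taylor-expands the conditional expectation $G_{T_0}^{\MNL}(\bv)$ of the log-likelihood ratio $\bar G_{T_0}^{\MNL}(\bv)$. The Hessian of $G_{T_0}^{\MNL}$ is $-I_{T_0}^{\MNL}$, weighted by the \emph{expected} counts $\Lambda\lambda(S_s,\bp_s;\btheta^*)\ge\Lambda e^{-\bar x}$. Hence \eqref{eq:assump-z} plus the same $\kappa$ argument you use gives $-G_{T_0}^{\MNL}(\hv)\ge\frac12\Lambda e^{-\bar x}\kappa\sigma_0T_0\|\hv-\bv^*\|_2^2$ on the design event, with no concentration of counts.

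Optimality of $\hv$ gives $\bar G_{T_0}^{\MNL}(\hv)\ge0\ge G_{T_0}^{\MNL}(\hv)$, so $|G_{T_0}^{\MNL}(\hv)|\le\bar G_{T_0}^{\MNL}(\hv)-G_{T_0}^{\MNL}(\hv)$. All arrival and choice randomness is thereby pushed into this uniform deviation, which the paper controls with three ingredients:
\begin{itemize}
\item the exact compound-Poisson moment generating function (\Cref{lm:expec_g_bar,lm:init_t_bound_g_bar_to_g}), whose variance proxy $\sum_s\Lambda\lambda_s\le T_0\Lambda e^{\bar x}$ produces the square-root term;
\item a covering of the parameter ball at scale $1/(T\Lambda)$, which produces the $(\DimFeature+1)\log T+\DimFeature\log(6\Lambda)$ terms;
\item a Poisson upper tail for $\sum_s n_s$ in the discretization error, which produces the $1/T$ terms.
\end{itemize}
With scarce counts this bound just becomes large rather than failing.

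To rescue your route you would have to treat the low-count regime separately. One option is a matrix Chernoff lower tail for $\sum_s n_s\sum_{j\in S_s}\bz_{js}\bz_{js}^\top$ (PSD summands of norm at most $K$; the Poisson weights are handled by infinite divisibility) when $\Lambda e^{-\bar x}\sigma_0T_0\gtrsim K\log(\DimFeature T)$, plus a separate argument otherwise. You would then still need to compare your bound with \eqref{eq:tau_v}.

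Finally, capping at $1$ is not trivial. The feasible ball has diameter $2\bar v$, so $\|\hv-\bv^*\|_2\le 1$ does not follow from feasibility, and the paper's argument does not supply it either. What that argument actually establishes is the uncapped bound $\|\hv-\bv^*\|_2\le\tilde\tau_\bv$.
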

\begin{remark}
\label{rmk:tau_theta_bv}
Both $\tau_{\btheta}^2$ in \Cref{lm:theta_initial_error} and $\tau_{\bv}^2$ in \Cref{lm:bv_initial_error} are of the order $O\left(\frac{\log T}{T_0}\right)$. By our initialization $T_0 = \Omega(\log T)$, $O\left(\frac{\log T}{T_0}\right)$ simplifies to $O(1)$. 
Recall that in Stage 2, i.e., for $t=T_0 + 1, \ldots, T$, we obtain the local MLEs $\htheta_t$ and $\hv_t$ within their feasible regions: $||\htheta_t - \htheta||_2 \leq \tau_\btheta$ and $||\hv_t - \hv||_2 \leq \tau_\bv$. While this constant shrinkage of the feasible regions may appear less meaningful, it actually plays a crucial role because the estimation errors of the local MLEs scale exponentially with the radii $\tau_\btheta$ and $\tau_\bv$. Consequently, this shrinkage leads to both a substantial improvement in algorithmic performance and a substantial reduction in the regret.  
\end{remark}
\begin{remark}
    The constant $\kappa$ in \Cref{lm:bv_initial_error} depends only on the number of products in the assortment and the boundedness conditions. In the literature, it is common to introduce an additional abstract assumption that essentially assumes the positive definiteness of the Fisher information via such a constant $\kappa$ (see, e.g., \cite{oh2021multinomial,oh2019thompson}), or, equivalently, $1/\Upsilon$ as used in \cite{cheung2017thompson}. In our case, this positive definiteness follows directly from the more concrete and natural boundedness assumptions, with $\kappa$ explicitly specified in \Cref{kappa}. 
\end{remark}

In Stage 2, we obtain the local MLEs \eqref{eq:local-mle} within the feasible regions. With the radii of the feasible regions shrink from $1$ and $\bar{v}$ to $\tau_{\btheta}$ and $\tau_{\bv}$ respectively, we have the following two uniform error bounds for the local MLEs.
\begin{lemma}\label{lm:theta_error}
    With probability $1 - 4T^{-1}$, the following hold uniformly over all $t = T_0, \ldots, T-1$:
\begin{equation}\label{eq:theta_error_I}
    \begin{aligned}
         \left(\widehat{\btheta}_t - \btheta^*\right)^{\top} I_{t}^\Poi\left(\btheta^*\right)\left(\widehat{\btheta}_t - \btheta^*\right)  \leq \omega_\btheta, \;
         \left(\widehat{\btheta}_t - \btheta^*\right)^{\top}  I_{t}^\Poi\left(\widehat{\btheta}_t\right)\left(\widehat{\btheta}_t - \btheta^*\right)  \leq \omega_\btheta, \;
         \|\widehat{\btheta}_t - \btheta^*\| \leq 2\tau_\theta,
    \end{aligned}
\end{equation}
where
\begin{equation}\label{df:w_theta}
\begin{multlined}
    \omega_\btheta  \leq  
    8 e^{2\tau_\btheta \bar{x}}
    \left(
    \frac{1}{2} + e^{\bar{x}} 
    + \sqrt{\frac{2e^{\bar{x}} \log T}{T\Lambda}} 
    + \frac{4\log T}{T\Lambda} 
    + 2 (\tau_\btheta \bar{x} c_4 + 1)\big(2\log T + d \log (6\tau_\btheta \bar{x}(\Lambda T + 1))\big) 
    \right).
\end{multlined}
\end{equation}

\end{lemma}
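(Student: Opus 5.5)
We work on the event $\mathcal{E}_0=\{\|\htheta-\btheta^*\|_2\le\tau_\btheta\}$ from \Cref{lm:theta_initial_error}, which has probability at least $1-2T^{-1}$. On $\mathcal{E}_0$ the true parameter $\btheta^*$ is feasible for every local MLE problem in \eqref{eq:local-mle}. For each $t\ge T_0$, optimality of $\htheta_t$ then gives $\LikPoi{t}(\htheta_t)\ge\LikPoi{t}(\btheta^*)$, and the triangle inequality gives $\|\htheta_t-\btheta^*\|_2\le 2\tau_\btheta$. This already yields the third claim in \eqref{eq:theta_error_I}.

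\textbf{Second-order expansion.} Write $\Delta=\htheta_t-\btheta^*$ and $g_t=\nabla\LikPoi{t}(\btheta^*)=\sum_{s\le t}(n_s-\Lambda\lambda_s)\xs$. The Poisson log-likelihood has Hessian $-\sum_s\Lambda\lambda(S_s,\bp_s;\btheta)\xs\xs^\top$. Along the segment between $\btheta^*$ and $\htheta_t$ we have $|(\btheta-\btheta^*)^\top\xs|\le 2\tau_\btheta\bar{x}$, so this Hessian is sandwiched as
\[
e^{-2\tau_\btheta\bar x}I_t^\Poi(\btheta^*)\preceq -\nabla^2\LikPoi{t}\preceq e^{2\tau_\btheta\bar x}I_t^\Poi(\btheta^*).
\]
The same sandwich holds between $I_t^\Poi(\htheta_t)$ and $I_t^\Poi(\btheta^*)$. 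Taylor's theorem combined with the optimality inequality gives
\[
\tfrac12 e^{-2\tau_\btheta\bar x}\|\Delta\|^2_{I_t^\Poi(\btheta^*)}\le g_t^\top\Delta .
\]
It therefore suffices to bound $g_t^\top\Delta$ uniformly over $\Delta$ in the $2\tau_\btheta$-ball. The version in the $I_t^\Poi(\htheta_t)$ norm follows at the cost of another factor $e^{2\tau_\btheta\bar x}$, which is absorbed into the prefactor $8e^{2\tau_\btheta\bar x}$ of \eqref{df:w_theta}.

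\textbf{Controlling the score term (main obstacle).} The summands $\Delta^\top\xs\,(n_s-\Lambda\lambda_s)$ are martingale differences, but they are Poisson, hence not sub-Gaussian, and $\Delta$ is data dependent. I would proceed in three steps.
\begin{enumerate}
\item Fix a direction $\bm u$ with $\|\bm u\|_2\le2\tau_\btheta$. The increment $\bm u^\top\xs\,(n_s-\Lambda\lambda_s)$ satisfies the Bernstein condition (\Cref{def:bernstein}) with variance proxy $\Lambda\lambda_s(\bm u^\top\xs)^2$ and scale $c\asymp\tau_\btheta\bar x c_4$. The constant $c_4$ in \eqref{eq:df_c_2} is exactly a bound on the centered Poisson moments.
\item Apply a Freedman/Bernstein martingale inequality \citep{wainwright2019high}. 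With probability at least $1-\delta$,
\[
\bm u^\top g_t\le \sqrt{2\|\bm u\|^2_{I_t(\btheta^*)}\log(1/\delta)}+c\log(1/\delta).
\]
To handle the random, unbounded variance I would peel over dyadic scales of $\|\bm u\|_{I_t}$, which lies between $0$ and $4\tau_\btheta^2\bar x^2\Lambda e^{\bar x}T$.
\item Make the bound uniform over the ball with an $\epsilon$-net of size $(6\tau_\btheta\bar x(\Lambda T+1))^{\DimRank}$, and over $t\le T$ by a union bound with $\delta\propto T^{-2}$. This produces the $2\log T+\DimRank\log(6\tau_\btheta\bar x(\Lambda T+1))$ factor. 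The discretization error is controlled because $\|g_t\|$ is at most $\bar x\sum_s n_s+\Lambda e^{\bar x}T$.
\end{enumerate}

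\textbf{Closing the bound.} A Poisson tail bound on $\sum_s n_s$ holds with probability $1-T^{-1}$ and produces the terms $\sqrt{2e^{\bar x}\log T/(T\Lambda)}+4\log T/(T\Lambda)$ and $e^{\bar x}$. Plugging $\bm u=\Delta$ into the score bound and using $ab\le a^2/4+b^2$ to absorb $\|\Delta\|_{I_t}$ into the left-hand side yields
\[
\|\Delta\|^2_{I_t(\btheta^*)}\le\omega_\btheta .
\]
Collecting failure probabilities, $2T^{-1}$ from $\mathcal{E}_0$, $T^{-1}$ from the martingale/net step and $T^{-1}$ from the Poisson tail, gives the stated $1-4T^{-1}$. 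The delicate step is the peeling plus net argument with Bernstein-type (rather than sub-Gaussian) increments while keeping the explicit constants.
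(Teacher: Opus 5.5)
Your argument is the paper's proof written in score form, not a different route. For the log-linear Poisson model, using your score $g_t$ and the Taylor intermediate point $\bar{\btheta}_t$:
\begin{itemize}
\item $\bar{G}_t^{\Poi}(\btheta)=\mathcal{L}_t^{\Poi}(\btheta)-\mathcal{L}_t^{\Poi}(\btheta^*)$;
\item $\bar{G}_t^{\Poi}(\btheta)-G_t^{\Poi}(\btheta)=g_t^\top(\btheta-\btheta^*)$ exactly;
\item $-G_t^{\Poi}(\widehat{\btheta}_t)=\tfrac12\Delta^\top I_t^{\Poi}(\bar{\btheta}_t)\Delta$.
\end{itemize}
So your optimality-plus-Taylor inequality is the paper's $|G_t^{\Poi}(\widehat{\btheta}_t)|\le\bar{G}_t^{\Poi}(\widehat{\btheta}_t)-G_t^{\Poi}(\widehat{\btheta}_t)$. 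Your Bernstein, net and Poisson-tail steps are \Cref{lm:poi_k_bound} and part 2 of \Cref{lm:g_1_convergence_rate}, with the same net size, the same $\delta\propto T^{-2}(\epsilon/6\tau)^{\DimRank}$, and the same $2T^{-1}+2T^{-1}$ accounting. You also make explicit the reliance on the pilot event, which the paper leaves implicit.

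There are two real differences.
\begin{itemize}
\item You self-normalize by $\|\bm{u}\|^2_{I_t^{\Poi}(\btheta^*)}$, which is exactly the paper's $\mathcal{SV}^{Poi}_t$. The paper instead bounds $\mathcal{SV}^{Poi}_t\le 2|G_t^{\Poi}|$ (\Cref{lm:V_bound}) and solves for the norm-free quantity $|G_t^{\Poi}(\widehat{\btheta}_t)|$ via \Cref{lm:ineq}.
\item You peel over the random conditional variance. The paper plugs $y=2|G_t^{\Poi}(\btheta)|$ directly into de la Pe\~na's bound, which requires a fixed $y$; that quantity is random once Stage-2 decisions are adaptive. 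On this point your version is the more rigorous one. Just put a floor on the dyadic shells, which otherwise accumulate at $0$.
\end{itemize}

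Where your sketch falls short is the explicit constant, and your closing remarks about it are not right. Write $\ell$ for the factor $2\log T+\DimRank\log(6\tau_\btheta\bar{x}(\Lambda T+1))$.
\begin{itemize}
\item Absorbing $\sqrt{2\|\Delta\|^2_{I_t^{\Poi}(\btheta^*)}\ell}$ into $\tfrac12 e^{-2\tau_\btheta\bar{x}}\|\Delta\|^2_{I_t^{\Poi}(\btheta^*)}$ leaves a variance term $8e^{4\tau_\btheta\bar{x}}\ell$, doubled by the peeling. That is not the $e^{2\tau_\btheta\bar{x}}$ scaling in \eqref{df:w_theta}.
\item A bound $\omega_\btheta$ in the $I_t^{\Poi}(\btheta^*)$-norm transfers to the $I_t^{\Poi}(\widehat{\btheta}_t)$-norm only as $e^{2\tau_\btheta\bar{x}}\omega_\btheta$. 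Nothing gets absorbed into the prefactor.
\end{itemize}

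Both points are repaired by the paper's ordering. First bound $Q:=\Delta^\top I_t^{\Poi}(\bar{\btheta}_t)\Delta=2|G_t^{\Poi}(\widehat{\btheta}_t)|$. Control the variance by $Q$ through the scalar inequality $u^2\le c'(e^u-1-u)$ applied to each $u_s=\Delta^\top\xs$, rather than through the Hessian sandwich, which costs an extra $e^{2\tau_\btheta\bar{x}}$. Only afterwards compare $I_t^{\Poi}(\bar{\btheta}_t)$ with either endpoint, paying $e^{2\tau_\btheta\bar{x}}$ once for each norm.

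Be aware that the paper takes $c'=2$ from $\log^2y\le 2(y-1-\log y)$, which holds only for $y\ge 1$. On $|u|\le 2\tau_\btheta\bar{x}$ the correct constant is $(2\tau_\btheta\bar{x})^2/(e^{-2\tau_\btheta\bar{x}}-1+2\tau_\btheta\bar{x})\le 2+2\tau_\btheta\bar{x}$, the analogue of $c_5$.

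None of this changes the $O(\log T)$ order. However, $\omega_\btheta$ is used verbatim in the UCB of \Cref{alg:lambda_dp_mod}, so its explicit value is what makes the confidence bound valid.
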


\begin{lemma}\label{lm:bv_error}
    With probability $1-4 T^{-1}$, the following hold uniformly over all $t=T_0, \ldots, T-1$:
\begin{equation}\label{eq:v_error_I}
    \begin{aligned}
              \left(\widehat{\bv}_t-\bv^*\right)^{\top} I_{t}^\MNL\left(\bv^*\right)\left(\widehat{\bv}_t-\bv^*\right)  \leq \omega_\bv,\quad
              \left(\widehat{\bv}_t-\bv^*\right)^{\top} I_{t}^\MNL\left(\widehat{\bv}_t\right)\left(\widehat{\bv}_t-\bv^*\right)  \leq \omega_\bv, \quad \|\widehat{\bv}_t - \bv^*\| \leq 2\tau_\bv,
    \end{aligned}
\end{equation}
where
\begin{align}
    \omega_\bv &\leq 
    \begin{multlined}[t]
    8c_8 e^{\bar{x}} + 4c_8\sqrt{\frac{8e^{\bar{x}} \log T}{T\Lambda}} + \frac{32c_8\log T}{T\Lambda} + 
    8(4 \tau_\bv c_4 + c_5)c_8((\DimFeature + 2)\log T + \DimFeature \log (6 \tau_\bv \Lambda)), \label{df:w_bv}
\end{multlined}\\
    c_5 &= \frac{16\tau_\bv^2}{4\tau_\bv+\exp(-4\tau_\bv)  - 1}, \label{eq:df_c_4} \\
    c_8 &= 3(\exp(4\tau_\bv ) - 1)(K \exp(\bar{v} - p_l) + 1) + 1.
\end{align}
\end{lemma}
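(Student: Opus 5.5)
We have to prove Lemma~\ref{lm:bv_error}: a uniform-in-$t$ bound on the local MNL MLE error in the Fisher-information norm.

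The approach is the standard local-MLE argument: a self-concordance-type Taylor expansion of the MNL log-likelihood around $\bv^*$, combined with a martingale concentration bound for the score, made uniform over the ball and over $t$ by a covering/union bound. We work on the event $\mathcal{E}_0$ of Lemma~\ref{lm:bv_initial_error}, which has probability $1-3T^{-1}$ and gives $\|\hv-\bv^*\|_2\le\tau_\bv$. On $\mathcal{E}_0$, $\bv^*$ is feasible for the constrained problem \eqref{eq:local-mle}. Also, every feasible $\bv$ satisfies $\|\bv-\bv^*\|_2\le 2\tau_\bv$ by the triangle inequality, which is the third claim. The remaining $T^{-1}$ of failure probability will be spent on the concentration step.

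\textbf{Step 1 (basic inequality).} Optimality of $\hv_t$ over a ball containing $\bv^*$ gives $\LikMNL{t}|_{\hv_t}\ge\LikMNL{t}|_{\bv^*}$. Expanding along the segment $\bv^*+u(\hv_t-\bv^*)$, the Hessian of the log-likelihood is $-\sum_s n_s\phi(S_s,\bp_s,\bz_s;\bv)$, which is deterministic given $n_s$. We then use the MNL self-concordance property: for $\|\bv-\bv^*\|_2\le 2\tau_\bv$ and $\|\bz\|\le 1$, the utilities shift by at most $2\tau_\bv$, so $\phi(\cdot;\bv)\succeq e^{-4\tau_\bv}\phi(\cdot;\bv^*)$, up to the factor $c_8$ accounting for the cross terms. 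Integrating the bound $\int_0^1(1-u)e^{-4\tau_\bv u}\,du$ gives the constant $c_5$. This yields
\[
\tfrac{1}{c_5}\,\Delta^\top \Big(\sum_{s\le t} n_s\phi_s(\bv^*)\Big)\Delta \le \Delta^\top \nabla\LikMNL{t}\big|_{\bv^*}, \qquad \Delta=\hv_t-\bv^*.
\]

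\textbf{Step 2 (replace $n_s$ by its mean).} The matrix in Step 1 has $n_s$ where $I_t^\MNL$ has $\Lambda\lambda_s^*$. We lower bound $\sum_s n_s\phi_s$ by $\tfrac12 I_t^\MNL(\bv^*)$ minus lower-order terms. Since $n_s$ is Poisson, it satisfies the Bernstein condition (Definition~\ref{def:bernstein}). The martingale $\sum_s (n_s-\Lambda\lambda_s^*)\,\Delta^\top\phi_s\Delta$ is controlled by a Bernstein martingale inequality (cf.\ \citealp{wainwright2019high}), uniformly over an $\epsilon$-net of the $2\tau_\bv$-ball of cardinality $(6\tau_\bv\Lambda)^{\DimFeature}$-ish. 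This produces the terms $e^{\bar x}$, $\sqrt{e^{\bar x}\log T/(T\Lambda)}$ and $\log T/(T\Lambda)$ in $\omega_\bv$.

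\textbf{Step 3 (score concentration).} Write $\nabla\LikMNL{t}|_{\bv^*}=\sum_s\sum_{i\le n_s}\xi_{s,i}$, where the $\xi_{s,i}$ are bounded, mean-zero choice residuals. This is the statistic ``whose randomness comes from the choice model'': conditioned on the $n_s$, it is a sum of bounded martingale differences with conditional covariance $\sum_s n_s\phi_s(\bv^*)$. For a fixed direction $\Delta/\|\Delta\|$ in the net, Freedman/Bernstein gives
\[
\Delta^\top\nabla\LikMNL{t} \lesssim \sqrt{\Delta^\top(\textstyle\sum_s n_s\phi_s)\Delta\,\cdot L} + 4\tau_\bv c_4 L,
\]
where $L=(\DimFeature+2)\log T+\DimFeature\log(6\tau_\bv\Lambda)$. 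The $c_4$ term handles the Bernstein scale arising because the Poisson counts multiply the increments. We take a union bound over the net and over $t\le T$, with a discretization error absorbed by Lipschitz continuity.

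\textbf{Step 4 (solve the quadratic).} Combining Steps 1 to 3 gives $a\le \sqrt{aL'}+L''$ with $a=\Delta^\top I_t^\MNL(\bv^*)\Delta$, hence $a\le\omega_\bv$ after collecting constants. This is the first inequality. For the second, use $I_t^\MNL(\hv_t)\preceq c_8\,I_t^\MNL(\bv^*)$; this is again the self-concordance comparison with radius $2\tau_\bv$, and it explains why $c_8$ multiplies $\omega_\bv$ throughout.

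The main obstacle is Step 3 together with Step 2: the randomness of the arrivals $n_s$ enters both the variance proxy and the curvature. This must be handled so that the bound is self-normalized by $I_t^\MNL(\bv^*)$ and not by the random $\sum n_s\phi_s$. I would first condition on all the $n_s$ to treat the choice martingale, and only then apply Poisson Bernstein concentration to transfer to the expected information.
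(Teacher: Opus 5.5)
Your argument takes a genuinely different route from the paper's. It can plausibly be pushed to an $O(\log T)$ bound, but it does not establish the lemma with the stated $\omega_\bv$.

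\textbf{How the paper proceeds.} The paper never uses the score or the empirical Hessian $\sum_s n_s\phi_s$. It works with the log-likelihood-ratio process $\bar G^{\MNL}_t(\bv)$ and its compensator $G^{\MNL}_t(\bv)$, which is minus a KL divergence weighted by the \emph{expected} counts $\Lambda\lambda(S_s,\bp_s;\btheta^*)$. Hence $\nabla^2 G^{\MNL}_t=-I^{\MNL}_t$ exactly, the arrival randomness never enters the curvature, and your Step~2 is not needed.

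All Poisson randomness sits in the martingale $\bar G^{\MNL}_t-G^{\MNL}_t$. Its compound-Poisson increments satisfy a Bernstein condition with scale $\propto\tau_\bv c_4$ (\Cref{lm:g_k_bound}). Crucially, their conditional variance is bounded by the KL itself: $\mathcal{SV}^{MNL}_t\le c_5\,|G^{\MNL}_t(\bv)|$ (\Cref{lm:v_2_bound}, from $\log^2 y\le c_5(y-1-\log y)$ on $[e^{-4\tau_\bv},e^{4\tau_\bv}]$).

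De la Pe\~na's inequality, a cover of the $2\tau_\bv$-ball at scale $1/(T\Lambda)$, and a union over $t$ give \Cref{lm:g_2_convergence_rate}. Combined with $G^{\MNL}_t(\widehat{\bv}_t)\le 0\le\bar G^{\MNL}_t(\widehat{\bv}_t)$ and \Cref{lm:ineq}, this yields $|G^{\MNL}_t(\widehat{\bv}_t)|\lesssim(4\tau_\bv c_4+c_5)L$ plus the $e^{\bar x}$-type terms. Those terms are the cover's discretization error (controlled through $\sum_s n_s$), not a mean-replacement error. Finally, $-G^{\MNL}_t(\widehat{\bv}_t)=\tfrac12\Delta^\top I^{\MNL}_t(\bar{\bv}_t)\Delta$ with $\bar{\bv}_t$ within $2\tau_\bv$ of both $\bv^*$ and $\widehat{\bv}_t$. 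So \Cref{lm:I_v_close_to_I_v_star} gives both quadratic-form bounds with the same factor $2c_8$, which is why $\omega_\bv$ is linear in $c_8$ and in $c_5$.

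\textbf{Where your constants diverge.} Self-normalizing the score against a quadratic curvature gives $\tfrac1{c_5}a_n\le\sqrt{2a_nL}+bL$, with $a_n=\Delta^\top(\sum_s n_s\phi_s)\Delta$ and $b$ the Bernstein scale. This solves to $a_n\lesssim c_5^2L$, or $c_8^2L$ if you keep your $c_8$ hedge. Step~2 then adds a factor $2$ and an additive $O(\tau_\bv^2L)$. Deriving the second inequality from $I^{\MNL}_t(\widehat{\bv}_t)\preceq c_8 I^{\MNL}_t(\bv^*)$ multiplies the first bound by another comparison factor instead of reproducing $\omega_\bv$.

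So ``$a\le\omega_\bv$ after collecting constants'' is asserted, not derived. Your attribution of $c_4$ and of the $e^{\bar x}$ terms does not match their origin, and $c_5$ enters the paper as a variance-to-KL ratio, not a curvature factor. You would have to do the bookkeeping explicitly or redefine $\omega_\bv$. This matters because $\omega_\bv$ is hard-coded in the UCB of \Cref{alg:lambda_dp_mod}.

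\textbf{A flaw in Step~3.} You cannot ``condition on all the $n_s$'': $n_{s'}$ for $s'>s$ depends on the period-$s$ choices through the policy, so it carries information about them. Use instead one of the following:
\begin{itemize}
\item a customer-level filtration; or
\item period-level increments, whose predictable variation is already $I^{\MNL}_t(\bv^*)$, at the price of the $c_4$ Bernstein scale.
\end{itemize}
In either case you also need a peeling argument for the random variance and an explicit split of your remaining $T^{-1}$ failure budget.

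\textbf{What your route gets right.} Your comparison $\phi(\cdot;\bv)\succeq e^{-4\tau_\bv}\phi(\cdot;\bv^*)$ holds exactly, with no $c_8$ correction. This follows because $u^\top\phi u=\min_c\E_q[(u^\top\bz-c)^2]$ (with $\bz_0=0$), and all choice probabilities, including no-purchase, change by factors in $[e^{-4\tau_\bv},e^{4\tau_\bv}]$ (\Cref{lm:prob_error}). Integrating along the segment gives exactly $1/c_5$, which is sharper than \Cref{lm:I_v_close_to_I_v_star}.
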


\begin{remark}
Since both $\tau_{\btheta}$ and $\tau_{\bv}$ are of order $O(1)$, both $\omega_{\btheta}$ and $\omega_{\btheta}$ are of order $O(\log T)$. Note that the Fisher information $I_{t}^\Poi\left(\bv\right)$ and $I_{t}^\MNL\left(\bv\right)$ are the sums of $t$ positive definite rank-$1$ matrices. These lemmas show that the $\htheta_t$ and $\hv_t$ increasingly concentrate around the ground truths $\btheta^*$ and $\bv^*$ as $t$ increases.
\end{remark}
\begin{remark}
    Since $\tau_{\btheta}$ and $\tau_{\bv}$ are of order $O(1)$, both $c_5$ and $c_8$ are also constants. 
    In particular, $c_5$ is small and almost tight, in contrast to existing analyses that typically yield an overly conservative constant of order  $\exp{(8\tau_{\bv})}$.
    Achieving this improvement requires a careful analysis that is often omitted in the literature. 
    Such an improvement also translates into better performance of the algorithm as $c_5$ appears directly in the confidence bound. 
    See \Cref{remark:improve} for details.
\end{remark}

\begin{remark}
    As discussed in \Cref{rmk:tau_theta_bv}, the dependence of $\omega_{\btheta}$ on $\tau_{\btheta}$ is in exponential terms. Hence, the localization (or shrinkage) in Stage 1 plays an important role in the constants, especially when $\bar{x}$ is relatively large. Similarly, the dependence of $\omega_{\bv}$ on $\tau_{\bv}$ is also exponential (in $c_8$). 
\end{remark}



\subsubsection{Bounding the Arrival Rate Error.}
\label{subsec:bound_arrival}

As noted at the end of \Cref{Subsec:UCB}, the estimation error of the expected revenue $R(S, \bp,\Featuret ;\btheta^*, \bv^* )$ of period $t$ depends on the estimation errors of the expected per-customer revenue and the arrival rate, both of which can be analyzed based on the parameter estimation errors in the MNL choice model and the Poisson arrival model discussed above. In this section, we focus on the error of arrival rate; and we will discuss the error of the expected per-customer revenue in \Cref{Subsec:bound_return}.


\begin{lemma}\label{lemma:revenue_estimate}
Suppose \Cref{eq:theta_error_I} in \Cref{lm:theta_error} holds for $t-1$, then the following hold
{\small
\begin{align} 
&\begin{multlined}
    \left|
        \lambda(S, \bp; \widehat{\btheta}_{t - 1}) 
        - \lambda(S, \bp; \btheta^*)
    \right| \\
    \leq  \underbrace{
    \min \left\{
       \Lambda (e^{\bar{x}} - e^{-\bar{x}} ),\,
        \sqrt{\Lambda e^{(2\tau_\btheta + 1) \bar{x}} \omega_\btheta 
        \opnorm{{I}_{t-1}^{\Poi -\frac{1}{2}}(\widehat{\btheta}_{t - 1}) {M}_t^{\Poi} (\widehat{\btheta}_{t - 1}|S, \bp){I}_{t-1}^{\Poi -\frac{1}{2}}(\widehat{\btheta}_{t - 1})}  } 
    \right\} }_{\eta^{\Poi}_{t}}, 
\end{multlined}
\label{bound:arrival_rate_ucb_hat}
\\
&\begin{multlined}
\left|
    \lambda(S, \bp; \widehat{\btheta}_{t - 1}) - \lambda(S, \bp; \btheta^*)\right| \\
            \leq
            \underbrace{
            \min \left\{
            \Lambda (e^{\bar{x}} - e^{-\bar{x}} ),\,
            \sqrt{\Lambda e^{(2\tau_\btheta + 1) \bar{x}} \omega_\btheta
            \opnorm{{I}_{t-1}^{\Poi -\frac{1}{2}}(\btheta^*) {M}_t^{\Poi} (\btheta^*|S, \bp){I}_{t-1}^{\Poi -\frac{1}{2}}(\btheta^*)}  }
            \right\}
            }_{\eta^{*,\Poi}_{t}}. \label{bound:arrival_rate_ucb}
\end{multlined}
\end{align}    
}
Additionally, we have 
{
\small
\begin{equation}
    \opnorm{
        I_{t-1}^{\Poi -\frac{1}{2}}(\widehat{\btheta}_{t - 1})
        {M}_t^{\Poi} (\widehat{\btheta}_{t - 1}|S, \bp)
        I_{t-1}^{\Poi -\frac{1}{2}}(\widehat{\btheta}_{t - 1})
    } \leq
    e^{4\tau_\btheta \bar{x}}
    \opnorm{{I}_{t-1}^{\Poi -\frac{1}{2}}(\btheta^*) {M}_t^{\Poi} (\btheta^*|S, \bp){I}_{t-1}^{\Poi -\frac{1}{2}}(\btheta^*)}.
\end{equation}
}

\end{lemma}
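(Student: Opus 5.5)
The plan is a first-order (mean value) expansion of $\lambda$ along the segment from $\btheta^*$ to $\widehat{\btheta}_{t-1}$, followed by Cauchy--Schwarz in the $I^{\Poi}_{t-1}$-weighted norm, combined with the error bounds of \Cref{lm:theta_error}. As the display is written, the left side is the unit-rate difference while both branches of $\eta^{\Poi}_t$ carry a factor $\Lambda$. I will prove the bound for $\Lambda\,|\lambda(S,\bp;\widehat{\btheta}_{t-1})-\lambda(S,\bp;\btheta^*)|$, which is the quantity that enters $\bar R_t$ in \eqref{eq:ucb-lambda}. The unscaled statement then follows directly when $\Lambda\ge 1$; when $\Lambda<1$ it does not follow, so the scaling convention must be fixed before writing the final proof.

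Write $\bx=\bx(S,\bp)$, $\Delta=\widehat{\btheta}_{t-1}-\btheta^*$, $\hat\lambda=\lambda(S,\bp;\widehat{\btheta}_{t-1})$ and $\lambda^*=\lambda(S,\bp;\btheta^*)$. By the mean value theorem, $\hat\lambda-\lambda^*=e^{\tilde{\btheta}^\top\bx}\,\bx^\top\Delta$ for some $\tilde\btheta$ on the segment, and $e^{\tilde\btheta^\top\bx}\le\max\{\hat\lambda,\lambda^*\}$ because the exponent is linear along the segment. Next,
$$|\bx^\top\Delta|\le\|\Delta\|_{I_{t-1}^{\Poi}(\widehat{\btheta}_{t-1})}\,\|I_{t-1}^{\Poi\,-1/2}(\widehat{\btheta}_{t-1})\bx\|_2 .$$
The first factor is at most $\sqrt{\omega_\btheta}$ by the second inequality of \eqref{eq:theta_error_I}. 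For the second factor, since $M_t^{\Poi}(\widehat{\btheta}_{t-1}|S,\bp)=\Lambda\hat\lambda\,\bx\bx^\top$ has rank one, $\|I^{-1/2}\bx\|_2^2=\opnorm{I^{-1/2}M_t^{\Poi}I^{-1/2}}/(\Lambda\hat\lambda)$. Combining these gives
$$\Lambda|\hat\lambda-\lambda^*|\le \sqrt{\Lambda\,\omega_\btheta\,\opnorm{\cdot}}\cdot\max\{\hat\lambda,\lambda^*\}/\sqrt{\hat\lambda}.$$
To bound the last ratio, I split into two cases. If $\hat\lambda\ge\lambda^*$, the ratio is $\sqrt{\hat\lambda}\le e^{(1+2\tau_\btheta)\bar x/2}$, using $|\btheta^{*\top}\bx|\le\bar x$ and $|\Delta^\top\bx|\le 2\tau_\btheta\bar x$; the latter follows from the third inequality of \eqref{eq:theta_error_I}. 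Otherwise the ratio is $\sqrt{\lambda^*}\sqrt{\lambda^*/\hat\lambda}\le e^{\bar x/2}e^{\tau_\btheta\bar x}$. Squaring either case gives the factor $e^{(2\tau_\btheta+1)\bar x}$, which proves the second branch of \eqref{bound:arrival_rate_ucb_hat}.

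The bound \eqref{bound:arrival_rate_ucb} follows from the same argument with $I^{\Poi}_{t-1}(\btheta^*)$, $M_t^{\Poi}(\btheta^*|S,\bp)$ and the first inequality of \eqref{eq:theta_error_I}; the ratio becomes $\max\{\hat\lambda,\lambda^*\}/\sqrt{\lambda^*}$, which is bounded symmetrically.

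For the trivial branch $\Lambda(e^{\bar x}-e^{-\bar x})$, I need both rates to lie in $[e^{-\bar x},e^{\bar x}]$. This holds for $\lambda^*$ by \Cref{assump:theta,assump:x}. For $\hat\lambda$ I will need $\|\widehat{\btheta}_{t-1}\|_2\le1$. I expect this to be the main obstacle, since the local ball around $\hat\btheta$ can leave the unit ball. I would first try to read the local MLE as constrained to the intersection with $\{\|\btheta\|_2\le 1\}$ (as in Stage 1), or otherwise to note that truncating this branch is harmless for the UCB argument. If neither works, I would weaken the branch to $\Lambda(e^{(1+2\tau_\btheta)\bar x}-e^{-\bar x})$.

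For the final inequality, compare the two information matrices term by term. Each summand satisfies $\lambda(S_s,\bp_s;\widehat{\btheta}_{t-1})\ge e^{-2\tau_\btheta\bar x}\lambda(S_s,\bp_s;\btheta^*)$, so $I_{t-1}^{\Poi}(\widehat{\btheta}_{t-1})\succeq e^{-2\tau_\btheta\bar x}I_{t-1}^{\Poi}(\btheta^*)$, hence $I_{t-1}^{\Poi\,-1}(\widehat{\btheta}_{t-1})\preceq e^{2\tau_\btheta\bar x}I_{t-1}^{\Poi\,-1}(\btheta^*)$. Likewise $M_t^{\Poi}(\widehat{\btheta}_{t-1}|S,\bp)\preceq e^{2\tau_\btheta\bar x}M_t^{\Poi}(\btheta^*|S,\bp)$. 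Writing $\opnorm{I^{-1/2}MI^{-1/2}}=\opnorm{M^{1/2}I^{-1}M^{1/2}}=\sup_{\bu}\bu^\top M^{1/2}I^{-1}M^{1/2}\bu$, this quantity is monotone in both $I^{-1}$ and $M$ in the Loewner order, which yields the factor $e^{4\tau_\btheta\bar x}$.
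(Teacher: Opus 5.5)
Your proof of \eqref{bound:arrival_rate_ucb_hat} and of the final operator-norm comparison is correct and follows the paper's route: mean value theorem, Cauchy--Schwarz in the $I_{t-1}^{\Poi}$-weighted norm, and Loewner comparison of $M^{\Poi}$ and $I^{\Poi}$. Your $\Lambda$-scaled reading is also exactly what the paper's proof establishes. You handle the intermediate rate more carefully than the paper does. The paper bounds $\lambda(S,\bp;\widetilde{\btheta})\le e^{\bar x}$, which is not implied by \eqref{eq:theta_error_I} and silently needs something like $\|\widetilde{\btheta}\|_2\le 1$. Your pairing $\max\{\hat\lambda,\lambda^*\}^2/\hat\lambda\le e^{(1+2\tau_\btheta)\bar x}$ uses only $\lambda^*\le e^{\bar x}$ and $|\Delta^\top\bx|\le 2\tau_\btheta\bar x$.

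The gap is your claim that \eqref{bound:arrival_rate_ucb} follows because $\max\{\hat\lambda,\lambda^*\}/\sqrt{\lambda^*}$ ``is bounded symmetrically.'' The two cases are not symmetric, because the a priori cap $e^{\bar x}$ applies to $\lambda^*$ but not to $\hat\lambda$. When $\hat\lambda>\lambda^*$, the squared ratio is $\hat\lambda^2/\lambda^*=\lambda^*(\hat\lambda/\lambda^*)^2$, and your ingredients give only $e^{(1+4\tau_\btheta)\bar x}$.

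A sharper estimate cannot repair this. Take $\DimRank=1$, $\bar x=1$, $\btheta^*=\bx(S,\bp)=1$ and $\widehat{\btheta}_{t-1}=1+2\tau_\btheta$. Let all past $\bx(S_s,\bp_s)<0$, so the second condition in \eqref{eq:theta_error_I} follows from the first. Choose the past design so that the first condition holds with equality. Then the $\Lambda$-scaled version of \eqref{bound:arrival_rate_ucb} reduces to $\sinh\tau_\btheta\le\tau_\btheta$, which is false.

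What is missing is $\hat\lambda\le e^{\bar x}$, for instance via $\|\widehat{\btheta}_{t-1}\|_2\le 1$. With it, $\hat\lambda^2/\lambda^*=\hat\lambda\cdot(\hat\lambda/\lambda^*)\le e^{(1+2\tau_\btheta)\bar x}$. This is the same condition you flagged for the trivial branch, and the one the paper uses implicitly through $\lambda(S,\bp;\widetilde{\btheta})\le e^{\bar x}$. So the two issues must be settled together. Under the constrained reading of the local MLE, everything goes through. Under your fallback, you must also weaken the exponent in \eqref{bound:arrival_rate_ucb} to $(4\tau_\btheta+1)\bar x$. That weakening is harmless downstream. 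The regret proof uses only \eqref{bound:arrival_rate_ucb_hat} and the operator-norm comparison, and the latter gives $\eta^{\Poi}_t\le e^{2\tau_\btheta\bar x}\eta^{*,\Poi}_t$ directly.
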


\begin{remark}
    \Cref{lemma:revenue_estimate} holds deterministically. The key idea is to bound the change in the arrival rate given that the change in the parameter (from $\btheta^*$ to $ \widehat{\btheta}_{t - 1}$) is bounded by known quantities. The detailed proof is given in \Cref{appendix:proof_of_sum_lambda_error}.
\end{remark}

\begin{remark}
    Since \Cref{eq:theta_error_I} in \Cref{lm:theta_error} holds uniformly for all $t = T_0, \ldots, T-1$ with probability at least $1-4T^{-1}$, the error bounds in \Cref{lemma:revenue_estimate} also hold uniformly for $t = T_0+1, \ldots, T$ with $1-4T^{-1}$.
\end{remark}

\begin{remark}
    \Cref{bound:arrival_rate_ucb_hat} forms the basis for the arrival-rate-induced confidence bound in \Cref{eq:ucb-lambda}: multiplied by $\phigh$, this data-driven quantity uniformly compensates for the potential deficit in reward estimation due to the estimation error of the arrival rate, with high probability.  
\end{remark}

\begin{remark}
    Note that both  terms $\opnorm{{I}_{t-1}^{\Poi -\frac{1}{2}}(\btheta^*) {M}_t^{\Poi} (\btheta^*|S, \bp){I}_{t-1}^{\Poi -\frac{1}{2} }(\btheta^*)}$ and $ \opnorm{{I}_{t-1}^{\Poi -\frac{1}{2}}(\widehat{\btheta}_{t - 1}) {M}_t^{\Poi} (\widehat{\btheta}_{t - 1}|S, \bp){I}_{t-1}^{\Poi -\frac{1}{2}}(\widehat{\btheta}_{t - 1})} $ vanish as $t$ increases, at rates roughly of the order $1/t$.
\end{remark}

\subsubsection{Bounding the Expected Per-customer Revenue Error.}\label{Subsec:bound_return}

The estimation error bound of the per-customer revenue $\revenue{\bv^*}$ is given in \Cref{lemma:revenue_estimate_2}.

\begin{lemma}\label{lemma:revenue_estimate_2}
Suppose \Cref{eq:v_error_I} in \Cref{lm:bv_error} holds for $t-1$, then the following hold for $t$ with any $S \subseteq [N]$ and $\bp \in (\plow, \phigh)^N$:
{\small
\begin{align}
&\begin{multlined}
        \left|
            r(S, \bp, \Featuret; \widehat{\bv}_{t - 1})-r(S, \bp, \Featuret;\bv^*)
        \right|
         \leq  
        \underbrace{ 
        \min \left\{
        \phigh,\,
        \sqrt{
            \omega_\bv c_0
            \left\|\widehat{I}^{-\frac{1}{2}}{}^\MNL_{t - 1}(\widehat{\bv}_{t-1})\widehat{M}_{t}^\MNL(\widehat{\bv}_{t-1}|S, \bp)\widehat{I}^{-1/2}_{t - 1}{}^\MNL(\widehat{\bv}_{t-1})\right\|_{\mathrm{op}}} \right \} }_{\eta^{\MNL}_{t}},
        \label{bound:per_customer_revenue_ucb_hat}
\end{multlined} \\
& \begin{multlined}
        \left|
            r(S, \bp, \Featuret; \widehat{\bv}_{t - 1})-r(S, \bp, \Featuret;\bv^*)\right| 
         \leq 
        \underbrace{
        \min \left\{
        \phigh,\,
       \sqrt{
            \omega_\bv c_0 \left\|\widehat{I}_{t-1}^{-1/2}{}^\MNL\left(\bv^*\right)\widehat{M}_{t}^\MNL\left(\bv^* \mid S, \bp\right) \widehat{I}_{t-1}^{-1/2}{}^\MNL\left(\bv^*\right)\right\|_{\mathrm{op}}
            } \right\}
        }_{\eta^{*,\MNL}_{t}}.
\end{multlined}
\label{bound:per_customer_revenue_ucb}
\end{align}
}
Additionally, we have 
{\small
\begin{equation}
    \left\|\widehat{I}^{-\frac{1}{2}}{}^\MNL_{t - 1}(\widehat{\bv}_{t-1})\widehat{M}_{t}^\MNL(\widehat{\bv}_{t-1}|S, \bp)\widehat{I}^{-1/2}_{t - 1}{}^\MNL(\widehat{\bv}_{t-1})\right\|_{\mathrm{op}} 
    \leq
    c^2_8 \left\|\widehat{I}_{t-1}^{-1/2}{}^\MNL\left(\bv^*\right)\widehat{M}_{t}^\MNL\left(\bv^* \mid S, \bp\right) \widehat{I}_{t-1}^{-1/2}{}^\MNL\left(\bv^*\right)\right\|_{\mathrm{op}},
\end{equation}
}
where 
\begin{equation}\label{eq:df_c_8}
    c_8 = \big[3(\exp(4\tau_\bv) - 1)(K \exp(\bar{v} - p_l) + 1) + 1\big].
\end{equation}
\end{lemma}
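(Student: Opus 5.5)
We need to prove Lemma (revenue_estimate_2): bound on per-customer revenue error via MNL Fisher information, plus comparison of the hat-versions at $\hv_{t-1}$ versus $\bv^*$ with factor $c_8^2$.

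The plan is a mean-value (Taylor) argument on $r(S,\bp,\Featuret;\bv)$ along the segment $\bv(u)=\bv^*+u(\hv_{t-1}-\bv^*)$, $u\in[0,1]$, which stays in the ball of radius $2\tau_\bv$ around $\bv^*$ by the third inequality in \Cref{eq:v_error_I}. The trivial branch $\phigh$ of the minimum is immediate because $0\le r\le \phigh$. For the other branch, write $u_j=\bv^\top\Feature_{jt}-p_{jt}$ and compute
\[
\nabla_\bv r=\sum_{j\in S}\ChoiceProb_j(p_j-\bar p)\Feature_{jt},\qquad \bar p=\textstyle\sum_{k\in S}\ChoiceProb_k p_k .
\]
Since $\sum_j \ChoiceProb_j(p_j-\bar p)=0$, this equals $\sum_j\ChoiceProb_j(p_j-\bar p)(\Feature_{jt}-\bar\Feature)$ with $\bar\Feature=\sum_k\ChoiceProb_k\Feature_{kt}$. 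Applying Cauchy--Schwarz with weights $\ChoiceProb_j$ gives
\[
(\bx^\top\nabla r)^2\le \Big(\sum_j\ChoiceProb_j(p_j-\bar p)^2\Big)\,\bx^\top\Big(\sum_j\ChoiceProb_j(\Feature_{jt}-\bar\Feature)(\Feature_{jt}-\bar\Feature)^\top\Big)\bx .
\]
The first factor is at most $(\phigh-\plow)^2$ (the variance of a variable supported in $[\plow,\phigh]$ with mass at zero removed after the shift, up to the constant absorbed). The second factor is $\bx^\top\phi_{\mathrm{cond}}\bx$, where the conditional-on-purchase covariance relates to $\phi(S,\bp,\bz_t;\bv)$. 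Since $\phi=\sum\ChoiceProb_j\Feature\Feature^\top-\bar\Feature\bar\Feature^\top$ with total purchase mass $1-\ChoiceProb_0$, I will lower bound $\phi\succeq \ChoiceProb_0\cdot(\text{weighted covariance})$ and use $\ChoiceProb_0\ge e^{-\bar v}/(\cdots)$. This yields the factor $1/(\Lambda e^{-\bar v})$ in $c_0$ after converting $\phi$ to $\widehat M^\MNL_t=\Lambda e^{\bar x}\phi$ (the $\Lambda e^{\bar x}$ outside the min in \eqref{eq:ucb-R} compensates).

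Next, to obtain the displayed bound, I write
\[
|r(\hv)-r(\bv^*)|\le \sup_u |\Delta^\top\nabla r(\bv(u))|\le \sup_u\|\Delta\|_{\phi(\bv(u))}\cdot(\text{const}),
\]
and transfer $\phi(\bv(u))$ to $\phi(\hv_{t-1})$ (respectively $\phi(\bv^*)$) by a multiplicative stability bound $\phi(\bv')\preceq c\,\phi(\bv)$ whenever $\|\bv'-\bv\|\le 2\tau_\bv$. Each utility moves by at most $2\tau_\bv$, so choice probabilities change by factors in $[e^{-4\tau_\bv},e^{4\tau_\bv}]$. Expanding $\phi$ as $\tfrac12\sum_{j,k\in S\cup\{0\}}\ChoiceProb_j\ChoiceProb_k(\Feature_j-\Feature_k)(\cdot)^\top$ with $\Feature_0=0$ makes each coefficient change by at most $e^{\pm 8\tau_\bv}$. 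The sharper constant $c_8=3(e^{4\tau_\bv}-1)(Ke^{\bar v-\plow}+1)+1$ suggests instead bounding $|\ChoiceProb_j'-\ChoiceProb_j|\le (e^{4\tau_\bv}-1)\ChoiceProb_j$ and tracking the additive perturbation of the three terms of $\phi$ relative to $\ChoiceProb_0^{-1}\le Ke^{\bar v-\plow}+1$. I will follow that route. Then $\|\Delta\|_{\widehat M(\hv)}^2\le\|\widehat I^{-1/2}\widehat M\widehat I^{-1/2}\|_{\mathrm{op}}\|\Delta\|^2_{\widehat I(\hv)}$, and $\|\Delta\|^2_{\widehat I(\hv)}\le \|\Delta\|^2_{I(\hv)}\le\omega_\bv$ by \Cref{lemma_estimate_M_I} and \Cref{lm:bv_error}. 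This gives \eqref{bound:per_customer_revenue_ucb_hat}, and the analogous computation with $\bv^*$ gives \eqref{bound:per_customer_revenue_ucb}.

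The final inequality follows from the same stability: $\widehat M_t(\hv)\preceq c_8\widehat M_t(\bv^*)$ and $\widehat I_{t-1}(\hv)\succeq c_8^{-1}\widehat I_{t-1}(\bv^*)$ termwise. Hence $\widehat I(\hv)^{-1}\preceq c_8\widehat I(\bv^*)^{-1}$, and the operator norm of $\widehat I^{-1/2}\widehat M\widehat I^{-1/2}$, being the largest generalized eigenvalue $\sup_\bx \bx^\top\widehat M\bx/\bx^\top\widehat I\bx$, is multiplied by at most $c_8^2$.

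The main obstacle is the stability step: proving the two-sided Loewner bound for $\phi$ with exactly the constant $c_8$ rather than $e^{8\tau_\bv}$. This requires carefully handling the negative term $\bar\Feature\bar\Feature^\top$ via the no-purchase probability. It is also where the constant bookkeeping producing $c_0=(\phigh-\plow)^2c_8/(\Lambda e^{-\bar v})$ must be made consistent.
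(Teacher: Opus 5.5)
\textbf{Overall structure.} Your proof follows the paper's architecture:
\begin{itemize}
\item a mean-value step along the segment from $\bv^*$ to $\widehat{\bv}_{t-1}$;
\item a Loewner bound $\nabla_{\bv} r\,\nabla_{\bv} r^\top\preceq C\,\phi(S,\bp,\bz_t;\bv)$;
\item the stability bound of \Cref{lm:I_v_close_to_I_v_star}, proved as you plan from $|q_j(\bv')-q_j(\bv)|\le(e^{4\tau_\bv}-1)q_j(\bv)$ and $q_0^{-1}\le Ke^{\bar v-\plow}+1$;
\item the operator-norm sandwich against $\omega_\bv$, combined with \Cref{lemma_estimate_M_I};
\item the generalized-eigenvalue argument for the $c_8^2$ inequality.
\end{itemize}
Going straight through $\widehat M^{\MNL}_t=\Lambda e^{\bar x}\phi$, rather than first proving \Cref{lm:bv_estimate_real_value} with the true $I^{\MNL},M^{\MNL}$, is also legitimate.

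\textbf{The gap is the gradient step.} You sum over $j\in S$ with $\bar p=\sum_{k\in S}q_kp_k$. Then the identity $\sum_j q_j(p_j-\bar p)=0$ is false. Purchase probabilities sum to $1-q_0$, so the sum equals $q_0\bar p$. Consequently $\nabla_{\bv} r$ differs from $\sum_{j\in S}q_j(p_j-\bar p)(\bz_{jt}-\bar{\bz})$ by $q_0\bar p\,\bar{\bz}$. No conditional-on-purchase covariance sees this term: if every offered product has the same feature $\bz$, that covariance is $0$ while $\nabla_{\bv} r=q_0\bar p\,\bz\neq 0$. So your detour through $\phi\succeq q_0\cdot(\text{weighted covariance})$ and a lower bound on $q_0$ does not close the gap.

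\textbf{The repair.} Use the device you already use in your stability paragraph: treat no-purchase as item $0$ with $\bz_{0t}=0$ and $p_0=0$. Then:
\begin{itemize}
\item the weights $q_j$, $j\in S\cup\{0\}$, sum to one;
\item $\nabla_{\bv} r=\sum_{j\in S\cup\{0\}}q_j(p_j-c)(\bz_{jt}-\bar{\bz})$ for any constant $c$;
\item $\sum_{j\in S\cup\{0\}}q_j(\bz_{jt}-\bar{\bz})(\bz_{jt}-\bar{\bz})^\top=\phi$ exactly.
\end{itemize}
Weighted Cauchy--Schwarz then gives $\nabla_{\bv} r\,\nabla_{\bv} r^\top\preceq\bigl(\sum_{j\in S\cup\{0\}}q_j(p_j-c)^2\bigr)\phi$, with no $q_0$ bound needed. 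In the paper, the denominator $\Lambda e^{-\bar v}$ of $c_0$ stands in for a lower bound on $\Lambda\lambda(S,\bp;\btheta^*)$, used to write $\phi=M^{\MNL}_t/(\Lambda\lambda)$; it does not come from $q_0$.

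\textbf{The constant cannot be $(\phigh-\plow)^2$.} Because $p_0=0$, the price factor is the variance of a variable on $\{0\}\cup[\plow,\phigh]$. The best choice $c=\phigh/2$ gives $\phigh^2/4$, not $(\phigh-\plow)^2$. This is sharp: for one product, $(\partial_v r)^2=p^2q_1q_0\,\phi$, and $p^2q_1q_0$ can be close to $\phigh^2/4$. With matrices evaluated at $\widehat{\bv}_{t-1}$, your route therefore yields
\[
|r(S,\bp,\bz_t;\widehat{\bv}_{t-1})-r(S,\bp,\bz_t;\bv^*)|^2\le\frac{\phigh^2c_8}{4\Lambda e^{\bar x}}\,\omega_\bv\,\|\widehat I_{t-1}^{-1/2}\widehat M_t\widehat I_{t-1}^{-1/2}\|_{\mathrm{op}} .
\]
This implies the stated bound only when $\phigh^2\le 4(\phigh-\plow)^2e^{\bar v+\bar x}$.

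\textbf{The paper's proof has the same slip.} It writes $\nabla_{\bv} r=\sum_{j\in S\cup\{0\}}q_j(p_j-\plow)(\bz_{jt}-\bar{\bz})$ and asserts $p_j-\plow\ge 0$. That identity requires $p_0=0$, i.e.\ a coefficient $-\plow$ for $j=0$, and the paper then drops the $j=0$ term. So the factor $(\phigh-\plow)^2$ in $c_0$ is unjustified; the one-product example violates the stated inequality when $\phigh-\plow$ is small relative to $\phigh$. Replacing it by $\phigh^2/4$ (or by $\max\{\plow,\phigh-\plow\}^2$ with the paper's shift) repairs both arguments and leaves the regret rate unchanged.
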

The proof and the implications for \Cref{lemma:revenue_estimate_2} are very similar to those of \Cref{lemma:revenue_estimate} except one subtle difference. We use $\widehat{I}^{-\frac{1}{2}}{}^\MNL_{t - 1}(\cdot), \widehat{M}_{t}^\MNL(\cdot |S, \bp) $ (\Cref{def_vhat}) rather than ${I}^{-\frac{1}{2}}{}^\MNL_{t - 1}(\cdot), {M}_{t}^\MNL(\cdot |S, \bp)$ (\Cref{def_v}) because the latter pair involves unknown parameter $\btheta^*$, which can be bounded by the former pair (see \Cref{lemma_estimate_M_I}). 
The detailed proof is in \Cref{appendix::proof_of_revenue_estimate_2}, and the Inequality \eqref{bound:per_customer_revenue_ucb} forms the basis for the per-customer-revenue-induced confidence bound shown in \Cref{eq:ucb-R}.

\subsubsection{Bounding of the Expected Regret.}
\label{sec:bound-regret}


In this section, we show the upper bound of the expected cumulative regret $\Regret{T}{\pi}$. Let $\gE$ be the high probability event that the inequalities in \Cref{lm:theta_error,lm:bv_error} hold and $\gE^c$ be its complement (i.e., the event that at least one of those inequalities fails to hold). The regret comes from three sources:
\begin{enumerate}
    \item The regret incurred in the first $T_0$ periods in Stage 1, which is upper bounded by $T_0 \phigh \BaseArrival \exp(\bar{x}) = O(\log{T})$ (by the boundedness of prices and the arrival rate).
    \item The regret incurred in Stage 2 under $\gE$, which we will show to be of order $O((d_\bx +d_\bz) \sqrt{T\log{T}})$.
    \item The regret incurred in Stage 2 under $\gE^c$, which is upper bounded by $\frac{8}{T} \cdot T \cdot \phigh\Lambda \exp(\bar{x}) = O(1)$.
\end{enumerate}
Combining the three sources together gives the statement. See below for detailed proof.

\paragraph{Proof of \Cref{th:regret_bound}.}

We first show that under event $\gE$, 
\begin{equation}
    \RevenueUCB > \Revenue \label{eq:UCBlarger}
\end{equation}
for all $t=T_0+1,\ldots, T$ and $ S \subseteq [N], \bp \in (\plow, \phigh)^N$, which directly gives that under event $\gE$,
\begin{equation}
\bar{R}_t(S_t,\bp_t) \ge \bar{R}_t(S^*,\bp^*) \ge R_t(S^*,\bp^*) \ge R_t(S_t,\bp_t)  \label{ineq:individual_rewards}
\end{equation}
for all $t=T_0 + 1,\ldots, T$.
Note that
\begin{align}
\label{eq:RUCB-gap}
     \RevenueUCB - \Revenue &= 
     \lambda(S, \bp; \widehat{\btheta}_{t - 1})  r(S, \bp, \Featuret; \widehat{\bv}_{t - 1}) - \lambda(S, \bp; \btheta^*) r(S, \bp, \Featuret;\bv^*)  + \phigh \eta^{\Poi}_{t} +  \Lambda e^{\bar{x}}  \eta^{\MNL}_{t} \nonumber \\
     &\ge 
     \begin{multlined}[t]
     \underbrace{ - | \lambda(S, \bp; \widehat{\btheta}_{t - 1}) - \lambda(S, \bp; \btheta^*) | r(S, \bp, \Featuret; \widehat{\bv}_{t - 1})  + \phigh \eta^{\Poi}_{t} }_{\zeta_1} \\
     \underbrace{ -  \lambda(S, \bp; \btheta^*)  | r(S, \bp, \Featuret;\bv^*) - r(S, \bp, \Featuret; \widehat{\bv}_{t - 1}) | + \Lambda e^{\bar{x}}  \eta^{\MNL}_{t} }_{\zeta_2}.
    \end{multlined}
\end{align}
By \Cref{lemma:revenue_estimate} and the fact that per-customer revenue is upper bounded by $\phigh$, we have $\zeta_1\ge 0 $. Similarly, \Cref{lemma:revenue_estimate_2} and the boundedness of the arrival rate give $\zeta_2\ge 0$. Taken together, we have Inequality \eqref{eq:UCBlarger}. 

Therefore, under event $\gE$, by Inequality \eqref{ineq:individual_rewards} and similar argument in Inequality \eqref{eq:RUCB-gap}, we have 
\begin{align}
    R_t(S^*,\bp^*) - \Revenue \le & \bar{R}_t(S_t, \bp_t) - \Revenue  \\
    \le &
    \begin{multlined}[t]
    | \lambda(S, \bp; \widehat{\btheta}_{t - 1}) - \lambda(S, \bp; \btheta^*) | r(S, \bp, \Featuret; \widehat{\bv}_{t - 1})  + \phigh \eta^{\Poi}_{t} \\
     + \lambda(S, \bp; \btheta^*)  | r(S, \bp, \Featuret;\bv^*) - r(S, \bp, \Featuret; \widehat{\bv}_{t - 1}) | + \Lambda e^{\bar{x}}  \eta^{\MNL}_{t} 
    \end{multlined} \\
    \le & 2 ( \phigh \eta^{\Poi}_{t}  + \Lambda e^{\bar{x}}  \eta^{\MNL}_{t} )
\end{align}
for $t = T_0+1 , \ldots, T$.
Therefore, the regret incurred by the second source is upper bounded as
\begin{align}
\label{eq:regret_second_source}
    \sum_{t = T_0 + 1}^T \E\left[  (  R_t(S^*,\bp^*) - \Revenue )\mathbbm{1}\{\mathcal{E}\} \right] 
    \le 2\phigh  \sum_{t = T_0 + 1}^T \E( \eta^{\Poi}_{t} ) + 2 \Lambda e^{\bar{x}} \sum_{t = T_0 + 1}^T  \E( \eta^{\MNL}_{t} ).
\end{align}

Next, we bound the two summations in Inequality \eqref{eq:regret_second_source}.

\textbf{(a) Bounding $\sum_{t = T_0 + 1}^T \E( \eta^{\Poi}_{t} )$.}

By \Cref{lemma:revenue_estimate} and Cauchy Schwarz Inequality, we have
\begin{equation}
\label{ineq:poisson_term_second_source}
     \sum_{t = T_0 + 1}^T \E[ \eta^{\Poi}_{t} \mathbbm{1}\{\mathcal{E}\} ]
     \le \E \left[  \sum_{t = T_0 + 1}^T e^{2\tau_{\btheta}\bar{x} }  \eta^{*,\Poi}_{t} \mathbbm{1}\{\mathcal{E}\} \right] 
     \le \sqrt{T} e^{2\tau_{\btheta}\bar{x} }  \sqrt{ \E \left[  \sum_{t = T_0 + 1}^T  ( \eta^{*,\Poi}_{t})^2   \right]},
\end{equation}
where $\eta^{*,\Poi}_{t}$ is defined in \Cref{bound:arrival_rate_ucb}.
To bound the summation on the right-hand side, we have the following lemma (see \Cref{appendix::proof_of_revenue_estimate} for proof).
\begin{lemma} \label{lm:sum_lambda_error} 
 Under \Crefrange{assump:K}{assump:combined}, the following holds 
\begin{equation*}
\begin{aligned}
\sum_{t = T_0 + 1}^T  ( \eta^{*,\Poi}_{t})^2  = & \sum_{t=T_0+1}^T \min \Bigg\{
\Lambda^2 (e^{\bar{x}} - e^{-\bar{x}} )^2, \,
 \omega_\btheta \Lambda 
 e^{(2\tau_\btheta + 1) \bar{x}}
 \opnorm{{I}_{t-1}^{\Poi -\frac{1}{2}}(\btheta^*) {M}_t^{\Poi} (\btheta^*|S_t){I}_{t-1}^{\Poi -\frac{1}{2}}(\btheta^*)}\Bigg\}\\
\leq & c_6  \log \frac{\operatorname{det} {I}_{T}^{\Poi}\left(\btheta^*\right)}{\operatorname{det} {I}_{T_0}^{\Poi}\left(\btheta^*\right)}
\leq  \DimRank c_6 \left( \DimRank \log \frac{\bar{x}^2 T }{\DimRank} +(\DimRank + 1) \bar{x}- \log (T_0 \sigma_1)\right)
=  O(d_\bx^2 \log T),
\end{aligned}
\end{equation*}
where 
\begin{equation}\label{eq:df_c_6}
    c_6 = \dfrac{\Lambda^2(e^{\bar{x}} - e^{-\bar{x}} )^2}{\log\left(1 + \dfrac{\Lambda^2(e^{\bar{x}} - e^{-\bar{x}} )^2}{\omega_\btheta \Lambda  
    e^{(2\tau_\btheta + 1) \bar{x}}
    }\right)}.
\end{equation}
\end{lemma}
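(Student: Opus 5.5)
This is the standard elliptical potential argument, adapted to the truncation by $\min\{\cdot,\cdot\}$.

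Write $A := \Lambda^2(e^{\bar{x}} - e^{-\bar{x}})^2$ and $B := \omega_\btheta \Lambda e^{(2\tau_\btheta+1)\bar{x}}$. Set
\[
u_t := \opnorm{{I}_{t-1}^{\Poi -\frac{1}{2}}(\btheta^*) {M}_t^{\Poi} (\btheta^*|S_t,\bp_t){I}_{t-1}^{\Poi -\frac{1}{2}}(\btheta^*)}.
\]
Since $M_t^\Poi(\btheta^*|S_t,\bp_t) = \Lambda\lambda_t \bx_t\bx_t^\top$ has rank one, the sandwiched matrix has rank one. Its operator norm therefore equals its trace,
\[
u_t = \operatorname{tr}\bigl(I_{t-1}^{\Poi\,-1}(\btheta^*) M_t^\Poi\bigr) = \Lambda\lambda_t\, \bx_t^\top I_{t-1}^{\Poi\,-1}(\btheta^*)\bx_t .
\]
The matrix determinant lemma then gives $\det I_t^\Poi(\btheta^*) = \det I_{t-1}^\Poi(\btheta^*)\,(1+u_t)$, using $I_t^\Poi = I_{t-1}^\Poi + M_t^\Poi$ along the realized decisions. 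Summing, we get
\[
\sum_{t=T_0+1}^T \log(1+u_t) = \log\frac{\det I_T^\Poi(\btheta^*)}{\det I_{T_0}^\Poi(\btheta^*)}.
\]

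Next I convert $\min\{A, Bu\}$ into $\log(1+u)$. The function $y \mapsto y/\log(1+y)$ is increasing on $(0,\infty)$. If $Bu \le A$, put $y = u \le A/B$; then $Bu = B\,y \le B\,\frac{A/B}{\log(1+A/B)}\log(1+u) = c_6\log(1+u)$. If $Bu > A$, then $\min\{A,Bu\} = A$ and $\log(1+u) > \log(1+A/B)$, so $A \le \frac{A}{\log(1+A/B)}\log(1+u) = c_6\log(1+u)$. In either case $\min\{A, Bu_t\} \le c_6\log(1+u_t)$ with exactly the $c_6$ of \Cref{eq:df_c_6}. Summing over $t$ gives the first inequality.

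For the determinant ratio, I bound the numerator by AM–GM on the eigenvalues. We have $\operatorname{tr} I_T^\Poi(\btheta^*) \le \sum_{s\le T}\Lambda e^{\bar{x}}\bar{x}^2 \le T\Lambda e^{\bar{x}}\bar{x}^2$, so
\[
\log\det I_T^\Poi \le \DimRank\log\bigl(T\Lambda e^{\bar{x}}\bar{x}^2/\DimRank\bigr).
\]
For the denominator, $\lambda \ge e^{-\bar{x}}$ gives $I_{T_0}^\Poi(\btheta^*) \succeq \Lambda e^{-\bar{x}}\sum_{s\le T_0}\bx_s\bx_s^\top$. Stage 1 uses the initial sequence, so by \eqref{eq:assump-x} (valid since $T_0 \ge t_0$) this is $\succeq \Lambda e^{-\bar{x}}\sigma_1 T_0 \, I$. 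Hence
\[
\log\det I_{T_0}^\Poi \ge \DimRank\log(\Lambda e^{-\bar{x}}\sigma_1T_0).
\]
Subtracting, the $\Lambda$ cancels and we obtain
\[
\DimRank\Bigl(\log\frac{\bar{x}^2T}{\DimRank} + 2\bar{x} - \log(T_0\sigma_1)\Bigr).
\]
This is bounded by the stated expression, since the leftover factor is at least $1$ and $2\bar{x} \le (\DimRank+1)\bar{x}$, with the extra $\DimRank$ being slack.

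The final $O(\DimRank^2\log T)$ follows because $\omega_\btheta = O(\log T)$ by \Cref{lm:theta_error}. The only delicate step is the truncation argument, specifically checking the monotonicity of $y/\log(1+y)$ and that $c_6$ is defined with exactly $A/B$. The remainder is routine bookkeeping.
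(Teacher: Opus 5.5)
Your argument for the first inequality and for the determinant bounds is the paper's own route. The paper also uses $y\le c\log(1+y)/\log(1+c)$ for $0\le y\le c$, with $c=A/B$ in your notation. It then uses $\log\det I^{\Poi}_{t}(\btheta^*)=\log\det I^{\Poi}_{t-1}(\btheta^*)+\sum_j\log(1+\sigma_j)$ over the eigenvalues $\sigma_j$ of the sandwiched matrix; your rank-one/matrix-determinant-lemma version is the same step. Finally it bounds the numerator determinant by trace/AM--GM and the denominator via \eqref{eq:assump-x}. Your intermediate bound $\DimRank\bigl(\log\frac{\bar x^2T}{\DimRank}+2\bar x-\log(T_0\sigma_1)\bigr)$ is in fact the correct one. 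The paper subtracts $\log(\Lambda e^{-\bar x}T_0\sigma_1)$ once rather than $\DimRank$ times, which is how its $\Lambda$ ``cancels''.

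Two closing claims, however, do not go through.

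First, passing from your bound to the displayed one requires $(\DimRank-1)\bigl(\log\frac{\bar x^2T}{\DimRank}+\bar x\bigr)\ge 0$. Your ``leftover factor is at least $1$'' (i.e.\ $\bar x^2T\ge\DimRank$) is not implied by \Crefrange{assump:K}{assump:combined}. For small $\bar x$ and $T$ just above $T_0$, the displayed right-hand side is even negative while the left-hand side is positive. For example, take $\DimRank=2$, $\bar x=0.1$, and $\bx(S,\bp)\in\{\bar x\mathbf{e}_1,\bar x\mathbf{e}_2\}$ alternated in Stage 1, so that $\sigma_1=0.004$ is admissible. With $T_0=5$ and $T=6$, the bracket equals $2\log 0.03+0.3-\log 0.02\approx-2.8$. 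So either stop at your own bound, or add a hypothesis such as $\bar x^2T\ge\DimRank e^{-\bar x}$.

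Second, $\omega_\btheta=O(\log T)$ does not yield the final $O(\DimRank^2\log T)$; it is exactly what rules it out. Since $\log(1+z)\le z$, you have $c_6\ge B=\omega_\btheta\Lambda e^{(2\tau_\btheta+1)\bar x}$. The quantity $\omega_\btheta$ set in \eqref{df:w_theta} grows at least like $\log T$. Hence your bound is of order $\omega_\btheta\DimRank\log T$, and the displayed right-hand side is of order at least $\DimRank^2\log^2T$. The $O(\DimRank^2\log T)$ would hold only if $c_6$ were $T$-independent, which it is not, and the paper's proof does not justify this step either. Fed through Cauchy--Schwarz into the regret bound, this route gives $\sqrt{T}\log T$ rather than $\sqrt{T\log T}$ growth for the Poisson term.
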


Returning back to Inequality \eqref{ineq:poisson_term_second_source} with \Cref{lm:sum_lambda_error}, we have 
\begin{equation}
\label{eq:eta-poi}
    \sum_{t = T_0 + 1}^T \E[ \eta^{\Poi}_{t} \mathbbm{1}\{\mathcal{E}\} ] \le O(d_\bx \sqrt{T \log{T}} ).
\end{equation}

\textbf{(b) Bounding $\sum_{t = T_0 + 1}^T \E( \eta^{\MNL}_{t} )$.}

Similarly, by \Cref{lemma:revenue_estimate_2} and Cauchy Schwarz Inequality we have
\begin{equation}
\label{ineq:MNL_term_second_source}
     \sum_{t = T_0 + 1}^T \E[ \eta^{\MNL}_{t} \mathbbm{1}\{\mathcal{E}\} ]
     \le \E \left[  \sum_{t = T_0 + 1}^T c_8 \eta^{*,\MNL}_{t} \mathbbm{1}\{\mathcal{E}\} \right] 
     \le  c_8 \sqrt{T} \sqrt{ \E \left[  \sum_{t = T_0 + 1}^T  ( \eta^{*,\MNL}_{t})^2  \right]},
\end{equation}
where $ \eta^{*,\MNL}_{t}$ is defined in \Cref{bound:per_customer_revenue_ucb}.
To bound the summation on the right-hand side, we have the following lemma (see \Cref{appendix::proof_of_sum_bv_error} for proof).
\begin{lemma} \label{lm:sum_bv_error}
Under \Crefrange{assump:K}{assump:combined}, the following holds with probability at least $1-\frac{1}{T}$: 
\begin{equation*}
    \begin{aligned}
        \sum_{t=T_0+1}^T  (\eta^{*,\MNL}_{t})^2 
        & = \sum_{t=T_0+1}^T \min \left\{\bar{p}^2_h, \omega_\bv c_0 \left\|\widehat{I}_{t-1}^{\MNL -1 / 2}\left(\bv^*\right)\widehat {M}_t^{\MNL}\left(\bv^* \mid S_t\right) \widehat{I}_{t-1}^{\MNL -1 / 2}\left(\bv^*\right)\right\|_{\mathrm{op}}\right\} \\
        & \leq  \DimFeature c_7 \left(  \log \frac{   4 T   }{\DimFeature} - \log( T_0 \sigma_0)\right)
        = O(d_\bz^2 \log T),
    \end{aligned}
\end{equation*}
where 
\begin{equation}
c_7 = \dfrac{\omega_\bv c_0 \exp(2\bar{x})\bar{p}^2_h}{\log\left(1 + \dfrac{\bar{p}^2_h} {\omega_\bv c_0 \exp(2\bar{x})}\right)}.
\end{equation}
\end{lemma}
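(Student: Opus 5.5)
This is an elliptical-potential argument, run in the same way as \Cref{lm:sum_lambda_error} but with the surrogate matrices $\widehat{I}^\MNL$ and $\widehat M^\MNL$.

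\textbf{Step 1 (rescaling).} Write $A_t := \widehat{I}_{t-1}^{\MNL}(\bv^*)$ and $B_t := \widehat M_t^{\MNL}(\bv^*\mid S_t,\bp_t)$. By \Cref{def_vhat},
\begin{equation*}
B_t = e^{2\bar x}\, \Lambda e^{-\bar x}\phi(S_t,\bp_t,\bz_t;\bv^*), \qquad A_{t+1} = A_t + e^{-2\bar x}B_t .
\end{equation*}
So the summand is $\min\{\phigh^2,\ \omega_\bv c_0 e^{2\bar x} y_t\}$, where
\begin{equation*}
y_t := \opnorm{A_t^{-1/2}(A_{t+1}-A_t)A_t^{-1/2}} .
\end{equation*}

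\textbf{Step 2 (truncated log bound).} Use the elementary inequality $\min\{a,bx\}\le \frac{a}{\log(1+a/b)}\log(1+x)$ for $x\ge0$. It holds because $\log(1+x)/x$ is decreasing, so $\log(1+x)\ge x\log(1+a/b)/(a/b)$ when $x\le a/b$, and the bound is trivially at least $a$ otherwise. Take $a=\phigh^2$ and $b=\omega_\bv c_0e^{2\bar x}$; this produces exactly the constant $c_7$.

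Next, since $A_t^{-1/2}(A_{t+1}-A_t)A_t^{-1/2}\succeq0$, we have $\log(1+\opnorm{\cdot})\le \log\det(I+\cdot)=\log\det A_{t+1}-\log\det A_t$. Telescoping gives
\begin{equation*}
\sum_{t=T_0+1}^T(\eta^{*,\MNL}_t)^2\le \DimFeature c_7\cdot \frac{1}{\DimFeature}\log\frac{\det \widehat I_T^\MNL(\bv^*)}{\det \widehat I_{T_0}^\MNL(\bv^*)}.
\end{equation*}

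\textbf{Step 3 (determinant ratio).} This is the main obstacle: the lower bound on the denominator, which is the only probabilistic part.

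\emph{Numerator.} Use AM-GM on the eigenvalues: $\det \widehat I_T\le(\mathrm{tr}\,\widehat I_T/\DimFeature)^{\DimFeature}$. Since $\|\bz_{js}\|\le1$, we have $\mathrm{tr}\,\phi\le\sum_j q_j\|\bz_j\|^2\le1$, so $\mathrm{tr}\,\widehat I_T\le \Lambda e^{-\bar x}T$.

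\emph{Denominator.} The Stage 1 decisions are the fixed sequence $S^{init}_s$. Lower bound $\phi(S,\bp,\bz;\bv^*)\succeq \kappa'\sum_{j\in S}\bz_j\bz_j^\top$, with $\kappa'$ of the order of $\kappa$ in \eqref{kappa}. This is the standard MNL Fisher-information lower bound: write $\phi$ as the covariance of $\bz_C$ under the choice distribution including the outside option with $\bz_0=0$, then use that every choice probability, including the no-purchase probability, is bounded below via \Cref{assump:price,as:v&assortment_new}. Combined with \eqref{eq:assump-z} from \Cref{assump:combined}, which holds with probability $1-T^{-1}$ since $T_0\ge t_0$, this gives $\lambda_{\min}(\widehat I_{T_0})\gtrsim \Lambda e^{-\bar x}\kappa'\sigma_0T_0$. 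Hence $\det\widehat I_{T_0}\ge(\cdot\,T_0\sigma_0)^{\DimFeature}$.

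\emph{Conclusion.} Taking the ratio, the $\Lambda e^{-\bar x}$ factors cancel. Any leftover constants (the factor $4$, and $\kappa'$ absorbed into $c_7$ or the stated constants) need to be tracked carefully to match $\log\frac{4T}{\DimFeature}-\log(T_0\sigma_0)$. This yields the stated bound of order $\DimFeature c_7\log T$. Since $c_7=O(\omega_\bv)=O(\DimFeature\log T)$ by \Cref{lm:bv_error}, the total is $O(\DimFeature^2\log T)$.
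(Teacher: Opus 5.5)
Your route is the paper's. The paper rescales to $\widetilde I^{\MNL}_t=e^{\bar x}\widehat I^{\MNL}_t$, which leaves the determinant ratio unchanged. It then uses the same truncated-log inequality and log-det telescoping, bounds $\det\widetilde I^{\MNL}_T$ by AM--GM on the trace (with the cruder $\mathrm{tr}\le 4\Lambda T$), and lower-bounds $\det\widetilde I^{\MNL}_{T_0}$ through \eqref{eq:assump-z}.

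Your constants are more careful than the paper's, but contrary to your claim they do not reproduce the display. First, your truncation inequality yields the factor $\phigh^2/\log\bigl(1+\phigh^2/(\omega_\bv c_0e^{2\bar x})\bigr)$, whereas $c_7$ carries an extra factor $\omega_\bv c_0e^{2\bar x}$. Second, your $\kappa'$ is genuinely necessary. The paper's assertion $\sigma_{\min}(\widetilde I^{\MNL}_{T_0}(\bv^*))\ge\Lambda T_0\sigma_0$ drops it, although its proof of \Cref{lm:bv_initial_error} keeps $\kappa$. Since any valid $\kappa'\le q_0q_j\le 1/4$, the term $-\DimFeature\log\kappa'$ cannot be hidden in the factor $4$. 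What your chain proves is $\DimFeature\,\frac{\phigh^2}{\log(1+\phigh^2/(\omega_\bv c_0e^{2\bar x}))}\bigl(\log\frac{T}{\DimFeature}-\log(\kappa' T_0\sigma_0)\bigr)$, which is the correct form of the displayed bound.

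The step that genuinely fails is your last one. With $c_7=\Theta(\omega_\bv)=\Theta(\DimFeature\log T)$, the quantity $\DimFeature c_7\log T$ is $O(\DimFeature^2\log^2 T)$, not $O(\DimFeature^2\log T)$.

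This is not slack that better bookkeeping removes. Since $\mathrm{tr}\,\widehat I^{\MNL}_{t-1}\le\Lambda e^{-\bar x}(t-1)$, your $y_t\ge\lambda_{\max}\bigl(\phi(S_t,\bp_t,\bz_t;\bv^*)\bigr)/(t-1)$, and $\phi\succeq\kappa\sum_{j}\bz_j\bz_j^\top$. So whenever the offered features are not vanishing, the summands are at least of order $\omega_\bv/t$ once $t$ exceeds a threshold of order $\omega_\bv$. The sum is therefore genuinely of order $\omega_\bv\log T$.

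The paper's proof does not supply a missing idea here: it simply asserts $O(\DimFeature^2\log T)$. With its literal $c_7\approx(\omega_\bv c_0e^{2\bar x})^2$, the explicit bound would be larger still. So what this argument establishes is the explicit inequality, not the order claim. Propagated through \eqref{eq:eta-mnl}, it gives a $\DimFeature\sqrt{T}\log T$ contribution rather than $\DimFeature\sqrt{T\log T}$.
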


Using \Cref{lm:sum_bv_error} to further bound the right-hand side of Inequality \eqref{ineq:MNL_term_second_source}, we have 
\begin{equation}
\label{eq:eta-mnl}
     \sum_{t = T_0 + 1}^T \E[ \eta^{\MNL}_{t} \mathbbm{1}\{\mathcal{E}\} ]
     \le c_8\sqrt{T}\sqrt{ \DimFeature c_7 \left(  \log \frac{   4 T   }{\DimFeature} - \log (T_0 \sigma_0)\right) + \frac{1}{T}\cdot T \phigh^2 }
     = O(d_\bz \sqrt{T \log{T}} ).
\end{equation}

Combining Inequalities \ref{eq:eta-poi} and \ref{eq:eta-mnl} with Inequality \ref{eq:regret_second_source}, we have the regret from the second source upper bounded by
\begin{equation}
     \sum_{t = T_0 + 1}^T \E\left[  (  R_t(S^*,\bp^*) - \Revenue )\mathbbm{1}\{\mathcal{E}\} \right] 
     \le O \left( (d_\bx +d_\bz) \sqrt{T\log{T}}\right).
\end{equation}

Next, we consider the regret from the third source. Since $P(\mathcal{E}^c) \le 8/T$, and   $R_t(S^*,\bp^*) \le \phigh$
\begin{equation}
     \sum_{t = T_0 + 1}^T \E\left[  (  R_t(S^*,\bp^*) - \Revenue )\mathbbm{1}\{\mathcal{E}^c\} \right] \le T \cdot \phigh\Lambda \exp(\bar{x}) \cdot \frac{8}{T} = 8\phigh \Lambda \exp(\bar{x}).
\end{equation}

Combining all three sources gives
\begin{align}
    \Regret{T; \bv, \btheta}{\pi} = &
    \sum_{t=1}^{T_0} \E \left \{ \RevenueUCB[t]  - \Revenue[t] \right \}  +  \sum_{t=T_0 +1}^{T} \E \left \{ \left( \RevenueUCB[t]  - \Revenue[t]  \right) \mathbbm{1}\{\mathcal{E}\}\right \} \\
    & + \sum_{t=T_0 +1}^{T} \E \left \{ \left( \RevenueUCB[t]  - \Revenue[t]  \right) \mathbbm{1}\{\mathcal{E}^c\}\right \} \\
    \le & O(1) + O((d_\bx+d_\bz) \sqrt{T\log{T}} ) + O(1) =  O((d_\bx+d_\bz) \sqrt{T\log{T}} ) + O(1).
\end{align}

\subsection{Lower bound}
\label{Subsec:lower-bound}



To understand the fundamental limitations of any policy in such settings, we now turn to establishing a regret lower bound. This ensures that our upper bound is tight and demonstrates the optimality of our proposed strategy in the worst-case scenario. Specifically, we can show that any policy satisfying the assumptions on model parameters will incur at least the following regret in a worst-case instance.

\begin{theorem}[Non-asymptotic regret lower bound]\label{th:lower_bound}
Let $\PolicySet$ denote the set of policies that output feasible assortment-pricing decisions satisfying \Cref{assump:K,assump:price} for the problem in \Cref{Subsec:model}. 
Suppose at least one of the following conditions is satisfied:
\begin{itemize}
    \item[(i)] 
    \(\min\{\DimFeature - 2,\, N\}\ge K\) and \(\Lambda \ge 1\);
    \item[(ii)]
    \(\Lambda \ge 1\) and 
        $\log\!\frac{N-\DimFeature}{K} \;\ge\; 8\log 2 - \frac{11}{4}\log 3$;
    \item[(iii)] 
    \(\min\{\DimRank - 2,\, N\}\ge K\) and \(\Lambda \ge 1\);
\end{itemize}
then for any policy $\pi \in \PolicySet$, there exists a problem instance satisfying  \Cref{assump:K,as:v&assortment_new,assump:theta,assump:x,assump:combined,assump:price} such that the expected regret of policy $\pi$ under this instance is lower bounded as
\begin{align}
\label{eq:lower_bound}
\mathcal{R}^{\pi}(T) \;\ge\; c_9\,\sqrt{\Lambda T},
\end{align}
where \(c_9>0\) depends only on
\(\DimFeature, \DimRank, K, \plow, \phigh,\) and \(\bar{x}\).
\end{theorem}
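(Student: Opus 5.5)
We will prove the lower bound by a two-point (Le Cam) argument, designing separate hard instances for the three cases. In each case the action-dependent part of the reward is controlled by a single scalar perturbation $\pm\delta$ of the parameters, with $\delta \asymp (\Lambda T)^{-1/2}$. Throughout, we fix deterministic features $\Feature_{jt}$ (or features i.i.d. from a simple distribution with a density, perturbed slightly so that \Cref{assump:combined} holds), and we choose parameter norms small enough that \Cref{as:v&assortment_new,assump:theta,assump:x} hold. Regret is then reduced to a hypothesis-testing error.

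In cases (i) and (ii) we freeze the arrival model, taking $\btheta^*$ with $\bx$ constant in $(S,\bp)$, so that $\lambda\equiv$ const. We also freeze prices at a common value $p$, which is optimal or nearly so in both instances. Under case (i), $\DimFeature-2\ge K$ leaves room for orthogonal feature directions: products $1,\dots,K$ load on coordinates that carry $+\delta$ under instance $P_+$ and $-\delta$ under $P_-$, while a competing block of $K$ products loads on the opposite sign. The remaining two coordinates are reserved for an intercept and a non-degeneracy term. Under case (ii), $N-\DimFeature$ is large relative to $K$, and we instead use many products sharing features, so that the optimal assortment is identified by the sign of one coordinate of $\bv^*$. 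In either construction, a per-period gap is $\Omega(\delta)$ whenever the chosen assortment is "wrong" for the true sign. This is because the per-customer revenue $\sum_{j\in S}p\,\ChoiceProb_j$ is strictly monotone in the common utility, with derivative bounded below in terms of $K,\plow,\phigh$. Case (iii) is the symmetric construction on the arrival side. We fix $\bv^*$ and prices and let $\btheta^*=\pm\delta \mathbf{e}_1$, with $\bx(S,\bp)$ putting $+1$ on $\mathbf e_1$ for one block of $K$ products and $-1$ for another, using the extra $\DimRank-2$ coordinates to keep full rank. The regret gap is then $\Lambda p\,r\,(e^{\delta}-e^{-\delta})\asymp\Lambda\delta$ per wrong period.

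The second step bounds the information. The observation in period $t$ is $(n_t, C_t^{(1:n_t)})$. By the chain rule for KL divergence under adaptive policies, $\mathrm{KL}(P_+^\pi\|P_-^\pi)=\sum_t \E_+[\mathrm{KL}_t(S_t,\bp_t)]$. The Poisson contribution is $\Lambda(\lambda_+\log(\lambda_+/\lambda_-)-\lambda_++\lambda_-)=O(\Lambda\delta^2)$. The MNL contribution is $\E[n_t]\cdot \mathrm{KL}(\text{multinomial})=O(\Lambda\delta^2)$, with constants depending on $\bar x,K,\plow,\phigh$. This gives a total of $O(\Lambda T\delta^2)$, which is $O(1)$ when $\delta=c/\sqrt{\Lambda T}$. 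The condition $\Lambda\ge1$ is used here to keep $\delta\le$ const so that the parameter-norm constraints and the Taylor expansions remain valid for all $T\ge1$.

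Finally, we combine the two steps. Let $A$ be the event that at least $T/2$ periods choose the $P_+$-optimal block. Then $\mathcal R_+ + \mathcal R_- \gtrsim \Lambda\delta T\,(P_+(A^c)+P_-(A))$. By Bretagnolle–Huber, $P_+(A^c)+P_-(A)\ge \tfrac12 e^{-\mathrm{KL}}$, which is bounded below by a constant. Hence $\max_\pm \mathcal R_\pm \ge c\,\Lambda\delta T = c_9\sqrt{\Lambda T}$. The main obstacle is the construction in the first step. We need the per-period gap to be linear in $\delta$ for every non-optimal action, including actions with adversarially chosen prices, not only the assortment. So we must show that optimizing prices cannot compensate; we would handle this by making both instances share a common price optimum and bounding the revenue loss from sub-block assortments via the monotonicity above. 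We must also verify \Cref{assump:combined} explicitly for the chosen feature distribution and initial sequence. Case (ii) requires counting choices of sub-blocks, which is where the constant $8\log2-\tfrac{11}{4}\log3$ should emerge.
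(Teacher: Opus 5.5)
Your route is correct in outline, but it differs from the paper's. The paper never uses a two-point test. In Case~I it sets $\btheta=0$ and recasts the period-level problem as a customer-level MNL problem with $M_T\sim\mathrm{Poi}(\Lambda T)$ customers; period policies form a restricted class of customer policies. It keeps only the event $M_T\ge\lceil\Lambda T\rceil$, which has probability $>0.1$ by Poisson-median bounds. It then averages over all $\bar K$-subsets $W$ in Assouad fashion, using Pinsker, the per-product bound $\mathrm{KL}\le c_{11}\epsilon^2\E_W[\widetilde N_i]$, and Cauchy--Schwarz. Case~II runs the same subset averaging at the period level, with $\bv=0$, $\btheta_W$, and the Poisson KL.

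You instead work directly with the period-level Poisson--MNL likelihood, where the per-period KL is the Poisson part plus $\E[n_t]$ times the choice part, and you finish with Bretagnolle--Huber. This avoids the customer-level reduction, the median lemma and the catalog construction, and it needs much weaker structural conditions. What it gives up is all dimension dependence, so it could not yield Theorem~\ref{th:lower_bound_asymp}. That is harmless here, since $c_9$ may depend on $\DimFeature,\DimRank$. For the same reason the constant $8\log 2-\tfrac{11}{4}\log 3$ will not ``emerge'' in your case (ii). In the paper it equals $\tfrac14\log 3+4H(1/4)$ and only guarantees $d\ge 4$ when fitting the catalog $K\binom{d}{\bar K}+\DimFeature-d\le N$ of Instance~II. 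Your construction just needs $N\ge 2K$, which (ii) already implies.

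Several steps need repair, none fatal.

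First, in cases (i)--(ii) do not make $\bx$ constant in $(S,\bp)$. The isotropy condition in Assumption~\ref{assump:combined} forces the $\bx$'s to span $\R^{\DimRank}$, so a constant $\bx$ fails whenever $\DimRank\ge 2$. Take $\btheta^*=0$ with any spanning $\bx$, as the paper does.

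Second, per-customer revenue is not monotone in a utility at arbitrary posted prices: $\partial r/\partial u_j=q_j(p_j-r)$ is negative when $p_j<r$. So ``monotonicity at a common price'' does not control adversarial prices. Compare instead with $R(S):=\max_{\bp}r(S,\bp)$. At any box-constrained maximizer every $p_j\ge r$: at an interior point $p_j=1+r$; at $\plow$ optimality forces $\plow\ge 1+r$; at $\phigh$ we have $r<\phigh$. Hence $R(S)$ increases in each offered utility at a rate bounded below by a constant depending on $K,\plow,\phigh,\bar v$. Write $B_+$ for the $P_+$-optimal block. Then $\mathrm{gap}_+(S,\bp)\ge\Lambda\bigl(R_+(B_+)-R_+(S)\bigr)\gtrsim\Lambda\delta$ for every $S\ne B_+$ and every $\bp$, and $\mathrm{gap}_-(B_+,\bp)\gtrsim\Lambda\delta$. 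That is all your event argument uses, and no common price optimum is needed. In case (iii), take $\bv^*=0$. Then $\max_{\bp}r(S,\bp)=r^*$ for every $S$, and the gap is at least $\Lambda r^*(\lambda^*-\lambda(S_t))$ whatever prices are posted.

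Third, two disjoint blocks require $N\ge 2K$, which (i) and (iii) do not give. For $N\ge K+1$, let the blocks share $K-1$ products. Your density-perturbed features then also secure feature isotropy, which fixed features cannot do when $N<\DimFeature$. At $N=K$, however, the assortment is forced and no assortment-based instance produces regret. The paper's instances have the same blind spot: they index products up to $\DimFeature$ (resp.\ $\DimRank$), implicitly assuming $N\ge\DimFeature$ (resp.\ $N\ge\DimRank$). So this is a gap in the stated conditions rather than in your route. Likewise, your care to give the features a density goes beyond the paper, whose lower-bound instances use fixed deterministic features.
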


\begin{remark}
    This lower bound is a non-asymptotic minimax lower bound primarily focusing on the rate with respect to the time horizon $T$. Comparing with the upper bound $O(\sqrt{T\log{T}})$ shown in \Cref{th:regret_bound}, our algorithm is near minimax optimal (up to log factors). 
\end{remark}

\begin{remark}
    This lower bound is the first lower bound for the joint contextual assortment-pricing problem and is not a direct extension of the lower bound for the contextual assortment problem established in \cite{chen2020dynamic}. The bound in \cite{chen2020dynamic} is asymptotic and requires $N\geq K \cdot 2^\DimFeature$. Our lower bound, however, is non-asymptotic and only requires $N>14K$ (at least one of (i) and (ii) holds). This relaxed assumption significantly enlarges the applicability of the lower bound, especially when the dimension of the product feature is moderately high. 
\end{remark}

\begin{remark}
    We consider algorithms choosing $K$ products for assortment in this lower bound, in alignment with \Cref{assump:K}. However, our proof only requires the assortment's capacity is smaller or equal to $K$ so it also applies to the setting considered in \cite{oh2021multinomial,chen2022nearly,agrawal2019mnl,chen2020dynamic}.
\end{remark}

\Cref{th:lower_bound} establishes a non-asymptotic lower bound of order $\sqrt{\Lambda T}$ under mild conditions. Next, we provide an asymptotic version and further consider the dependence on the dimensions $\DimFeature,\DimRank$.

\begin{theorem}[Asymptotic regret lower bound]\label{th:lower_bound_asymp}
Let $\PolicySet$ denote the set of policies that output feasible assortment-pricing decisions satisfying \Cref{assump:K,assump:price} for the problem in \Cref{Subsec:model}. For any policy $\pi \in \PolicySet$ the following statements hold.

\begin{enumerate}[label=(\roman*)]
\item \label{enum:thm3-1} If $\min\{\dfrac{\DimFeature + 1}{4},N\}\ge K$, then there exists a problem instance satisfying \Cref{assump:K,as:v&assortment_new,assump:theta,assump:x,assump:combined,assump:price} such that the expected regret of policy $\pi$ under this instance is lower bounded by
\[
\liminf_{T\to\infty}\frac{\mathcal{R}^{\pi}(T)}{\sqrt{\Lambda T}}
\ \ge\
\LowerC{1}\sqrt{\DimFeature},
\]
where $\LowerC{1}$ only depends on \(\plow, \phigh, K\). In particular, 
when \(\plow = \Omega (\log K)\) and \(\phigh = \Omega (\log K)\), we have  \(\LowerC{1} = \Omega(\log K)\). When \(\plow = \Omega (1)\) and \(\phigh = \Omega (1)\), we have  \(\LowerC{1} = \Omega(\frac{1}{K})\).

\item \label{enum:thm3-2} If $\Lambda\ge 1$,  $\log\!\frac{N-\DimFeature}{K}\ge 8\log 2-\frac{11}{4}\log 3$, then there exists a problem instance satisfying \Cref{assump:K,as:v&assortment_new,assump:theta,assump:x,assump:combined,assump:price} such that the expected regret of policy $\pi$ under this instance is lower bounded by
\[
\liminf_{T\to\infty}\frac{\mathcal{R}^{\pi}(T)}{\sqrt{\Lambda T}}
\ \ge\
\LowerC{2}
\min \left\{\DimFeature, \log\left((N - \DimFeature)/K\right) \right\}.
\]
where $\LowerC{2}$ only depends on \(\plow, \phigh, K\).  In particular, when \(\plow = \Omega (\log K)\) and \(\phigh = \Omega (\log K)\), we have 
\(\LowerC{2} = \Omega(\log K)\). When \(\plow = \Omega (1)\) and \(\phigh = \Omega (1)\), we have 
\(\LowerC{2} = \Omega(\frac{1}{{K}})\).

\item \label{enum:thm3-3} If $\min\{ \dfrac{\DimRank + 1}{4} ,N\}\ge K$, $\Lambda\ge 1$, then there exists a problem instance satisfying \Cref{assump:K,as:v&assortment_new,assump:theta,assump:x,assump:combined,assump:price} such that the expected regret of policy $\pi$ under this instance is lower bounded by
\[
\liminf_{T\to\infty}\frac{\mathcal{R}^{\pi}(T)}{\sqrt{\Lambda T}}
\ \ge\
\LowerC{3} \sqrt{\DimRank }.
\]
where $\LowerC{3}$ only depends on \(\plow, \phigh, K, \bar{x}\). In particular, when \(\plow = \Omega (\log K)\) and \(\phigh = \Omega (\log K)\), we have 
\(\LowerC{3} = \Omega(\frac{\sqrt{K}\log K}{\exp(\bar{x}) \bar{x}})\). 
When \(\plow = \Omega (1)\) and \(\phigh = \Omega (1)\), we have 
\(\LowerC{3} = \Omega(\frac{\sqrt{K}}{\exp(\bar{x}) \bar{x}})\). 
\end{enumerate}
\end{theorem}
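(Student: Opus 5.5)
Each of the three parts of \Cref{th:lower_bound_asymp} reduces to a multi-hypothesis testing argument. I would use Assouad's lemma (a hypercube of instances) for parts \ref{enum:thm3-1} and \ref{enum:thm3-3}, and a Fano-type packing argument for part \ref{enum:thm3-2}. In all three cases the constructed instances are made to decouple so that only one component of the model is unknown. For parts \ref{enum:thm3-1} and \ref{enum:thm3-2}, I freeze the arrival model by taking $\btheta^*=0$ (so $\lambda\equiv 1$, which satisfies \Cref{assump:theta,assump:x}). For part \ref{enum:thm3-3}, I freeze the choice model by fixing $\bv^*$ and using deterministic, orthogonal product features, so that only $\btheta^*$ matters. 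The features $\Feature_{jt}$ are taken constant in $t$ (or drawn from a density concentrated near fixed points, to satisfy \Cref{assump:combined}). The scaling $\sqrt{\Lambda T}$ arises because each period supplies roughly $\Lambda$ independent customer observations, so the KL divergence per period is $O(\Lambda\epsilon^2)$.

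\textbf{Part \ref{enum:thm3-1}.} I split the $\DimFeature$ coordinates into about $\DimFeature/4\ge K$ disjoint blocks, one per product slot. The hypercube is $\bv_{\boldsymbol{\sigma}}=\bv_0+\epsilon\sum_i\sigma_i \boldsymbol{e}_i$ with $\boldsymbol{\sigma}\in\{\pm1\}^{\DimFeature}$. The products are arranged so that the optimal assortment-price pair depends coordinatewise on $\boldsymbol{\sigma}$. Concretely, each $\sigma_i$ determines which of two candidate products (with features along $\pm\boldsymbol{e}_i$ within a block) belongs in the optimal set, or equivalently shifts that product's optimal price. The key steps are as follows.

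First, I prove a per-coordinate \emph{regret decomposition}: choosing the wrong side in coordinate $i$ at period $t$ costs $\gtrsim \epsilon$ in revenue. Under the MNL optimal pricing structure (equal markups, $p_j=v_j+$const), this loss is in fact quadratic in price misspecification, so I would rely on the assortment flip to get a linear gap. The constant depends on $\plow,\phigh,K$, which produces $\LowerC{1}$.

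Second, I bound the KL divergence between the two neighbouring hypotheses $\boldsymbol{\sigma}$ and $\boldsymbol{\sigma}^{(i)}$. Conditioned on $n_t$, it is $\E\sum_t n_t\,\mathrm{KL}(\ChoiceProb_t(\cdot;\bv_{\boldsymbol\sigma})\|\ChoiceProb_t(\cdot;\bv_{\boldsymbol\sigma^{(i)}}))\lesssim \epsilon^2\Lambda\,\E N_i$, where $N_i$ counts the periods in which coordinate $i$ is informative. The assortment size constraint gives $\sum_i N_i\le KT\cdot$const.

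Third, I apply Assouad's lemma in the form $\text{regret}\ge c\epsilon\sum_i(T-\text{stuff})\bigl(1-\sqrt{\Lambda\epsilon^2 \E N_i}\bigr)$ and choose $\epsilon\asymp\sqrt{\DimFeature/(\Lambda T)}$ (times a $K$ factor). This yields $\sqrt{\DimFeature\Lambda T}$, with $\|\bv\|_2\le\bar v$ keeping $\epsilon$ small for large $T$, whence the $\liminf$.

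\textbf{Part \ref{enum:thm3-2}.} Here I use $N-\DimFeature$ extra products and a packing of size $\binom{(N-\DimFeature)/K}{\cdot}$. Combined with Fano's inequality, this gives the factor $\min\{\DimFeature,\log((N-\DimFeature)/K)\}$.

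\textbf{Part \ref{enum:thm3-3}.} The construction mirrors part \ref{enum:thm3-1}, with $\btheta_{\boldsymbol\sigma}=\epsilon\boldsymbol\sigma/\sqrt{\DimRank}$-type perturbations and $\bx(S,\bp)$ built from indicators of products in $S$ scaled by $\bar x$. The Poisson KL is $\Lambda\lambda_1\bigl[\log(\lambda_1/\lambda_2)\bigr]+\Lambda(\lambda_2-\lambda_1)\asymp\Lambda e^{\bar x}(\btheta\text{-gap}^\top\bx)^2$, which produces the $e^{\bar x}\bar x$ factors in $\LowerC{3}$. The revenue gap is $\Lambda\,r\,|\lambda_{\boldsymbol\sigma}-\lambda_{\boldsymbol\sigma^{(i)}}|$.

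\textbf{Main obstacle.} The hard step is the \emph{revenue-gap/separation lemma}. I need to show that for every $(S,\bp)$ the instantaneous regret under $\boldsymbol\sigma$ is at least $c\epsilon\,\#\{i:\text{action disagrees with }\sigma_i\}$, uniformly over continuous prices. Prices let the policy hedge, and optimal-price regret is typically quadratic in the error. I would handle this by making the assortment choice carry the linear gap and bounding the pricing contribution from below by zero. The $K$-dependence of $\LowerC{1},\LowerC{2},\LowerC{3}$ would follow from the explicit MNL revenue at the optimal markup.
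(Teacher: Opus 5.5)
\textbf{There is a genuine gap in parts (i) and (iii).} Assouad's lemma over a sign hypercube $\boldsymbol\sigma\in\{\pm1\}^{d}$ does not fit a cardinality-$K$ assortment problem, and it cannot produce $\sqrt{\DimFeature}$ (resp.\ $\sqrt{\DimRank}$).

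Take your construction: features $\pm e_i$ and $\bv=\epsilon\boldsymbol\sigma$. Every coordinate then has exactly one good product. The per-customer loss is about $\epsilon$ times the number of \emph{offered} bad products, and coordinates that are not offered cost nothing. So the loss is not of the form $\sum_{i=1}^{d}\alpha\,\mathbf{1}\{\hat\sigma_i\neq\sigma_i\}$ that Assouad requires. Concretely, a policy that only offers products from coordinates $1,\dots,K$ faces $K$ two-armed problems, and its regret does not depend on $d$. Part (iii) fails the same way with $\btheta=\epsilon\boldsymbol\sigma$ and $\bx(S)\propto\mathbf{1}_S$: about half the coordinates are good, and restricting to the first $O(K+\log T)$ of them already suffices. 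Your step 3 shows the tension: summing a per-coordinate loss $\epsilon T$ over all $d$ coordinates is incompatible with the budget $\sum_i N_i\le KT$ you impose.

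The missing idea is a ``needle'' family with exactly $\bar K$ good coordinates, which is what the paper uses.
\begin{itemize}
\item Set $\bv_W=\epsilon\mathbf{1}_W$, plus $\epsilon$ on $\{d+1,\dots,\DimFeature\}$, with features $\bz_i=e_i$ and $W$ uniform over $\bar K$-subsets of $[d]$.
\item The separation lemma is $r(S^*,\bp^*)-r(S_n,\bp_n)\ge c_{10}\,\epsilon\,(\bar K-|S_n\cap W|)$. It is proved by showing that, for a fixed assortment, the optimal prices are all \emph{equal}. It is not $p_j=v_j+$const as you wrote; the first-order condition gives $p_j=1+r$ for every $j$. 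Equal prices let $S^*$ and $S_n$ be compared at one common price.
\item The prior average is re-indexed as $\sum_{W\in\mathcal{W}_{\bar K-1}}\sum_{i\notin W}\E_{W\cup\{i\}}[\widetilde N_i]$. Each term is compared with $\E_W[\widetilde N_i]$ using Pinsker and the chain-rule bound $\mathrm{KL}(\Prob_W\,\|\,\Prob_{W\cup\{i\}})\le c_{11}\epsilon^2\,\E_W[\widetilde N_i]$.
\item The ratio $|\mathcal{W}_{\bar K-1}|/|\mathcal{W}_{\bar K}|=\bar K/(d-\bar K+1)\le 1/3$ is exactly where $K\le(\DimFeature+1)/4$ enters. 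It is not a device for forming $\DimFeature/4$ blocks.
\item Part (iii) is the same argument with $\btheta_W$, $\bx(S)=(\bar x/\sqrt K)\mathbf{1}_S$ and $\bv=0$. It uses the reduction $\mathcal{R}^{\pi}(T)\ge\sum_t(\Lambda^*-\E\Lambda_t)\,r^*$ and the Poisson bound $\mathrm{KL}\le(\lambda_1-\lambda_2)^2/\lambda_2$.
\end{itemize}

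\textbf{Part (ii) is not actually specified.} ``A packing of size $\binom{(N-\DimFeature)/K}{\cdot}$ plus Fano'' gives neither the product features nor a reduction from cumulative regret to a testing error. The $\log((N-\DimFeature)/K)$ appears because, for every candidate parameter, the catalog must contain products aligned with it. The paper uses features $\mathbf{1}_U/\sqrt{\bar K}$ for every $\bar K$-subset $U\subseteq[d]$, with $K$ copies each and $\bar K=\lfloor (d+1)/4\rfloor$. Feasibility requires $K\binom{d}{\bar K}\le N-\DimFeature$. Using $\binom{d}{\bar K}\le 3^{1/4}e^{dH(1/4)}$, with $H$ the binary entropy, this caps $d$ at order $\log((N-\DimFeature)/K)$. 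Spreading each feature over $\bar K\asymp d$ coordinates makes the loss $\frac{\epsilon}{\sqrt{\bar K}}(\bar K-|\widetilde U_n\cap W|)$, while the KL per flipped coordinate is only $\epsilon^2/\bar K$ per offered copy. The \emph{same} subset-averaging argument then gives linear dependence on $d$, with no Fano needed.

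\textbf{What works.} Getting the $\sqrt{\Lambda T}$ scaling directly from a per-period KL of $\Lambda\,\mathrm{KL}(q_W\,\|\,q_{W'})$ is valid. Conditioning on $n_t$ is fine because $\btheta=0$ makes $n_t$ uninformative. This is a simpler substitute for the paper's detour through the customer-level problem, which needs $\mathcal{A}''\subseteq\mathcal{A}'$, monotonicity in the number of customers, and $\Prob[M_T\ge\lceil\Lambda T\rceil]\ge 0.1$.
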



\begin{remark}
    \Cref{th:lower_bound_asymp} establishes an asymptotic lower bound that also highlights the dependence on the dimensions. Note that when the dimensions are relatively large, the asymptotic lower bound is of order $(\DimFeature + \sqrt{\DimRank}) \sqrt{T} $. Compared with the upper bound in \Cref{th:regret_bound}, we establish that our algorithm is optimal with respect to $\DimFeature$, and near optimal with respect to the time horizon $T$ (up to $\sqrt{\log{T}}$). 
    In typical regimes this remaining gap is small: for example $\sqrt{\log{T}} \approx 4$ when $T = 10^7$, and $\sqrt{\DimRank} \le 10$ when $\DimRank \le 100$.
    As a result, further tightening the logarithmic gap may have limited practical impact, although closing it remains an interesting technical question.
\end{remark}

\begin{remark}
    We consider algorithms that choose $K$ products in this lower bound, in alignment with \Cref{assump:K}. However, our proof only requires the capacity of assortments to be  smaller or equal to $K$; therefore it also applies to settings considered in the literature \citep{agrawal2019mnl,chen2020dynamic,oh2021multinomial,chen2022nearly}.
\end{remark}

\begin{remark}
    Note that when $N \ge 18 K$, either the conditions in \cref{enum:thm3-1} or \cref{enum:thm3-2} hold. 
    Compared to prior lower-bound results for contextual MNL bandits, our condition is much more relaxed and reasonable.
    For instance, \cite{chen2020dynamic} require $N \ge 2^\DimFeature K$ and the feature dimension $\DimFeature$ to be divisible by 4.
    Under such conditions, 
    the conditions in \cref{enum:thm3-2} hold, and the lower bound becomes $\LowerC{2} \DimFeature$. 
    In particular, if we let $\plow =\phigh=p$, then the joint assortment and pricing problem becomes the assortment problem, and our asymptotic lower bound in \cref{enum:thm3-2} reduces to $\Omega(\DimFeature\sqrt{T\Lambda}/K)$, matching their lower bound. 
    While there are efforts to sharpen the dependence on $K$ (e.g., \cite{lee2024nearly}), they focus on restricted classes of algorithms that select the same product for $K$ times in each episode.
\end{remark}

\section{Simulations Studies}
\label{sec:numerical_simulation}
We present simulation experiments to evaluate $\PMNL$ in multiple settings, 
especially when customer arrival rates change with the assortment-pricing precision
to highlight the importance of modeling decision-dependent arrivals.
In \Cref{Subsec:sim-price}, we compare the performance of \PMNL~for dynamic pricing with varying assortments with an algorithm proposed by \cite{ferreira2023demand}, which we denote by $\FM$.
\Cref{Subsec:sim-assortment} compares the performance of \PMNL~on the full joint dynamic assortment and pricing with a naive baseline.
We defer additional results to Appendix \ref{app:sim}.

\subsection{Simulation Experiment I: Dynamic Pricing with Varying Assortment 
}
\label{Subsec:sim-price}

To demonstration the importance of modeling customer arrival rates and their dependence on prices and assortments (via product features), we compare $\PMNL$ with a benchmark that considers an unknown constant arrival rate for each period and adjusts prices at the beginning of each period. As discussed in \Cref{sec:intro}, to our knowledge, $\FM$ is the only existing method designed for period-level dynamic pricing.


We follow a simulation setup similar to \cite{ferreira2023demand}. 
Since $\FM$ does not optimize assortments, we benchmark the performance of dynamic pricing under assortments that vary by period but are given at the beginning of each period.
Specifically, we consider a retailer selling $N=K=5$ products over $T=1,000$ periods with feasible prices $\mathcal{P}=[\plow, \phigh]^N=[10, 30]^{N}$.
In each period $t$, product features $\Feature_t$ are drawn independently from a uniform distribution with support \([1,2]\), i.e., each feature $z_{jt,d} \overset{i.i.d.}{\sim} \emph{Uniform}(1,2)$ for $d \in [\DimFeature]$ and $j \in [N]$, where we set $\DimFeature=3$. 
Customer arrivals $n_t$ follow the Poisson arrival model \eqref{eq:pois},
with base arrival rate $\BaseArrival = 20$ and unit arrival rate
\begin{equation}
    \lambda_t := \exp\left(- \sum_{j\in S_t} \alpha \log p_{jt} + \beta \sum_{j \in  S_t} \sum_{d = 1}^{\DimFeature}\log(a z_{jt,d} + b) \right)
    \end{equation}
where we set $\btheta^* = (\alpha, \beta) = (0.2, 0.2)$, $a = 30$, and $b = -15$.
As discussed in \Cref{rmk:lambda}, the first term captures the price sensitivity, with lower prices attract more customers. The second term captures the effect of product features---treating features as desirable attributes, higher feature levels lead to higher arrivals. 
Each customer's purchase decision $\Choiceti$ is drawn independently according to the MNL model \eqref{eq:mnl} with each element of the preference parameter $v^*_{d} \overset{i.i.d.}{\sim} \emph{Uniform}(0,1)$ for $d \in [\DimFeature]$. 

We run 100 Monte Carlo simulations and compare the cumulative regret $\Regret{t}{\pi}$ of $\FM$ against our $\PMNL$ algorithm with $T_0=10$, i.e., the first 10 periods are for Stage 1 of $\PMNL$ and the learning stage for $\FM$.
\Cref{fig:cum-regret-no-assort-main} shows the cumulative regrets of $\PMNL$ and $\FM$. 
The regret of $\PMNL$ converges whereas that of $\FM$ grows linearly,
showing the importance of accounting for the dependency of the arrival rate on the assortment-pricing and corroborating our claims that ignoring this dependency can lead to suboptimal decision-making.

We further examine the pricing decisions to better understand the behavior of the algorithms. 
\Cref{fig:price-no-assort} shows the pricing decisions of the five products under $\FM$ and $\PMNL$.
Both algorithms explore a wide range of prices during the first stage ($t\in[T_0]$). 
In the second stage ($t>T_0$), $\FM$ mostly prices the products around the upper boundary $\phigh=30$, due to the fact that $\FM$ assumes the arrival rate does not change with the price and therefore maximizes per-customer revenue by pricing high.
Under this setting, however, arrivals decrease with higher prices, so persistently setting prices too high reduces cumulative revenue. 
By contrast, \PMNL~prices low, 
attracting more customers while still maintaining sufficient revenue to maximize the cumulative revenue.

We further compare the two algorithms under two additional scenarios. Keep all other settings unchanged, the only difference is the specification of the arrival rate: (i) the arrival rate $\lambda_t$ depends only on prices, i.e., $\beta=0$, and (ii) the arrival rate is constant, i.e., $\lambda_t = 1$. 
In the former case, \Cref{fig:cum-regret-no-assort-price-only} shows that \PMNL~still outperforms \FM.
With constant arrival rate, \Cref{fig:cum-regret-no-assort} shows that \PMNL~performs comparably to $\FM$, suggesting that \PMNL~is robust to the case when the arrival rate is exogenous. 
As shown in \cite{ferreira2023demand}, $\FM$ itself outperforms the M3P algorithm proposed by \cite{javanmard2019dynamic}, which alternates the learning and earning phases in an episodic manner.
Together, these results show that \PMNL~is competitive for dynamic pricing, and outperforms various existing algorithms when arrival rates depend on assortment and pricing.

\begin{figure}[t]
    \centering
    \resizebox{1\textwidth}{!}{%
    \begin{minipage}{\textwidth}
            \centering
    \begin{subfigure}[t]{0.33\textwidth}
        \centering
        \includegraphics[width=\linewidth]{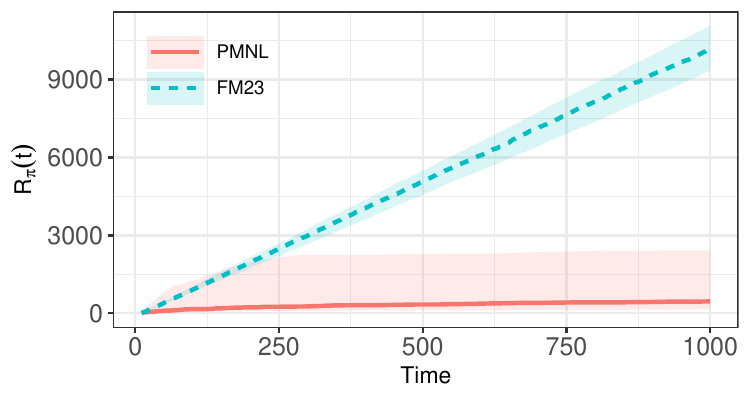}
        \caption{\small Assortment-pricing dependent.}
        \label{fig:cum-regret-no-assort-main}
    \end{subfigure}\hfill
    \begin{subfigure}[t]{0.33\textwidth}
        \centering
        \includegraphics[width=\linewidth]{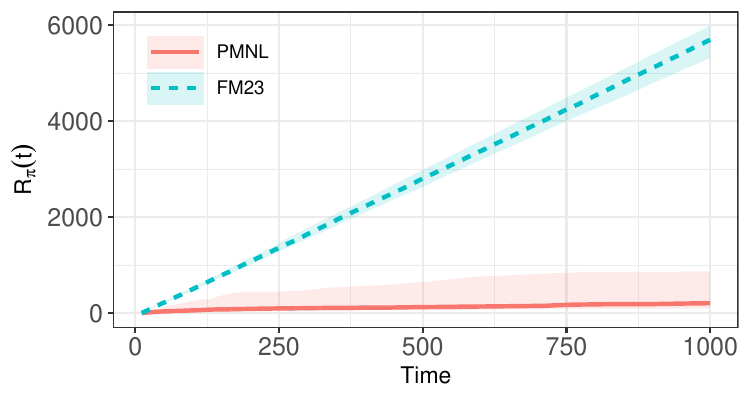}
        \caption{\small Price dependent.}
        \label{fig:cum-regret-no-assort-price-only}
    \end{subfigure}\hfill
    \begin{subfigure}[t]{0.33\textwidth}
        \centering
        \includegraphics[width=\linewidth]{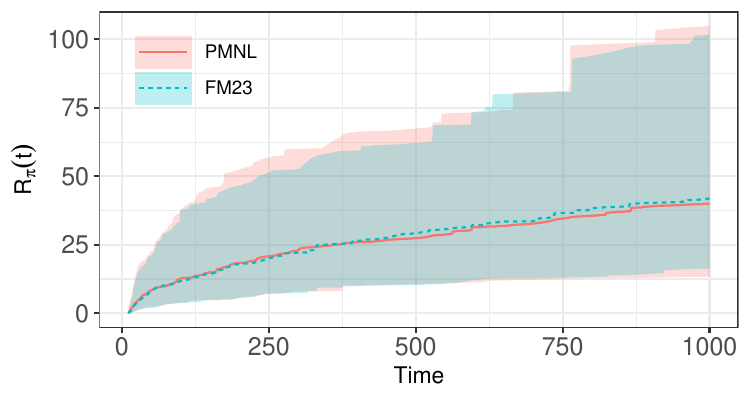}
        \caption{\small Constant arrival rate.}
        \label{fig:cum-regret-no-assort}
    \end{subfigure}
     \end{minipage}%
    }
    \caption{Comparison of cumulative regret of \PMNL\ and $\FM$ \citep{ferreira2023demand} across three settings. In each panel, the solid line shows the sample average cumulative regret over 100 simulations for \PMNL, and the dashed line corresponds to $\FM$. Shaded regions indicate the 10th and 90th percentiles.}
    \label{fig:cum-regret-three-panels}
    \vspace{-.35in}
\end{figure}

\begin{figure}[t]
    \centering
    \includegraphics[width=\textwidth]{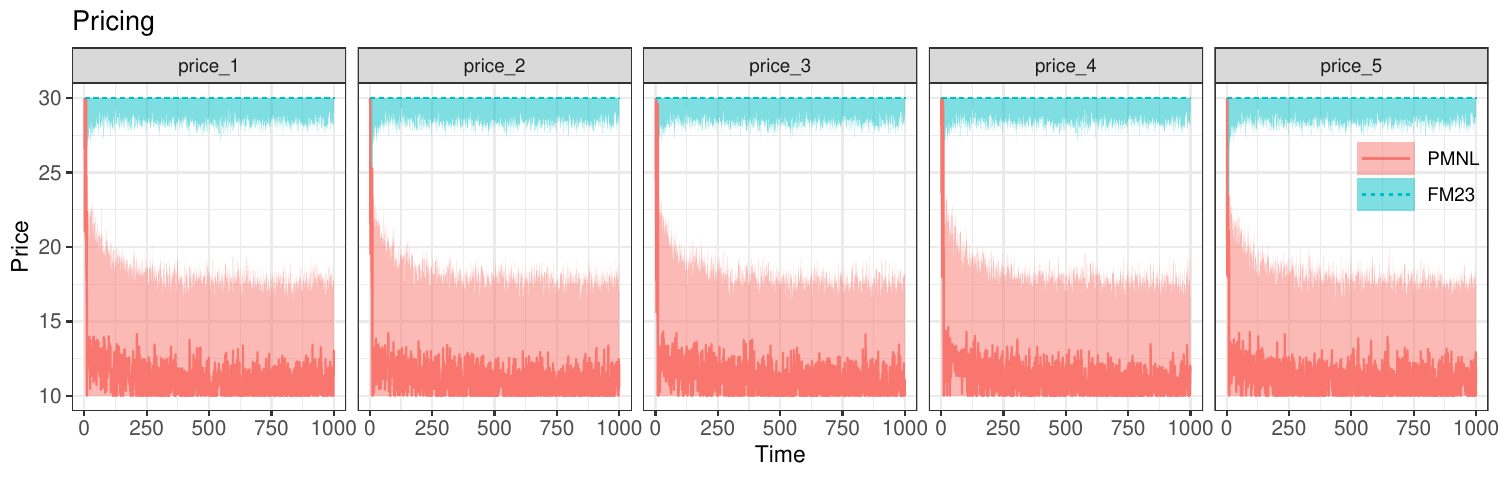}
    
    \caption{Pricing decisions for the products. The solid line shows the median price under \PMNL~across 100 simulations, and the dashed line corresponds to $\FM$. Shaded regions indicate the 10th and 90th percentiles.}
    \label{fig:price-no-assort}
    \vspace{-.25in}
\end{figure}

\subsection{Simulation Experiment II: Joint Dynamic Assortment-Pricing}
\label{Subsec:sim-assortment}

In this section, we consider the joint dynamic assortment and pricing problem. 
Since there exists no competitive algorithm in the literature for our setting, 
we compare \PMNL~with a naive UCB algorithm that assumes a fixed arrival rate and updates the assortment and prices at the beginning of each period using past observations.

We adopt a setup similar to \Cref{Subsec:sim-price} with the following changes.
We set $N=5$ products and an assortment size $K=4$. 
Each product has $\DimFeature=5$ features.
For the Poisson arrival model, we set $\btheta^* = (\alpha, \beta) = (0.1, 0.1)$ and $\BaseArrival = 100$. All other configurations remain unchanged.

\Cref{fig:cum-regret-assort-price} reports cumulative regrets of $\PMNL$ and UCB. 
Similarly, the regret of \PMNL~converges whereas that of UCB grows linearly.
Once again, these results highlight that it is crucial to account for the decision-dependent arrivals: \PMNL~effectively learns both the Poisson arrival and the MNL choice models and can provide better joint assortment-pricing decisions that naive algorithms that assume the arrival rate is fixed.

\begin{figure}[htbp]
\vspace{-.1in}
    \centering 
    
    \includegraphics[width=0.5\textwidth]{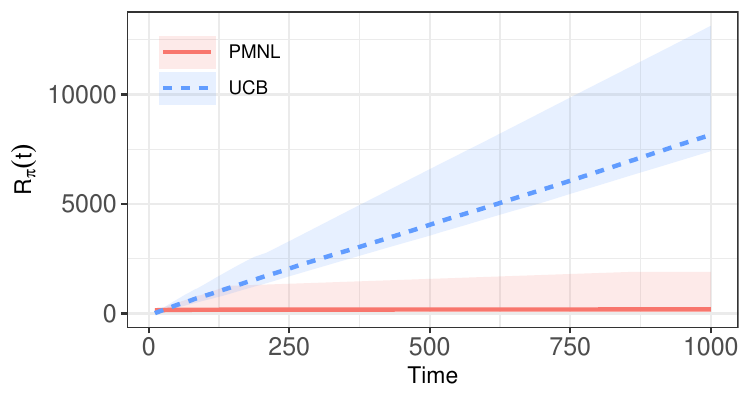}
    
    \caption{Comparison of cumulative regret of \PMNL~and a naive UCB algorithm. The solid line shows the sample average of the cumulative regret across 100 simulations for \PMNL, while the dashed line corresponds to UCB. Shaded regions indicate the 10th and 90th percentiles.}
    \label{fig:cum-regret-assort-price}
    \vspace{-.45in}
\end{figure}

\section{Conclusion}\label{sec:conclusion}

This paper studies dynamic joint assortment and pricing problem when firms update decisions at regular accounting or operating intervals with the goal of maximizing the cumulative per-period revenue over a time horizon $T$. 
In this setting, the reward depends jointly on how many customers arrive during the period and what they purchase conditional on arrival. To capture these two channels, we propose a Poisson–multinomial logit (Poisson–MNL) model that couples a Poisson arrival model with a contextual MNL choice model. 
The key is to allow the arrival rate to depend on the assortment–pricing decision through a rich set of basis functions on the offered assortment and prices, while the choice model leverages (potentially time-varying) product features to enable attribute-level learning and generalization across items. This framework roughly nests the classical MNL models that assume the arrival is fixed as a special case.

Building on this model, we develop $\PMNL$, an efficient online policy leveraging the upper confidence bound (UCB) strategy that jointly learns arrival and choice parameters and selects assortments and prices to maximize cumulative expected reward. We establish near-minimax optimal regret guarantees: an $O(\sqrt{T\log T})$ upper bound with a matching $\Omega(\sqrt{T})$ lower bound (up to $\log{T}$). Simulations show that accounting for decision-dependent arrivals substantially outperforms benchmarks that assume constant arrival rates when arrivals depend on the assortment and pricing. 
An important direction for future work is to close the remaining logarithmic gap between the upper and lower bounds. 
Another direction is to extend the arrival model to capture temporal effects, where arrivals in a given period may depend not only on the current assortment and prices but also on past decisions.

\clearpage
\bibliographystyle{informs2014} 
\bibliography{biblio}

@article{ma2018dynamic,
  title={Dynamic pricing (and assortment) under a static calendar},
  author={Ma, Will and Simchi-Levi, David and Zhao, Jinglong},
  journal={arXiv preprint arXiv:1811.01077},
  year={2018}
}

@article{brown2023competition,
  title={Competition and consumer behavior in online marketplaces},
  author={Brown, Zachary and others},
  journal={Journal of Marketing Research},
  year={2023},
  publisher={SAGE Publications}
}

@article{kahn1995consumer,
  title={Consumer variety-seeking among goods and services: An integrative review},
  author={Kahn, Barbara E},
  journal={Journal of retailing and consumer services},
  volume={2},
  number={3},
  pages={139--148},
  year={1995},
  publisher={Elsevier}
}

@book{lattimore2020bandit,
  title={Bandit algorithms},
  author={Lattimore, Tor and Szepesv{\'a}ri, Csaba},
  year={2020},
  publisher={Cambridge University Press}
}

@article{tropp2012user,
  title={User-friendly tail bounds for sums of random matrices},
  author={Tropp, Joel A},
  journal={Foundations of Computational Mathematics},
  volume={12},
  number={4},
  pages={389--434},
  year={2012},
  publisher={Springer}
}

@article{chen2021dynamic,
  title={Dynamic assortment planning under nested logit models},
  author={Chen, Xi and Shi, Chao and Wang, Yining and Zhou, Yuan},
  journal={Production and Operations Management},
  volume={30},
  number={1},
  pages={85--102},
  year={2021},
  publisher={Wiley Online Library}
}

@article{den2014simultaneously,
  title={Simultaneously learning and optimizing using controlled variance pricing},
  author={den Boer, Arnoud V and Zwart, Bert},
  journal={Management Science},
  volume={60},
  number={3},
  pages={770--783},
  year={2014},
  publisher={Informs}
}

@article{keskin2014dynamic,
  title={Dynamic pricing with an unknown demand model: Asymptotically optimal semi-myopic policies},
  author={Keskin, N Bora and Zeevi, Assaf},
  journal={Operations research},
  volume={62},
  number={5},
  pages={1142--1167},
  year={2014},
  publisher={INFORMS}
}

@article{ban2021personalized,
  title={Personalized dynamic pricing with machine learning: High-dimensional features and heterogeneous elasticity},
  author={Ban, Gah-Yi and Keskin, N Bora},
  journal={Management Science},
  volume={67},
  number={9},
  pages={5549--5568},
  year={2021},
  publisher={INFORMS}
}

@article{chen2021nonparametric,
  title={Nonparametric pricing analytics with customer covariates},
  author={Chen, Ningyuan and Gallego, Guillermo},
  journal={Operations Research},
  volume={69},
  number={3},
  pages={974--984},
  year={2021},
  publisher={INFORMS}
}

@article{qiang2016dynamic,
  title={Dynamic pricing with demand covariates},
  author={Qiang, Sheng and Bayati, Mohsen},
  journal={arXiv preprint arXiv:1604.07463},
  year={2016}
}

@article{agrawal2019mnl,
  title={{MNL}-bandit: A dynamic learning approach to assortment selection},
  author={Agrawal, Shipra and Avadhanula, Vashist and Goyal, Vineet and Zeevi, Assaf},
  journal={Operations Research},
  volume={67},
  number={5},
  pages={1453--1485},
  year={2019},
  publisher={INFORMS}
}

@article{chen2017note,
  title={A note on a tight lower bound for mnl-bandit assortment selection models},
  author={Chen, Xi and Wang, Yining},
  journal={arXiv preprint arXiv:1709.06109},
  year={2017}
}

@article{miao2021dynamic,
  title={Dynamic joint assortment and pricing optimization with demand learning},
  author={Miao, Sentao and Chao, Xiuli},
  journal={Manufacturing \& Service Operations Management},
  volume={23},
  number={2},
  pages={525--545},
  year={2021},
  publisher={INFORMS}
}

@book{wainwright2019high,
  title={High-dimensional statistics: A non-asymptotic viewpoint},
  author={Wainwright, Martin J},
  volume={48},
  year={2019},
  publisher={Cambridge University Press}
}

@article{robbins1952some,
  title={Some aspects of the sequential design of experiments},
  author={Robbins, Herbert},
  journal={Bulletin of the American Mathematical Society},
  volume={58},
  number={5},
  pages={527--535},
  year={1952},
  publisher={American Mathematical Society}
}

@inproceedings{abbasi2011improved,
  title={Improved algorithms for linear stochastic bandits},
  author={Abbasi-Yadkori, Yasin and P{\'a}l, D{\'a}vid and Szepesv{\'a}ri, Csaba},
  booktitle={Advances in Neural Information Processing Systems},
  volume={24},
  year={2011}
}

@article{araman2009dynamic,
  title={Dynamic pricing for nonperishable products with demand learning},
  author={Araman, Victor F and Caldentey, Ren{\'e}},
  journal={Operations Research},
  volume={57},
  number={5},
  pages={1169--1188},
  year={2009},
  publisher={INFORMS}
}

@article{besbes2009dynamic,
  title={Dynamic pricing without knowing the demand function: Risk bounds and near-optimal algorithms},
  author={Besbes, Omar and Zeevi, Assaf},
  journal={Operations Research},
  volume={57},
  number={6},
  pages={1407--1420},
  year={2009},
  publisher={INFORMS}
}

@article{broder2012dynamic,
  title={Dynamic pricing under a general parametric choice model},
  author={Broder, Josef and Rusmevichientong, Paat},
  journal={Operations Research},
  volume={60},
  number={4},
  pages={965--980},
  year={2012},
  publisher={INFORMS}
}

@article{javanmard2019dynamic,
  title={Dynamic pricing in high-dimensions},
  author={Javanmard, Adel and Nazerzadeh, Hamid},
  journal={The Journal of Machine Learning Research},
  volume={20},
  number={1},
  pages={315--363},
  year={2019},
  publisher={JMLR. org}
}

@article{cohen2020feature,
  title={Feature-based dynamic pricing},
  author={Cohen, Maxime C and Lobel, Ilan and Paes Leme, Renato},
  journal={Management Science},
  volume={66},
  number={11},
  pages={4921--4943},
  year={2020},
  publisher={INFORMS}
}

@article{chen2022statistical,
  title={A statistical learning approach to personalization in revenue management},
  author={Chen, Xi and Owen, Zachary and Pixton, Clark and Simchi-Levi, David},
  journal={Management Science},
  volume={68},
  number={3},
  pages={1923--1937},
  year={2022},
  publisher={INFORMS}
}

@article{miao2022online,
  title={Online Personalized Assortment Optimization with High-Dimensional Customer Contextual Data},
  author={Miao, Sentao and Chao, Xiuli},
  journal={Manufacturing \& Service Operations Management},
  volume={24},
  number={5},
  pages={2741--2760},
  year={2022},
  publisher={INFORMS}
}

@article{rusmevichientong2010dynamic,
  title={Dynamic assortment optimization with a multinomial logit choice model and capacity constraint},
  author={Rusmevichientong, Paat and Shen, Zuo-Jun Max and Shmoys, David B},
  journal={Operations Research},
  volume={58},
  number={6},
  pages={1666--1680},
  year={2010},
  publisher={INFORMS}
}

@article{cheung2017thompson,
  title={Thompson sampling for online personalized assortment optimization problems with multinomial logit choice models},
  author={Cheung, Wang Chi and Simchi-Levi, David},
  journal={Available at SSRN 3075658},
  year={2017}
}

@article{chen2020dynamic,
  title={Dynamic assortment optimization with changing contextual information},
  author={Chen, Xi and Wang, Yining and Zhou, Yuan},
  journal={The Journal of Machine Learning Research},
  volume={21},
  number={1},
  pages={8918--8961},
  year={2020},
  publisher={JMLRORG}
}

@inproceedings{kleinberg2003value,
  title={The value of knowing a demand curve: Bounds on regret for online posted-price auctions},
  author={Kleinberg, Robert and Leighton, Tom},
  booktitle={44th Annual IEEE Symposium on Foundations of Computer Science, 2003. Proceedings.},
  pages={594--605},
  year={2003},
  organization={IEEE}
}

@article{bastani2022meta,
  title={Meta dynamic pricing: Transfer learning across experiments},
  author={Bastani, Hamsa and Simchi-Levi, David and Zhu, Ruihao},
  journal={Management Science},
  volume={68},
  number={3},
  pages={1865--1881},
  year={2022},
  publisher={INFORMS}
}

@article{akccay2010joint,
  title={Joint dynamic pricing of multiple perishable products under consumer choice},
  author={Ak{\c{c}}ay, Yal{\c{c}}{\i}n and Natarajan, Harihara Prasad and Xu, Susan H},
  journal={Management Science},
  volume={56},
  number={8},
  pages={1345--1361},
  year={2010},
  publisher={INFORMS}
}

@article{gallego2014multiproduct,
  title={Multiproduct price optimization and competition under the nested logit model with product-differentiated price sensitivities},
  author={Gallego, Guillermo and Wang, Ruxian},
  journal={Operations Research},
  volume={62},
  number={2},
  pages={450--461},
  year={2014},
  publisher={INFORMS}
}

@article{chen2022nearly,
  title={Nearly dimension-independent sparse linear bandit over small action spaces via best subset selection},
  author={Chen, Yi and Wang, Yining and Fang, Ethan X and Wang, Zhaoran and Li, Runze},
  journal={Journal of the American Statistical Association},
  pages={1--13},
  year={2022},
  publisher={Taylor \& Francis}
}

@article{miao2022context,
  title={Context-based dynamic pricing with online clustering},
  author={Miao, Sentao and Chen, Xi and Chao, Xiuli and Liu, Jiaxi and Zhang, Yidong},
  journal={Production and Operations Management},
  volume={31},
  number={9},
  pages={3559--3575},
  year={2022},
  publisher={Wiley Online Library}
}

@article{fan2022policy,
  title={Policy optimization using semiparametric models for dynamic pricing},
  author={Fan, Jianqing and Guo, Yongyi and Yu, Mengxin},
  journal={Journal of the American Statistical Association},
  pages={552--564},
  year={2024},
  publisher={Taylor \& Francis}
}

@inproceedings{oh2021multinomial,
    author={Oh, Min-hwan and Iyengar, Garud},
    title={Multinomial logit contextual bandits: {P}rovable optimality and practicality},
    booktitle={Proceedings of the AAAI conference on artificial intelligence},
  year={2021}
}

@article{de1999general,
  title={A general class of exponential inequalities for martingales and ratios},
  author={de la Pena, Victor H},
  journal={The Annals of Probability},
  volume={27},
  number={1},
  pages={537--564},
  year={1999},
  publisher={Institute of Mathematical Statistics}
}

@book{geer2000empirical,
  title={Empirical Processes in M-estimation},
  author={Sara van de Geer},
  year={2000},
  publisher={Cambridge University Press}
}

@book{poisson1837recherches,
  title={Recherches sur la probabilit{\'e} des jugements en mati{\`e}re criminelle et en mati{\`e}re civile: pr{\'e}c{\'e}d{\'e}es des r{\`e}gles g{\'e}n{\'e}rales du calcul des probabilit{\'e}s},
  author={Poisson, Sim{\'e}on-Denis},
  year={1837},
  publisher={Bachelier}
}

@book{kingman1992poisson,
  title={Poisson processes},
  author={Kingman, John Frank Charles},
  volume={3},
  year={1992},
  publisher={Clarendon Press}
}

@article{aparicio2023algorithmic,
  title={Algorithmic pricing and consumer sensitivity to price variability},
  author={Aparicio, Diego and Eckles, Dean and Kumar, Madhav},
  journal={Available at SSRN 4435831},
  year={2023}
}

@article{ferreira2023demand,
  title={Demand learning and pricing for varying assortments},
  author={Ferreira, Kris Johnson and Mower, Emily},
  journal={Manufacturing \& Service Operations Management},
  volume={25},
  number={4},
  pages={1227--1244},
  year={2023},
  publisher={INFORMS}
}

@article{ahle2022sharp,
  title={Sharp and simple bounds for the raw moments of the binomial and poisson distributions},
  author={Ahle, Thomas D},
  journal={Statistics and Probability Letters},
  volume={182},
  pages={109306},
  year={2022},
  publisher={Elsevier}
}

@article{abdallah2021demand,
  title={Demand estimation under the multinomial logit model from sales transaction data},
  author={Abdallah, Tarek and Vulcano, Gustavo},
  journal={Manufacturing \& Service Operations Management},
  volume={23},
  number={5},
  pages={1196--1216},
  year={2021},
  publisher={Informs}
}

@article{wang2021consumer,
  title={Consumer choice and market expansion: Modeling, optimization, and estimation},
  author={Wang, Ruxian},
  journal={Operations Research},
  volume={69},
  number={4},
  pages={1044--1056},
  year={2021},
  publisher={INFORMS}
}

@article{vulcano2012estimating,
  title={Estimating primary demand for substitutable products from sales transaction data},
  author={Vulcano, Gustavo and Van Ryzin, Garrett and Ratliff, Richard},
  journal={Operations Research},
  volume={60},
  number={2},
  pages={313--334},
  year={2012},
  publisher={INFORMS}
}

@inproceedings{brown1986fundamentals,
  title={Fundamentals of statistical exponential families: with applications in statistical decision theory},
  author={Brown, Lawrence D},
  year={1986},
  organization={Ims}
}

@book{winkelmann2008econometric,
  title={Econometric analysis of count data},
  author={Winkelmann, Rainer},
  year={2008},
  publisher={Springer-Verlag}
}

@article{lancaster1990economics,
  title={The economics of product variety: A survey},
  author={Lancaster, Kelvin},
  journal={Marketing science},
  volume={9},
  number={3},
  pages={189--206},
  year={1990},
  publisher={INFORMS}
}

@misc{Pol15,
    author = {Pollard, David},
    title = {MiniEmpirical},
    howpublished = {\url{http://www.stat.yale.edu/~pollard/Books/Mini/}},
    year = {2015},
    note = {Manuscript (accessed 02-23-2017)}
}

@article{oh2019thompson,
  title={Thompson sampling for multinomial logit contextual bandits},
  author={Oh, Min-hwan and Iyengar, Garud},
  journal={Advances in Neural Information Processing Systems},
  volume={32},
  year={2019}
}

@article{Choi1994,
  author       = {K. P. Choi},
  title        = {On the Medians of Gamma Distributions and an Equation of Ramanujan},
  journal      = {Proceedings of the American Mathematical Society},
  year         = {1994},
  volume       = {121},
  number       = {1},
  pages        = {245--251},
  month        = {May},
  doi          = {10.2307/2160389},
  url          = {https://www.jstor.org/stable/2160389},
  publisher    = {American Mathematical Society}
}

@article{lee2024nearly,
  title={Nearly minimax optimal regret for multinomial logistic bandit},
  author={Lee, Joongkyu and Oh, Min-hwan},
  journal={Advances in Neural Information Processing Systems},
  volume={37},
  pages={109003--109065},
  year={2024}
}

@inproceedings{javanmard2020multi,
  title={Multi-product dynamic pricing in high-dimensions with heterogeneous price sensitivity},
  author={Javanmard, Adel and Nazerzadeh, Hamid and Shao, Simeng},
  booktitle={2020 IEEE International Symposium on Information Theory (ISIT)},
  pages={2652--2657},
  year={2020},
  organization={IEEE}
}

@article{perivier2022dynamic,
  title={Dynamic pricing and assortment under a contextual mnl demand},
  author={Perivier, Noemie and Goyal, Vineet},
  journal={Advances in Neural Information Processing Systems},
  volume={35},
  pages={3461--3474},
  year={2022}
}

@article{lee2025low,
  title={Low-rank online dynamic assortment with dual contextual information},
  author={Lee, Seong Jin and Sun, Will Wei and Liu, Yufeng},
  journal={Journal of the American Statistical Association},
  number={just-accepted},
  pages={1--22},
  year={2025},
  publisher={Taylor \& Francis}
}

\ECSwitch
\SingleSpacedXI
\addtolength{\abovedisplayskip}{-1ex}
\addtolength{\belowdisplayskip}{-1ex}

\vspace{-2em}
\ECHead{Electronic companions}
\small

\setcounter{lemma}{9}

The appendix collects supplementary materials that support the main text and proofs. We first report additional simulation results in \Cref{app:sim}. We then provide a reduction for the feature vector $x(S,\bp)$ in the arrival model in \Cref{app:x.independent}. Next, \Cref{appendix:relation} clarifies the relationship between \Cref{assump:combined,assumption::y_exist} by showing how the stronger condition implies the weaker one and how to construct a sequence required in the latter. Finally, we introduce additional notations and technical preliminaries used throughout the proofs in \Cref{app:notations} and provide the detailed proofs in the rest of the Appendix, which is organized as follows:

\begin{itemize}
  \item Section~\ref{appendix:proof_of_theta_initial_error} proves  \Cref{lm:theta_initial_error} and its related \Cref{lm:ineq,lm:g_1_convergence_rate,lm:poi_k_bound,lm:V_bound,lm:dif_g}.
  Specifically, \Cref{lm:theta_initial_error} depends on the proofs of \Cref{lm:ineq,lm:g_1_convergence_rate}, with the latter relying on \Cref{lm:poi_k_bound,lm:V_bound,lm:dif_g}. 
  
  \item Section~\ref{appendix::proof_of_theta_error} proves \Cref{lm:theta_error}, building on  \Cref{lm:ineq,lm:g_1_convergence_rate}.

  \item Section~\ref{appendix:proof_of_sum_lambda_error} proves \Cref{lemma:revenue_estimate}, which depends on 
  \Cref{lm:theta_error}.

  \item Section~\ref{appendix::proof_of_revenue_estimate} proves \Cref{lm:sum_lambda_error}.

  \item Section~\ref{appendix::proof_of_bv_initial_error} proves \Cref{lm:bv_initial_error} and its auxiliary  \Cref{lm:g_2_convergence_rate,lm:prob_error,lm:init_t_bound_g_bar_to_g,lm:g_k_bound,lm:v_2_bound,lm:expec_g_bar}. Here, \Cref{lm:bv_initial_error} is established via \Cref{lm:g_2_convergence_rate}; \Cref{lm:init_t_bound_g_bar_to_g} relies on \Cref{lm:prob_error,lm:expec_g_bar}, and \Cref{lm:g_k_bound,lm:v_2_bound} both depend on \Cref{lm:prob_error}.

  \item \Cref{appendix::proof_of_bv_error} proves \Cref{lm:bv_error} and \Cref{lm:I_v_close_to_I_v_star}.
  \Cref{lm:bv_error} depends on \Cref{lm:ineq,lm:g_2_convergence_rate,lm:I_v_close_to_I_v_star}.

  \item Section~\ref{appendix::proof_of_revenue_estimate_2} proves \Cref{lemma:revenue_estimate_2} and \Cref{lm:bv_estimate_real_value} where the former depends on the latter and \Cref{lm:bv_error,lm:I_v_close_to_I_v_star}.

  \item Section~\ref{appendix::proof_of_sum_bv_error} proves \Cref{lm:sum_bv_error}.

  \item Section~\ref{appendix::lower_bound} is dedicated to the proof of the lower bound \Cref{th:lower_bound,th:lower_bound_asymp} as well as theirs dependent on \Cref{lm:regret_lower_bound_tilde,lm:bound_kl_div,lm:poi_bound,lem:catalog-feasible-phi,lm:regret_lower_bound_tilde2,lm:bound_kl_div2,lm:lambda_lower_bound_tilde,lm:bound_kl_div_poi}.

\end{itemize}

\begin{figure}[htbp]
  \centering
  \resizebox{0.8\linewidth}{!}{
\definecolor{finalcolor}{RGB}{141,182,205}    
\definecolor{keylemma}{RGB}{255,182,193}        
\definecolor{closely}{RGB}{255,218,185}         
\definecolor{fundamental}{RGB}{152,251,152}     
\definecolor{technical}{RGB}{176,196,222}        

\tikzset{myarrow/.style={<-, line width=1.2pt, >=stealth}}

\tikzset{mybox/.style={draw, rectangle, align=center, minimum width=2cm, minimum height=0.8cm}}

\begin{tikzpicture}[node distance=1.cm and 1.2cm]


\def\xone{-1}
\def\xtwo{2}
\def\xthree{6}
\def\xfour{10}    
\def\xfive{14}   

\node[mybox, fill=finalcolor, text=black] (Th1) at (\xone,0) {\Cref{th:regret_bound}};

\node[mybox, fill=keylemma, text=black] (L2) at (\xtwo,3) {\Cref{lm:theta_initial_error}};
\node[mybox, fill=keylemma, text=black] (L4) at (\xtwo,2) {\Cref{lm:theta_error}};
\node[mybox, fill=keylemma, text=black] (L6) at (\xtwo,0.9) {\Cref{lemma:revenue_estimate}};
\node[mybox, fill=keylemma, text=black] (L3) at (\xtwo,-1) {\Cref{lm:bv_initial_error}};
\node[mybox, fill=keylemma, text=black] (L5) at (\xtwo,-2) {\Cref{lm:bv_error} };
\node[mybox, fill=keylemma, text=black] (L7) at (\xtwo,-3.1) {\Cref{lemma:revenue_estimate_2}};
\node[mybox, fill=keylemma, text=black] (L8) at (\xtwo,-5) {\Cref{lm:sum_lambda_error}};
\node[mybox, fill=keylemma, text=black] (L9) at (\xtwo,-6) {\Cref{lm:sum_bv_error}};

\node[mybox, fill=fundamental, text=black] (EC2) at (\xthree,1) {\Cref{lm:ineq}};
\node[mybox, fill=fundamental, text=black] (EC8) at (\xthree,-1) {\Cref{lm:g_2_convergence_rate}};
\node[mybox, fill=closely, text=black] (EC15) at (\xthree,-2.5) {\Cref{lm:I_v_close_to_I_v_star}}; 
\node[mybox, fill=closely, text=black]  (EC14) at (\xthree,-3.5) {\Cref{lm:bv_estimate_real_value}}; 

\node[mybox, fill=technical, text=black] (EC3) at (\xfour,3) {\Cref{lm:g_1_convergence_rate}};
\node[mybox, fill=technical, text=black] (EC10) at (\xfour,-0.5) {\Cref{lm:init_t_bound_g_bar_to_g}};  
\node[mybox, fill=technical, text=black] (EC11) at (\xfour,-1.5) {\Cref{lm:g_k_bound}};  
\node[mybox, fill=technical, text=black] (EC12) at (\xfour,-3.5) {\Cref{lm:v_2_bound}};  

\node[mybox, fill=technical, text=black] (EC4) at (\xfive,3) {\Cref{lm:poi_k_bound}};
\node[mybox, fill=technical, text=black] (EC5) at (\xfive,2) {\Cref{lm:V_bound}};
\node[mybox, fill=technical, text=black] (EC6) at (\xfive,1) {\Cref{lm:dif_g}};
\node[mybox, fill=technical, text=black] (EC13) at (\xfive,-0.5) {\Cref{lm:expec_g_bar}};
\node[mybox, fill=technical, text=black] (EC9) at (\xfive,-2.5) {\Cref{lm:prob_error}};


\draw[myarrow] (Th1.east) -- (L2.west);
\draw[myarrow] (Th1.east) -- (L4.west);
\draw[myarrow] (Th1.east) -- (L6.west);
\draw[myarrow] (Th1.east) -- (L3.west);
\draw[myarrow] (Th1.east) -- (L5.west);
\draw[myarrow] (Th1.east) -- (L8.west);
\draw[myarrow] (Th1.east) -- (L9.west);

\draw[myarrow] (L2.east) -- (EC2.west);
\draw[myarrow] (L2.east) -- (EC3.west);

\draw[myarrow] (EC3.east) -- (EC4.west);
\draw[myarrow] (EC3.east) -- (EC5.west);
\draw[myarrow] (EC3.east) -- (EC6.west);

\draw[myarrow] (L3.east) -- (EC8.west);

\draw[myarrow] (EC8.east) -- (EC10.west);
\draw[myarrow] (EC8.east) -- (EC11.west);
\draw[myarrow] (EC8.east) -- (EC12.west);
\draw[myarrow] (EC8.east) -- (7.8,-2.5) -- (EC9.west);

\draw[myarrow] (EC10.east) -- (EC13.west);

\draw[myarrow] (EC11.east) -- (EC9.north);
\draw[myarrow] (EC12.east) -- (EC9.south);

\draw[myarrow] (L4.east) -- (EC2.west);
\draw[myarrow] (L4.east) -- (EC3.west);

\draw[myarrow] (L5.east) -- (EC2.west);
\draw[myarrow] (L5.east) -- (EC8.west);
\draw[myarrow] (EC14.west) -- (L5.east);



\draw[myarrow] (L6.north) -- (L4.south); 
\draw[myarrow] (L7.north) -- (L5.south);
\draw[myarrow] (L7.east) -- (EC14.west);


\draw[myarrow] (L5.east) -- (EC15.west);
\draw[myarrow] (L7.east) -- (EC15.west);
\draw[myarrow] (EC14.north) -- (EC15.south);

\node[anchor=west, inner sep=5pt] at ([xshift=8mm]current bounding box.east) {
  \begin{tabular}{ll}
    \textcolor{finalcolor}{\rule{1cm}{0.3cm}} & Theorem (Final Conclusion) \\[4mm]
    \textcolor{keylemma}{\rule{1cm}{0.3cm}} & Lemma (Key Steps) \\[4mm]
    \textcolor{closely}{\rule{1cm}{0.3cm}} & Useful conclusion \\[4mm]
    \textcolor{fundamental}{\rule{1cm}{0.3cm}} & Fundamental Lemma \\[4mm]
    \textcolor{technical}{\rule{1cm}{0.3cm}} & Technical Lemma \\[4mm]
  \end{tabular}
};

\end{tikzpicture}} 
  \caption{Proof Structure of \Cref{th:regret_bound}.}
  \label{fig:proof-diagram}
  \vspace{-2em}
\end{figure}

\section{Additional Simulation Results}
\label{app:sim}



\Cref{fig:error-param} shows that estimation error of the parameters for the simulation in \Cref{Subsec:sim-price}. 
Specifically, \Cref{fig:error-v} shows the estimation error of the MNL parameters $\|\hv - \bv^*\|_2$: $\PMNL$ converges faster than $\FM$; 
and \Cref{fig:error-alpha} shows the the estimation error of the arrival parameters $\|\htheta - \btheta^*\|_2$: $\PMNL$ learns the arrival model efficiently and the estimation error converges.

\begin{figure}[ht]
\begin{center}
    \begin{subfigure}[b]{0.45\textwidth}
        \centering
        \includegraphics[width=\textwidth, trim=0 0 0 30pt, clip]{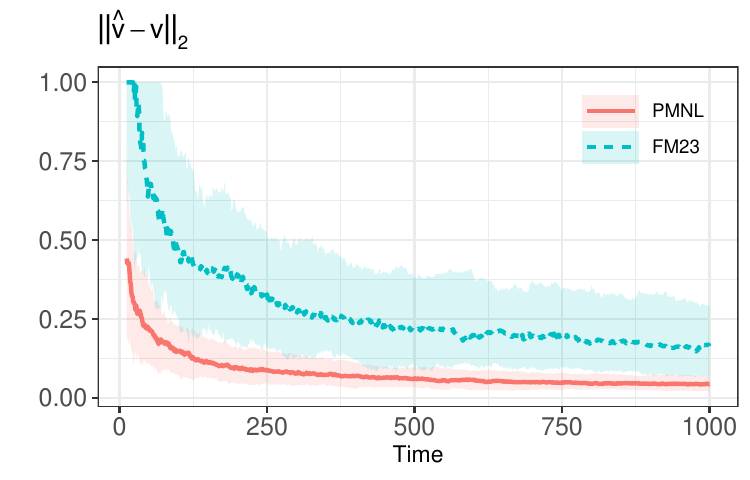}
        \caption{Estimation Error of $\hat\bv$: \(||\hat \bv - \bv||^2\).}
        \label{fig:error-v}
    \end{subfigure}
    \hspace{0.05\textwidth} 
    \begin{subfigure}[b]{0.45\textwidth}
        \centering
        \includegraphics[width=\textwidth, trim=0 0 0 30pt, clip]{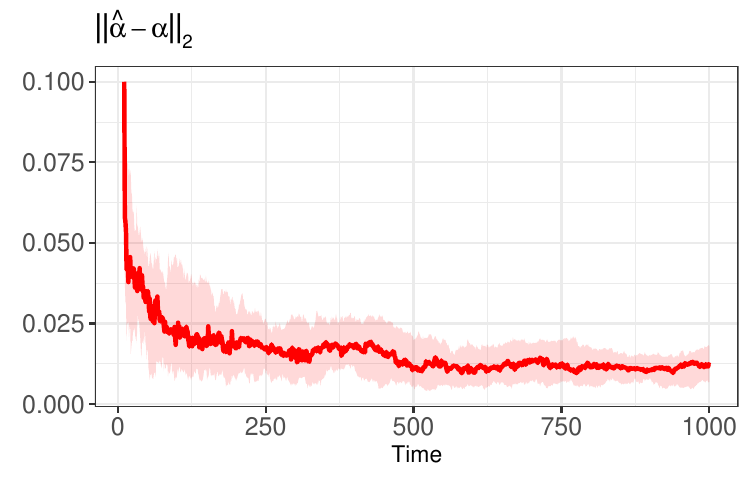}
        \caption{Estimation Error of $\hat\btheta$: \(||\hat\btheta - \btheta||^2 \).}
        \label{fig:error-alpha}
    \end{subfigure}
    \caption{Estimation Error of Unknown Parameters by $\PMNL$ and $\FM$. The left panel shows the estimation error for the customer preference for both algorithms, while the right panel shows the estimation error for the arrival parameters, which only appear in $\PMNL$.}
    \label{fig:error-param}
\end{center}
\end{figure}



\section{Reduction of \texorpdfstring{$\bx(S_t, \bp_t)$}{x(St, pt)}}
\label{app:x.independent}

This section shows that, without loss of generality, we can work with a reduced representation of the arrival feature vector \(\bx(S_t, \bp_t)\) and describe how to construct such a reduction by removing coordinates that are redundant over the feasible decision set.

Consider the set of all attainable feature vectors \(\{\bx(S, \bp) \mid S \in \mathscr{S}, \bp \in \mathscr{P}\}\). We can assume that the maximal linearly independent group within the set \(\{\bx(S, \bp) \mid S \in \mathscr{S}, \bp \in \mathscr{P}\}\) comprises \(\DimRank\) vectors. If the number of vectors is less than \(\DimRank\), then the matrix formed by this maximal linearly independent group is rank-deficient. 

This implies that some components of \(\bx\) in the maximal linearly independent group can be expressed in terms of other components. However, the maximal linearly independent group can represent all the vectors in \(\{\bx(S, \bp) \mid S \in \mathscr{S}, \bp \in \mathscr{P}\}\), which means that a certain component of all the vectors in \(\{\bx(S, \bp) \mid S \in \mathscr{S}, \bp \in \mathscr{P}\}\) can be expressed by other components. Consequently, we can eliminate this component. 

By repeating this process, we can deduce that the number of vectors in the maximal linearly independent groups is equal to the dimension of the variables. This condition ensures that \( \text{span}(\{\bx(S_t, \bp_t) \mid S \in \mathscr{S}, \bp \in \mathscr{P}\}) \) has rank \(\DimRank\), which significantly simplifies our analysis.

\section{Relationship Between \Cref{assump:combined} and Prior Work}
\label{appendix:relation}
\begin{lemma}\label{lemma:combine_assumptions}
Suppose that \Cref{assumption::y_exist} holds. There exist constants
\(\sigma_0=\tfrac12 K\bar{\sigma}_0\) and \(\sigma_1>0\) such that we can construct
assortments \(\{S_s\}_{s=1}^{t}\) and price vectors \(\{\bp_s\}_{s=1}^{t}\) for which,
for any
\[
t \ge t_0:=\max\left\{\frac{\log(\DimFeature T)}{\sigma_0(1-\log 2)},\,2\DimRank\right\},
\]
the following hold with probability at least \(1-T^{-1}\):
\begin{enumerate}
\item
\[
\sigma_{\min}\!\left(\sum_{s=1}^{t}\sum_{j\in S_s}\Feature_{js}\Feature_{js}^\top\right)
\ge \sigma_0 t.
\]
\item There exists a price sequence \(\{\bp_s\}_{s=1}^{t}\) such that
\[
\sigma_{\min}\!\left(\sum_{s=1}^{t}\bx(S_s,\bp_s)\bx^\top(S_s,\bp_s)\right)
\ge \sigma_1 t.
\]
\end{enumerate}
\end{lemma}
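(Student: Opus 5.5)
Part 1 will follow from a matrix Chernoff bound, and Part 2 from a deterministic, periodic choice of prices; the two parts are handled separately.

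\textbf{Part 1.} I would take the fixed assortments $S_s=\{1,\dots,K\}$ for every $s$, or any deterministic sequence, since the features are i.i.d. across $j$ and $s$ by \Cref{assumption::y_exist} and the choice does not matter. Set $X_s=\sum_{j\in S_s}\Feature_{js}\Feature_{js}^\top$. These are independent PSD matrices. Since $\|\Feature_{js}\|_2\le 1$, each satisfies $\lambda_{\max}(X_s)\le K$. Their mean obeys $\lambda_{\min}(\E X_s)\ge K\bar\sigma_0$. Then $\mu_{\min}:=\lambda_{\min}(\sum_s\E X_s)\ge tK\bar\sigma_0$. The matrix Chernoff lower tail \citep{tropp2012user} with $\delta=1/2$ gives
\[
\Prob\Big(\lambda_{\min}\big(\textstyle\sum_s X_s\big)\le \tfrac12\mu_{\min}\Big)\le \DimFeature\Big(\tfrac{e^{-1/2}}{(1/2)^{1/2}}\Big)^{\mu_{\min}/K}=\DimFeature\exp\!\big(-\tfrac12(1-\log 2)\,t\bar\sigma_0\big).
\]
Writing $\sigma_0=K\bar\sigma_0/2$, the exponent equals $-(1-\log2)\sigma_0 t/K$. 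The threshold $t\ge \log(\DimFeature T)/(\sigma_0(1-\log 2))$ makes the right-hand side at most $T^{-1}$, provided the $1/K$ factor is handled. I expect this constant bookkeeping to be the main obstacle. To fix it I would either apply Chernoff to the rescaled matrices $X_s/K$ with $R=1$, or bound $\lambda_{\max}(X_s)$ more carefully. If the factor cannot be absorbed, I would note that it only changes the constant in $t_0$. On the complement event we get $\sigma_{\min}(\sum_s X_s)\ge tK\bar\sigma_0/2=\sigma_0 t$.

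\textbf{Part 2.} This part is deterministic, because $\bx$ does not depend on the features. By the full-rank convention of Appendix \ref{app:x.independent}, there exist $\DimRank$ pairs $(S^{(k)},\bp^{(k)})$ with $S^{(k)}\in\AssortSet$ and $\bp^{(k)}\in\PriceSet$ whose vectors $\bx(S^{(k)},\bp^{(k)})$ are linearly independent. This yields $\sigma_\star:=\lambda_{\min}\big(\sum_{k=1}^{\DimRank}\bx_k\bx_k^\top\big)>0$.

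There is a tension here: Part 1 needs the assortments and Part 2 wants freedom to pick $S^{(k)}$. I would resolve it by using $S_s=S^{(((s-1)\bmod \DimRank)+1)}$ in both parts. This is legitimate because Part 1's Chernoff argument works for any deterministic assortment sequence with $|S_s|=K$. Prices are cycled correspondingly with $\bp_s=\bp^{(((s-1)\bmod\DimRank)+1)}$.

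For $t\ge 2\DimRank$ there are at least $\lfloor t/\DimRank\rfloor\ge t/(2\DimRank)$ complete cycles. Hence
\[
\sigma_{\min}\Big(\sum_{s=1}^t\bx_s\bx_s^\top\Big)\ge \tfrac{t}{2\DimRank}\,\sigma_\star,
\]
and we set $\sigma_1=\sigma_\star/(2\DimRank)$. Since Part 2 is deterministic, the overall statement holds with the Part 1 probability $1-T^{-1}$.
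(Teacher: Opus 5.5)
Your Part 2 is the paper's argument: cycle a block of $\DimRank$ pairs whose $\bx$'s are linearly independent, and use $\lfloor t/\DimRank\rfloor \ge t/(2\DimRank)$ for $t\ge 2\DimRank$. Your explicit decision to reuse the same cycled assortments in Part 1 settles a point the paper leaves implicit.

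The gap is in Part 1. Grouping each period into $X_s=\sum_{j\in S_s}\Feature_{js}\Feature_{js}^\top$ forces $R=K$. The Chernoff tail then becomes $\DimFeature\exp\bigl(-(1-\log 2)\sigma_0 t/K\bigr)$, which only yields the threshold $K\log(\DimFeature T)/(\sigma_0(1-\log 2))$, a factor $K$ above the stated $t_0$.

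Neither repair you propose closes this. The matrix Chernoff bound sees the summands only through $\mu_{\min}/R$. Replacing every $X_s$ by $X_s/K$ divides both $\mu_{\min}$ and $R$ by $K$, so the exponent is unchanged. The almost-sure bound $\lambda_{\max}(X_s)\le K$ also cannot be improved in general, since the $K$ features offered in one period may be nearly collinear. Your fallback of enlarging $t_0$ means the lemma as stated, with its specific $t_0$, is not proved.

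The missing step is simply not to group by period. Under \Cref{assumption::y_exist} the features are i.i.d.\ across $j$ as well as across $s$. So for a deterministic assortment sequence, the $tK$ rank-one matrices $\Feature_{js}\Feature_{js}^\top$ ($j\in S_s$, $s\le t$) are themselves independent PSD summands. Each has $\lambda_{\max}\le\|\Feature_{js}\|_2^2\le 1$, so $R=1$, while still $\mu_{\min}\ge tK\bar\sigma_0$.

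Chernoff with $\delta=1/2$ then gives the tail bound $\DimFeature\exp\bigl(-\tfrac12 tK\bar\sigma_0(1-\log 2)\bigr)=\DimFeature\exp\bigl(-(1-\log 2)\sigma_0 t\bigr)$. This is at most $T^{-1}$ exactly when $t\ge \log(\DimFeature T)/(\sigma_0(1-\log 2))$, which is how the paper obtains the stated $t_0$.
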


We prove each part of \Cref{lemma:combine_assumptions} below.

\textbf{Proof of Part 1 of \Cref{lemma:combine_assumptions}.}
By \Cref{assumption::y_exist}, \(\|\Feature_{js}\|_2\le 1\), hence
\(\sigma_{\max}(\Feature_{js}\Feature_{js}^\top)\le 1\).
Applying the matrix Chernoff bound \cite[Thm.~1.1]{tropp2012user}, for any
\(\delta\in[0,1)\),
\begin{equation}\label{eq:relationship_ineq1}
\mathbb{P}\!\left\{
\sigma_{\min}\!\left(\sum_{s=1}^{t}\sum_{j\in S_s}\Feature_{js}\Feature_{js}^\top\right)
\le (1-\delta)\,tK\bar{\sigma}_0
\right\}
\le
\DimFeature\left[\frac{e^{-\delta}}{(1-\delta)^{1-\delta}}\right]^{tK\bar{\sigma}_0}.
\end{equation}
Setting \(\delta=\tfrac12\) gives
\[
\mathbb{P}\!\left\{
\sigma_{\min}\!\left(\sum_{s=1}^{t}\sum_{j\in S_s}\Feature_{js}\Feature_{js}^\top\right)
\le \tfrac12\,tK\bar{\sigma}_0
\right\}
\le
\DimFeature\exp\!\left(-\tfrac12 tK\bar{\sigma}_0(1-\log 2)\right).
\]
Thus the right-hand side is at most \(T^{-1}\) provided
\(t \ge \frac{2\log(\DimFeature T)}{K\bar{\sigma}_0(1-\log 2)}
= \frac{\log(\DimFeature T)}{\sigma_0(1-\log 2)}\).
For all \(t\ge t_0\), this implies
\[
\sigma_{\min}\!\left(\sum_{s=1}^{t}\sum_{j\in S_s}\Feature_{js}\Feature_{js}^\top\right)
\ge \sigma_0 t
\quad\text{with probability at least }1-T^{-1}. \hfill \qed
\]

\textbf{Proof of Part 2 of \Cref{lemma:combine_assumptions}.}
Since \(\{\bx(S,\bp)\}\) spans \(\R^{\DimFeature}\), we can choose
\(\DimRank\) linearly independent vectors \(\{\bx(S_s,\bp_s)\}_{s=1}^{\DimRank}\).
Hence there exists \(\lambda_{\min}>0\) such that
\[
\sigma_{\min}\!\left(\sum_{s=1}^{\DimRank}\bx(S_s,\bp_s)\bx^\top(S_s,\bp_s)\right)
\ge \lambda_{\min}.
\]
For any \(t\ge 2\DimRank\), repeat this length-\(\DimRank\) block (and truncate the
last block if needed) to form \(\{(S_s,\bp_s)\}_{s=1}^t\). Since each term
\(\bx(S_s,\bp_s)\bx^\top(S_s,\bp_s)\) is positive semidefinite,
\[
\sigma_{\min}\!\left(\sum_{s=1}^{t}\bx(S_s,\bp_s)\bx^\top(S_s,\bp_s)\right)
\ge \Big\lfloor \frac{t}{\DimRank}\Big\rfloor \lambda_{\min}
\ge \left(\frac{t}{\DimRank}-1\right)\lambda_{\min}
\ge \frac{t\,\lambda_{\min}}{2\DimRank}.
\]
Let \(\sigma_1:=\lambda_{\min}/(2\DimRank)>0\). Then
\[
\sigma_{\min}\!\left(\sum_{s=1}^{t}\bx(S_s,\bp_s)\bx^\top(S_s,\bp_s)\right)
\ge \sigma_1 t,\qquad \forall\, t\ge 2\DimRank. \hfill \qed
\]

\section{Notations}
\label{app:notations}
This section provides a detailed proof of the results presented in the main body of the paper. To facilitate the proof, we begin by introducing several definitions.

For each period \(t\), we use Maximum Likelihood Estimation (MLE) to estimate the unknown parameters \((\btheta, \bv)\). The log-likelihood function for these parameters, up to period \(t\), can be expressed as:
\begin{equation}
\Lik{t}= \LikPoi{t} + \LikMNL{t}.
\end{equation}
Here, \(\LikPoi{t}\) and \(\LikMNL{t}\) are defined as in Section \ref{Subsec:likelihood}, where:
\begin{align*}
    \LikPoi{t} &= \sum_{s=1}^t \left\{ - \Lambda \lambda(S_{s}, \bp_s; \btheta)  + n_s \log \lambda(S_{s}, \bp_s;\btheta) + \left(n_s\log{\Lambda} - \log n_s!\right)\right\}, \\
    \LikMNL{t} &= \sum_{s=1}^t \sum_{i = 1}^{n_{s}} \sum_{j \in S_{s} \cup\{0\}}
\bm{1}\{C^{(i)}_{s} = j\} \log \ChoiceProb(j, S_{s}, \bp_{s},  \Feature_s; \bv).
\end{align*}
The term \(\Lik{t}\) represents the logarithm of the joint likelihood for the observed customer arrivals and purchase choices up to time \(t\). 
Its gradients with respect to the parameters \((\btheta, \bv)\) are given by:
\begin{align*}
\nabla_{\btheta} \LikPoi{t} &= -\sum_{s=1}^t \big(\Lambda \lambda(S_{s}, \bp_s;\btheta) - n_s\big)\xs, \\
\nabla_\bv \LikMNL{t} &= -\sum_{s=1}^t \sum_{i = 1}^{n_{s}} \sum_{j \in S_s}\left(\ChoiceProb(j, S_{s}, \bp_{s},  \Feature_{js}; \bv) - \bm{1}\{C^{(i)}_{s} = j\}\right) \Feature_{js}.
\end{align*}

We denote the ground truth of \(\btheta\) and \(\bv\) as \(\btheta^*\) and \(\bv^*\), respectively, and their corresponding estimates at the end of period \(t\), which maximize the log-likelihood, as \(\hat{\btheta}_t\) and \(\hat{\bv}_t\). 
The calculation of the first and second derivatives of the log-likelihood ratio with respect to the unknown parameters is cumbersome and tedious. To facilitate the proofs, we define several terms for \(\forall \btheta \in \Real^\DimRank\) and \(\forall \bv \in \Real^\DimFeature\) for these unknown parameters.

\begin{definition}\label{def_theta_app} 
For the Poisson arrival process, we define the following terms:
\begin{align*}
    \bar{G}_{t}^{\Poi}(\btheta) :=& \sum_{s = 1}^{t} \bar{g}_{s}^{\Poi}(\btheta),  \mbox{where} \quad \\
    & \begin{aligned}
    \bar{g}_{s}^{\Poi} (\btheta) :=& \log \Prob\bigg(\text{Poisson}\big(\Lambda \lambda(S_{s}, \bp_s;\btheta) \big) = n_s\bigg) - \log \Prob\bigg(\text{Poisson}\big(\Lambda \lambda(S_{s}, \bp_s;\btheta^*) \big) = n_s\bigg) \\
    =&\Lambda \big(\lambda(S_{s}, \bp_s;\btheta^*) - \lambda(S_{s}, \bp_s;\btheta) \big)  - n_s \big(\log \lambda(S_{s}, \bp_s;\btheta^*) -\log \lambda(S_{s}, \bp_s;\btheta)\big),
    \end{aligned} \\
    G_{t}^{\Poi}(\btheta) :=& \sum_{s=1}^t g_{s}^{\Poi}(\btheta), \mbox{where}\\
    & \begin{aligned}
    g_{s}^{\Poi}(\btheta) :=& \E \big[\bar{g}_{s}^{\Poi}(\btheta) \big| H_{s} \big] \\
    =& \Lambda \big(\lambda(S_{s}, \bp_s;\btheta^*) - \lambda(S_{s}, \bp_s;\btheta) \big) - \Lambda \lambda(S_{s}, \bp_s;\btheta^*) \big(\log \lambda(S_{s}, \bp_s;\btheta^*) - \log \lambda(S_{s}, \bp_s;\btheta)\big),
    \end{aligned} \\
    \bar{I}_{t}^{\Poi}(\btheta) :=& \sum_{s = 1}^t \bar{M}_{s}^{\Poi}(\btheta), \mbox{where}\\
    & \bar{M}_{s}^{\Poi}(\btheta) := - \nabla^2_{\btheta} \log \Prob\bigg(\text{Poisson}\big(\Lambda \lambda(S_{s}, \bp_s;\btheta)\big) = n_s\bigg) = \Lambda \lambda(S_{s}, \bp_s;\btheta)  \cdot \xs \xs^{\top}.
\end{align*}
\end{definition}

To verify the above definitions and equations, recall that the parameter governing customer arrivals is expressed as 
\(\lambda(S_t, \bp_t; \btheta) = \exp(\btheta^\top \bx(S_t, \bp_t))\), where \(\bx(S_t, \bp_t)\) represents a set of sufficient statistics encapsulating the impact of both the assortment \(S_t\) and prices \(\bp_t\) on customer arrivals. 

Here, \(\bar{g}_{s}^{\Poi}(\btheta)\) represents the difference in the logarithm of the Poisson probability between \(\btheta\) and its ground truth \(\btheta^*\), while \(\bar{G}_{t}^{\Poi}(\btheta)\) denotes the cumulative sum of these differences up to period \(t\).

The term \(g_{s}^{\Poi}(\btheta)\) is the conditional expectation of \(\bar{g}_{s}^{\Poi}(\btheta)\) given the history \(H_s\), and \(G_{t}^{\Poi}(\btheta)\) is its cumulative counterpart. Notably, \(-G_{t}^{\Poi}(\btheta)\) aligns with the Kullback-Leibler divergence between the conditional distributions parameterized by \(\btheta\) and \(\btheta^*\) since the data generation process is governed by the true parameters, making \(G_{t}^{\Poi}(\btheta)\) inherently non-positive. 

Furthermore, \(\bar{G}_{t}^{\Poi}(\btheta)\) serves as the empirical counterpart to \(G_{t}^{\Poi}(\btheta)\), and their difference constitutes a martingale: 
\[
\mathbb{E}[\bar{G}_{t}^{\Poi}(\btheta) - G_{t}^{\Poi}(\btheta) \mid H_t] = \bar{G}_{t-1}^{\Poi}(\btheta) - G_{t-1}^{\Poi}(\btheta).
\]

Lastly, \(\bar{I}_{t}^{\Poi}(\btheta)\) is the Fisher Information matrix related to the maximum likelihood estimation (MLE) of the sequence \(\{n_1, n_2, \ldots, n_t\}\). This matrix is symmetrical and positively definite.

Given that customers have arrived, their product choices are entirely characterized by the parameter \(\bv\). For any \(\bv \in \Real^N\), we define a set of analogous functions and terms: \(\bar{g}_{s}^{\MNL}(\bv)\), \(\bar{G}_{t}^{\MNL}(\bv)\), \(g_{s}^{\MNL}(\bv)\), and \(G_{t}^{\MNL}(\bv)\). These functions mirror the role of their counterparts in customer arrivals, offering a comprehensive framework for analyzing and understanding the dynamics of consumer behavior in the context of product choice.

\begin{definition}\label{def_v_app} 
We define log-likelihood ratio and the second order derivative of the log-likelihood ratio with respect to the parameter \(\bv\),

\begin{align*}
    \bar{G}_{t}^{\MNL}(\bv) :=& \sum_{s = 1}^{t} \bar{g}_{s}^{\MNL}(\bv), \quad \mbox{where} \quad 
    \bar{g}_{s}^{\MNL}(\bv) := \sum_{i = 1}^{n_{s}} \sum_{j \in S_{s} \cup\{0\}}\bm{1}\{C^{(i)}_{s} = j\} \log \frac{\ChoiceProb(j, S_s, \bp_s, \Featuret; \bv)}{\ChoiceProb(j, S_s, \bp_s, \Featuret; \bv^*)},\\
    G_{t}^{\MNL}(\bv) :=& \sum_{s=1}^t g_{s}^{\MNL}(\bv), \quad \mbox{where} \quad
    g_{s}^{\MNL}(\bv) := 
     \Lambda \lambda(S_s, \bp_s ;\btheta^*) \sum_{j \in S_{s} \cup\{0\}} \ChoiceProb(j, S_s, \bp_s, \Featuret;\bv^*) \log \frac{\ChoiceProb(j, S_s, \bp_s, \Featuret;\bv)}{\ChoiceProb(j, S_s, \bp_s, \Featuret;\bv^*)},\\
    \bar{I}_{t}^{\MNL}(\bv) :=& \sum_{s = 1}^t \bar{M}_{s}^{\MNL}(\bv) , \quad \mbox{where} \quad \\
    &
    \begin{aligned}
        \bar{M}_{s}^{\MNL}(\bv) :=& - \nabla^2_{\bv} \sum_{i = 1}^{n_s}\sum_{j \in S_{s} }\bm{1}\{C^{(i)}_{s} = j\} \log \ChoiceProb(j, S_s, \bp_s, \Feature_s; \bv) \\
    =& n_s \sum_{j \in S_{s} } \ChoiceProb(j, S_s, \bp_s, \Feature_s;  \bv) \Feature_{j s} \Feature'_{j s} - n_s \sum_{j, k\in S_{s} } \ChoiceProb(j, S_s, \bp_s, \Feature_s;  \bv)\ChoiceProb(k, S_s, \bp_s, \Feature_s;  \bv) \Feature_{j s} \Feature'_{k s}.
    \end{aligned}
\end{align*}
\end{definition}

Here, \(\bar{g}_{s}^{\MNL}(\bv)\) captures the difference in the log-likelihood ratio between \(\bv\) and \(\bv^*\), while \(\bar{G}_{t}^{\MNL}(\bv)\) denotes its cumulative sum up to period \(t\). The term \(g_{s}^{\MNL}(\bv)\) is the conditional expectation of \(\bar{g}_{s}^{\MNL}(\bv)\) given the history \(H_s\), and \(G_{t}^{\MNL}(\bv)\) is its cumulative counterpart. Similar to the Poisson case, \(-G_{t}^{\MNL}(\bv)\) aligns with the Kullback-Leibler divergence between the conditional distributions parameterized by \(\bv\) and \(\bv^*\), making \(G_{t}^{\MNL}(\bv)\) inherently non-positive. Additionally, \(\bar{G}_{t}^{\MNL}(\bv)\) serves as the empirical counterpart to \(G_{t}^{\MNL}(\bv)\), and their difference constitutes a martingale: 
\[
\mathbb{E}[\bar{G}_{t}^{\MNL}(\bv) - G_{t}^{\MNL}(\bv) \mid H_t] = \bar{G}_{t-1}^{\MNL}(\bv) - G_{t-1}^{\MNL}(\bv).
\]

Lastly, \(\bar{I}_{t}^{\MNL}(\bv)\) is the Fisher Information matrix related to the maximum likelihood estimation (MLE) of the choice sequence \(\{C^{(i)}_1, C^{(i)}_2, \ldots, C^{(i)}_t\}\). This matrix is symmetrical and positively definite.

\section{Proof of Lemma \ref{lm:theta_initial_error}}\label{appendix:proof_of_theta_initial_error}

This section provides a detailed proof for \Cref{lm:theta_initial_error}, which focuses on narrowing down the initial estimation for \(\btheta\). Before diving into the proof of the lemma, we first introduce two auxiliary lemmas, \Cref{lm:ineq} and \Cref{lm:g_1_convergence_rate}, which will be instrumental in the main proof. The structure of this section is as follows: we begin by presenting the two lemmas, followed by the proof of \Cref{lm:theta_initial_error}. Finally, we provide the proofs for the auxiliary lemmas, \Cref{lm:ineq} and \Cref{lm:g_1_convergence_rate}.

\begin{lemma}\label{lm:ineq}
    Suppose \(y\) satisfies the inequality
    \(
        y \leq a + \sqrt{b(y + c)},
    \)
    then it follows that 
  \(
        y \leq b + 2a + c.
\)
\end{lemma}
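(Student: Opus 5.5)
Implicitly we need $b \ge 0$ and $y+c\ge 0$ for the square root to make sense, and I will assume $a,c$ are real with $a+c\ge 0$ where needed; I plan to reduce the claim to an elementary quadratic inequality in the variable $u:=\sqrt{y+c}\ge 0$. Writing $y = u^2 - c$, the hypothesis becomes $u^2 - c \le a + \sqrt{b}\,u$, i.e.
\begin{equation*}
u^2 - \sqrt{b}\,u - (a+c) \le 0 .
\end{equation*}
Hence $u$ is at most the larger root, so $u \le \tfrac{1}{2}\big(\sqrt{b} + \sqrt{b + 4(a+c)}\big)$ (the discriminant is nonnegative because $u$ is a real solution of the inequality).

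Next I square this bound and use $(x+z)^2 \le 2x^2 + 2z^2$:
\begin{equation*}
u^2 \le \tfrac{1}{4}\cdot 2\big(b + b + 4(a+c)\big) = b + 2(a+c).
\end{equation*}
Therefore $y = u^2 - c \le b + 2a + 2c - c = b + 2a + c$, which is the claim.

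An alternative route that avoids solving the quadratic is AM--GM: $\sqrt{b(y+c)} \le \tfrac{b}{2} + \tfrac{y+c}{2}$, so $y \le a + \tfrac{b}{2} + \tfrac{y+c}{2}$, which rearranges directly to $y \le b + 2a + c$. I would actually present this one-line argument, since it needs only $b\ge 0$ and $y+c\ge0$ and no case analysis. There is no real obstacle; the only care needed is stating the implicit nonnegativity conditions under which the square root is defined.
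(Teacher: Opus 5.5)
Your proposal is correct: the AM--GM argument you say you would present, bounding $\sqrt{b(y+c)} \le \tfrac{1}{2}(b+y+c)$ and rearranging, is exactly the paper's proof, and your quadratic-root route is also valid, just longer. Stating the side conditions $b\ge 0$ and $y+c\ge 0$ explicitly is worthwhile, since the paper leaves them implicit and the inequality can fail without them.
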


\begin{lemma}\label{lm:g_1_convergence_rate}
Suppose \(\|\btheta - \btheta^*\| \leq 2 \tau\), where \(\tau > 0 \). Let $c_4$ be defined in \Cref{eq:df_c_2}. Then, the following holds:

\begin{enumerate}
    \item \textbf{For any fixed \(t\)}:
    {\footnotesize
    \begin{equation}\label{eq:g_1_convergence_rate_1}
    \begin{aligned}
              \bar{G}_{t}^\Poi(\btheta) - G_{t}^\Poi(\btheta) 
               \leq & \frac{t\exp(\bar{x})}{T} \left(2 + \frac{4\log(T)}{ \Lambda \exp(\bar{x}) t} + \sqrt{\frac{4\log(T)}{ \Lambda \exp(\bar{x}) t}}\right) + 4\tau \bar{x} c_4 \big(\log T + \DimRank \log (6\tau \bar{x}(\Lambda T + 1))\big)\\
               &+\sqrt{\left(\left|G_{t}^\Poi(\btheta)\right| + \frac{2t}{T}\right)8 \big(\log T + \DimRank \log (6\tau \bar{x}(\Lambda T + 1))\big)} \quad \forall ~ \|\btheta - \btheta^*\| \leq 2 \tau,
    \end{aligned}
    \end{equation}
    }
    with probability \(1 - 2T^{-1}\).     

    \item \textbf{For \(T_0 < t \leq T\)}:
    {\footnotesize
    \begin{equation}\label{eq:g_1_convergence_rate_2}
    \begin{aligned}
                \bar{G}_{t}^\Poi(\btheta) - G_{t}^\Poi(\btheta)
                \leq & 2\exp(\bar{x}) + \sqrt{\frac{8\exp(\bar{x}) \log T}{T\Lambda}} + \frac{8\log T}{T\Lambda }
                + 4\tau \bar{x} c_4 \big(2\log T + \DimRank \log (6\tau \bar{x}(\Lambda T + 1))\big)\\
                &+\sqrt{\left(\left|G_{t}^\Poi(\btheta)\right| + 2\right) 8 \big(2\log T + \DimRank \log (6\tau \bar{x}(\Lambda T + 1))\big)},
    \end{aligned}
    \end{equation}
    }
    holds uniformly with probability \(1 - 2T^{-1}\), for all \(\|\btheta - \btheta^*\| \leq 2 \tau\).
\end{enumerate}
\end{lemma}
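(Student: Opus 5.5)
The statement to prove is \Cref{lm:g_1_convergence_rate}: a uniform concentration bound for the martingale $\bar{G}_{t}^\Poi(\btheta) - G_{t}^\Poi(\btheta)$ over the ball $\|\btheta-\btheta^*\|\le 2\tau$. The plan has three stages: a pointwise Bernstein-martingale inequality, a covering argument, and a union bound over $t$ for part 2.

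\textbf{Step 1: pointwise bound.} Write the increment as
\[
\bar g_s^\Poi(\btheta)-g_s^\Poi(\btheta) = -(n_s-\Lambda\lambda_s^*)\,\Delta_s(\btheta),
\qquad \Delta_s(\btheta):=(\btheta^*-\btheta)^\top\bx(S_s,\bp_s).
\]
By \Cref{assump:x}, $|\Delta_s|\le 2\tau\bar x$. The centered Poisson variable $n_s-\Lambda\lambda_s^*$ has conditional variance $\Lambda\lambda_s^*\le\Lambda e^{\bar x}$. Its central moments satisfy a Bernstein condition; this is where \Cref{lm:poi_k_bound} and the constant $c_4$ enter. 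The conditional variance of the increment is $\Lambda\lambda_s^*\Delta_s^2$. A second-order expansion of $x-1-\log x$ (\Cref{lm:V_bound}) should give
\[
\Lambda\lambda_s^*\Delta_s^2 \lesssim |g_s^\Poi(\btheta)| \quad\text{(up to constants)}.
\]
Hence the predictable variance $V_t$ is controlled by $|G_t^\Poi(\btheta)|$. I would then apply a Bernstein/Freedman inequality for martingales with Bernstein-type increments (Wainwright, Ch.~2, or de la Pe\~na). With probability $1-\delta$ this gives
\[
\bar G-G\le \sqrt{2V_t\log(1/\delta)}+c\,\tau\bar x\log(1/\delta).
\]
Here $V_t$ is random, so a peeling or self-normalized version is needed. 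The additive $2t/T$ inside the square root suggests peeling the variance at a floor level of order $t/T$.

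\textbf{Step 2: uniformity over $\btheta$.} Take an $\epsilon$-net of the ball of radius $2\tau$ in $\R^{\DimRank}$. Its cardinality is at most $(6\tau/\epsilon)^{\DimRank}$. I would choose $\epsilon=1/(\bar x(\Lambda T+1))$, which produces the term $\DimRank\log(6\tau\bar x(\Lambda T+1))$. Union-bounding with $\delta=T^{-1}$ divided by the net size gives the stated logarithmic factor. The discretization error is bounded via \Cref{lm:dif_g}. For $\btheta,\btheta'$ within $\epsilon$, both $|\bar g_s(\btheta)-\bar g_s(\btheta')|$ and $|g_s(\btheta)-g_s(\btheta')|$ are at most of order $(\Lambda e^{\bar x}+n_s)\bar x\epsilon$. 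Summing over $s$, $\sum_s n_s$ is controlled with probability $1-T^{-1}$ by a Poisson tail bound, namely $\sum_s n_s\le t\Lambda e^{\bar x}+\sqrt{4t\Lambda e^{\bar x}\log T}+4\log T$. After multiplying by $\epsilon\approx 1/(\Lambda T)$, this yields exactly the first term of \eqref{eq:g_1_convergence_rate_1}. Together, the two failure events give probability $1-2T^{-1}$.

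\textbf{Step 3: part 2.} Union-bound the part-1 event over $t\in(T_0,T]$ by replacing $\log T$ with $2\log T$ in the confidence level. The discretization term then uses $t\le T$, which turns $t/T$ into $1$ and $2t/T$ into $2$.

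\textbf{Main obstacle.} I expect the main difficulty to be Step 1, specifically the self-normalized aspect. The variance proxy $V_t$ is itself random, through the adaptive $S_s,\bp_s$. It must be related to $|G_t^\Poi(\btheta)|$ with a clean constant, and the Poisson increments are only Bernstein rather than sub-Gaussian. My first attempt would be the exponential supermartingale $\exp(\lambda M_t-\psi(\lambda)V_t)$, valid for $\lambda<1/(c\,\tau\bar x)$, combined with peeling over dyadic ranges of $V_t$. I would then use \Cref{lm:ineq} to solve the resulting implicit inequality when it is later applied.
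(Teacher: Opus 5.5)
Your plan follows the paper's proof essentially step for step. Both use the Bernstein moment bound of \Cref{lm:poi_k_bound}, the comparison $\mathcal{SV}^{Poi}_t\le 2|G_t^{\Poi}(\btheta)|$ of \Cref{lm:V_bound}, and de la Pe\~na's martingale inequality. Both take a $(6\tau/\epsilon)^{\DimRank}$-net with a union bound at level $T^{-1}(\epsilon/6\tau)^{\DimRank}$ (resp.\ $T^{-2}(\epsilon/6\tau)^{\DimRank}$ in Part 2), a Poisson tail bound on $\sum_s n_s$, and $t\le T$ at the end. Your $\epsilon=1/(\bar x(\Lambda T+1))$ works as well as the paper's $\log(1+1/(\Lambda T))/\bar x$. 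Two points need adjusting.

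\textbf{Random variance in Step 1.} The paper does not peel. It takes $\Prob\bigl(\bar G_t^{\Poi}-G_t^{\Poi}\ge x,\ \mathcal{SV}^{Poi}_t\le y\bigr)\le\exp\{-x^2/(2(y+\tau\bar x c_4 x))\}$ and simply sets $y=2|G_t^{\Poi}(\btheta)|$.
\begin{itemize}
\item This is legitimate where Part 1 is actually used: $t=T_0$ with the pre-determined $(S^{init}_s,\bp^{init}_s)$, so $G_{T_0}^{\Poi}(\btheta)$ is non-random.
\item It is not legitimate for the adaptively chosen $(S_s,\bp_s)$ of Part 2. That is exactly the issue you flag.
\item Peeling is the right repair, but it adds $\log(\text{number of levels})=O(\log\log T)$ to the confidence level $\log(1/\delta)$. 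Carried out rigorously, it gives \eqref{eq:g_1_convergence_rate_2} with slightly inflated logarithms rather than the displayed constants. This is harmless for \Cref{lm:theta_error}.
\item If you only need what \Cref{lm:theta_error} consumes, fix a single $\lambda$ (depending only on $\tau\bar x$ and $c_4$) in your supermartingale $\exp(\lambda M_t-\psi(\lambda)\mathcal{SV}^{Poi}_t)$. This already yields the linear bound $\bar G_t^{\Poi}(\btheta)-G_t^{\Poi}(\btheta)\le\tfrac12|G_t^{\Poi}(\btheta)|+C\log(1/\delta)$, which suffices there without peeling.
\end{itemize}
Your ``up to constants'' hedge on \Cref{lm:V_bound} is also warranted. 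The inequality $\log^2 y\le 2(y-1-\log y)$ holds only for $y\ge 1$. For $|\log y|\le 2\tau\bar x$ the correct constant is $(2\tau\bar x)^2/(e^{-2\tau\bar x}-1+2\tau\bar x)$, the analogue of $c_5$ in \Cref{lm:v_2_bound}. Consequently, the factor $8$ in the display is justified by this argument only when that constant is at most $4$.

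\textbf{Role of \Cref{lm:dif_g} and the $2t/T$ term.} The $2t/T$ under the square root is not a variance floor, and \Cref{lm:dif_g} is not what controls the martingale discretization.
\begin{itemize}
\item The paper handles the discretization by differencing the centered increments, $\bigl(\Lambda\lambda(S_s,\bp_s;\btheta^*)-n_s\bigr)\log\bigl(\lambda(S_s,\bp_s;\btheta)/\lambda(S_s,\bp_s;\btheta')\bigr)$. The $\Lambda\lambda(\cdot;\btheta)$ terms cancel and only $\lambda(\cdot;\btheta^*)\le e^{\bar x}$ enters, which is what makes the first term of \eqref{eq:g_1_convergence_rate_1} come out exactly.
\item Bounding $|\bar G_t^{\Poi}(\btheta)-\bar G_t^{\Poi}(\btheta')|$ and $|G_t^{\Poi}(\btheta)-G_t^{\Poi}(\btheta')|$ separately, as you describe, brings in $\lambda(\cdot;\btheta)$, which is only bounded by $e^{(1+2\tau)\bar x}$.
\item \Cref{lm:dif_g} instead transfers the variance proxy. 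The net bound carries $|G_t^{\Poi}(\btheta)|$ at the net point. The lemma replaces it by $|G_t^{\Poi}(\btheta')|$ plus an $O(t/T)$ term for $\epsilon\asymp 1/(\bar x\Lambda T)$, which the paper records as $2t/T$.
\end{itemize}
Make this transfer explicit in your Step 2. Without it your bound is stated at the net point, whereas \Cref{lm:theta_error} applies it at $\btheta'=\widehat{\btheta}_t$.
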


Now, let us detail the proof of Lemma \ref{lm:theta_initial_error}.

\textbf{Proof.}
It is easy to verify that:
\begin{align*}
    \nabla_{\btheta} G_{T_0}^\Poi (\btheta) &= \Lambda \sum_{s=1}^{T_0} \big(\lambda (S_s , \bp_s ; \btheta^*) - \lambda (S_s , \bp_s ; \btheta)\big) \bx_s, \quad 
    \nabla^2_{\btheta} G_{T_0}^\Poi (\btheta) = -\Lambda \sum_{s=1}^{T_0} \lambda (S_s , \bp_s ; \btheta) \bx_s \bx_s^{\top} = -I_{T_0}^\Poi(\btheta).
\end{align*}

Note that:
\[
G_{T_0}^\Poi (\btheta^*) = 0, \quad \nabla_{\btheta} G_{T_0}^\Poi (\btheta^*) = 0, \quad \text{and} \quad \nabla^2_{\btheta} G_{T_0}^\Poi (\btheta^*) = -I_{T_0}^\Poi(\btheta^*).
\]

Using the Taylor expansion with Lagrange remainder, there exists some \(\bar{\btheta} = \alpha \btheta^* + (1-\alpha) \widehat{\btheta}\) for \(\alpha \in (0,1)\) such that:
\begin{equation}
G_{T_0}^\Poi (\widehat{\btheta}) = -\frac{1}{2}\left(\widehat{\btheta} - \btheta^*\right)^{\top} I_{T_0}^\Poi\left(\bar{\btheta}\right)\left(\widehat{\btheta} - \btheta^*\right).
\end{equation}
From \Cref{eq:assump-x} in \Cref{assump:combined}, we have 
\[
I_{T_0}^\Poi (\bar \btheta) \succeq T_0 \Lambda \exp(-\bar{x}) \sigma_1  I_{\DimRank \times \DimRank},
\]
where \(I_{\DimRank \times \DimRank}\) denotes the identity matrix of dimension \(\DimRank\).

Thus, we have:
\begin{equation}\label{eq:2-T0}
    -G_{T_0}^\Poi(\widehat{\btheta}) \geq \frac{1}{2}T_0 \Lambda \exp(-\bar{x}) \sigma_1 \cdot \|\widehat{\btheta} - \btheta^*\|^2_2.
\end{equation}

Next, we provide an upper bound for \(-G_{T_0}^\Poi(\widehat{\btheta})\), which will then be used in conjunction with inequality \eqref{eq:2-T0} to bound \(\|\widehat{\btheta} - \btheta^*\|^2_2\).

Using the facts that \(G_{T_0}^\Poi\left(\widehat{\btheta}\right) \leq 0\) and \(\bar{G}_{T_0}^\Poi\left(\widehat{\btheta}\right) \geq 0\), along with inequality \eqref{eq:g_1_convergence_rate_1} from \Cref{lm:g_1_convergence_rate}, we derive:
\begin{multline*}
       \left|G_{T_0}^\Poi\left(\widehat{\btheta}\right)\right| \leq \bar{G}_{T_0}^\Poi\left(\widehat{\btheta}\right) - G_{T_0}^\Poi\left(\widehat{\btheta}\right) 
   \leq \frac{T_0 \exp(\bar{x})}{T} \left(2 + \frac{4\log(T)}{\Lambda \exp(\bar{x}) T_0} + \sqrt{\frac{4\log(T)}{\Lambda \exp(\bar{x}) T_0}}\right) \\
    + 4 \bar{x} c_4 \big(\log T + \DimRank \log (3\bar{x}(\Lambda T + 1))\big) 
     + \sqrt{\left(\left|G_{T_0}^\Poi(\widehat{\btheta})\right| + \frac{2 T_0}{T}\right)  4 \big(\log T + \DimRank \log (3\bar{x}(\Lambda T + 1))\big)},
\end{multline*}
which holds with probability \(1 - 2/T\). 

In the second inequality, we use the result from Lemma \ref{lm:g_1_convergence_rate} with \(\tau = \frac{1}{2}\), which guarantees that \(\|\widehat{\btheta} - \btheta^*\| \leq 1 = 2 \tau\), thereby ensuring the validity of the bound under this setting.

Combining the above inequality and Lemma \ref{lm:ineq}, we obtain an upper bound for
\begin{equation*}
    \begin{aligned}
        \left|G_{T_0}^\Poi\left(\widehat{\btheta}_t\right)\right| &\leq \frac{2 T_0\exp(\bar{x})}{T} \left(2 + \frac{4\log(T)}{\Lambda \exp(\bar{x}) T_0} + \sqrt{\frac{4\log(T)}{\Lambda \exp(\bar{x}) T_0}}\right) 
         + \frac{2 T_0}{T}\\
        &\quad + (8 \bar{x} c_4 + 4) \big(\log T + \DimRank \log (3\bar{x}(\Lambda T + 1))\big),
    \end{aligned}
\end{equation*}
with a probability \( 1 - 2/T \).

Given that \(G_{T_0}^\Poi\left(\widehat{\btheta}_t\right) \leq 0\) and by using \Cref{eq:2-T0}, we have, with a probability \(1-2T^{-1}\),
\begin{multline*}
        \frac{1}{2}T_0 \Lambda \exp(-\bar{x})\sigma_1||\widehat{\btheta}-\btheta^*||^2_2 
        \leq - G_{T_0}^\Poi \left(\widehat{\btheta}_t\right) 
        \leq  \frac{2 T_0\exp(\bar{x})}{T} \left(2 + \frac{4\log(T)}{\Lambda \exp(\bar{x}) T_0} + \sqrt{\frac{4\log(T)}{\Lambda \exp(\bar{x}) T_0}}\right)  +  \frac{2 T_0}{T} \\
        \quad + (8  \bar{x} c_4 + 4)  \big(\log T + \DimRank \log (3\bar{x}(\Lambda T + 1))\big).
\end{multline*}
Thus, we conclude:
\begin{multline*}
       \|\widehat{\btheta}-\btheta^*\|^2_2
        \leq \frac{4 \exp(2\bar{x})}{T \Lambda \sigma_1} \left(2 + \frac{4\log(T)}{\Lambda \exp(\bar{x}) T_0} + \sqrt{\frac{4\log(T)}{\Lambda \exp(\bar{x}) T_0}}\right)  +   \frac{4 \exp(\bar{x})}{T \Lambda \sigma_1} \\
        \quad + \frac{2(8 \bar{x} c_4 + 4) \exp(\bar{x})}{T_0\Lambda\sigma_1} \big(\log T + \DimRank \log (3\bar{x}(\Lambda T + 1))\big),
\end{multline*}
with probability \(1 - 2 T^{-1}\). \qed

Now, we turn to the proof for  \Cref{lm:ineq,lm:g_1_convergence_rate}.

\subsection{Proof of Lemma \ref{lm:ineq}}
Lemma \ref{lm:ineq} follows directly using the inequality
\(
\sqrt{b(y+c)} \leq \frac{b + y + c}{2}.
\)
Substituting this into the given condition, we obtain
\(
y \leq a + \frac{b + y + c}{2}.
\)
Rearranging terms gives
\(
y \leq b + 2a + c.
\) This completes the proof of Lemma \ref{lm:ineq}.
\qed

\subsection{Proof of Lemma \ref{lm:g_1_convergence_rate}}\label{appendix:proof_of_g_1_convergence_rate}

This subsection details the proof of Lemma \ref{lm:g_1_convergence_rate}. Before diving into the details, we first introduce three auxiliary lemmas:  \Cref{lm:poi_k_bound}, \Cref{lm:V_bound} and \Cref{lm:dif_g}. In this subsection, we present the proof of Lemma \ref{lm:g_1_convergence_rate} assuming the validity of \Cref{lm:poi_k_bound,lm:V_bound,lm:dif_g} The proofs of \Cref{lm:poi_k_bound,lm:V_bound,lm:dif_g} are deferred to the end of this subsection.

For \(\|\btheta - \btheta^*\|_2 \leq 2\tau\), let us first define:
\begin{equation}\label{eq:d_f_v_s}
\begin{aligned}
        \mathcal{V}_s^{Poi} := & \operatorname{Var}\bigg[(n_s - \Lambda \lambda(S_s, \bp_s; \btheta^*))\log \frac{\lambda(S_s, \bp_s; \btheta^*)}{\lambda(S_s, \bp_s; \btheta)} \,\bigg| H_s\bigg] 
        = \Lambda \lambda(S_s, \bp_s; \btheta^*)\log^2 \frac{\lambda(S_s, \bp_s; \btheta^*)}{\lambda(S_s, \bp_s; \btheta)},\\
    \mathcal{SV}^{Poi}_t := & \sum_{s = 1}^t \mathcal{V}^{Poi}_s.
\end{aligned}
\end{equation}

\begin{lemma}\label{lm:poi_k_bound}
    For all \(k > 2\), we have the following inequality:
    \begin{equation}\label{eq:key_control_ineq}
        \E\bigg[\big| (n_s - \Lambda \lambda(S_s, \bp_s; \btheta^*))\log\frac{\lambda(S_s, \bp_s; \btheta^*)}{\lambda(S_s, \bp_s; \btheta)} \big|^k \,\bigg| H_s\bigg]
        \;\leq\; \frac{k!}{2}\,\mathcal{V}^{Poi}_s (\tau \bar{x} c_4)^{k-2}.
    \end{equation}
    Here, $c_4$ is defined \Cref{eq:df_c_2}.
\end{lemma}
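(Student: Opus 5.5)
The plan is to separate the deterministic factor from the Poisson fluctuation, and then prove a Bernstein-type central-moment bound for a Poisson variable whose scale parameter does not depend on its mean $\mu$.

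\emph{Reduction.} Conditional on $H_s$, write $\mu_s := \Lambda\lambda(S_s,\bp_s;\btheta^*)$ and $X_s := n_s-\mu_s$. Also write $a_s := \log\frac{\lambda(S_s,\bp_s;\btheta^*)}{\lambda(S_s,\bp_s;\btheta)} = (\btheta^*-\btheta)^\top \bx(S_s,\bp_s)$. The quantity $a_s$ is $H_s$-measurable. By Cauchy--Schwarz together with \Cref{assump:x} and $\|\btheta-\btheta^*\|_2\le 2\tau$, we have $|a_s|\le 2\tau\bar{x}$. Since $\mathcal{V}^{Poi}_s=\mu_s a_s^2$, the left-hand side of \eqref{eq:key_control_ineq} equals $|a_s|^k\,\E[|X_s|^k\mid H_s]$. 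It therefore suffices to show
\[
\E[|X_s|^k\mid H_s]\le \frac{k!}{2}\,\mu_s\, C^{k-2}\quad\text{for all } k\ge 3,
\]
with $2C\le c_4$. Indeed, $|a_s|^{k-2}C^{k-2}\le (2\tau\bar{x} C)^{k-2}\le(\tau\bar{x}c_4)^{k-2}$. We also use that \Cref{assump:theta,assump:x} give $\mu_s\le \Lambda e^{\bar{x}}$ and $\mu_s\ge \Lambda e^{-\bar{x}}$.

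\emph{Moment bound via the MGF.} For $y\ge 0$ and $k\ge 2$ we have $e^{y}-1-y\ge y^k/k!$. Hence, for any $u>0$,
\[
|X|^k\le k!\,u^{-k}\big(e^{u|X|}-1-u|X|\big)\le k!\,u^{-k}\big[(e^{uX}-1-uX)+(e^{-uX}-1+uX)\big].
\]
Taking expectations and using the Poisson MGF together with $\E X=0$ gives
\[
\E|X|^k\le k!\,u^{-k}\big[(e^{\mu(e^u-1-u)}-1)+(e^{\mu(e^{-u}-1+u)}-1)\big].
\]
To extract the factor $\mu$, I apply $e^w-1\le w e^w$. Both exponents are at most $\mu(e^u-1)$. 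I then fix $u:=\log\!\big(1+\frac{3}{\Lambda e^{\bar{x}}}\big)$, so that $\mu(e^u-1)\le 3$ uniformly over the admissible range of $\mu$. This yields $\E|X|^k\le k!\,u^{-k}\,\mu\,B_0$ for an explicit constant $B_0$; a crude bound is $2(e^u-1)e^3$. This choice of $u$ is exactly where the $\log(1+3/(\Lambda e^{\bar{x}}))$ in $c_4$ comes from.

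\emph{Matching the Bernstein form.} Rewrite the bound as $\frac{k!}{2}\mu\cdot\frac{2B_0}{u^2}\cdot u^{-(k-2)}$. Set $B:=\max\{1,2B_0/u^2\}$. For $k\ge 3$ we have $B\le B^{k-2}$, so $\E|X|^k\le\frac{k!}{2}\mu\,(B/u)^{k-2}$, and we take $C=B/u$.

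The main obstacle is the constant bookkeeping needed to land exactly on $c_4$ in \eqref{eq:df_c_2}. The naive $B_0$ above is crude, and the factors $e^2$, $\sqrt{6\pi}$ and $\Lambda e^{-\bar{x}}$ in $c_4$ suggest that the authors bound the low-order moments more carefully. They likely use Stirling's bound $k!\ge\sqrt{2\pi k}(k/e)^k$ at $k=3$, together with the lower bound $\mu\ge\Lambda e^{-\bar{x}}$, to convert a $\mu$-free estimate into a $\mu$-proportional one. If the crude constant does not fit, I would handle small $k$ separately with explicit Poisson central moments ($\E|X|^3\le \mu+3\mu^2$-type bounds, divided using $\mu\ge\Lambda e^{-\bar{x}}$) and use the MGF argument only for large $k$. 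The structure of the proof is unaffected either way.
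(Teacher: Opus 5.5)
Your reduction is the same as the paper's. You pull out the $H_s$-measurable factor $a_s$ with $|a_s|\le 2\tau\bar{x}$, use $\mathcal{V}^{Poi}_s=\mu_s a_s^2$, and reduce everything to $\E[|n_s-\mu_s|^k\mid H_s]\le \frac{k!}{2}\mu_s (c_4/2)^{k-2}$. Your fixed-tilt MGF argument is correct as far as it goes. It gives a Bernstein condition with scale $2\tau\bar{x}C$, where $C=\max\{1,2B_0/u^2\}/u$ and $u=L:=\log(1+3/(\Lambda e^{\bar{x}}))$.

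\textbf{The gap is the last step, $2C\le c_4$, which is false in general.} The constant is not free: the statement fixes it by \eqref{eq:df_c_2}, and it feeds into $\tau_\btheta$ and $\omega_\btheta$, which the algorithm uses as inputs. Write $c_4=2Ae/L$ with $A=\max\{1,\,2e^2/(L^2\sqrt{6\pi}\,\Lambda e^{-\bar{x}})\}$.
\begin{itemize}
\item Even with the exact $B_0=(e^u+e^{-u}-2)e^3\ge u^2e^3$, you get $2C\ge 4e^3/L$.
\item So $2C\le c_4$ forces $A\ge 2e^2$, i.e.\ $L^2\sqrt{6\pi}\,\Lambda e^{-\bar{x}}\le 1$.
\item This fails, for instance, at $\Lambda=20$ and $\bar{x}\downarrow 0$. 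There $L^2\sqrt{6\pi}\Lambda\approx 1.7$, and $2C\approx 576$ while $c_4\approx 339$.
\item With your crude $B_0=2(e^u-1)e^3$ it is worse: $2B_0/u^2\sim 4e^3/u$ blows up as $u\to 0$, so the fit fails even at $\Lambda=100$.
\end{itemize}
Your fallback of treating small $k$ by explicit central moments is not carried out, and that is exactly where the work lies.

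What produces $c_4$ in the paper is a $k$-dependent tilt, not a fixed one. The paper proceeds as follows.
\begin{itemize}
\item It uses Ahle's bound $\E[n_s^k\mid H_s]\le (k/\log(1+k/\mu_s))^k$, which is the Chernoff moment bound with $t_k=\log(1+k/\mu_s)$.
\item It combines this with $\E|n_s-\mu_s|^k\le \E n_s^k$ for $n_s\ge 0$.
\item It uses monotonicity in $\mu_s$ to replace $\mu_s$ by $\Lambda e^{\bar{x}}$.
\item It uses $\log(1+k/(\Lambda e^{\bar{x}}))\ge L$ for $k\ge 3$, which gives the $\mu_s$-free bound $(k/L)^k$.
\item It then inserts the factor $\mu_s$ via the lower bound $\mu_s\ge\Lambda e^{-\bar{x}}$.
\item Finally, Stirling's $k!\ge\sqrt{2\pi k}(k/e)^k$ reduces the claim to $(e/L)^2\le \frac{\sqrt{6\pi}}{2}\Lambda e^{-\bar{x}}A\le\frac{\sqrt{2\pi k}}{2}\Lambda e^{-\bar{x}}A^{k-2}$. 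This holds by the definition of $A$ and $A\ge 1$.
\end{itemize}
So the ``3'' in $L$ is the smallest admissible $k$, not a normalization of $\mu(e^u-1)$.

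Your route has real merits. It is self-contained and needs neither Ahle's result nor the lower bound on $\mu_s$. For large $\Lambda$ its constant is of order $\Lambda e^{\bar{x}}$, much smaller than $c_4/2\asymp \Lambda^2e^{4\bar{x}}$. As written, though, it proves the Bernstein condition with a different constant, not the one in the statement.
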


\begin{lemma}\label{lm:V_bound}
    \(\mathcal{SV}^{Poi}_t \leq 2\left|G_{t}^{Poi} (\btheta)\right|.\)
\end{lemma}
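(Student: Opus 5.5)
The plan is to prove the inequality summand by summand, then add over $s=1,\dots,t$. Since $G_t^{\Poi}(\btheta)\le 0$, we have $|G_t^{\Poi}(\btheta)|=\sum_{s=1}^t(-g_s^{\Poi}(\btheta))$. It therefore suffices to show
\[
\mathcal{V}^{Poi}_s\le 2\bigl(-g_s^{\Poi}(\btheta)\bigr)\qquad\text{for every } s .
\]

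\textbf{Step 1 (reduction to one real variable).} Fix $s$ and write $\lambda^*=\lambda(S_s,\bp_s;\btheta^*)$, $\lambda=\lambda(S_s,\bp_s;\btheta)$ and $u=\log(\lambda^*/\lambda)=(\btheta^*-\btheta)^\top\bx(S_s,\bp_s)$. By \Cref{assump:x} and $\|\btheta-\btheta^*\|\le 2\tau$, we have $|u|\le 2\tau\bar{x}$. From \Cref{def_theta_app}, using $\lambda=\lambda^*e^{-u}$,
\[
-g_s^{\Poi}(\btheta)=\Lambda(\lambda-\lambda^*)+\Lambda\lambda^* u=\Lambda\lambda^*\bigl(e^{-u}-1+u\bigr),
\qquad
\mathcal{V}^{Poi}_s=\Lambda\lambda^*u^2 .
\]
The common factor $\Lambda\lambda^*>0$ cancels. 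The claim becomes the scalar inequality $u^2\le 2(e^{-u}-1+u)$, i.e.
\[
h(u):=e^{-u}-1+u-\tfrac{u^2}{2}\ge 0 .
\]

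\textbf{Step 2 (the case $u\le 0$).} We have $h(0)=0$, $h'(u)=1-u-e^{-u}$ and $h''(u)=e^{-u}-1$. For $u\le 0$, $h''\ge 0$ and $h'(0)=0$, so $h'\le 0$ on $(-\infty,0]$. Hence $h$ is nonincreasing there and $h(u)\ge h(0)=0$. This covers every period in which $\lambda\ge\lambda^*$.

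\textbf{Step 3 (the case $u>0$, the main obstacle).} For $u>0$ the same computation gives $h'<0$, so $h(u)<0$ strictly. The pointwise bound with constant exactly $2$ therefore cannot hold in periods where $\lambda(S_s,\bp_s;\btheta)<\lambda(S_s,\bp_s;\btheta^*)$. Taylor's theorem with Lagrange remainder only gives $e^{-u}-1+u=\tfrac{u^2}{2}e^{-\xi}$ with $\xi\in(0,u)$, i.e. the constant $2e^{2\tau\bar{x}}$.

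To reach the stated constant-$2$ form, the first thing I would try is to avoid the pointwise comparison altogether. Positive and negative $u_s$ would be paired across $s$, using the first-order terms $\Lambda\lambda^*_s u_s$ that appear in $-g_s^{\Poi}$. I expect this pairing to fail, because both $-g_s^{\Poi}$ and $\mathcal{V}^{Poi}_s$ are termwise nonnegative and the first-order terms do not cancel across periods.

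The fallback is to check whether the intended reading of the lemma places $\lambda$ rather than $\lambda^*$ in $\mathcal{V}^{Poi}_s$, i.e. the variance under $\btheta$. In that case the relevant inequality is $u^2\le 2(e^{u}-1-u)$ for $u\ge 0$, by the symmetric argument. Under the definitions exactly as written, though, I expect Step 3 to show the constant must be $2e^{2\tau\bar{x}}$ (bounded because $\tau\le 1$). I would then propagate that factor into $c_4$ and $\omega_\btheta$ in \Cref{lm:g_1_convergence_rate}.
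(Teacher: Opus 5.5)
Your Steps 1--2 are the paper's own argument. It sets $y=\lambda(S_s,\bp_s;\btheta)/\lambda(S_s,\bp_s;\btheta^*)=e^{-u}$ and writes $\mathcal{V}^{Poi}_s=\Lambda\lambda^*\log^2 y$ and $-g^{\Poi}_s(\btheta)=\Lambda\lambda^*(y-1-\log y)$. It then invokes $\log^2 y\le 2(y-1-\log y)$ as though this held for all $y>0$.

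\textbf{Your Step 3 is correct and locates the error in that proof.} With $w=\log y$ the inequality reads $e^{w}-1-w\ge w^2/2$. This is true for $w\ge 0$ but false for every $w<0$; at $y=1/2$ the two sides are about $0.480$ and $0.386$. No cross-period argument can rescue it, so you were right to drop the pairing idea. Take $\DimRank=1$, $\bx(S,\bp)\equiv\bar x$ (compatible with the assumptions on $\bx$), and $\btheta=\btheta^*-\delta$ with $0<\delta\le 2\tau$. Then $u_s=\delta\bar x=:u>0$ for every $s$, and $\mathcal{SV}^{Poi}_t=t\Lambda\lambda^*u^2>2t\Lambda\lambda^*(e^{-u}-1+u)=2|G^{\Poi}_t(\btheta)|$. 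So the lemma with constant $2$ is false as stated. The paper's proof is invalid exactly on the periods with $\lambda(S_s,\bp_s;\btheta)<\lambda(S_s,\bp_s;\btheta^*)$.

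\textbf{Your fallback reading is not available.} The quantity fed into the martingale Bernstein inequality in the proof of \Cref{lm:g_1_convergence_rate} must be the conditional variance under the true law, whose mean is $\Lambda\lambda^*$. That is how the paper defines and computes $\mathcal{V}^{Poi}_s$.

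\textbf{The correct repair is the one you state, and it can be sharpened slightly.} By the integral form of the Taylor remainder, $e^{-u}-1+u=u^2\int_0^1(1-s)e^{-us}\,ds$, so $(e^{-u}-1+u)/u^2$ is decreasing in $u$. Hence for $|u|\le a:=2\tau\bar x$ one gets $\mathcal{V}^{Poi}_s\le C_a\,(-g^{\Poi}_s(\btheta))$ with $C_a=a^2/(a+e^{-a}-1)\le 2e^{a}$. This is exactly the device the paper uses for the MNL analogue in \Cref{lm:v_2_bound}, whose $c_5$ is $C_a$ with $a=4\tau_\bv$.

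\textbf{One correction to your last sentence.} The extra factor enters only through the variance proxy $y=C_a|G^{\Poi}_t(\btheta)|$. Concretely, the coefficient $8=4\cdot 2$ under the square roots in \Cref{lm:g_1_convergence_rate} becomes $4C_a$, and through that $\omega_\btheta$ and $\tilde\tau_\btheta$ change. The constant $c_4$, the Bernstein scale from \Cref{lm:poi_k_bound}, is unaffected. Since $\tau_\btheta\le 1$, $C_a=O(1)$ and the regret rate survives. However, the explicit constants that the algorithm plugs into its confidence bonus must be enlarged.
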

\begin{lemma}\label{lm:dif_g}
    If \(\left\|\btheta - \btheta^{\prime}\right\|_2 \leq \epsilon\), then:
    \begin{equation}
        \left|G_{t}^\Poi(\btheta)\right| \leq \left|G_{t}^\Poi(\btheta')\right| + \Lambda \exp(\bar{x}) \big( \exp(\epsilon \bar{x}) - 1 + \epsilon \bar{x} \big) t.
    \end{equation}
\end{lemma}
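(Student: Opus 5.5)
The plan is a term-by-term (per-period) comparison. Since $G_t^\Poi(\btheta)=\sum_{s=1}^t g_s^\Poi(\btheta)$ and each $g_s^\Poi\le 0$, we have $|G_t^\Poi(\btheta)|=\sum_s |g_s^\Poi(\btheta)|$. It therefore suffices to show, for each $s$,
\[
|g_s^\Poi(\btheta)|-|g_s^\Poi(\btheta')| \le \Lambda e^{\bar x}\bigl(e^{\epsilon\bar x}-1+\epsilon\bar x\bigr),
\]
and then sum over $s=1,\dots,t$.

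\textbf{Step 1: rewrite each summand.} Write $\bx_s=\bx(S_s,\bp_s)$, $\lambda^*_s=\lambda(S_s,\bp_s;\btheta^*)$, and set $a=(\btheta^*-\btheta)^\top\bx_s$, $a'=(\btheta^*-\btheta')^\top\bx_s$. Then $\lambda(S_s,\bp_s;\btheta)=\lambda^*_s e^{-a}$, and from \Cref{def_theta_app},
\[
-g_s^\Poi(\btheta)=\Lambda\lambda^*_s\,h(a),\qquad h(u):=e^{-u}-1+u\ge 0,
\]
and similarly $-g_s^\Poi(\btheta')=\Lambda\lambda^*_s h(a')$. By Cauchy--Schwarz and \Cref{assump:x}, $\delta:=a-a'=(\btheta'-\btheta)^\top\bx_s$ satisfies $|\delta|\le\epsilon\bar x$.

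\textbf{Step 2: expand the difference.} A direct computation gives
\[
h(a)-h(a')=e^{-a'}\bigl(e^{-\delta}-1\bigr)+\delta .
\]
Multiplying by $\Lambda\lambda^*_s$ and using $\lambda^*_s e^{-a'}=\lambda(S_s,\bp_s;\btheta')$, we get
\[
|g_s^\Poi(\btheta)|-|g_s^\Poi(\btheta')| = \Lambda\lambda(S_s,\bp_s;\btheta')\bigl(e^{-\delta}-1\bigr)+\Lambda\lambda^*_s\,\delta .
\]
Now use $|e^{-\delta}-1|\le e^{|\delta|}-1\le e^{\epsilon\bar x}-1$ and $|\delta|\le\epsilon\bar x$.

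\textbf{Step 3: bound the arrival rates.} The bound $\lambda^*_s\le e^{\bar x}$ follows from \Cref{assump:theta,assump:x}. The main obstacle is the analogous bound $\lambda(S_s,\bp_s;\btheta')\le e^{\bar x}$, because $\btheta'$ is an arbitrary comparison point (in the application, a net point in the ball $\|\btheta-\btheta^*\|\le 2\tau$) rather than the truth.

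I would handle this by taking $\btheta'$ inside the parameter set on which the arrival rate is bounded by $e^{\bar x}$, i.e. choosing the covering net within the feasible region $\|\btheta'\|_2\le 1$. In that case both factors are at most $e^{\bar x}$, and the per-period bound is exactly $\Lambda e^{\bar x}(e^{\epsilon\bar x}-1+\epsilon\bar x)$.

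If this restriction cannot be imposed, the same argument still goes through with $e^{\bar x}$ replaced by $e^{(1+2\tau)\bar x}$ in front of the $e^{\epsilon\bar x}-1$ term. That weaker constant would have to be absorbed into $c_4$ downstream. Summing the per-period bound over $s\le t$ then yields the lemma.
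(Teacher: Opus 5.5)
Your argument is the paper's argument. The paper also works period by period: it splits $g_s^{\Poi}(\btheta)-g_s^{\Poi}(\btheta')$ into $\Lambda\bigl(\lambda(S_s,\bp_s;\btheta')-\lambda(S_s,\bp_s;\btheta)\bigr)$ plus $\Lambda\lambda(S_s,\bp_s;\btheta^*)\log\frac{\lambda(S_s,\bp_s;\btheta)}{\lambda(S_s,\bp_s;\btheta')}$, bounds these by $\Lambda e^{\bar x}(e^{\epsilon\bar x}-1)$ and $\Lambda e^{\bar x}\epsilon\bar x$, and sums over $s$. Your identity for $h(a)-h(a')$ is the same decomposition, written one-sidedly.

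The obstacle you flag is real, and the paper passes over it silently. Its bound on the rate difference requires something like $\min\{\lambda(S_s,\bp_s;\btheta),\lambda(S_s,\bp_s;\btheta')\}\le e^{\bar x}$, which does not follow from $\|\btheta-\btheta'\|_2\le\epsilon$. In fact the statement as written is false. Take $\btheta-\btheta'=\epsilon\bx_s/\|\bx_s\|_2$; your Step 2 identity then gives $|g_s^{\Poi}(\btheta)|-|g_s^{\Poi}(\btheta')|=\Lambda\lambda(S_s,\bp_s;\btheta')(e^{\epsilon\|\bx_s\|_2}-1)-\Lambda\lambda(S_s,\bp_s;\btheta^*)\epsilon\|\bx_s\|_2$, which is unbounded as $\lambda(S_s,\bp_s;\btheta')\to\infty$.

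Two corrections to your proposed fixes.

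\textbf{Roles of the two points.} In the paper's use of the lemma (proof of \Cref{lm:g_1_convergence_rate}), the net point is $\btheta$ and the arbitrary point is $\btheta'$, the reverse of what you say. This is harmless: $\lambda(S_s,\bp_s;\btheta')(e^{-\delta}-1)=\lambda(S_s,\bp_s;\btheta)(1-e^{\delta})$ and $|1-e^{\delta}|\le e^{|\delta|}-1$, so a bound on either rate suffices.

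\textbf{Unit-ball restriction.} Confining the net to $\|\btheta\|_2\le 1$ does not serve \Cref{lm:theta_error}. The local MLE $\widehat{\btheta}_t$ is constrained only by $\|\btheta-\hat{\btheta}\|_2\le\tau_\btheta$, so its norm can reach $1+\tau_\btheta$. It therefore cannot be $\epsilon$-covered from inside the unit ball unless the algorithm's constraint is also intersected with that ball. Your second option, with $e^{(1+2\tau)\bar x}$, is the one compatible with the algorithm as written. With the paper's choice $\epsilon=\log(1+1/(\Lambda T))/\bar x$, it only enlarges the additive $O(t/T)$ term under the square root in \Cref{lm:g_1_convergence_rate}, to order $e^{(1+2\tau)\bar x}t/T$, and leaves $c_4$ untouched.
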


Using Lemma \ref{lm:poi_k_bound} (i.e., conditional Bernstein condition), it's easy to check that the conditions of Theorem 1.2A in \cite{de1999general} hold. Therefore, we have for all \(x, y > 0\):
\begin{equation}\label{eq:G_1_convergence_1}
\begin{aligned}
     \Prob\left(\bar{G}_{t}^\Poi(\btheta) - G_{t}^\Poi(\btheta) \geq x, \mathcal{SV}^{Poi}_t \leq y \right) 
     &\leq \exp \left\{-\frac{x^2}{2(y+\tau \bar{x}c_4 x)}\right\},
\end{aligned}
\end{equation}

Using \Cref{lm:V_bound} and setting \(y = 2\left|G_{t}^\Poi(\btheta)\right|\), the inequality \Cref{eq:G_1_convergence_1} becomes:
\begin{equation*}
     \Prob\left(\bar{G}_{t}^\Poi(\btheta) - G_{t}^\Poi(\btheta) \geq x\right) 
     \leq \exp \left\{-\frac{x^2}{2(2\left|G_{t}^\Poi(\btheta)\right| + \tau \bar{x} c_4 x)}\right\}.
\end{equation*}

Setting the right-hand side to be \(\delta\), we derive the following inequality for each \(\btheta \in \{\btheta: \|\btheta - \btheta^*\|_2 \leq 2 \tau\}\):
\begin{equation}\label{eq:bound_g_bar_to_g}
         \Prob\left(\bar{G}_{t}^\Poi(\btheta) - G_{t}^\Poi(\btheta) \geq \sqrt{8 \left|G_{t}^\Poi(\btheta)\right|\log\frac{1}{\delta}} + 4\tau \bar{x} c_4 \log\frac{1}{\delta}\right) 
         \leq \delta.
\end{equation}

For any \(\epsilon > 0\), let \(\mathcal{H}(\epsilon)\) be a finite covering of \(\left\{\btheta \in \mathbb{R}^\DimRank : \|\btheta - \btheta^*\|_2 \leq 2 \tau\right\}\) in \(\|\cdot\|_2\) up to precision \(\epsilon\). That is, for \(\forall \btheta' \in \{\btheta: \|\btheta - \btheta^*\|_2 \leq 2 \tau\}\), there exists \(\btheta \in \mathcal{H}(\epsilon)\) such that \(\|\btheta - \btheta'\|_2 \leq \epsilon\). By standard covering number arguments in \cite{geer2000empirical}, such a finite covering set \(\mathcal{H}(\epsilon)\) exists, and its size can be upper bounded as:
\[
|\mathcal{H}(\epsilon)| \leq \left( \frac{6\tau}{\epsilon} \right)^{\DimRank}.
\]

For each \(\btheta \in \mathcal{H}(\epsilon)\), the set satisfying the probability condition in (\ref{eq:bound_g_bar_to_g}) has probability \(2\delta\). Thus, we obtain:
\begin{equation}\label{eq:bound_g_bar_to_g_combined}
         \Prob\left(\bar{G}_{t}^\Poi(\btheta) - G_{t}^\Poi(\btheta) \geq \sqrt{8 \left|G_{t}^\Poi(\btheta)\right|\log\frac{1}{\delta}} + 4\tau \bar{x} c_4 \log\frac{1}{\delta}, \forall \btheta \in \mathcal{H}(\epsilon)\right) 
         \leq \delta |\mathcal{H}(\epsilon)|.
\end{equation}

\textbf{Proof of Part 1 in Lemma \ref{lm:g_1_convergence_rate}:}

Set \(\delta = \frac{1}{T}\left(\frac{\epsilon}{6\tau}\right)^\DimRank\). Then, with probability \(1 - T^{-1}\), we have:
\begin{equation}\label{eq:bound_g_bar_to_g_1}
\bar{G}_{t}^\Poi(\btheta) - G_{t}^\Poi(\btheta) \leq 4\tau \bar{x} c_4\big(\log T + \DimRank \log (6\tau / \epsilon)\big) + \sqrt{\left|G_{t}^\Poi(\btheta)\right| 8 \big(\log T + \DimRank \log (6\tau / \epsilon)\big)}, \quad \forall \btheta \in \mathcal{H}(\epsilon).
\end{equation}

Given that the inequality holds for a finite set of \(\btheta \in \mathcal{H}(\epsilon)\), we now extend it to the general case where \(|\bar{G}_{t}^\Poi(\btheta') - G_{t}^\Poi(\btheta')|\) holds for all \(\btheta'\) and is well-bounded from above with a probability \(1 - 2T^{-1}\) (see the final part, inequality \ref{eq:g_1_convergence_rate_1}, of the proof).

For \(\btheta'\), we can find a \(\btheta \in \mathcal{H}(\epsilon)\) such that \(\|\btheta - \btheta'\|_2 \leq \epsilon\). Using this, we derive the following inequality:
\begin{equation}\label{ineq:bound_g_bar_to_g}
    \begin{aligned}
           \bar{G}_{t}^\Poi(\btheta') - G_{t}^\Poi(\btheta') 
           \leq  \bar{G}_{t}^\Poi(\btheta) - G_{t}^\Poi(\btheta) 
           + \sum_{s = 1}^t \left| n_s \log\left(\frac{\lambda\left(S_s, \boldsymbol{p}_s ; \btheta'\right)}{\lambda\left(S_s, \boldsymbol{p}_s ; \btheta\right)}\right) \right| 
            + \Lambda \sum_{s = 1}^t \left|\lambda\left(S_s, \boldsymbol{p}_s ; \btheta^*\right) \log\left(\frac{\lambda\left(S_s, \boldsymbol{p}_s ; \btheta'\right)}{\lambda\left(S_s, \boldsymbol{p}_s ; \btheta\right)}\right)\right|.
    \end{aligned}
\end{equation}

The inequality \ref{ineq:bound_g_bar_to_g} can be derived by observing that:
\[
\bar{g}_{s}^\Poi(\btheta) - g_{s}^\Poi(\btheta) = (\Lambda \lambda(S_s, \bp_s; \btheta^*) - n_s)\bigl(\log \lambda(S_s, \bp_s; \btheta^*) - \log \lambda(S_s, \bp_s; \btheta)\bigr),
\]
which holds for both \(\btheta\) and \(\btheta'\). Consequently, we have:
\begin{equation}\label{eq:one_more_step_final}
\bigl(\bar{G}_{t}^\Poi(\btheta') - G_{t}^\Poi(\btheta')\bigr) - \bigl(\bar{G}_{t}^\Poi(\btheta) - G_{t}^\Poi(\btheta)\bigr) 
= \sum_{s=1}^t (\Lambda \lambda(S_s, \bp_s; \btheta^*) - n_s)\bigl(\log \lambda(S_s, \bp_s; \btheta) - \log \lambda(S_s, \bp_s; \btheta')\bigr).
\end{equation}

By applying the triangle inequality directly to \eqref{eq:one_more_step_final}, we can derive the inequality \ref{ineq:bound_g_bar_to_g}.

Now, to bound \(\bar{G}_{t}^\Poi(\btheta') - G_{t}^\Poi(\btheta')\) for all \(\btheta'\), we only need to bound each term on the right-hand side of inequality \ref{ineq:bound_g_bar_to_g}. Specifically, we focus on bounding the following terms:

1. \(\bar{G}_{t}^\Poi(\btheta) - G_{t}^\Poi(\btheta)\) for \(\btheta \in \mathcal{H}(\epsilon)\), which has already been controlled by \eqref{eq:bound_g_bar_to_g_1}.

2. The second term:
\[
\sum_{s = 1}^t \left| n_s \log\left(\frac{\lambda\left(S_s, \bp_s ; \btheta'\right)}{\lambda\left(S_s, \bp_s ; \btheta\right)}\right) \right|.
\]

3. The third term:
\[
\Lambda \sum_{s = 1}^t \left|\lambda\left(S_s, \bp_s ; \btheta^*\right) \log\left(\frac{\lambda\left(S_s, \bp_s ; \btheta'\right)}{\lambda\left(S_s, \bp_s ; \btheta\right)}\right)\right|.
\]

Bounding these terms ensures that \(\bar{G}_{t}^\Poi(\btheta') - G_{t}^\Poi(\btheta')\) is well controlled for all \(\btheta'\), completing the necessary analysis.

\begin{itemize}

\item \textbf{For the first part of the inequality (\ref{ineq:bound_g_bar_to_g})}, we use inequality \ref{eq:bound_g_bar_to_g_1} and substitute \(\left|G_{t}^\Poi(\btheta)\right|\) with \(\left|G_{t}^\Poi(\btheta')\right|\). 
It is worth mentioning that we need to bound the first part using \(G_{t}^\Poi(\btheta')\) rather than \(G_{t}^\Poi(\btheta)\). To achieve this, we use \Cref{lm:dif_g}, which will be proved in \Cref{appendix:proof_dif_g}.


\item \textbf{For the second part of inequality (\ref{ineq:bound_g_bar_to_g})}, we have:
\begin{align*}
    \sum_{s = 1}^t\left| n_s \log\left(\frac{\lambda \left(S_s, \boldsymbol{p}_s ; \btheta'\right)}{\lambda\left(S_s, \boldsymbol{p}_s ; \btheta\right)} \right)\right| 
    \leq \bar{x} \epsilon \sum_{s = 1}^t n_s,
\end{align*}
where a random realization of \( \sum_{s=1}^t n_s \) needs to be bounded in probability.

Using the result from Chapter 3.5 of \cite{Pol15} and noting that \(\lambda(S_s, \bp_s ; \btheta) \leq \exp(\bar{x})\) for all \(\|\btheta^*\| \leq 1\), we have:
\begin{equation}\label{eq:concerntration_bd_Poi}
\begin{aligned}
    \Prob \left(\sum_{s = 1}^t {n}_s \geq \Lambda \exp(\bar{x}) t (1 + \delta) \right) 
    \leq \exp\left(-\frac{\delta^2 \Lambda \exp(\bar{x}) t}{2(1 + \delta)}\right).
\end{aligned}
\end{equation}

Set \(\delta = \frac{4\log(T)}{ \Lambda \exp(\bar{x}) t} + \sqrt{\frac{4\log(T)}{ \Lambda \exp(\bar{x}) t}}\). Substituting this into the bound, we obtain:
\begin{equation}\label{eq:concerntration_bd_Poi_delta}
\begin{aligned}
    \Prob \left(\sum_{s = 1}^t {n}_s \geq \Lambda \exp(\bar{x}) t \left(1 + \frac{4\log(T)}{ \Lambda \exp(\bar{x}) t} + \sqrt{\frac{4\log(T)}{ \Lambda \exp(\bar{x}) t}}\right)\right) \leq \frac{1}{T}.
\end{aligned}
\end{equation}

Thus, with probability \(1-T^{-1}\), we have:
\begin{equation}\label{eq:g_bar_error_uniform}
         \sum_{s = 1}^t\left| n_s \log\left( \frac{\lambda \left(S_s, \boldsymbol{p}_s ; \btheta'\right)}{\lambda\left(S_s, \boldsymbol{p}_s ; \btheta\right)}\right) \right|  
         \leq  \Lambda \bar{x}\exp(\bar{x}) t \left(1 + \frac{4\log(T)}{ \Lambda \exp(\bar{x}) t} + \sqrt{\frac{4\log(T)}{ \Lambda \exp(\bar{x}) t}}\right) \epsilon.
\end{equation}

\item \textbf{For the third part of inequality (\ref{ineq:bound_g_bar_to_g})}, it is straightforward to calculate:
\begin{equation}\label{eq:g_error_uniform}
    \Lambda  \sum_{s = 1}^t\left |\lambda \left(S_s, \boldsymbol{p}_s ; \btheta^*\right) \log\left(\frac{\lambda \left(S_s, \boldsymbol{p}_s ; \btheta'\right)}{\lambda\left(S_s, \boldsymbol{p}_s ; \btheta\right)}\right) \right| 
    \leq \Lambda \bar{x} \epsilon \exp\{\bar{x}\} t.
\end{equation}

Note that the upper bounds for the first, second, and third parts (see (\ref{eq:bound_g_bar_to_g_1}), (\ref{eq:g_bar_error_uniform}), and (\ref{eq:g_error_uniform})) all include \(\epsilon\). Carefully setting \(\epsilon = \log(1 + 1/\Lambda T)/\bar{x}\), we have \( \Lambda \bar{x} \epsilon t \leq t/T \) and \(\exp(\epsilon \bar{x}) - 1 = \frac{1}{\Lambda T}\). 

For \(\left\|\btheta' - \btheta^*\right\|_2 \leq 2 \tau\), let \(\btheta\) be the closest point in \(\mathcal{H}(\epsilon)\). By the definition of \(\mathcal{H}(\epsilon)\), we have \(\|\btheta - \btheta'\|_2 \leq \epsilon\). Combining inequalities (\ref{eq:bound_g_bar_to_g}), (\ref{eq:g_bar_error_uniform}), and (\ref{eq:g_error_uniform}) with Lemma \ref{lm:dif_g}, we obtain the following with probability \(1-2T^{-1}\):
\begin{align*}
    &\bar{G}_{t}^\Poi(\btheta') - G_{t}^\Poi(\btheta') \\
    \leq & \bar{G}_{t}^\Poi(\btheta) - G_{t}^\Poi(\btheta)
    + \sum_{s = 1}^t n_s\log\left( \frac{\lambda \left(S_s, \boldsymbol{p}_s ; \btheta'\right)}{\lambda\left(S_s, \boldsymbol{p}_s ; \btheta\right)} \right)  
    + \Lambda \sum_{s = 1}^t\left |\lambda \left(S_s, \boldsymbol{p}_s ; \btheta^*\right) \log\left(\frac{\lambda \left(S_s, \boldsymbol{p}_s ; \btheta'\right)}{\lambda\left(S_s, \boldsymbol{p}_s ; \btheta\right)}\right) \right| \\
    \leq & \frac{t\exp(\bar{x})}{T} \left(2 + \frac{4\log(T)}{ \Lambda \exp(\bar{x}) t} + \sqrt{\frac{4\log(T)}{ \Lambda \exp(\bar{x}) t}}\right)
    + 4\tau \bar{x} c_4 \big(\log T + \DimRank \log (6\tau \bar{x}(\Lambda T + 1))\big)\\
    &+\sqrt{\left(\left|G_{t}^\Poi(\btheta')\right| + \frac{2t}{T}\right)8 \big(\log T + \DimRank \log (6\tau \bar{x}(\Lambda T + 1))\big)}, 
    \quad \forall \btheta' \in \{\btheta : \|\btheta - \btheta^*\|_2 \leq 2 \tau\}.
\end{align*}

Thus, we complete the proof for the first part of Lemma \ref{lm:g_1_convergence_rate}.
\qed

\end{itemize}
\bigskip

\textbf{Proof of the Second Part of the Lemma \ref{lm:g_1_convergence_rate}}

Similar to the proof of the first part of Lemma \ref{lm:g_1_convergence_rate}, we set \(\delta = \frac{1}{T^2}\left(\frac{\epsilon}{6\tau}\right)^{\DimRank}\) in \Cref{eq:bound_g_bar_to_g_combined}. For \(\forall T_0 < t \leq T\) and \(\forall \btheta \in \mathcal{H}(\epsilon)\), we obtain:
\begin{equation}\label{eq:bound_g_bar_to_g_2}
\bar{G}_{t}^\Poi(\btheta) - G_{t}^\Poi(\btheta)
\leq 4\tau \bar{x} c_4\big(2\log T + \DimRank \log (6\tau / \epsilon)\big) 
+ \sqrt{\left|G_{t}^\Poi(\btheta)\right| 8 \big(2\log T + \DimRank \log (6\tau / \epsilon)\big)},
\end{equation}
with probability \(1 - T^{-1}\).

Following a similar procedure to the first part of the proof for Lemma \ref{lm:g_1_convergence_rate}, we extend this result to the general case \(\btheta' \in \{\btheta : \|\btheta - \btheta^*\|_2 \leq 2\tau\}\). Specifically:
\begin{equation}\label{eq:bound_g_bar_to_g_1_all_2}
\bar{G}_{t}^\Poi(\btheta') - G_{t}^\Poi(\btheta') 
\leq 4\tau \bar{x} c_4\big(2\log T + \DimRank \log (6\tau / \epsilon)\big) 
+ \sqrt{\left|G_{t}^\Poi(\btheta')\right| 8 \big(2\log T + \DimRank \log (6\tau / \epsilon)\big)},
\end{equation}
holds uniformly for \(\forall \btheta' \in \{\btheta : \|\btheta - \btheta^*\|_2 \leq 2 \tau\}\) and \(\forall T_0 < t \leq T\), with probability \(1 - T^{-1}\).

To achieve these uniform results for all \(t\), we use the same triangular inequality (\ref{ineq:bound_g_bar_to_g}) and take an upper bound for each term.

\begin{itemize}
    \item \textbf{For the first part of inequality (\ref{ineq:bound_g_bar_to_g})}, we directly use inequality (\ref{eq:bound_g_bar_to_g_1_all_2}) along with Lemma \ref{lm:dif_g} to bound this term.

    \item \textbf{For the second part of inequality (\ref{ineq:bound_g_bar_to_g})}, note that for arbitrary \(\|\btheta' - \btheta\|_2 \leq \epsilon\):
    \begin{align}
        \sum_{s = 1}^t\left | n_s \log\left(\frac{\lambda \left(S_s, \boldsymbol{p}_s ; \btheta'\right)}{\lambda\left(S_s, \boldsymbol{p}_s ; \btheta\right)} \right)\right| \leq \bar{x} \epsilon \sum_{s = 1}^t n_s.\label{eq:g_bar_theta_to_theta'}
    \end{align}

    Similar to \Cref{eq:concerntration_bd_Poi}, we obtain the following concentration inequality:
    \begin{equation}\label{eq:concerntration_bd_Poi_uni}
        \begin{aligned}
            \Prob \left( \sum_{s = 1}^t n_s \geq \Lambda \exp(\bar{x}) t \left( 1 + \frac{8 \log(T)}{ \Lambda \exp(\bar{x}) t} + \sqrt{\frac{8 \log(T)}{ \Lambda \exp(\bar{x}) t}} \right) \right) \leq \frac{1}{T^2}.
        \end{aligned}
    \end{equation}
    This bound holds for each \(t\) such that \( t \in \left\{ T_0 + 1, \dots, T \right\} \).

Thus, we have the following inequality,
\begin{equation}\label{eq:bound_n_t_2}
    \Prob \left( \sum_{s = 1}^t n_s \geq \Lambda \exp(\bar{x}) t \left( 1 + \frac{8 \log(T)}{ \Lambda \exp(\bar{x}) t} + \sqrt{\frac{8 \log(T)}{ \Lambda \exp(\bar{x}) t}}\right) , \forall T_0 < t \leq T\right) \leq \frac{1}{T}.
\end{equation}

\item \textbf{For the third part of the inequality (\ref{ineq:bound_g_bar_to_g})}. Note that, have the inequality,
\begin{equation} \label{eq:g_theta_to_theta'}
    \Lambda \sum_{s = 1}^t\left |\lambda \left(S_s, \boldsymbol{p}_s ; \theta^*\right) \log \frac{\lambda \left(S_s, \boldsymbol{p}_s ; \btheta'\right)}{\lambda\left(S_s, \boldsymbol{p}_s ; \btheta\right)} \right| \leq \Lambda \bar{x} \epsilon \exp\{\bar{x}\} t.
\end{equation}

\end{itemize}

Similarly, note that the upper bound of the first, second, and third parts (see (\ref{eq:bound_g_bar_to_g_1_all_2}), (\ref{eq:g_theta_to_theta'}), and (\ref{eq:bound_n_t_2})) includes \(\epsilon\). We carefully set \(\epsilon = \log(1 + 1/\Lambda T)/\bar{x}\), and combining inequalities \ref{eq:bound_g_bar_to_g_1_all_2}, (\ref{eq:g_theta_to_theta'}), and (\ref{eq:bound_n_t_2}), we have, with probability \(1 - 2T^{-1}\), that:
\begin{align*}
    &\bar{G}_{t}^\Poi(\btheta') - G_{t}^\Poi(\btheta')\\
    \leq& \bar{G}_{t}^\Poi(\btheta) - G_{t}^\Poi(\btheta) + \sum_{s = 1}^t\left| n_s \log\left(\frac{\lambda \left(S_s, \boldsymbol{p}_s ; \btheta'\right)}{\lambda\left(S_s, \boldsymbol{p}_s ; \btheta\right)}\right) \right| + \Lambda \sum_{s = 1}^t\left|\lambda \left(S_s, \boldsymbol{p}_s ; \btheta^*\right) \log\left(\frac{\lambda \left(S_s, \boldsymbol{p}_s ; \btheta'\right)}{\lambda\left(S_s, \boldsymbol{p}_s ; \btheta\right)}\right) \right| \\
   \leq& \frac{\exp(\bar{x})t}{T}\left(2 + \frac{8\log(T)}{ \Lambda \exp(\bar{x}) t} + \sqrt{\frac{8\log(T)}{ \Lambda \exp(\bar{x}) t}}\right) 
   +4\tau \bar{x} c_4 \big(2\log T + \DimRank \log (3\tau \bar{x}(\Lambda T + 1))\big)\\
    &+\sqrt{\left(\left|G_{t}^\Poi(\btheta')\right| + \frac{2t}{T}\right) 8 \big(2\log T + \DimRank \log (3\tau \bar{x}(\Lambda T + 1))\big)}\\
    \leq& 2\exp(\bar{x}) + \sqrt{\frac{8\exp(\bar{x}) \log T}{T\Lambda}} + \frac{8\log T}{T\Lambda} + 4\tau \bar{x} c_4 \big(2\log T + \DimRank \log (6\tau \bar{x}(\Lambda T + 1))\big)\\
    &+\sqrt{\left(\left|G_{t}^\Poi(\btheta')\right| + 2 \right) 8 \big(2\log T + \DimRank \log (6\tau \bar{x}(\Lambda T + 1))\big)}, \quad \forall T_0 < t \leq T.
\end{align*}

Thus, we conclude the proof for part 2 of Lemma \ref{lm:g_1_convergence_rate}.
\qed

\subsection{Proofs for Supporting Lemmas \ref{lm:poi_k_bound}, \ref{lm:V_bound} and \ref{lm:dif_g}}\label{appendix:proof_of_supporting_lemma_1}
We now provide the proofs for the lemmas:\Cref{lm:poi_k_bound,lm:V_bound,lm:dif_g}.

\subsection{Proof of Lemma \ref{lm:poi_k_bound}}\label{appendix:proof_of_poi_k_bound}

In this subsection, we provide the proof for Lemma \ref{lm:poi_k_bound} under the case \(k > 2\), which is the primary focus. From \cite{ahle2022sharp}, the following bound holds:
\[
\E[n_s^k \mid H_s] \leq \left(\frac{k}{\log(1 + k/(\Lambda \lambda(S_s, \bp_s; \btheta^*)))}\right)^k.
\]
Since \(n_s \geq 0\), this implies:
\[
\E\big[|n_s - \Lambda \lambda(S_s, \bp_s; \btheta^*)|^k \mid H_s\big] \leq \E[n_s^k \mid H_s]  
\leq \left(\frac{k}{\log(1 + k / (\Lambda \lambda(S_s, \bp_s; \btheta^*)))}\right)^k.
\]
As \(\lambda(S_s, \bp_s; \btheta^*) \leq \exp(\bar{x})\), we obtain:
\begin{equation}
    \E\big[|n_s - \Lambda \lambda(S_s, \bp_s; \btheta^*)|^k \mid H_s\big] 
\leq \left(\frac{k}{\log(1 + k / (\Lambda \exp(\bar{x})))}\right)^k.
\end{equation}

Using the definition of the constant \(c_4\) (see \eqref{eq:df_c_2}), we derive the following inequality for all \(k > 2\):
\begin{equation}\label{eq:ineq2_mod}
\left(\frac{k}{\log (1+ k / (\Lambda \exp(\bar{x})))}\right)^k \leq   \frac{\sqrt{2\pi k}}{2} \left(\frac{k}{e}\right)^k \Lambda \lambda(S_s, \bp_s; \btheta^*) \left(\frac{c_4}{2}\right)^{k - 2}.
\end{equation}

Using Stirling’s approximation, we further refine the bound:
\begin{equation}
  \frac{\sqrt{2\pi k}}{2} \left(\frac{k}{e}\right)^k \Lambda \lambda(S_s, \bp_s; \btheta^*) \left(\frac{c_4}{2}\right)^{k - 2} \leq \frac{k!}{2} \Lambda \lambda(S_s, \bp_s; \btheta^*) \left(\frac{c_4}{2}\right)^{k - 2}.
\end{equation}

Combining these results, we conclude:
\[
\E\big[|n_s - \Lambda \lambda(S_s, \bp_s; \btheta^*)|^k \mid H_s\big] 
\leq \frac{k!}{2} \Lambda \lambda(S_s, \bp_s; \btheta^*) \left(\frac{c_4}{2}\right)^{k - 2}.
\]

Since \(\|\btheta - \btheta^*\|_2 \leq 2\tau\), it follows that
\(
\left|\log\frac{\lambda(S_s, \bp_s; \btheta^*)}{\lambda(S_s, \bp_s; \btheta)}\right|^{k-2} \leq (2\tau \bar{x})^{k-2}.
\)
Thus, we have:
\[
\E\bigg[\left| (n_s - \Lambda \lambda(S_s, \bp_s; \btheta^*))\log\frac{\lambda(S_s, \bp_s; \btheta^*)}{\lambda(S_s, \bp_s; \btheta)} \right|^k \,\bigg| H_s\bigg]
\leq \frac{k!}{2} \Lambda \lambda(S_s, \bp_s; \btheta^*) \log^2\frac{\lambda(S_s, \bp_s; \btheta^*)}{\lambda(S_s, \bp_s; \btheta)} \left(\frac{c_4}{2}\right)^{k - 2} (2\tau \bar{x})^{k - 2}.
\]

By substituting the definition of \(\mathcal{V}_s^{\Poi}\) from \eqref{eq:d_f_v_s}, the inequality above matches \eqref{eq:key_control_ineq}:
\[
\E\bigg[\left| (n_s - \Lambda \lambda(S_s, \bp_s; \btheta^*))\log\frac{\lambda(S_s, \bp_s; \btheta^*)}{\lambda(S_s, \bp_s; \btheta)} \right|^k \,\bigg| H_s\bigg]
\leq \frac{k!}{2}\,\mathcal{V}_s^{\Poi} (\tau \bar{x} c_4)^{k-2}.
\]
\qed

\subsection{Proof of Lemma \ref{lm:V_bound}}


We begin by considering the expression for \(\mathcal{V}_s^{\Poi}\). Starting from its definition, we have:
\begin{align*}
    \mathcal{V}_s^{\Poi} = &  \mathbb{E}\bigg[n_s^2\big(\log \lambda(S_s, \bp_s;\btheta) - \log \lambda(S_s, \bp_s;\btheta^*)\big)^2 \mid H_s\bigg] - \mathbb{E} \bigg[ n_s\big(\log \lambda(S_s, \bp_s;\btheta) - \log \lambda(S_s, \bp_s;\btheta^*)\big) \mid H_s\bigg]^2.
\end{align*}

Using the definition of \( \lambda(S_s, \bp_s;\btheta) \) and substituting the terms, we get:
\begin{align*}
    \mathcal{V}_s^{\Poi} = & (\Lambda^2 \lambda(S_s, \bp_s;\btheta^*)^2 + \Lambda \lambda(S_s, \bp_s;\btheta^*)) \log^2 y - \Lambda^2 \lambda(S_s, \bp_s;\btheta^*)^2 \log^2 y \\
    = & \Lambda \lambda(S_s, \bp_s;\btheta^*) \log^2 y.
\end{align*}

Now, let's consider \( g_{s}^\Poi(\btheta) \), which can be expressed as:
\begin{align*}
    g_{s}^\Poi(\btheta) &= \Lambda \lambda(S_s, \bp_s; \btheta^*) - \Lambda \lambda(S_s, \bp_s; \btheta) - \Lambda \lambda(S_s, \bp_s; \btheta^*) \big(\log \lambda(S_s, \bp_s; \btheta^*) - \log \lambda(S_s, \bp_s; \btheta)\big) \\
    &= -\Lambda \lambda(S_s, \bp_s; \btheta^*) (y - 1 - \log y),
\end{align*}
where \(y = \dfrac{\lambda(S_s, \bp_s; \btheta)}{\lambda(S_s, \bp_s; \btheta^*)}\).

Next, we use the inequality \(
    \log^2 y \leq 2(y - 1 - \log y),
\)
which directly leads to:
\begin{equation}
\begin{aligned}
    \mathcal{V}_s^{\Poi} &\leq \Lambda \lambda(S_s, \bp_s; \btheta^*) \log^2 y \leq 2\big(-g_{s}^\Poi(\btheta)\big).
\end{aligned}
\end{equation}

Finally, summing over all periods, we conclude
\(
    \mathcal{SV}_t^{\Poi} \leq 2\big(-G_{t}^\Poi(\btheta)\big).
\)
\qed

\subsection{Proof of Lemma \ref{lm:dif_g}}\label{appendix:proof_dif_g}

We begin by considering the difference between \( g_{s}^\Poi(\btheta) \) and \( g_{s}^\Poi(\btheta') \):
\begin{equation*}
    \begin{aligned}
        \left| g_{s}^\Poi(\btheta) - g_{s}^\Poi(\btheta') \right| 
        &= \Lambda \left| \lambda(S_s, \boldsymbol{p}_s; \btheta') - \lambda(S_s, \boldsymbol{p}_s; \btheta) + \lambda(S_s, \boldsymbol{p}_s; \btheta^*) \log \frac{\lambda(S_s, \boldsymbol{p}_s; \btheta)}{\lambda(S_s, \boldsymbol{p}_s; \btheta')} \right| \\
        &\leq \Lambda \left| \lambda(S_s, \boldsymbol{p}_s; \btheta') - \lambda(S_s, \boldsymbol{p}_s; \btheta) \right| + \Lambda \left| \lambda(S_s, \boldsymbol{p}_s; \btheta^*) \log \frac{\lambda(S_s, \boldsymbol{p}_s; \btheta)}{\lambda(S_s, \boldsymbol{p}_s; \btheta')} \right| \\
        &\leq \Lambda \exp(\bar{x}) (\exp(\epsilon \bar{x}) - 1) + \Lambda \exp(\bar{x}) \epsilon \bar{x} = \Lambda \exp(\bar{x}) \left( \exp(\epsilon \bar{x}) - 1 + \epsilon \bar{x} \right).
    \end{aligned}
\end{equation*}

Summing the inequalities, we have
\(
        \left| G_{t}^\Poi(\btheta) - G_{t}^\Poi(\btheta') \right| 
        \leq \Lambda \exp(\bar{x}) \left( \exp(\epsilon \bar{x}) - 1 + \epsilon \bar{x} \right) t.\)
\qed

\section{Proof of Lemma \ref{lm:theta_error}}\label{appendix::proof_of_theta_error}

It is easy to calculate that 
\begin{align*}
    \nabla_{\btheta} G_{t}^\Poi (\btheta) &= \sum_{s=1}^t \big(\lambda(S_s, \bp_s; \btheta^*) - \lambda(S_s, \bp_s; \btheta)\big) \bx_s, \quad
    \nabla^2_{\btheta} G_{t}^\Poi (\btheta) = \sum_{s=1}^t \lambda(S_s, \bp_s; \btheta) \bx_s \bx_s' = -I_{t}^\Poi(\btheta).
\end{align*}
Note that \(G_{t}^\Poi (\btheta^*) = 0\), \(\nabla_{\btheta} G_{t}^\Poi (\btheta^*) = 0\), and \(\nabla^2_{\btheta} G_{t}^\Poi (\btheta^*) = -I_{t}^\Poi(\btheta^*)\). Using Taylor expansion at the point \(\btheta^*\) with Lagrangian remainder, there exists \(\bar{\btheta}_t = \alpha \btheta^* + (1-\alpha) \widehat{\btheta}_t\) for some \(\alpha \in (0,1)\) such that
\begin{equation}\label{eq:G_equal_theta_I_theta}
G_{t}^\Poi (\widehat{\btheta}_t) = -\frac{1}{2}\left(\widehat{\btheta}_t - \btheta^*\right)^{\top} I_{t}^\Poi\left(\bar{\btheta}_t\right)\left(\widehat{\btheta}_t - \btheta^*\right).
\end{equation}
From \(\left\|\bar{\btheta}_t - \btheta^*\right\|_2 \leq 2 \tau_\btheta\) and \(\|\bx'(\bar{\btheta}_t - \btheta^*)\|_2 \leq \|\bx\|_2 \|\bar{\btheta}_t - \btheta^*\|_2 \leq 2\tau_\btheta \bar{x}\), we obtain \( -M_{t}^\Poi(\bar{\btheta}_t) \preceq -\exp\{-2\tau_\btheta \bar{x}\}M_{t}^\Poi(\btheta^*)\) which indicates\( -I_{t}^\Poi(\bar{\btheta}_t) \preceq -\exp\{-2\tau_\btheta \bar{x}\}I_{t}^\Poi(\btheta^*)\) using the definition of \(I_{t}^\Poi(\cdot) \). So we have
\begin{equation}\label{eq:G_leq_theta_I_theta}
G_{t}^\Poi (\widehat{\btheta}_t) \leq -\frac{1}{2}\exp\{-2\tau_\btheta \bar{x}\}\left(\widehat{\btheta}_t - \btheta^*\right)^{\top} I_{t}^\Poi\left(\btheta^*\right)\left(\widehat{\btheta}_t - \btheta^*\right).
\end{equation}

Using inequality (\ref{eq:g_1_convergence_rate_1}) of \Cref{lm:g_1_convergence_rate}, we have an uniform bound on the difference between \(\bar{G}_{t}^\Poi(\btheta)\) and \(G_{t}^\Poi(\btheta)\) for all \(\left\|\btheta - \btheta^*\right\| \leq 2 \tau_\btheta\) as follows:
\begin{equation}\label{eq:g_1_convergence_rate_3}
\begin{aligned}
    \bar{G}_{t}^\Poi(\widehat{\btheta}_t) - G_{t}^\Poi(\widehat{\btheta}_t)
    \leq & 2\exp(\bar{x}) + \sqrt{\frac{8\exp(\bar{x}) \log T}{T\Lambda}} + \frac{8\log T}{T\Lambda} + 4\tau_\btheta \bar{x} c_4 \big(2\log T + d \log (6\tau_\btheta \bar{x}(\Lambda T + 1))\big)\\
    &+\sqrt{\left(\left|G_{t}^\Poi(\widehat{\btheta}_t)\right| + 2 \right) 8 \big(2\log T + d \log (6\tau_\btheta \bar{x}(\Lambda T + 1))\big)}, \quad \forall T_0 < t \leq T.
\end{aligned}
\end{equation}
This holds uniformly with probability \(1 - 2 T^{-1}\) for \(t \in \left\{T_0+1, \ldots, T\right\}\).

By \Cref{eq:g_1_convergence_rate_3} and the fact that \(G_{t}^\Poi\left(\widehat{\btheta}_t\right) \leq 0 \leq \bar{G}_{t}^\Poi\left(\widehat{\btheta}_t\right)\), we have
\begin{equation}
    \begin{aligned}
        \left|G_{t}^\Poi\left(\widehat{\btheta}_t\right)\right| \leq & 2\exp(\bar{x}) + \sqrt{\frac{8\exp(\bar{x}) \log T}{T\Lambda}} + \frac{8\log T}{T\Lambda} + 4\tau_\btheta \bar{x} c_4 \big(2\log T + d \log (6\tau_\btheta \bar{x}(\Lambda T + 1))\big) \\
    &+\sqrt{\left(\left|G_{t}^\Poi(\widehat{\btheta}_t)\right| + 2 \right) 8 \big(2\log T + d \log (6\tau_\btheta \bar{x}(\Lambda T + 1))\big)}, \quad \forall T_0 < t \leq T.
    \end{aligned}
\end{equation}

Using \Cref{lm:ineq}, we have 
\begin{equation*}
    \begin{aligned}
        \left|G_{t}^\Poi\left(\widehat{\btheta}_t\right)\right| \leq & 4\exp(\bar{x}) + 2\sqrt{\frac{8\exp(\bar{x}) \log T}{T\Lambda}} + \frac{16\log T}{T\Lambda} + 8\tau_\btheta \bar{x} c_4 \big(2\log T + d \log (6\tau_\btheta \bar{x}(\Lambda T + 1))\big) + 2 \\
    &+ 8 \big(2\log T + d \log (6\tau_\btheta \bar{x}(\Lambda T + 1))\big), \quad \forall T_0 < t \leq T.
    \end{aligned}
\end{equation*}

Notice that \(G_{t}^\Poi\left(\widehat{\btheta}_t\right) \leq 0\) and combining \Cref{eq:G_leq_theta_I_theta}, we have
\begin{equation}
    \begin{aligned}
       \left(\widehat{\btheta}_t - \btheta^*\right)^{\top} I_{t}^\Poi\left(\btheta^*\right)\left(\widehat{\btheta}_t - \btheta^*\right) 
       \leq & 2\exp(2\tau_\btheta \bar{x})\left|G_{t}^\Poi\left(\widehat{\btheta}_t\right)\right| \\
       \leq & 8\exp(2\tau_\btheta \bar{x})\left[\exp(\bar{x}) + \sqrt{\frac{2\exp(\bar{x}) \log T}{T\Lambda}} + \frac{4\log T}{T\Lambda} + \frac{1}{2}\right] \\
            & + 16 \exp(2\tau_\btheta \bar{x}) (\tau_\btheta \bar{x} c_4 + 1)\big(2\log T + d \log (6\tau_\btheta \bar{x}(\Lambda T + 1))\big).
    \end{aligned}
\end{equation}

Similarly, 
\begin{multline*}
      \left(
        \widehat{\btheta}_t - \btheta^*\right)^{\top} I_{t}^\Poi\left(\widehat{\btheta}_t\right)\left(\widehat{\btheta}_t - \btheta^*
      \right) 
       \leq 
       8\exp(2\tau_\btheta \bar{x})\left[\exp(\bar{x}) + \sqrt{\frac{2\exp(\bar{x}) \log T}{T\Lambda}} + \frac{4\log T}{T\Lambda} + \frac{1}{2}\right] \\
             + 16 \exp(2\tau_\btheta \bar{x}) (\tau_\btheta \bar{x} c_4 + 1)\big(2\log T + d \log (6\tau_\btheta \bar{x}(\Lambda T + 1))\big).
\end{multline*}

\section{Proof of Lemma \ref{lemma:revenue_estimate}}\label{appendix:proof_of_sum_lambda_error}

Now, let us turn to the proof for the Lemma \ref{lemma:revenue_estimate} which states the estimation error on the Poisson arrival rate. 
Unless stated otherwise, all statements are conditioned on the success event in \Cref{lm:theta_error}.
On this event, the following inequalities hold uniformly for all \(t \in \{T_0,\ldots,T-1\}\):
\begin{equation}
    \begin{aligned}
         \left(\widehat{\btheta}_t - \btheta^*\right)^{\top} I_{t}^\Poi\left(\btheta^*\right)\left(\widehat{\btheta}_t - \btheta^*\right)  \leq \omega_\btheta, \;
         \left(\widehat{\btheta}_t - \btheta^*\right)^{\top}  I_{t}^\Poi\left(\widehat{\btheta}_t\right)\left(\widehat{\btheta}_t - \btheta^*\right)  \leq \omega_\btheta, \;
         \|\widehat{\btheta}_t - \btheta^*\| \leq 2\tau_\theta,
    \end{aligned}
\end{equation}
We start by noting that the gradient of \(\lambda(S, \bp; \btheta)\) with respect to \(\btheta\) is given by:
\begin{equation}\label{eq:nabla_lambda}
    \nabla_{\btheta} \lambda(S, \bp; \btheta) = \lambda(S, \bp; \btheta) \bx^\top(S, \bp).
\end{equation}

Next, by applying the mean value theorem, we know that there exists some \(\widetilde{\btheta}_{t-1} = \btheta^* + \xi\left(\widehat{\btheta}_{t-1} - \btheta^*\right)\) for some \(\xi \in (0,1)\), such that:
\begin{equation}\label{eq:lambda_deta}
  \begin{aligned}
\left|\lambda(S, \bp; \widehat{\btheta}_{t - 1})-\lambda(S, \bp;\btheta^*)\right| & =\left|\left\langle\nabla_{\btheta} \lambda(S, \bp; \widetilde{\btheta}), \widehat{\btheta}_{t-1}-\btheta^*\right\rangle\right| \\
& =\sqrt{\left(\widehat{\btheta}_{t-1}-\btheta^*\right)^{\top}\left[\nabla_{\btheta} \lambda(S, \bp; \widetilde{\btheta})\nabla_{\btheta}\lambda(S, \bp; \widetilde{\btheta})^{\top}\right]\left(\widehat{\btheta}_{t-1}-\btheta^*\right)} \\
&= \sqrt{\frac{\lambda(S, \bp; \widetilde{\btheta}) }{\Lambda} \left(\widehat{\btheta}_{t-1}-\btheta^*\right)^{\top}M_{t}^\Poi(\widetilde{\btheta} \mid S, \bp)\left(\widehat{\btheta}_{t-1}-\btheta^*\right)}.
\end{aligned}  
\end{equation}

Notice that \(M_{t}^\Poi(\tilde{\btheta} \mid S, \bp) = \Lambda \exp(\bx^\top \tilde{\btheta}) \bx(S, \bp) \bx^\top(S, \bp) \preceq \exp\{2\tau_\btheta \bar{x}\}M_{t}^\Poi(\btheta^* \mid S, \bp)\) and \( \lambda(S, \bp; \widetilde{\btheta}) \leq \exp(\bar{x})\). Combining with \Cref{eq:lambda_deta} and \Cref{lm:theta_error}, we obtain:
\begin{align*}
&\Lambda \left|\lambda(S, \bp; \widehat{\btheta}_{t - 1})-\lambda(S, \bp;\btheta^*)\right| \\
&= 
\sqrt{
    \Lambda 
    \lambda(S, \bp; \widetilde{\btheta}) 
    \left(
    \widehat{\btheta}_{t-1}-\btheta^*
    \right)^{\top}
    I_{t-1}^{\Poi \, \frac{1}{2}}\left(\btheta^*\right)
    I_{t-1}^{\Poi \, -\frac{1}{2}}
    \left(\btheta^*\right)
    M_{t}^\Poi(\widetilde{\btheta} \mid S, \bp)
    I_{t-1}^{\Poi \, -\frac{1}{2}}\left(\btheta^*\right)
    I_{t-1}^{\Poi \, \frac{1}{2}}\left(\btheta^*\right)
    \left(\widehat{\btheta}_{t-1}-\btheta^*\right)
} \\
&\leq \sqrt{\omega_\btheta \Lambda  \lambda(S, \bp; \widetilde{\btheta}) \left\|I_{t-1}^\Poi\left(\btheta^*\right)^{-1 / 2} M_{t}^\Poi\left(\widetilde{\btheta}_{t-1} \mid S, \bp\right) I_{t-1}^\Poi\left(\btheta^*\right)^{-1 / 2}\right\|_{\mathrm{op}}}\\
&\leq   \sqrt{\omega_\btheta  \Lambda \exp ((2\tau_\btheta +1) \bar{x}) \left\|I_{t-1}^\Poi\left(\btheta^*\right)^{-1 / 2} M_{t}^\Poi\left(\btheta^*\mid S, \bp\right) I_{t-1}^\Poi\left(\btheta^*\right)^{-1 / 2}\right\|_{\mathrm{op}}}.
\end{align*}

Similarly, we have 
{\scriptsize
\begin{equation}
\begin{aligned}
&\Lambda \left|\lambda(S, \bp; \widehat{\btheta}_{t - 1})-\lambda(S, \bp;\btheta^*)\right| \leq  
\sqrt{
    \omega_\btheta
    \Lambda
    \exp ((2\tau_\btheta +1) \bar{x}) 
    \left\|
        I_{t-1}^{\Poi \, -\frac{1}{2}}
        \left(\widehat{\btheta}_{t - 1}\right)
        M_{t}^\Poi
        \left(\widehat{\btheta}_{t - 1}\mid S, \bp\right)
        I_{t-1}^{\Poi \, -\frac{1}{2}}
        \left(\widehat{\btheta}_{t - 1}\right)]
    \right\|_{\mathrm{op}}
}.
\end{aligned}
\end{equation}
}

Notice that \(M_{t}^\Poi(\widehat{\btheta} \mid S, \bp)  \preceq \exp\{2\tau_\btheta \bar{x}\}M_{t}^\Poi(\btheta^* \mid S, \bp)\) and  \( I_{t}^\Poi(\btheta^*) \preceq  \exp\{2\tau_\btheta \bar{x}\} I_{t}^\Poi(\widehat{\btheta} ) \),
\begin{equation*}
    \opnorm{
        I_{t-1}^{\Poi -\frac{1}{2}}(\widehat{\btheta}_{t - 1})
        {M}_t^{\Poi} (\widehat{\btheta}_{t - 1}|S, \bp)
        I_{t-1}^{\Poi -\frac{1}{2}}(\widehat{\btheta}_{t - 1})
    } \leq
    \exp\{4\tau_\btheta \bar{x}\}
    \opnorm{{I}_{t-1}^{\Poi -\frac{1}{2}}(\btheta^*) 
    M_t^{\Poi} (\btheta^*|S, \bp)
    I_{t-1}^{\Poi -\frac{1}{2}}(\btheta^*)}.
\end{equation*}

Thus, we proved the \Cref{lemma:revenue_estimate}.
\qed


\section{Proof of Lemma \ref{lm:sum_lambda_error}}\label{appendix::proof_of_revenue_estimate}
This subsection provides a detailed proof of the Lemma \ref{lm:sum_lambda_error}.
\begin{itemize}
    \item \textbf{Proof of the first inequality of Lemma \ref{lm:sum_lambda_error}}.
    Denote \(\widehat{A}_t := {I}_{t-1}^{\Poi -1 / 2}(\btheta^*) {M}_{t}^\Poi(\btheta^* \mid S_t) {I}_{t-1}^{\Poi -1 / 2}(\btheta^*)\) as a \(d\)-dimensional positive semi-definite matrix with eigenvalues sorted as \(\sigma_1\left(\widehat{A}_t\right) \geq \ldots \geq \sigma_d\left(\widehat{A}_t\right) \geq 0\). By applying spectral properties, we have:
    \begin{equation*}
    \begin{aligned}
        &\sum_{t=T_0+1}^T \min \left\{\frac{\Lambda^2(\exp(\bar{x}) - \exp(-\bar{x}))^2}{\omega_\btheta \Lambda  \exp ((2\tau_\btheta +1) \bar{x})}, \left\|{I}_{t-1}^{\Poi -1 / 2}(\btheta^*) M_{t}^\Poi(\btheta^* \mid S_t) {I}_{t-1}^{\Poi -1 / 2}(\btheta^*)\right\|_{\text{op}}\right\} \\
        =&\sum_{t=T_0+1}^T \min \left\{\frac{\Lambda^2(\exp(\bar{x}) - \exp(-\bar{x}))^2}{\omega_\btheta \Lambda  \exp ((2\tau_\btheta +1) \bar{x})}, \sigma_1\left(\widehat{A}_t\right)\right\} \\
         \overset{\text{(a)}}{\leq} &\sum_{t=T_0+1}^T \frac{c_6}{\omega_\btheta \Lambda  \exp ((2\tau_\btheta +1) \bar{x})} \log \left(1+\min \left\{\frac{\Lambda^2(\exp(\bar{x}) - \exp(-\bar{x}))^2}{\omega_\btheta \Lambda  \exp ((2\tau_\btheta +1) \bar{x})}, \sigma_1\left(\widehat{A}_t\right)\right\}\right)\\
        \leq &\sum_{t=T_0+1}^T \frac{c_6}{\omega_\btheta \Lambda  \exp ((2\tau_\btheta +1) \bar{x})} \log \left(1+\sigma_1\left(\widehat{A}_t\right)\right)
    \end{aligned}
    \end{equation*}
    where \(c_6 = \frac{\Lambda^2(\exp(\bar{x}) - \exp(-\bar{x}))^2}{\log(1 + \frac{\Lambda^2(\exp(\bar{x}) - \exp(-\bar{x}))^2}{\omega_\btheta \Lambda  \exp ((2\tau_\btheta +1) \bar{x}) })},\)
    which is defined in \Cref{eq:df_c_6} and independent of time t. Here, the  inequality  \((a)\) can be obtained using an inequality that \(y \leq \dfrac{\log(1 + y)c}{\log(1 + c)}\) for \(0 \leq y \leq c\).
    
    Next, observe that:
    \begin{equation*}
    {I}_{t}^\Poi\left(\btheta^*\right) = {I}_{t-1}^\Poi\left(\btheta^*\right) + {M}_{t}^\Poi\left(\btheta^* \mid S_t\right) = {I}_{t-1}^\Poi\left(\btheta^*\right)^{1 / 2} \left[{I}_{d \times d} + \widehat{A}_t\right] {I}_{t-1}^\Poi\left(\btheta^*\right)^{1 / 2},
    \end{equation*}
    and therefore \({I}_{t}^\Poi\left(\btheta^*\right) = \log \operatorname{det} {I}_{t-1}^\Poi\left(\btheta^*\right) + \sum_{j=1}^d \log \left(1 + \sigma_j\left(\widehat{A}_t\right)\right).
\)

    By comparing the last two inequalities, we obtain:
    {\scriptsize
    \begin{equation*}
    \begin{aligned}
    &\sum_{t=T_0+1}^T \min \left\{\frac{\Lambda^2(\exp(\bar{x}) - \exp(-\bar{x}))^2}{\omega_\btheta \exp(2\tau_\btheta \bar{x})}, 
    \left\|{I}_{t-1}^{\Poi -1 / 2}\left(\btheta^*\right) 
    {M}_{t}^\Poi\left(\btheta^* \mid S_t\right) 
    {I}_{t-1}^{\Poi -1 / 2}
    \left(\btheta^*\right)
    \right\|_{\text{op}}\right\} 
    \leq \frac{c_6}{\omega_\btheta \Lambda  \exp ((2\tau_\btheta +1) \bar{x})}  \log \frac{\operatorname{det} {I}_{T}^\Poi\left(\btheta^*\right)}{\operatorname{det} {I}_{T_0}^\Poi\left(\btheta^*\right)},
    \end{aligned}
    \end{equation*}
    }
    which completes the proof of the first inequality.

    \item \textbf{Proof of the second inequality of Lemma \ref{lm:sum_lambda_error}}

Note that
    $\sigma_{\min}\left(\widehat{I}_{T_0}^\Poi\left(\btheta^*\right)\right) \geq  \Lambda \exp(-\bar{x}) T_0 \sigma_1$
and
\(
        \operatorname{tr}({I}_{T}^\Poi\left(\btheta^*\right)) \leq \sum_{s=1}^T \Lambda \exp(\bar{x}) \operatorname{tr}(\bx_s \bx_s^{\top}) 
        \leq \Lambda \exp(\bar{x}) \bar{x}^2 T.
\)
Therefore, we can bound
\begin{equation*}
    \begin{aligned}
        \log \left(\operatorname{det} {I}_{T}^\Poi\left(\btheta^*\right)\right) \leq \DimRank \log \frac{\operatorname{tr}({I}_{T}^\Poi\left(\btheta^*\right))}{\DimRank} \leq \DimRank \log \frac{\Lambda \exp(\bar{x}) \bar{x}^2 T}{\DimRank}.
    \end{aligned}
\end{equation*}

As a result, we obtain:
\begin{equation}
     \log 
     \frac{\operatorname{det} {I}_{T}^\Poi\left(\btheta^*\right)}{\operatorname{det} {I}_{T_0}^\Poi\left(\btheta^*\right)}
     \leq 
     \DimRank 
     \log \frac{\Lambda \exp(\bar{x}) \bar{x}^2 T}{\DimRank} + \bar{x} 
     - \log (\Lambda T_0 \sigma_1)
     =
     \DimRank \log \frac{ \bar{x}^2 T}{\DimRank} + (\DimRank + 1) \bar{x} - \log (T_0 \sigma_1),
\end{equation}
which completes the proof of the second inequality.

\end{itemize}
\qed

\section{Proof of Lemma \ref{lm:bv_initial_error}}\label{appendix::proof_of_bv_initial_error}
This section provides proof of \Cref{lm:bv_initial_error}. To facilitate the proof, we first present \Cref{lm:g_2_convergence_rate} and then provide the proof for \Cref{lm:bv_initial_error}. We leave the proof for the \Cref{lm:g_2_convergence_rate} to the end of this subsection. 

\begin{lemma}\label{lm:g_2_convergence_rate}
\begin{enumerate}
    \item With probability as least \(1 - 2 T^{-1}\), the following  holds for all \(\bv\) satisfying \(\left\|\bv - \bv^*\right\|_2 \leq 2 \tau\):
    \begin{equation}\label{eq:g_2_convergence_rate_1}
\begin{aligned}
    \bar{G}_{T_0}^\MNL(\bv) - G_{T_0}^\MNL(\bv)
    \leq & \frac{\exp(\bar{x}) T_0}{T} \left(2 + \frac{4\log(T)}{\Lambda \exp(\bar{x}) T_0} + \sqrt{\frac{4\log(T)}{\Lambda \exp(\bar{x}) T_0}}\right) + 4 \tau ((\DimFeature + 1)\log T +  \DimFeature \log(6\Lambda\tau))\\
    &+4\tau \sqrt{T_0 \Lambda \exp(\bar{x}) ((\DimFeature + 1)\log T + \DimFeature\log(6\tau \Lambda))}.
\end{aligned}
    \end{equation}
    \item With probability at least \(1 - 2 T^{-1}\), the following holds for all \( t \in \{T_0+1, \ldots, T\}\) and all  \(\bv\) satisfying \(\left\|\bv - \bv^*\right\|_2 \leq 2 \tau\):
    \begin{equation}\label{eq:g_2_convergence_rate_2}
        \begin{aligned}
        \bar{G}_{t}^\MNL(\bv) - G_{t}^\MNL(\bv)\leq & \exp(\bar{x}) + \sqrt{\frac{8 \exp(\bar{x}) \log T}{T\Lambda}} + \frac{8\log T}{T\Lambda}  
        + 4 \tau c_4((\DimFeature + 2)\log T +\DimFeature  \log (6 \tau \Lambda)) \\
    &+\sqrt{(\left|G_{t}^\MNL(\bv)\right| + 2\exp(\bar{x})) 4c_5((\DimFeature  + 2)\log T + \DimFeature  \log (6 \tau \Lambda))}.
    \end{aligned}
    \end{equation}
\end{enumerate}
\end{lemma}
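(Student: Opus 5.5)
I would mirror the argument for \Cref{lm:g_1_convergence_rate}. The plan is to control the martingale difference $\bar{G}_{t}^\MNL(\bv) - G_{t}^\MNL(\bv)$ pointwise with a Bernstein-type inequality, then uniformly through a covering of the ball $\{\|\bv-\bv^*\|_2\le 2\tau\}$.

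\textbf{Martingale decomposition.} The increment is
\[
\bar{g}_{s}^\MNL(\bv) - g_{s}^\MNL(\bv)=\sum_{i=1}^{n_s}\big(\ell_{si}-\E[\ell_{si}\mid H_s]\big)+\big(n_s-\Lambda\lambda_s^*\big)\E[\ell_{s1}\mid H_s],
\]
where $\ell_{si}=\sum_{j}\mathbf 1\{C_s^{(i)}=j\}\log\frac{\ChoiceProb_s(j;\bv)}{\ChoiceProb_s(j;\bv^*)}$ and $\lambda_s^*=\lambda(S_s,\bp_s;\btheta^*)$.
\begin{itemize}
\item \emph{Choice term.} Since $\|\bz_{js}\|_2\le1$, each log-ratio satisfies $|\log(\ChoiceProb_s(j;\bv)/\ChoiceProb_s(j;\bv^*))|\le 4\tau$. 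Conditional on $n_s$, this term is therefore a sum of $n_s$ independent bounded centred variables.
\item \emph{Arrival term.} Here $|\E[\ell_{s1}\mid H_s]|$ is the per-customer KL divergence. It is bounded by $4\tau$, and also by a constant times the squared parameter distance.
\end{itemize}

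\textbf{Part 1.} This is the crude rate, where only a $\sqrt{T_0}$ term is needed. I would apply Hoeffding/Azuma conditionally on $\sum_{s\le T_0}n_s$, giving a deviation of order $4\tau\sqrt{\sum_s n_s\log(1/\delta)}$. I would bound $\sum_{s\le T_0} n_s\le \Lambda e^{\bar{x}}T_0(1+\cdots)$ with probability $1-T^{-1}$ using the Poisson concentration \eqref{eq:concerntration_bd_Poi_delta}; the arrival term is handled similarly. Then:
\begin{itemize}
\item Union bound over an $\epsilon$-net of size $(6\tau/\epsilon)^{\DimFeature}$ with $\delta=T^{-1}(\epsilon/6\tau)^{\DimFeature}$.
\item Pass from net points to arbitrary $\bv$ via the Lipschitz bound $|\log\ChoiceProb(j;\bv)-\log\ChoiceProb(j;\bv')|\le 2\epsilon$. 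This costs $2\epsilon\sum_s n_s+2\epsilon\Lambda e^{\bar{x}}T_0$.
\item Choose $\epsilon\asymp 1/(\Lambda T)$, so the discretization error is the $\frac{e^{\bar{x}}T_0}{T}(2+\cdots)$ term.
\end{itemize}

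\textbf{Part 2.} This needs variance-adaptive control, $\sqrt{|G_t^\MNL(\bv)|\log}$, and uniformity in $t$.
\begin{itemize}
\item Verify a conditional Bernstein condition for the increment, with scale $4\tau c_4$ and conditional variance $\mathcal V_s^\MNL$. The Poisson moment bounds of \Cref{lm:poi_k_bound} handle the random count, and boundedness by $4\tau$ handles the choices.
\item Bound the summed variance by $c_5|G_t^\MNL(\bv)|$. This requires a per-customer inequality $\mathrm{Var}(\ell)\le c_5\,\mathrm{KL}$, where $c_5=16\tau^2/(4\tau+e^{-4\tau}-1)$. It follows from the scalar inequality $\log^2 y\le c_5\,(y-1-\log y)$ (up to the factor in the definition) for $y\in[e^{-4\tau},e^{4\tau}]$, mimicking \Cref{lm:V_bound}. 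The maximum of the ratio is attained at the endpoint, which gives exactly $c_5$.
\item Invoke de la Peña's inequality as in \eqref{eq:G_1_convergence_1}. Solve for the threshold and union over the net and over $t\in\{T_0+1,\dots,T\}$ with $\delta=T^{-2}(\epsilon/6\tau)^{\DimFeature}$; this produces the $(\DimFeature+2)\log T$ factor.
\item Extend from the net by the same three-term triangle inequality, together with an analogue of \Cref{lm:dif_g} for $G^\MNL$. That analogue yields the $+2e^{\bar{x}}$ inside the square root.
\end{itemize}

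\textbf{Main obstacle.} The hardest step is the Bernstein moment condition for the compound sum $\sum_{i\le n_s}(\ell_{si}-\E\ell_{si})$ plus the arrival-fluctuation term with the correct variance proxy. I would first condition on $n_s$ and bound the $k$-th moment of the bounded i.i.d. sum by $k!/2\cdot n_s\sigma^2(4\tau)^{k-2}$. I would then integrate over $n_s$ using the Ahle moment bounds, so that the variance remains proportional to $\Lambda\lambda_s^*\,\mathrm{Var}(\ell)$, while $c_4$ absorbs the higher moments of $n_s$.
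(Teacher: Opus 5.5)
Your Part 2 follows the paper. It uses a conditional Bernstein moment condition with the Poisson raw moments absorbed into $c_4$ (\Cref{lm:g_k_bound}), and the bound $\mathcal{SV}^{\MNL}_t\le c_5|G^{\MNL}_t(\bv)|$ from the scalar inequality. Your endpoint argument is right, since $x\mapsto x^2/(e^x-1-x)$ is decreasing and the worst case is $\log y=-4\tau$. The rest is de la Pe\~na's inequality, a net with $\delta=T^{-2}(\epsilon/6\tau)^{\DimFeature}$ plus a union over $t$, and the Lipschitz extension with $\epsilon=1/(T\Lambda)$.

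Part 1 is where you genuinely differ. The paper never splits the increment. \Cref{lm:expec_g_bar} computes the exact moment generating function of the compound-Poisson sum $\bar G^{\MNL}_{T_0}(\bv)-G^{\MNL}_{T_0}(\bv)$. Monotonicity of $\xi\mapsto\xi^u-u\log\xi$ on $[e^{-4\tau},e^{4\tau}]$ then dominates its cumulant by that of $4\tau(\mathrm{Poi}(\Lambda^*)-\Lambda^*)$. Here $\Lambda^*=\sum_{s\le T_0}\Lambda\lambda(S_s,\bp_s;\btheta^*)$ is deterministic because the Stage-1 design is fixed. The result is a single Bennett-type tail covering arrival and choice randomness together, with no random normaliser.

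Your split into a conditional Hoeffding bound for the choices and a Poisson--Bernstein bound for $\sum_s(n_s-\Lambda\lambda_s^*)\E[\ell_{s1}\mid H_s]$ is more elementary and also valid. It relies on Stage 1 being non-adaptive, so that given $(n_s,\bz_s)_{s\le T_0}$ the choices are independent. It does cost two things:
\begin{itemize}
\item The random $\sum_{s\le T_0}n_s$ sits under the square root; you can reuse the Poisson concentration event already needed for the net extension.
\item The arrival term needs a second union over the net.
\end{itemize}
So you recover the stated bound only up to absolute constants and a re-split of the $2T^{-1}$ budget. That is immaterial for how the lemma is used in \Cref{lm:bv_initial_error}.

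One step in your treatment of the main obstacle fails as written. The conditional bound $\E\bigl[|\sum_{i\le n_s}(\ell_{si}-\E\ell_{s1})|^k\mid n_s\bigr]\le\frac{k!}{2}n_s\sigma^2(4\tau)^{k-2}$ is false for large $n_s$. Moments of a sum of $n$ centred i.i.d.\ terms grow like $(n\sigma^2)^{k/2}$. For $k=4$ the fourth moment contains $3n(n-1)\sigma^4$, which exceeds $12n\sigma^2(4\tau)^2$ once $n>1+64\tau^2/\sigma^2$, and $n_s$ is unbounded. Also, with a factor linear in $n_s$ there would be nothing left for Ahle's bound or $c_4$ to do.

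The repair is the paper's. By H\"older, $|\sum_{i\le n}\ell_{si}|^k\le n^{k-1}\sum_{i\le n}|\ell_{si}|^k$, hence
$\E[|\bar g^{\MNL}_s(\bv)|^k\mid H_s]\le\E[n_s^k\mid H_s]\,(4\tau)^{k-2}\sum_j q(j;\bv^*)\log^2\frac{q(j;\bv)}{q(j;\bv^*)}$.
Then \eqref{eq:ineq2_mod} bounds $\E[n_s^k]$ by $\frac{k!}{2}\Lambda\lambda_s^*(c_4/2)^{k-2}$ for $k\ge3$. You pass to $\bar g_s^{\MNL}-g_s^{\MNL}$ via $|a-b|^k\le2^{k-1}(|a|^k+|b|^k)$ and Jensen. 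This superlinear dependence on $n_s$ is exactly why the scale $c_4$ must grow with $\Lambda e^{\bar x}$.

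The variance proxy that results is $\mathcal V^{\MNL}_s=\Lambda\lambda_s^*\E[\ell_{s1}^2\mid H_s]$, not $\Lambda\lambda_s^*\mathrm{Var}(\ell_{s1}\mid H_s)$, because the arrival fluctuation contributes $\Lambda\lambda_s^*\,\mathrm{KL}^2$. So what you need is $\E[\ell^2]\le c_5\,\mathrm{KL}$. Your scalar inequality gives exactly this when averaged against $q(\cdot;\bv^*)$ with $y_j=q(j;\bv)/q(j;\bv^*)$, using $\sum_j q(j;\bv^*)(y_j-1)=0$.

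Finally, you, like the paper, plug the random quantity $y=c_5|G^{\MNL}_t(\bv)|$ into de la Pe\~na's inequality, which holds only for deterministic $y$. Making that step rigorous needs a peeling over dyadic levels of $|G^{\MNL}_t(\bv)|$, at the cost of an additive $O(\log\log(T\Lambda))$ in the logarithmic factor.
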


Using the Taylor expansion with Lagrange remainder and similar argument as in \Cref{lm:theta_initial_error}, there exists some \(\bar{\bv} = \alpha \bv^* + (1-\alpha) \widehat{\bv}_{T_0}\) for \(\alpha \in (0,1)\) such that:
\[
G_{T_0}^\MNL (\widehat{\bv}_{T_0}) = -\frac{1}{2}\left(\widehat{\bv}_{T_0} - \bv^*\right)^{\top} I_{T_0}^\MNL\left(\bar{\bv}\right)\left(\widehat{\bv}_{T_0} - \bv^*\right).
\]

Note that 
\begin{align*}
\frac{M_{T_0}^\MNL\left(\widehat{\bv}_{T_0}\right)}{\Lambda \lambda(S_{T_0}, \bp_{T_0};\btheta^*)}  
    &=\sum_{i \in S_{T_0}} \ChoiceProb(i;\widehat{\bv}_{T_0}) \bz_{i} \bz_{i}^{\top}-\sum_{i \in S_{T_0}} \sum_{j \in S_{T_0}}\ChoiceProb(i;\widehat{\bv}_{T_0}) \ChoiceProb(j;\widehat{\bv}_{T_0}) \bz_{i} \bz_{j}^{\top} \\
    &=\sum_{i \in S_{T_0}} \ChoiceProb(i;\widehat{\bv}_{T_0}) \bz_{i} \bz_{i}^{\top}-\frac{1}{2} \sum_{i \in S_{T_0}} \sum_{j \in S_{T_0}} \ChoiceProb(i;\widehat{\bv}_{T_0}) \ChoiceProb(j;\widehat{\bv}_{T_0})\left(\bz_{i} \bz_{j}^{\top}+\bz_{j} \bz_{i}^{\top}\right) \\
    &\succeq \sum_{i \in S_{T_0}} \ChoiceProb(i;\widehat{\bv}_{T_0}) \bz_{i} \bz_{i}^{\top}-\frac{1}{2} \sum_{i \in S_{T_0}} \sum_{j \in S_{T_0}} \ChoiceProb(i;\widehat{\bv}_{T_0})\ChoiceProb(j;\widehat{\bv}_{T_0})\left(\bz_{i} \bz_{i}^{\top}+\bz_{j} \bz_{j}^{\top}\right) \\
    &=\sum_{i \in S_{T_0}} \ChoiceProb(i;\widehat{\bv}_{T_0}) \bz_{i} \bz_{i}^{\top}-\sum_{i \in S_{T_0}} \sum_{j \in S_{T_0}} \ChoiceProb(i;\widehat{\bv}_{T_0}) \ChoiceProb(j;\widehat{\bv}_{T_0})\bz_{i} \bz_{i}^{\top} \\
    &=\sum_{i \in S_{T_0}} \ChoiceProb(i;\widehat{\bv}_{T_0})\left(1-\sum_{j \in S_{T_0}} \ChoiceProb(j;\widehat{\bv}_{T_0})\right) \bz_{i} \bz_{i}^{\top} \\
    &= \sum_{i \in S_{T_0}} \ChoiceProb(i;\widehat{\bv}_{T_0}) \ChoiceProb(0;\widehat{\bv}_{T_0}) \bz_{i} \bz_{i}^{\top}.
\end{align*}

By \Cref{assump:combined}, we have 
\[
\sum_{i \in S_{T_0}} \ChoiceProb(i;\widehat{\bv}_{T_0}) \ChoiceProb(0;\widehat{\bv}_{T_0}) \bz_{i} \bz_{i}^{\top} \ge  \underbrace{\frac{\exp( - \bar{v} - \phigh)}{(K\exp(\bar{v} - \plow) + 1)^2}}_{\kappa}T_0\sigma_0,
\]
which further gives
\(
\sigma_{\min}\left(I_{T_0}^\MNL(\widehat{\bv}_{T_0})\right) \geq \Lambda \exp(-\bar{x}) \kappa T_0\sigma_0
\)
with probability \(1 - T^{-1}\). 

Therefore,
\begin{equation}\label{eq:3}
    -G_{T_0}^\MNL(\widehat{\bv}_{T_0}) \geq \frac{1}{2} \Lambda  \exp(-\bar{x})\kappa T_0\sigma_0\|\widehat{\bv}_{T_0} - \bv^*\|^2.
\end{equation}

Combining \Cref{eq:3}, \Cref{eq:g_2_convergence_rate_1} of \Cref{lm:g_2_convergence_rate} with $\tau =1$ (by $\|\hv_{T_0} -\hv^* \|\le 2$), and the straightforward fact \({G}_{T_0}^\MNL\left(\widehat{\bv}_{T_0}\right) \leq 0 \leq \bar{G}_{T_0}^\MNL\left(\widehat{\bv}_{T_0}\right)\), we have with probability \(1 - 3T^{-1}\) the following holds:
\begin{align*}
    \frac{1}{2}\Lambda  \exp(-\bar{x})\kappa\left\|\widehat{\bv}_{T_0} - \bv^*\right\|^2_2 
    \leq &\frac{\exp(\bar{x}) }{T\sigma_0} \left(2 + \frac{4\log(T)}{\Lambda \exp(\bar{x}) T_0} + \sqrt{\frac{4\log(T)}{\Lambda \exp(\bar{x}) T_0}}\right) + \frac{4 }{T_0 \sigma_0} ((\DimFeature + 1)\log T +  \DimFeature \log(6\Lambda))\\
    &+\frac{4}{T_0 \sigma_0} \sqrt{\sum_{s = 1}^{T_0} \Lambda \lambda(S_s, \bp_s; \btheta^*)  ((\DimFeature + 1)\log T + \DimFeature\log(6 \Lambda))}\\
    \leq & \frac{\exp(\bar{x}) }{T\sigma_0} \left(2 + \frac{4\log(T)}{\Lambda \exp(\bar{x}) T_0} + \sqrt{\frac{4\log(T)}{\Lambda \exp(\bar{x}) T_0}}\right) + \frac{4 }{T_0 \sigma_0} ((\DimFeature + 1)\log T +  \DimFeature \log(6\Lambda))\\
    &+\frac{4}{T_0 \sigma_0} \sqrt{T_0 \Lambda \exp(\bar{x})  ((\DimFeature + 1)\log T + \DimFeature\log(6 \Lambda))}.
\end{align*} \qed

\subsection{Proof of Lemma \ref{lm:g_2_convergence_rate}}
This subsection details the proof of Lemma \ref{lm:g_2_convergence_rate}. Before diving into the details, we first present four lemmas: Lemma \ref{lm:prob_error}, \ref{lm:init_t_bound_g_bar_to_g}, \ref{lm:g_k_bound}, and \ref{lm:v_2_bound}. We first present the proof of Lemma \ref{lm:g_2_convergence_rate} given  \Cref{lm:prob_error,lm:init_t_bound_g_bar_to_g,lm:g_k_bound,lm:v_2_bound} hold. We leave the proofs of \Cref{lm:prob_error,lm:init_t_bound_g_bar_to_g,lm:g_k_bound,lm:v_2_bound} to the end of this subsection.

For a given \(\bv\), we define the following variance terms:
\begin{equation*}
\begin{aligned}
        \mathcal{V}^{MNL}_{ s} :=& \text{Var} \left[ \sum_{i = 1}^{n_s} \sum_{j \in S_s} \bm{1}\{ C^{(i)}_s = j \} \log \frac{q(j; \bv)}{q(j; \bv^*)} - \Lambda \lambda(S_s, \bp_s; \btheta^*) \sum_{j \in S_s} q(j; \bv^*) \log \frac{q(j; \bv)}{q(j; \bv^*)} \Bigg| H_{s} \right],\\
    =&  \text{Var}\left[\bar{g}_{T_0}-g_{T_0}\right],
\end{aligned}
\end{equation*}
and
\(
\mathcal{SV}^{MNL}_{t} := \sum_{s = 1}^{t} \mathcal{V}^{MNL}_{ s},
\)
where \(q(j; \bv)\) is shorthand for \(\Prob(C^{(i)}_s = j | S_s, \bp_s, \bv, \Feature_s)\).

We need the following lemmas:

\begin{lemma}\label{lm:prob_error}
Suppose \( \|\bv^{(1)} - \bv^{(2)}\|_2 \leq 2\tau \). Then for any \( j \in S_t \), the following inequality holds:
\[
\exp\{-4\tau\} \leq \frac{\Prob(C^{(i)}_{s} = j \mid S_{s}, \bp_{s}, \bv^{(1)}, \Feature_s)}{\Prob(C^{(i)}_{s} = j \mid S_{s}, \bp_{s}, \bv^{(2)}, \Feature_s)} \leq \exp\{4\tau\}.
\]
\end{lemma}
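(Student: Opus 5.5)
Fix $s$, $\bz_s$ and $(S_s,\bp_s)$, and write $u_k(\bv) := \bv^\top \Feature_{ks} - p_{ks}$ for $k\in S_s$. The choice probability is then
\[
\ChoiceProb(j;\bv)=\frac{\exp(u_j(\bv))}{1+\sum_{k\in S_s}\exp(u_k(\bv))}.
\]
The plan is to factor the ratio of the probabilities under $\bv^{(1)}$ and $\bv^{(2)}$ into a numerator ratio and a denominator ratio, and bound each factor by $\exp\{2\tau\}$ from above and $\exp\{-2\tau\}$ from below.

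\textbf{Numerator.} The ratio is $\exp\{(\bv^{(1)}-\bv^{(2)})^\top \Feature_{js}\}$; the price terms cancel. By Cauchy--Schwarz, together with $\|\Feature_{js}\|_2\le 1$ from \Cref{assump:combined} and $\|\bv^{(1)}-\bv^{(2)}\|_2\le 2\tau$, the exponent has absolute value at most $2\tau$. Hence this factor lies in $[e^{-2\tau},e^{2\tau}]$.

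\textbf{Denominator.} The ratio is
\[
\frac{1+\sum_k e^{u_k(\bv^{(2)})}}{1+\sum_k e^{u_k(\bv^{(1)})}}.
\]
Each summand satisfies $e^{u_k(\bv^{(2)})}\le e^{2\tau}e^{u_k(\bv^{(1)})}$ by the same Cauchy--Schwarz bound, and the constant term obeys $1\le e^{2\tau}\cdot 1$. The numerator is therefore at most $e^{2\tau}$ times the denominator, so the ratio is at most $e^{2\tau}$. Exchanging the roles of $\bv^{(1)}$ and $\bv^{(2)}$ gives the lower bound $e^{-2\tau}$. Multiplying the two factors yields the bounds $\exp\{\pm 4\tau\}$.

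\textbf{Scope and obstacles.} The argument is elementary and has no real obstacle. The only point needing care is that the denominator comparison works term by term, including the outside-option term $1$, because $e^{2\tau}\ge 1$. The same computation also covers $j=0$ with the sharper bound $e^{\pm 2\tau}$, so the stated bound holds there as well, although the lemma requires only $j\in S_t$.
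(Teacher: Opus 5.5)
Your proof is correct, and it takes a different route from the paper. The paper argues differentially. It parametrizes the segment $\bv^{(2)}+\alpha(\bv^{(1)}-\bv^{(2)})$, computes $\frac{d}{d\alpha}\ChoiceProb(j;\cdot)=\ChoiceProb(j;\cdot)\langle \bz_{jt}-\sum_{k}\ChoiceProb(k;\cdot)\bz_{kt},\,\bv^{(1)}-\bv^{(2)}\rangle$, and bounds the vector in the inner product by $2$ in norm. It then integrates the resulting bound on the logarithmic derivative, $4\tau$, via Gr\"onwall.

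You instead split the ratio into a utility factor and a normalizer factor and bound each by $e^{\pm 2\tau}$ directly, term by term. The two arguments produce the constant in the same way. The paper's $2$ is your $1+1$: one unit comes from $\bz_{j}$, matching your numerator factor. The other comes from the convex-weighted average $\sum_k \ChoiceProb(k;\cdot)\bz_{k}$, matching your normalizer factor.

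Your version is purely algebraic and uses the MNL numerator/denominator structure directly. It also avoids the slightly informal step where the paper writes only a one-sided derivative bound and then asserts both inequalities. The paper's route carries over more readily to other smooth choice models. It also matches the gradient computations reused later, e.g., for the revenue gradient in \Cref{lm:bv_estimate_real_value}.

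Your remark about $j=0$ matters. \Cref{lm:init_t_bound_g_bar_to_g}, \Cref{lm:g_k_bound} and \Cref{lm:v_2_bound} apply this ratio bound for every $j\in S_s\cup\{0\}$, but the lemma as stated only covers $j\in S_t$. Your factorization settles the no-purchase case immediately, with the sharper bound $e^{\pm 2\tau}$.
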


\begin{lemma}\label{lm:init_t_bound_g_bar_to_g}
Suppose for any \( j \in S_t \)
\[
\exp\{-4\tau\} \leq \frac{\Prob(C^{(i)}_{s} = j \mid S_{s}, \bp_{s}, \bv, \Feature_s)}{\Prob(C^{(i)}_{s} = j \mid S_{s}, \bp_{s}, \bv^*, \Feature_s)} \leq \exp\{4\tau\}.
\]
We have the tail bound for \(\bar{G}_{T_0}^\MNL(\bv) - {G}_{T_0}^\MNL(\bv)\) as follows:
\begin{equation}\label{ineq:init_t_bound_g_bar_to_g}
    \Prob  (\bar{G}_{T_0}^\MNL(\bv) - {G}_{T_0}^\MNL(\bv) \geq y) \leq \exp\left(\frac{-y^2}{2\tau  [y + 4\tau \sum_{s = 1}^{T_0}\Lambda  \lambda(S_s, \bp_s, \btheta^*)]} \right).
\end{equation}
\end{lemma}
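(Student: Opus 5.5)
The plan is to treat $\bar{G}_{T_0}^\MNL(\bv) - G_{T_0}^\MNL(\bv) = \sum_{s=1}^{T_0} (\bar g_s^\MNL(\bv) - g_s^\MNL(\bv))$ as a sum of martingale differences adapted to $\{H_s\}$. I would bound the conditional moment generating function of each increment and then apply a Chernoff argument with the usual Bernstein-type optimization.

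\textbf{Compound-Poisson MGF of each increment.} Conditionally on $H_s$, the arrival count is $n_s \sim \text{Poisson}(\Lambda\lambda_s)$ with $\lambda_s := \lambda(S_s,\bp_s;\btheta^*)$. The choices $C_s^{(i)}$ are i.i.d.\ from $q(\cdot;\bv^*)$ on $S_s\cup\{0\}$ and independent of $n_s$. Hence $\bar g_s^\MNL(\bv) = \sum_{i=1}^{n_s} Y_i$ with $Y_i := \log\{q(C_s^{(i)};\bv)/q(C_s^{(i)};\bv^*)\}$, a compound Poisson sum. By definition $g_s^\MNL(\bv) = \Lambda\lambda_s\E[Y_1\mid H_s]$. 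The compound Poisson identity then gives, for every $c\in\Real$,
\[
\log \E\big[e^{c(\bar g_s^\MNL - g_s^\MNL)}\mid H_s\big] = \Lambda\lambda_s\,\E\big[e^{cY_1}-1-cY_1\mid H_s\big].
\]
By the hypothesis of the lemma (equivalently \Cref{lm:prob_error}) we have $|Y_1|\le 4\tau$. The standard inequality $e^{u}-1-u \le \frac{u^2/2}{1-|u|/3}$ then yields, for $0\le c< 3/(4\tau)$,
\[
\log \E\big[e^{c(\bar g_s^\MNL - g_s^\MNL)}\mid H_s\big]\le \frac{c^2\Lambda\lambda_s \E[Y_1^2\mid H_s]/2}{1-4\tau c/3}.
\]

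\textbf{Telescoping and Chernoff.} In Stage 1 the actions $(S_s,\bp_s)$ are the pre-determined sequence from \Cref{assump:combined}. The features $\bz_s$ are revealed before period $s$, so $V:=\sum_{s=1}^{T_0}\Lambda\lambda_s$ is measurable in the relevant sense. Iterating the conditional bound backwards from $s=T_0$ (tower property) gives
\[
\E\, e^{c(\bar G_{T_0}^\MNL - G_{T_0}^\MNL)} \le \exp\Big(\frac{c^2 \sigma^2 V/2}{1-bc}\Big),
\]
where $\sigma^2$ is a uniform bound on $\E[Y_1^2\mid H_s]$ and $b$ is the effective range. Markov's inequality with the optimized choice $c = y/(\sigma^2 V + b y)$ gives $\Prob(\cdot\ge y)\le \exp\{-y^2/(2(\sigma^2 V + b y))\}$. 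Matching this to \eqref{ineq:init_t_bound_g_bar_to_g} requires $\sigma^2 \le 4\tau^2$ and $b\le\tau$.

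\textbf{Main obstacle: the constants.} The crude bounds $|Y_1|\le4\tau$ and $\E Y_1^2\le 16\tau^2$ give the correct form but a denominator of roughly $8\tau(4\tau V + y/3)$, which is worse than the stated $2\tau(y+4\tau V)$. To sharpen this I would first use the MNL structure: $Y_1 = U_{C} - A$, where $U_j := (\bv-\bv^*)^\top\bz_{js}$ (with $U_0=0$) and $A$ is the log-ratio of the normalizers, which is common to all outcomes. Since $|U_j|$ is bounded by the half-width and $A$ is a weighted average of the $U_j$ (a log-mean-exp), one gets $\E[Y_1^2]\lesssim(\text{half-width})^2$. Second, I would use the identity $\E_{q^*}[e^{Y_1}]=1$ to control $\E[e^{cY_1}-1-cY_1]$ more tightly. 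If these refinements still leave a constant factor, I would absorb it by the scaling convention for $\tau$ in \Cref{lm:prob_error}, i.e.\ read the ratio bound as a $2\tau$-range. I would also flag that the remaining steps are insensitive to such absolute constants.
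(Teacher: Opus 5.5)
Your skeleton is the paper's route. The compound-Poisson identity $\log\E[e^{c(\bar g_s-g_s)}\mid H_s]=\Lambda\lambda_s\E[e^{cY}-1-cY\mid H_s]$ is the content of \Cref{lm:expec_g_bar}. The paper then bounds $\xi^u-u\log\xi\le e^{4\tau u}-4\tau u$ pointwise, which plays the role of your $|Y|\le 4\tau$, and optimizes at $u^*=\frac{1}{4\tau}\log\bigl(1+\frac{y}{4\tau\Lambda^*}\bigr)$. Your Stage-1 bookkeeping is fine: the actions are predetermined, so $V=\Lambda^*$ is nonrandom, and backward telescoping with a uniform per-period bound works.

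Your diagnosis of the constant is also correct, and it applies to the paper as well. After substituting $u^*$, the paper's exponent is exactly $-\Lambda^*\bigl[(1+x)\log(1+x)-x\bigr]$ with $x=y/(4\tau\Lambda^*)$, which gives your $\exp\bigl(-y^2/(8\tau(4\tau\Lambda^*+y/3))\bigr)$. The paper's final ``$\le$'' would need $(1+x)\log(1+x)-x\ge 2x^2/(1+x)$. That fails for small $x$, since the left side is $x^2/2+O(x^3)$. So the stated constant is off by roughly a factor of $4$ in the paper itself.

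The gap is your sharpening step, and it cannot be closed, because \eqref{ineq:init_t_bound_g_bar_to_g} is false under the stated hypothesis.
\begin{itemize}
\item \emph{Counterexample under the hypothesis.} The ratio bounds together with $\E_{q^*}e^{Y}=1$ still allow, for small $\tau$, two products sharing almost all $q^*$-mass equally, with utility shifts chosen so their log-ratios are within $O(\tau^2)$ of $+4\tau$ and $-4\tau$. Then $\E Y^2\approx 16\tau^2$. Fix $a$ and let $\Lambda^*\to\infty$ (e.g.\ by letting $\Lambda$ grow). The CLT for centred compound Poisson sums gives $\Prob\bigl(\bar G^{\MNL}_{T_0}(\bv)-G^{\MNL}_{T_0}(\bv)\ge a\sqrt{\Lambda^*}\bigr)\to\bar\Phi\bigl(a/\sqrt{\E Y^2}\bigr)$, with $\bar\Phi$ the standard normal tail. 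The right-hand side of the lemma instead tends to $e^{-a^2/(8\tau^2)}$. At $a=8\tau$ these are about $0.02$ versus $e^{-8}\approx 3\times 10^{-4}$.
\item \emph{Extra MNL structure does not help.} Suppose you import $\|\bv-\bv^*\|_2\le 2\tau$ and $\|\bz_{js}\|_2\le 1$. Then $\E Y^2=\mathrm{Var}_{q^*}(U_C)+\mathrm{KL}(q^*\|q)^2$, which can still exceed $4\tau^2$. The variance proxy $4\tau^2\Lambda^*$ is therefore unavailable in general, and the same limit argument defeats the bound for large $a$.
\item \emph{The fallback changes the lemma.} Reading the hypothesis as an $[e^{-2\tau},e^{2\tau}]$ range does make your crude bound imply the displayed one. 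But that is a different lemma, and it no longer matches \Cref{lm:prob_error}, which supplies $e^{\pm 4\tau}$ when $\|\bv-\bv^*\|_2\le 2\tau$.
\end{itemize}

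The right fix is to state the lemma with the bound your argument actually proves and carry it forward. Inverting at level $\delta$ gives $y\le \frac{8}{3}\tau\log\frac{1}{\delta}+4\sqrt{2}\,\tau\sqrt{\Lambda^*\log\frac{1}{\delta}}$ in place of $4\tau\log\frac{1}{\delta}+4\tau\sqrt{\Lambda^*\log\frac{1}{\delta}}$. This changes only absolute constants in \Cref{lm:g_2_convergence_rate} and hence in $\tau_\bv$.
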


\begin{lemma}\label{lm:g_k_bound}
For all \(k > 2\), we have the following inequality:
\begin{equation}\label{ineq:g_g_bar_k}
    \mathbb{E} \big|g_{s}^{MNL}(\bv) - \bar{g}_{s}^{MNL}(\bv)\big|^k \leq \frac{k!}{2} \mathcal{V}_{s}^{MNL} (2 \tau c_4)^{k - 2}.
\end{equation}
Here, \(\bv \in \mathbb{R}^d\) satisfies \(\|\bv - \bv^*\|_2 \leq 2\tau\), and \(c_4\) is defined in \Cref{eq:df_c_2}.
\end{lemma}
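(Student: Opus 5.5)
The plan is to mirror the proof of \Cref{lm:poi_k_bound}, but to exploit the two-layer structure of the MNL increment: first the Poisson number of arrivals, then i.i.d.\ choices given that number. Fix $s$ and write $u_j := \log\frac{q(j;\bv)}{q(j;\bv^*)}$ for $j\in S_s\cup\{0\}$. By \Cref{lm:prob_error}, $|u_j|\le 4\tau$ for every $j$. Conditional on $H_s$, the per-customer terms $Y_i := \sum_{j}\bm 1\{C^{(i)}_s=j\}u_j$ are i.i.d.\ with mean $m:=\sum_j q(j;\bv^*)u_j$ and variance $\sigma^2 := \operatorname{Var}(Y_i)$. Under Poisson thinning, $\bar g_s^{\MNL}(\bv) = \sum_{i=1}^{n_s} Y_i$ is a compound Poisson sum with rate $\mu := \Lambda\lambda(S_s,\bp_s;\btheta^*)$, and $g_s^{\MNL}(\bv)=\mu m$. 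It follows that $\mathcal V_s^{\MNL} = \mu\,\E[Y_i^2] = \mu(\sigma^2+m^2)$.

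The cleanest route is through the cumulant generating function. For $|c|<1/(4\tau)$, the compound Poisson structure gives
\[
\log \E e^{c(\bar g_s - g_s)} = \mu\bigl(\E e^{cY}-1-c\,\E Y\bigr) = \mu\sum_{k\ge2}\frac{c^k\E[Y^k]}{k!}.
\]
Since $|Y|\le 4\tau$, we have $|\E Y^k|\le \E[Y^2](4\tau)^{k-2}$. The cumulants are therefore $\kappa_k = \mu\E[Y^k]$, and they satisfy $|\kappa_k|\le \mathcal V_s^{\MNL}(4\tau)^{k-2}$.

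The next step converts these cumulant bounds into absolute-moment bounds of the form $\frac{k!}{2}\mathcal V(2\tau c_4)^{k-2}$. This is where the constant $c_4$ from \Cref{eq:df_c_2} enters. I would proceed in two stages. First, condition on $n_s$ and apply Rosenthal/Marcinkiewicz-type control to the centred sum $\sum_{i\le n_s}(Y_i-m)$. The bounded summands give $\E[|\sum (Y_i-m)|^k\mid n_s]\lesssim (8\tau)^{k-2}k!\,(n_s\sigma^2)$ in the Bernstein form, after a crude bound in which each of the $k$ factors is split between a variance factor and sup-norm factors. Second, handle the term $(n_s-\mu)m$ using the Poisson moment bound already established in the proof of \Cref{lm:poi_k_bound}:
\[
\E|n_s-\mu|^k\le \frac{k!}{2}\mu (c_4/2)^{k-2}.
\]
Combining the two pieces with $|a+b|^k\le 2^{k-1}(|a|^k+|b|^k)$, and using $|m|\le 4\tau$ together with $\mu m^2+\mu\sigma^2=\mathcal V_s^{\MNL}$, yields a bound of the form $\frac{k!}{2}\mathcal V_s^{\MNL}(C\tau c_4)^{k-2}$. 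The remaining work is to absorb $C$ into the factor $2$, using $c_4\ge 2e/\log(1+3/(\Lambda e^{\bar x}))$, which is large.

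The main obstacle is this bookkeeping of constants. In particular, the mixed term needs care, because $\E[n_s^{k/2}]$-type factors arise from the conditional Rosenthal step. I would bound $\E[n_s^{j}]$ via the Ahle bound $\bigl(j/\log(1+j/\mu)\bigr)^j$, as was done for \Cref{lm:poi_k_bound}, so that every $n_s$ moment is controlled by the same $c_4$. If the constant $2$ does not come out directly, I would fall back on the cumulant route. Bernstein's condition is equivalent up to constants to the cumulant bound above, and the de la Peña inequality used afterwards only needs the MGF bound
\[
\E e^{c(\bar g-g)}\le \exp\!\Bigl(\frac{c^2\mathcal V/2}{1-|c|\,2\tau c_4}\Bigr),
\]
which follows immediately from $|\kappa_k|\le\mathcal V(4\tau)^{k-2}\le \frac{k!}{2}\mathcal V(2\tau c_4)^{k-2}$ once $c_4\ge2$.
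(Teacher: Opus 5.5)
Your setup is sound: given $H_s$, $\bar g_s^{\MNL}(\bv)$ is a compound Poisson sum of i.i.d.\ jumps $Y_i$ with $|Y_i|\le 4\tau$. Moreover $g_s^{\MNL}(\bv)=\mu\,\E[Y]$, $\mathcal V_s^{\MNL}=\mu\,\E[Y^2]$, and the cumulants of $\bar g-g$ satisfy $|\kappa_k|\le \mathcal V_s^{\MNL}(4\tau)^{k-2}$. The gap is in getting from there to the stated absolute-moment inequality, and it has three parts.

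First, the conditional ``Bernstein-form'' bound $\E\bigl[|\sum_{i\le n}(Y_i-m)|^k \mid n_s=n\bigr]\lesssim k!\,(8\tau)^{k-2}n\sigma^2$ is false. Already $\E\bigl[(\sum_{i\le n}(Y_i-m))^4\bigr]=n\,\E(Y-m)^4+3n(n-1)\sigma^4$ grows quadratically in $n$, so no bound linear in $n$ can hold. Rosenthal gives a two-term bound, and its $(n\sigma^2)^{k/2}$ term is what produces the $\E[n_s^{k/2}]$ factors you mention later.

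Second, absorbing $C$ ``into the factor $2$ because $c_4$ is large'' cannot work. The factor $c_4^{k-2}$ appears identically on both sides, so $\frac{k!}{2}\mathcal V_s^{\MNL}(C\tau c_4)^{k-2}$ with $C>2$ never implies the target, however large $c_4$ is. The $2^{k-1}$ from $|a+b|^k\le 2^{k-1}(|a|^k+|b|^k)$ is exactly such a loss. On the $(n_s-\mu)m$ piece alone, even the sharp Poisson moment bound gives $2^{k-1}\cdot\frac{k!}{2}\mu m^2(2\tau c_4)^{k-2}$, and $\mu m^2$ can be a fixed fraction of $\mathcal V_s^{\MNL}$.

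Third, the fallback proves a sub-gamma bound on the moment generating function. That would suffice for the later de la Pe\~na step, but it is not the moment inequality the lemma asserts. ``Equivalent up to constants'' is exactly what a statement with an explicit constant does not permit.

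The paper's route avoids all of this by never centring the per-customer terms. It bounds $\E|g-\bar g|^k$ by $2^{k-1}\E|\bar g|^k$, then conditions on $n_s=n$ and applies H\"older, $\bigl|\sum_{i\le n}Y_i\bigr|^k\le n^{k-1}\sum_{i\le n}|Y_i|^k$. This gives $\E|\bar g|^k\le \E[n_s^k]\,\E|Y|^k\le \E[n_s^k]\,(4\tau)^{k-2}\E[Y^2]$. The raw Poisson bound $\E[n_s^k]\le\frac{k!}{2}\mu(c_4/2)^{k-2}$ from the proof of \Cref{lm:poi_k_bound} then yields $\frac{k!}{2}\mathcal V_s^{\MNL}(2\tau c_4)^{k-2}$. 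This works precisely because $\mathcal V_s^{\MNL}=\mu\E[Y^2]$ is itself an uncentred moment of the jumps, so no Rosenthal step and no $n_s^{k/2}$ terms appear. The outer centring still costs a factor exponential in $k$, which the paper does not track (its final display reads $(4\tau c_4)^{k-2}$). So the constant in the statement is not really delivered there either.

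If you want that constant exactly, your cumulant bound can give it for even $k$:
\begin{itemize}
\item By the moment--cumulant formula, $\E(\bar g-g)^k$ is the sum, over set partitions $\pi$ of $[k]$ without singletons, of $\prod_{B\in\pi}\kappa_{|B|}$.
\item Each term is at most $\mathcal V^{|\pi|}(4\tau)^{k-2|\pi|}$ with $\mathcal V=\mathcal V_s^{\MNL}$. This is the corresponding term for $(4\tau)^k\E(N'-\nu)^k$, where $N'\sim\mathrm{Poisson}(\nu)$ and $\nu=\mathcal V/(4\tau)^2\le\mu$.
\item Each such term equals $\nu^{|\pi|}$ with $|\pi|\ge 1$, so $\E(N'-\nu)^k\le\frac{\nu}{\mu}\E(n_s-\mu)^k$.
\item The centred Poisson bound from the proof of \Cref{lm:poi_k_bound} then gives exactly $\frac{k!}{2}\mathcal V_s^{\MNL}(2\tau c_4)^{k-2}$.
\end{itemize}
Odd $k$ follows by Cauchy--Schwarz between $k-1$ and $k+1$, at the cost of a factor $\sqrt{(k+1)/k}$.
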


\begin{lemma}\label{lm:v_2_bound}
If \(\|\bv - \bv^*\| \leq 2\tau_\bv\), then 
\(
    \mathcal{SV}_{t}^{MNL} \leq   c_5
 \big|G_{t}^{MNL}(\bv)\big|,
\)
where $c_5$ is defined in \Cref{eq:df_c_4} as
\(
    c_5 = \frac{16\tau_\bv^2}{4\tau_\bv+\exp(-4\tau_\bv)  - 1}.
\)
\end{lemma}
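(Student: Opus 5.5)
The plan is to prove the inequality period by period, namely $\mathcal{V}^{MNL}_s \le c_5\,|g^{MNL}_s(\bv)|$, and then sum over $s\le t$. Both $G^{MNL}_t$ and $\mathcal{SV}^{MNL}_t$ are sums of per-period terms, and every $g^{MNL}_s(\bv)\le 0$, so summing gives $\mathcal{SV}^{MNL}_t\le c_5 |G^{MNL}_t(\bv)|$.

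\textbf{Step 1: compute $\mathcal{V}^{MNL}_s$.} Fix $s$ and condition on $H_s$. Write $q_j=\ChoiceProb(j;\bv)$, $q^*_j=\ChoiceProb(j;\bv^*)$, $r_j=q_j/q^*_j$ and $\Lambda_s^*=\Lambda\lambda(S_s,\bp_s;\btheta^*)$. Then $n_s\sim\mathrm{Poisson}(\Lambda_s^*)$, and the choices $C^{(i)}_s$ are i.i.d.\ with law $q^*$, independent of $n_s$. Hence the random part $\sum_{i\le n_s} Y_i$ is a compound Poisson sum with $Y_i=\sum_{j} \bm 1\{C^{(i)}_s=j\}\log r_j$, and the standard identity gives
\[
\mathrm{Var}\Big(\sum_{i\le n_s}Y_i\,\Big|\,H_s\Big)=\Lambda_s^*\,\E[Y_1^2]=\Lambda_s^*\sum_j q^*_j\log^2 r_j .
\]
If the sum is taken over $j\in S_s$ only, as written in the definition of $\mathcal{V}^{MNL}_s$, the same identity gives the sum restricted to $j\in S_s$. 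That quantity is at most the full sum over $S_s\cup\{0\}$, so it suffices to bound the full sum.

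\textbf{Step 2: rewrite $|g^{MNL}_s(\bv)|$.} Since $\sum_{j\in S_s\cup\{0\}} q^*_j r_j=\sum_j q_j=1$, we have
\[
-g^{MNL}_s(\bv)=\Lambda_s^*\sum_j q^*_j(-\log r_j)=\Lambda_s^*\sum_j q^*_j\,(r_j-1-\log r_j).
\]
Each summand is nonnegative because $r-1-\log r\ge 0$.

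\textbf{Step 3: scalar inequality.} By \Cref{lm:prob_error}, $u_j:=\log r_j\in[-4\tau_\bv,4\tau_\bv]$ for $j\in S_s$. The no-purchase ratio satisfies the same bound by the identical argument, since $q_0$ has numerator $1$ and the denominator changes by a factor in $[e^{-4\tau_\bv},e^{4\tau_\bv}]$. It therefore suffices to show, with $\tau=\tau_\bv$,
\[
u^2\le c_5\,(e^u-1-u)\qquad\text{for } u\in[-4\tau,4\tau].
\]
Set $h(u)=u^2/(e^u-1-u)$, extended by $h(0)=2$. This is the main step: show that $h$ is nonincreasing on $\mathbb R$, so that $\sup_{[-4\tau,4\tau]}h=h(-4\tau)=16\tau^2/(e^{-4\tau}-1+4\tau)=c_5$. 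A direct derivative computation reduces monotonicity to the sign of
\[
2(e^u-1-u)-u(e^u-1)=(2-u)e^u-2-u .
\]
This expression is $\le 0$ for $u\ge0$ and $\ge 0$ for $u\le 0$. To check this, note that $k(u)=(2-u)e^u-2-u$ satisfies $k(0)=0$, $k'(u)=(1-u)e^u-1$ with $k'(0)=0$, and $k''(u)=-ue^u$. So $k'\le 0$ everywhere, and $k$ is nonincreasing, which gives the required signs. Combining this sign with the sign of $u^3$ in the derivative of $h$ yields $h'\le 0$.

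\textbf{Step 4: conclude.} Multiplying the scalar inequality by $\Lambda_s^* q^*_j$ and summing over $j$ gives $\mathcal{V}^{MNL}_s\le c_5|g^{MNL}_s(\bv)|$. Summing over $s\le t$ completes the proof.
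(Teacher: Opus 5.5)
Your proposal is correct and follows the same route as the paper: you compare $\mathcal{V}_s^{MNL}$ with $-g_s^{MNL}(\bv)$ period by period, use the identity $\sum_j q^*_j(r_j-1)=0$, and then apply the scalar bound $\log^2 r\le c_5\,(r-1-\log r)$ on $r\in[e^{-4\tau_\bv},e^{4\tau_\bv}]$ from \Cref{lm:prob_error}. You also supply details the paper only asserts: the monotonicity of $u^2/(e^u-1-u)$ that makes $c_5$ the exact constant, the extension of the ratio bound to the no-purchase option $j=0$, and the reconciliation between the $j\in S_s$ sum and the $j\in S_s\cup\{0\}$ sum in the definition of the variance.
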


\textbf{Proof of the First Part of Lemma  \ref{lm:g_2_convergence_rate}}

Using \Cref{lm:prob_error}, we know the condition in \Cref{lm:init_t_bound_g_bar_to_g} holds, thus we have an inequality (\ref{ineq:init_t_bound_g_bar_to_g}). Solving \(y\) such that the right hand side of inequality (\ref{ineq:init_t_bound_g_bar_to_g}) is upper bounded by \(\delta\), we have the following holds for each \(\bv \in \{\bv | \left\|\bv - \bv^*\right\|_2 \leq 2 \tau\}\):
\begin{equation}
           \Prob  \left(\bar{G}_{T_0}^\MNL(\bv) - {G}_{T_0}^\MNL(\bv) \geq 4 \tau \sqrt{\sum_{s = t}^t \Lambda \lambda(S_s, \bp_s; \btheta^*)  \log \frac{1}{\delta}} + 4 \tau \log \frac{1}{\delta}\right) \leq \delta.
\end{equation}

Similar to the proof of \Cref{lm:g_1_convergence_rate}, define a finite covering \(\mathcal{H}_2(\epsilon)\) of \(\left\{\bv \in \mathbb{R}^\DimFeature : \left\|\bv - \bv^*\right\|_2 \leq 2 \tau \right\}\). That is, for \(\forall \bv \in \{\bv: \|\bv - \bv^*\|_2 \leq 2 \tau\}\), there exists \(\bv' \in \mathcal{H}_2(\epsilon)\) such that \(\|\bv - \bv'\|_2 \leq \epsilon\). By standard covering number arguments in \cite{geer2000empirical}, such a finite covering set \(\mathcal{H}_2(\epsilon)\) exists whose size can be upper bounded by \(|\mathcal{H}_2(\epsilon)| \leq ( (6\tau / \epsilon))^{\DimFeature} \).
Setting \(\delta = \frac{1}{T}\left(\frac{\epsilon}{6\tau}\right)^\DimFeature\), we have with probability \(1 - T^{-1}\) that:
\begin{equation}\label{ineq:t_bound_g_bar_to_g_bv}
\bar{G}_{T_0}^\MNL(\bv) - {G}_{T_0}^\MNL(\bv) \leq 4 \tau \sqrt{\sum_{s = 1}^t \Lambda \lambda(S_s, \bp_s; \btheta^*)  (\log T + \DimFeature \log(6\tau / \epsilon))} + 4 \tau (\log T +  \DimFeature \log(6\tau / \epsilon)), \forall \bv \in \mathcal{H}(\epsilon).
\end{equation}
Given the inequality holds for a finite set $\bv \in \mathcal{H}_2(\epsilon)$, now we can extend to the general case, that $\bar{G}_{T_0}^{\mathrm{MNL}}\left(\bv^{\prime}\right)-G_{T_0}^{\mathrm{MNL}}\left(\bv^{\prime}\right)$ holds for all $\bv^{\prime}$ and well bounded from above with a probability $1-T^{-1}$ (see the final part, the inequality (\ref{eq:g_2_convergence_rate_1}), of the proof).

Note that, for \(\bv'\), we can find a \(\bv \in \mathcal{H}_2(\epsilon) \) such that \(||\bv' - \bv|| \leq \epsilon\), we have the following inequality
\begin{equation}\label{ineq:bound_g_bar_to_g_bv}
    \bar{G}_{T_0}^\MNL(\bv') - {G}_{T_0}^\MNL(\bv') \leq \bar{G}_{T_0}^\MNL(\bv) - {G}_{T_0}^\MNL(\bv) + 
    \bigg|\bar{G}_{T_0}^\MNL(\bv) -\bar{G}_{T_0}^\MNL(\bv')\bigg| + \bigg|G_{T_0}^\MNL(\bv) -G_{T_0}^\MNL(\bv')\bigg|.
\end{equation}

Now to bound \( \bar{G}_{T_0}^\MNL(\bv') - {G}_{T_0}^\MNL(\bv')\) for all \(\bv\), we only need to bound all terms on the right hand side of the inequality (\ref{ineq:bound_g_bar_to_g_bv}).
\begin{itemize}
    \item \textbf{For the first part of the inequality (\ref{ineq:bound_g_bar_to_g_bv})}, we directly use the inequality (\ref{ineq:t_bound_g_bar_to_g_bv}) to bound it.

    \item \textbf{For the second part of the inequality (\ref{ineq:bound_g_bar_to_g_bv})}, 
We note that for arbitrary \(\|\bv' - \bv\|_2 \leq \epsilon\):
\begin{align*}
    \bigg|\bar{G}_{T_0}^\MNL(\bv) - \bar{G}_{T_0}^\MNL(\bv')\bigg| &\leq \sum_{s=1}^{T_0} \sum_{i=1}^{n_{s}} \sum_{j \in S_{s}}
\bm{1}\{C^{(i)}_{s} = j\} \left|\log \frac{q(j, S_s, \bp_s, \bv)}{q(j, S_s, \bp_s, \bv')}\right| \\
&\leq \sum_{s=1}^{T_0} \sum_{i=1}^{n_{s}} \sum_{j \in S_{s}}
\bm{1}\{C^{(i)}_{s} = j\} 2\|\bv' - \bv\|_2 
\leq  2\epsilon \sum_{s=1}^{T_0} n_s.
\end{align*}

We use the same concentration bound as in \Cref{eq:concerntration_bd_Poi}:
\begin{equation*}
\begin{aligned}
    \Prob \left(\sum_{s=1}^{T_0} {n}_s \geq 2\Lambda \exp(\bar{x}) T_0 \left(1 + \frac{4\log(T)}{\Lambda \exp(\bar{x}) T_0} + \sqrt{\frac{4\log(T)}{\Lambda \exp(\bar{x}) T_0}}\right)\right) \leq \frac{1}{T}.
\end{aligned}
\end{equation*}

Therefore, we have with probability \(1 - T^{-1}\) that
\begin{equation}\label{eq:g_bar_error_uniform_2}
     \bigg|\bar{G}_{T_0}^\MNL(\bv) - \bar{G}_{T_0}^\MNL(\bv')\bigg| \leq 2\epsilon \Lambda \exp(\bar{x}) T_0 \left(1 + \frac{4\log(T)}{\Lambda \exp(\bar{x}) T_0} + \sqrt{\frac{4\log(T)}{\Lambda \exp(\bar{x}) T_0}}\right).
\end{equation}

\item \textbf{For the third part of the inequality (\ref{ineq:bound_g_bar_to_g_bv})}, it is easy to calculate that 
\begin{equation}\label{eq:g_error_uniform_2}
     \bigg|G_{T_0}^\MNL(\bv) - G_{T_0}^\MNL(\bv')\bigg| \leq 2\epsilon \Lambda \exp(\bar{x}) T_0.
\end{equation}
\end{itemize}

Note that the upper bound of the first, second, and third part (see  (\ref{ineq:t_bound_g_bar_to_g_bv}),(\ref{eq:g_bar_error_uniform_2}), and (\ref{eq:g_error_uniform_2})) include \(\epsilon\), we carefully set \(\epsilon = 1/(T \Lambda)\), and combining inequalities (\ref{ineq:t_bound_g_bar_to_g_bv}),(\ref{eq:g_bar_error_uniform_2}) and (\ref{eq:g_error_uniform_2}), we have with probability $1-2T^{-1}$ that
\begin{align*}
       \bar{G}_{T_0}^\MNL(\bv') - G_{T_0}^\MNL(\bv')\leq & \bar{G}_{T_0}^\MNL(\bv) - \bar{G}_{T_0}^\MNL(\bv') + \left|\bar{G}_{T_0}^\MNL(\bv) - G_{T_0}^\MNL(\bv)\right| + \left|G_{T_0}^\MNL(\bv) - G_{T_0}^\MNL(\bv')\right| \\
    \leq & \frac{\exp(\bar{x}) T_0}{T} \left(4 + \frac{8\log(T)}{\Lambda \exp(\bar{x}) T_0} + \sqrt{\frac{16\log(T)}{\Lambda \exp(\bar{x}) T_0}}\right) \\
    &+4\tau \sqrt{\sum_{s = 1}^{T_0} \Lambda \lambda(S_s, \bp_s; \btheta^*)  ((\DimFeature + 1)\log T + \DimFeature\log(6\tau \Lambda))}\\
    &+ 8 \tau ((\DimFeature + 1)\log T +  \DimFeature \log(6\Lambda\tau)).
\end{align*}

\qed

\textbf{Proof of the Second Part of Lemma \ref{lm:g_2_convergence_rate}.}

Using \Cref{lm:g_k_bound} and noticing that the condition of Theorem 1.2A in \cite{de1999general} holds, we know that, for all \(x, y > 0\),
\begin{equation}\label{eq:G_2_convergence_1}
    \Prob\left(\bar{G}_{t}^\MNL(\bv)-G_{t}^\MNL(\bv) \geq x, \mathcal{SV}_{t}^\MNL \leq y\right) \leq \exp \left\{-\frac{x^2}{2(y+ 2 \tau c_4 x)}\right\}.
\end{equation}

Using \Cref{lm:v_2_bound}, set \(y = c_5 \left|G_{t}^\MNL(\bv)\right|\), and \Cref{eq:G_2_convergence_1} becomes:
\begin{equation*}
    \Prob\left(\bar{G}_{t}^\MNL(\bv)-G_{t}^\MNL(\bv) \geq x\right) \leq \exp \left\{-\frac{x^2}{2(c_5 \left|G_{t}^\MNL(\bv)\right|+ 2 \tau c_4 x)}\right\}.
\end{equation*}

Setting the right-hand side to be \(\delta\), we have the following inequality for each \(\bv \):
\begin{equation*}
    \Prob\left(|\bar{G}_{t}^\MNL(\bv)-G_{t}^\MNL(\bv)| \geq \sqrt{4 c_5 \left|G_{t}^\MNL(\bv)\right| \log\frac{1}{\delta}} + 8\tau c_4 \log\frac{1}{\delta}\right) \leq \delta.
\end{equation*}

Similar to the proof of \Cref{lm:g_1_convergence_rate}, define a finite covering \(\mathcal{H}_2(\epsilon)\) of \(\left\{\bv \in \mathbb{R}^N:\left\|\bv-\bv^*\right\|_2 \leq 2 \tau\right\}\). That is, for \(\forall \bv \in \{\bv: \|\bv - \bv^*\|_2 \leq 2 \tau\}\), there exists \(\bv' \in \mathcal{H}_2(\epsilon)\) such that \(\|\bv - \bv'\|_2 \leq \epsilon\). By standard covering number arguments in \cite{geer2000empirical}, such a finite covering set \(\mathcal{H}_2(\epsilon)\) exists whose size can be upper bounded by \(|\mathcal{H}_2(\epsilon)| \leq ( 6\tau / \epsilon)^{\DimFeature} \).
For each \(\bv \in \mathcal{H}_2(\epsilon)\), the set satisfying the probability condition in the inequality above has probability \(\delta\). Thus, we have $\forall \bv \in \mathcal{H}_2(\epsilon)$,
\begin{equation}\label{ineq:g_2_convergence_rate}
    \Prob\left(\bar{G}_{t}^\MNL(\bv)-G_{t}^\MNL(\bv) \geq \sqrt{4 c_5 \left|G_{t}^\MNL(\bv)\right| \log\frac{1}{\delta}} + 8\tau c_4 \log\frac{1}{\delta} \right) \leq \delta |\mathcal{H}_2(\epsilon) |.
\end{equation}
Setting \(\delta = \frac{1}{T^2}\left(\frac{ \epsilon}{6\tau }\right)^\DimFeature\) in \Cref{ineq:g_2_convergence_rate}, with probability $1-T^{-1}$, \(\forall T_0<t \leq T, \bv \in \mathcal{H}_2(\epsilon)\), we have 
\begin{equation}\label{eq:bound_g_bar_to_g_2_v}
\bar{G}_{t}^\MNL(\bv)-G_{t}^\MNL(\bv) \leq 8\tau c_4(2 \log T+\DimFeature    \log (6\tau / \epsilon))+\sqrt{\left|G_{t}^\MNL(\bv)\right| 4c_5(2\log T+ \DimFeature \log (6 \tau / \epsilon))}.
\end{equation}

Given the inequality holds for a finite set $\bv \in \mathcal{H}_2(\epsilon)$, now we can extend to the general case, that $\bar{G}_{t}^{\MNL}\left(\bv^{\prime}\right)-G_{t}^{\MNL}\left(\bv^{\prime}\right)$ holds for all $\bv^{\prime}$ and well bounded from above with a probability $1-2 T^{-1}$ (see the final part, the inequality (\ref{eq:g_2_convergence_rate_2}), of the proof).

Note that, for \(\bv'\), we can find a \(\bv \in \mathcal{H}_2(\epsilon) \) such that \(||\bv' - \bv|| \leq \epsilon\), we have the following inequality
\begin{equation}\label{ineq:bound_g_bar_to_g_bv_t}
    \bar{G}_{t}^\MNL(\bv') - {G}_{t}^\MNL(\bv') \leq \bar{G}_{t}^\MNL(\bv) - {G}_{t}^\MNL(\bv) + 
    \bigg|\bar{G}_{t}^\MNL(\bv) -\bar{G}_{t}^\MNL(\bv')\bigg| + \bigg|{G}_{t}^\MNL(\bv) -G_{t}^\MNL(\bv')\bigg|
\end{equation}

Now to bound \( \bar{G}_{t}^\MNL(\bv') - {G}_{t}^\MNL(\bv')\) for all \(\bv\), we only need to bound all terms on the right hand side of the inequality (\ref{ineq:bound_g_bar_to_g_bv_t}).
\begin{itemize}
    \item \textbf{For the first part of the inequality (\ref{ineq:bound_g_bar_to_g_bv_t})}, we directly use the inequality (\ref{eq:bound_g_bar_to_g_2_v}) to bound it.
    \item \textbf{For the second part of the inequality (\ref{ineq:bound_g_bar_to_g_bv_t})}, we have:
    \begin{align}
            \bigg|\bar{G}_{t}^\MNL(\bv) - \bar{G}_{t}^\MNL(\bv')\bigg| 
    \leq&  \sum_{s = 1}^{t} 2 \epsilon n_s.
    \end{align}
    Combining with inequality (\ref{eq:bound_n_t_2}), we have 
    \begin{equation}\label{eq:g_bar_bv_to_bv'}
        \bigg|\bar{G}_{t}^\MNL(\bv) - \bar{G}_{t}^\MNL(\bv')\bigg| \leq 2\epsilon \Lambda \exp(\bar{x}) t \left( 1 + \frac{8 \log(T)}{ \Lambda \exp(\bar{x}) t} + \sqrt{\frac{8 \log(T)}{ \Lambda \exp(\bar{x}) t}}\right) , \forall T_0 < t \leq T,
    \end{equation}
    holds with probability \(1 - T^{-1}\).    
    \item \textbf{For the third part of the inequality (\ref{ineq:bound_g_bar_to_g_bv_t})}, we have:
    \begin{align}
        \bigg|G_{t}^\MNL(\bv) - G_{t}^\MNL(\bv')\bigg| \leq & 2 \epsilon \Lambda \exp(\bar{x}) t. \label{eq:g_bv_to_bv'}
    \end{align}
\end{itemize}

Similarly, Note that the upper bound of the first, second, and third part (see  (\ref{eq:bound_g_bar_to_g_2_v}), (\ref{eq:g_bar_bv_to_bv'}) and (\ref{eq:g_bv_to_bv'}) include \(\epsilon\), we carefully set  \(\epsilon = \frac{1}{T\Lambda}\), and combining inequality (\ref{eq:bound_g_bar_to_g_2_v}), (\ref{eq:g_bar_bv_to_bv'}) and (\ref{eq:g_bv_to_bv'}), With probability at least \(1 - 2 T^{-1}\), the following holds for all \( t \in \{T_0+1, \ldots, T\}\) and all  \(\bv\) satisfying \(\left\|\bv - \bv^*\right\|_2 \leq 2 \tau\):
\begin{equation*}
    \begin{aligned}
        \bar{G}_{t}^\MNL(\bv')-G_{t}^\MNL(\bv') \leq &\frac{4\exp(\bar{x}) t}{T} + \frac{1}{T\Lambda}\sqrt{32t\Lambda \exp(\bar{x}) \log T} + \frac{16\log T}{T\Lambda}  
        +8 \tau c_4((\DimFeature  + 2)\log T + \DimFeature \log (6 \tau \Lambda)) \\
         &+\sqrt{\big(\left|G_{t}^\MNL(\bv')\right|+2\exp(\bar{x})\big) 4c_5((\DimFeature + 2)\log T + \DimFeature \log (6 \tau \Lambda))}.
    \end{aligned}
\end{equation*}

Thus, we have:
\begin{equation*}
    \begin{aligned}
       \bar{G}_{t}^\MNL(\bv')-G_{t}^\MNL(\bv') \leq & 4\exp(\bar{x}) + \sqrt{\frac{32 \exp(\bar{x}) \log T}{T\Lambda}} + \frac{16\log T}{T\Lambda}
       +8 \tau c_4((\DimFeature + 2)\log T + \DimFeature \log (6 \tau \Lambda)) \\
         &+\sqrt{\big(\left|G_{t}^\MNL(\bv')\right|+2\exp(\bar{x})\big) 4c_5((\DimFeature + 2)\log T +\DimFeature\log (6 \tau \Lambda))}.
    \end{aligned}
\end{equation*}

\qed

\subsection{Proof of Lemma \ref{lm:prob_error}}
For simplicity, denote \(\Prob(C^{(i)}_{s} = j | S_{s}, \bp_{s}, \bv^{(1)}, \Feature_s)\) as \(\ChoiceProb(j;\bv^{(1)})\). We begin by considering the derivative of \( \ChoiceProb(j;\bv^{(2)} + \alpha(\bv^{(1)} - \bv^{(2)})) \) with respect to \( \alpha \). By applying the chain rule, we have:
\begin{align*}
    \frac{\mathrm{d} \ChoiceProb(j;\bv^{(2)} + \alpha(\bv^{(1)} - \bv^{(2)}))}{\mathrm{d} \alpha} 
    &= \left\langle \nabla_{\bv^{(1)}} \ChoiceProb(j;\widetilde{\bv}_\alpha), \bv^{(1)} - \bv^{(2)} \right\rangle 
    = \ChoiceProb(j;\widetilde{\bv}_\alpha) \left\langle \Feature_{jt} - \sum_{k \in S_t} \ChoiceProb(k;\widetilde{\bv}_\alpha) \Feature_{kt}, \bv^{(1)} - \bv^{(2)} \right\rangle
\end{align*}
where \(\widetilde{\bv}_\alpha = \bv^{(2)} + \alpha(\bv^{(1)} - \bv^{(2)})\).

Since \( \|\Feature_{kt}\|_2 \leq 1 \) for all \( k \in [N] \) and \( t \in [T] \), we can bound the derivative as follows:
\[
\frac{\mathrm{d} \ChoiceProb(j;\bv^{(2)} + \alpha(\bv^{(1)} - \bv^{(2)}))}{\mathrm{d} \alpha} \leq \ChoiceProb(j;\widetilde{\bv}_\alpha) 2 \|\bv^{(1)} - \bv^{(2)}\|_2 \leq \ChoiceProb(j;\widetilde{\bv}_\alpha) 4\tau,
\]
for \(\forall \bv^{(1)} \) such that \( ||\bv^{(1)} - \bv^{(2)}||_2 \leq 2\tau \). Now, by the Gr\"{o}nwall's inequality, we obtain:
\[
\ChoiceProb(j;\bv^{(2)} + \alpha(\bv^{(1)} - \bv^{(2)})) \leq \exp\{4\tau \alpha\} \ChoiceProb(j;\bv^{(2)}).
\]

Set \( \alpha = 1 \), we conclude 
\(
\exp\{-4\tau\} \leq \frac{\ChoiceProb(j;\bv^{(1)})}{\ChoiceProb(j;\bv^{(2)})} \leq \exp\{4\tau\}.
\)
Thus, the desired inequality holds.
\qed

\subsection{Proof of Lemma \ref{lm:init_t_bound_g_bar_to_g}}\label{appendix:proof_of_g_bar_to_g}
In this subsection, we present the proof of Lemma \ref{lm:init_t_bound_g_bar_to_g}. To streamline notation, let us define
\[
  \Lambda^* \;:=\; \sum_{s = 1}^{T_0} \Lambda \,\lambda\bigl(S_s, \bp_s, \btheta^*\bigr).
\]

\begin{lemma}\label{lm:expec_g_bar}
For any \(u > 0\), the following equality holds:
{\scriptsize
\begin{equation}\label{eq:expe_g_bar_to_g}
\begin{aligned}
    &\mathbb{E} \bigg[\exp\left[u\left(\bar{G}_{T_0}^\MNL(\bv) - {G}_{T_0}^\MNL(\bv) \right)\right]\bigg]
     =\exp\left(\Lambda^*  \left(\sum_{j \in S_s \cup \{0\}}\frac{q^{u}(j;\bv)}{q^{u - 1}(j;\bv^*)} - 1 - u  \sum_{j \in S_s\cup\{0\}} q(j;\bv^*) \log\left(\frac{q(j; \bv)}{q(j; \bv^*)}\right)\right)\right) .
\end{aligned}
\end{equation}
}
\end{lemma}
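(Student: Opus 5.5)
Within the first stage every decision \((S_s,\bp_s)=(S^{init}_s,\bp^{init}_s)\) is fixed in advance, so the periods are independent and the moment generating function factorizes over \(s\). Independence means we only need, for each \(s\), the identity
\[
\E\bigl[e^{u\bar g_s^{\MNL}(\bv)}\bigr]=\exp\Bigl(\Lambda\lambda(S_s,\bp_s;\btheta^*)\Bigl(\sum_{j\in S_s\cup\{0\}}\tfrac{q^{u}(j;\bv)}{q^{u-1}(j;\bv^*)}-1\Bigr)\Bigr).
\]
Once we have it, multiplying by the deterministic factor \(e^{-u g_s^{\MNL}(\bv)}\) and taking the product over \(s\) gives the claim. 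The sum \(\sum_{j}q\log(q/q^*)\) appearing in \(g_s\) combines with \(\Lambda\lambda_s\) into the aggregate \(\Lambda^*\). This is immediate when the bracketed per-period quantities coincide across \(s\), as the notation with a single \(S_s\) in the statement suggests; otherwise the right-hand side is read with \(\Lambda^*\) distributed over \(s\) inside the sum.

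The per-period identity comes from compound-Poisson thinning. Conditional on \(n_s\), the choices \(C_s^{(i)}\) are i.i.d. with law \(q(\cdot;\bv^*)\), and \(\bar g_s\) is a sum of \(n_s\) i.i.d. terms \(Y_i=\log\frac{q(C_s^{(i)};\bv)}{q(C_s^{(i)};\bv^*)}\). Including \(j=0\) in the sum is the convention of Definition~\ref{def_v_app}. Hence
\[
\E\bigl[e^{uY_i}\bigr]=\sum_{j}q(j;\bv^*)\Bigl(\tfrac{q(j;\bv)}{q(j;\bv^*)}\Bigr)^{u}=\sum_j q^{u}(j;\bv)\,q^{1-u}(j;\bv^*)=:m(u).
\]
It follows that \(\E[e^{u\bar g_s}\mid n_s]=m(u)^{n_s}\). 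Averaging over \(n_s\sim\mathrm{Poisson}(\Lambda\lambda_s)\) with the Poisson probability generating function \(\E[z^{n}]=e^{\mu(z-1)}\) gives \(\exp(\Lambda\lambda_s(m(u)-1))\).

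Finally we subtract the conditional mean. Here \(g_s=\Lambda\lambda_s\sum_j q(j;\bv^*)\log\frac{q(j;\bv)}{q(j;\bv^*)}\), so \(e^{-ug_s}\) contributes exactly the term \(-u\sum_j q\log(q/q^*)\) inside the exponent.

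The main obstacle is bookkeeping rather than analysis. We must check that \(m(u)\) is finite, which holds by Lemma~\ref{lm:prob_error}. We must also be consistent about whether the no-purchase option \(j=0\) is included in \(\bar g_s\). It has to be included for \(\E[\bar g_s\mid H_s]=g_s\) to hold; with that convention both sides match.
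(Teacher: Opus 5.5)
Your proposal is correct and follows essentially the same route as the paper. Both arguments factorize over the non-adaptive Stage-1 periods, condition on $n_s$, compute the per-customer moment generating function $\sum_{j} q^{u}(j;\bv)\,q^{1-u}(j;\bv^*)$, average over $n_s$ with the Poisson generating function, and finally multiply by $e^{-uG_{T_0}^{\MNL}(\bv)}$. Your caveat about $\Lambda^*$ is also well placed: the paper's derivation itself ends with $\exp\bigl(\sum_{s}\Lambda\lambda(S_s,\bp_s;\btheta^*)(\cdots)\bigr)$, which collapses to the displayed single-bracket form only when the per-period brackets coincide. One small correction: finiteness of $m(u)$ needs only that $q(j;\cdot)>0$ for every $j$, not Lemma~\ref{lm:prob_error}.
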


Using Lemma \ref{lm:expec_g_bar}, we can bound the probability as follows:
\begin{equation}\label{ineq:g_bar_to_g_t_0}
\begin{aligned}
    &\Prob \bigg[\bar{G}_{T_0}^\MNL(\bv) - {G}_{T_0}^\MNL(\bv) \geq y\bigg] \\
    \leq & \mathbb{E} \bigg[\exp\big[u(\bar{G}_{T_0}^\MNL(\bv) - {G}_{T_0}^\MNL(\bv) - y)\big] \bigg]\\
    \leq & \exp\left(\Lambda^*  \left(\sum_{j \in S_s\cup \{0\}}\frac{q^{u}(j;\bv)}{q^{u-1}(j;\bv^*)} - 1\right) - u \Lambda^*  \sum_{j \in S_s\cup \{0\}}q(j;\bv^*) \log\left(\frac{q(j; \bv)}{q(j; \bv^*)}\right) - uy\right),\\
    = &\exp\Bigg(\Lambda^*  q(j;\bv^*) \underbrace{\left(\sum_{j \in S_s\cup \{0\}}\frac{q^{u}(j;\bv)}{q^{u}(j;\bv^*)} - u \log\left(\frac{q(j; \bv)}{q(j; \bv^*)}\right)\right) }_{h\left(\frac{q(j; \bv)}{q(j; \bv^*)}\right)}-uy - \Lambda^* \Bigg).
\end{aligned}
\end{equation}

We now analyze the function \(h(\xi) = \xi^{u} - u\log(\xi)\). The function \(h(\xi)\) decreases when \(\xi \leq 1\) and increases when \(\xi \geq 1\). From Lemma \ref{lm:prob_error}, we know that:
\[
\exp\{-4\tau\} \leq \frac{\ChoiceProb(j;\bv)}{\ChoiceProb(j;\bv^*)} \leq \exp\{4\tau\}.
\]
Using this bound and the property of \(h(\xi)\), we deduce:
\begin{equation*}
\begin{aligned}
    &\frac{q^{u}(j;\bv)}{q^{u}(j;\bv^*)} - u \log\left(\frac{q(j; \bv)}{q(j; \bv^*)}\right) 
    \leq  \max\bigg\{\exp(4\tau u) -  4\tau u , \exp(-4\tau u) +  4\tau u \bigg\}
    \leq \exp(4\tau u) -  4\tau u.
\end{aligned}
\end{equation*}
This allows us to bound the summation term:
\[
\sum_{j \in S_s \cup \{0\}}\frac{q^{u}(j;\bv)}{q^{u-1}(j;\bv^*)} - u \sum_{j \in S_s\cup \{0\}}q(j;\bv^*) \log\left(\frac{q(j; \bv)}{q(j; \bv^*)}\right) \leq \sum_{j \in S_s\cup \{0\}}q(j;\bv^*) \left[\exp(4\tau u) - 4\tau u\right].
\]

Substituting this bound into inequality \eqref{ineq:g_bar_to_g_t_0}, we obtain:
\begin{equation*}
\begin{aligned}
    \Prob \bigg[\bar{G}_{T_0}^\MNL(\bv) - {G}_{T_0}^\MNL(\bv) \geq y\bigg]
    \leq  \exp\left(\Lambda^* \left(\exp(4\tau u) - 1 - 4 \tau u\right) - uy\right).
\end{aligned}
\end{equation*}

Following the derivation for the tail bound of a Poisson random variable, set
\(
 u^* = \frac{1}{4\tau} \log \left(1 + \dfrac{y}{4\tau \Lambda^* }\right).
\)

Substituting \(u^*\), we have:
\begin{equation*}
\begin{aligned}
    \Prob \bigg[\bar{G}_{T_0}^\MNL(\bv) - {G}_{T_0}^\MNL(\bv) \geq y\bigg]
    = & \exp\left(\Lambda^* \left(\dfrac{y}{4\tau\Lambda^* } - \log \left(1 + \dfrac{y}{4\tau\Lambda^* }\right)\right) -\frac{y}{4\tau} \log \left(1 + \dfrac{y}{4\tau \Lambda^* }\right)\right)\\
    \leq &\exp\left(- \frac{ y^2}{2\tau \left[y + 4\tau   \Lambda^* \right]}\right).
\end{aligned}
\end{equation*}

This completes the proof.
\qed

\subsubsection{Proof of Lemma \ref{lm:expec_g_bar}}
\begin{equation}
\begin{aligned}
    \mathbb{E} \Bigl[\exp\bigl(y\,\bar{G}_{T_0}^\MNL(\bv)\bigr)\Bigr]
    \;=\;&
    \mathbb{E} \Bigl[
      \exp\Bigl(
        y\sum_{s = 1}^{T_0}
          \Bigl(\sum_{i = 1}^{n_s}\sum_{j \in S_s \cup \{0\}}
            \bm{1}\bigl\{C^{(i)}_s = j\bigr\}
            \log\!\Bigl(\frac{q\bigl(j;\bv\bigr)}{q\bigl(j;\bv^*\bigr)}\Bigr)
          \Bigr)
      \Bigr)
    \Bigr]
    \\
    \overset{\text{(a)}}{=}&
    \prod_{s = 1}^{T_0}
    \mathbb{E} \Bigl[
      \exp\Bigl(
        y\Bigl(\sum_{i = 1}^{n_s}\sum_{j \in S_s \cup \{0\}}
          \bm{1}\bigl\{C^{(i)}_s = j\bigr\}
          \log\!\Bigl(\frac{q\bigl(j;\bv\bigr)}{q\bigl(j;\bv^*\bigr)}\Bigr)
        \Bigr)
      \Bigr)
    \Bigr]
    \\
    \overset{\text{(b)}}{=}&
    \prod_{s = 1}^{T_0}
    \sum_{n = 1}^{\infty} \Prob\!\bigl(n_s = n\bigr)\,
    \mathbb{E} \Bigl[
      \exp\Bigl(
        y\Bigl(\sum_{i = 1}^{n_s}\sum_{j \in S_s \cup \{0\}}
          \bm{1}\bigl\{C^{(i)}_s = j\bigr\}
          \log\!\Bigl(\frac{q\bigl(j;\bv\bigr)}{q\bigl(j;\bv^*\bigr)}\Bigr)
        \Bigr)
      \Bigr)
      \Bigm| n_s = n
    \Bigr]
    \\
    \overset{\text{(c)}}{=}&
    \prod_{s = 1}^{T_0}
    \sum_{n = 1}^{\infty} \Prob\!\bigl(n_s = n\bigr)\,
    \prod_{i = 1}^{n}
      \mathbb{E} \Bigl[
        \exp\Bigl(
          y\Bigl(\sum_{j \in S_s \cup \{0\}}
            \bm{1}\bigl\{C^{(i)}_s = j\bigr\}
            \log\!\Bigl(\frac{q\bigl(j;\bv\bigr)}{q\bigl(j;\bv^*\bigr)}\Bigr)
          \Bigr)
        \Bigr)
      \Bigr]
    \\
    \overset{\text{(d)}}{=}&
    \prod_{s = 1}^{T_0}
    \sum_{n = 1}^{\infty}
      \Prob\!\bigl(n_s = n\bigr)\,
      \prod_{i = 1}^{n}
        \sum_{j \in S_s \cup \{0\}}
          q\bigl(j;\bv^*\bigr)\,
          \Bigl(\frac{q\bigl(j;\bv\bigr)}{q\bigl(j;\bv^*\bigr)}\Bigr)^{y}
    \\
    \overset{\text{(e)}}{=}&
    \prod_{s = 1}^{T_0}
    \sum_{n = 1}^{\infty}
      \Prob\!\bigl(n_s = n\bigr)\,
      \Bigl(
        \sum_{j \in S_s \cup \{0\}}
          q\bigl(j;\bv^*\bigr)\,
          \Bigl(\frac{q\bigl(j;\bv\bigr)}{q\bigl(j;\bv^*\bigr)}\Bigr)^{y}
      \Bigr)^n
    \\
    \overset{\text{(f)}}{=}&
    \prod_{s = 1}^{T_0}
      \exp\Bigl(
        \Lambda\,\lambda\bigl(S_s,\bp_s,\btheta^*\bigr)
        \Bigl(
           -\,1
           \;+\;
           \sum_{j \in S_s \cup \{0\}}
             q\bigl(j;\bv^*\bigr)\,
             \Bigl(\frac{q\bigl(j;\bv\bigr)}{q\bigl(j;\bv^*\bigr)}\Bigr)^{y}
        \Bigr)
      \Bigr)
    \\
    =\;&
    \exp\Bigl(
      \sum_{s = 1}^{T_0}
        \Lambda\,\lambda\bigl(S_s,\bp_s,\btheta^*\bigr)\,
        \Bigl(
          \sum_{j \in S_s \cup \{0\}}
            \frac{q^{\,y}(j;\bv)}{\,q^{\,y-1}(j;\bv^*)}
          \;-\;
          1
        \Bigr)
    \Bigr).
\end{aligned}
\end{equation}
where Equality \((a)\) follows from the independence of each period; Equality \((b)\) separates the probability space into events that \(n_s\) takes different values;  
    Equalities \((c), (d), (e)\) simplify the conditional expectations; Equality \((f)\) follows from substituting Poisson distribution into \(n_s\) and then simplifying the equation.
 Multiplying both sides by \( \exp[-y G^{\MNL}_{T_0}(\bv)] \), we can obtain \Cref{eq:expe_g_bar_to_g}.
\qed


\subsection{Proof of Lemma \ref{lm:g_k_bound}}

We first simplify \(\mathcal{V}_{s}^{\MNL}\). The variance term can be written as:
\[
\mathcal{V}_{s}^{\MNL}
=
\Lambda \,\lambda\bigl(S_s , \bp_s; \btheta^*\bigr)
\sum_{j \in S_s \cup \{0\}}
q\bigl(j; \bv^*\bigr)\,\log^2 \!\Bigl(\frac{q\bigl(j; \bv\bigr)}{q\bigl(j; \bv^*\bigr)}\Bigr).
\]
This reduces the variance analysis to the sum of squared logarithmic terms.

Next, we bound the expectation of the error term \(g_{s}^{\MNL}(\bv) - \bar{g}_{s}^{\MNL}(\bv)\). We have:
\[
\mathbb{E}\!\Bigl[\bigl|g_{s}^{\MNL}(\bv) - \bar{g}_{s}^{\MNL}(\bv)\bigr|^k\Bigr]
\;\le\;
2^{\,k - 1}\,\mathbb{E}\!\Bigl[\bigl|\bar{g}_{s}^{\MNL}(\bv)\bigr|^k\Bigr].
\]
Expanding \(\bar{g}_{s}^{\MNL}(\bv)\), we get:
\begin{equation*}
\begin{aligned}
    2^{k - 1}\mathbb{E}\!\Bigl[\bigl(\bar{g}_{s}^{\MNL}(\bv)\bigr)^k\Bigr]
    \;=\;&
    2^{k - 1}\mathbb{E}\!\Bigl[
      \sum_{i = 1}^{n_s} 
      \sum_{j \in S_s \cup \{0\}}
        \bm{1}\{C^{(i)}_s = j\}\,\log\!\Bigl(\frac{q(j;\bv)}{q(j;\bv^*)}\Bigr)
    \Bigr]^k
    \\
    =\;&
    2^{\,k - 1}
    \sum_{n = 1}^{\infty} 
      \mathbb{P}\bigl[n_s = n\bigr]\,
      \mathbb{E}\!\Bigl[
        \Bigl[
          \sum_{i = 1}^{n_s}
          \sum_{j \in S_s \cup \{0\}}
            \bm{1}\{C^{(i)}_s = j\}\,\log\!\Bigl(\frac{q(j;\bv)}{q(j;\bv^*)}\Bigr)
        \Bigr]^k
        \,\Bigm|\,
        n_s = n
      \Bigr]
    \\
    \overset{(a)}{\le}\;&
    2^{\,k - 1}
    \sum_{n = 1}^{\infty} 
      \mathbb{P}\bigl[n_s = n\bigr]\,
      \mathbb{E}\!\Bigl[
        n^{\,k - 1}
        \sum_{i = 1}^{n}
          \biggl(
            \sum_{j \in S_s \cup \{0\}}
              \bm{1}\{C^{(i)}_s = j\}\,\log\!\Bigl(\frac{q(j;\bv)}{q(j;\bv^*)}\Bigr)
          \biggr)^k
      \Bigr]
    \\
    =\;&
    2^{\,k - 1}\,\mathbb{E}\!\Bigl[
      n_s^k\,\biggl(
        \sum_{j \in S_s \cup \{0\}}
          \bm{1}\{C^{(1)}_s = j\}\,\log\!\Bigl(\frac{q(j;\bv)}{q(j;\bv^*)}\Bigr)
      \biggr)^k
    \Bigr]
    \\
    =\;&
    2^{\,k - 1}
    \sum_{j \in S_s \cup \{0\}}
      q\bigl(j;\bv^*\bigr)\,
      \biggl[\log\!\Bigl(\frac{q(j;\bv)}{q(j;\bv^*)}\Bigr)\biggr]^k
    \,\mathbb{E}\bigl[n_s^k\bigr],
\end{aligned}
\end{equation*}
where Equation \((a)\) uses H\"older’s inequality for \(k > 2\). Combining the last two inequalities, we obtain:
\begin{equation*}
    \mathbb{E}\!\Bigl[\bigl|g_{s}^{\MNL}(\bv) - \bar{g}_{s}^{\MNL}(\bv)\bigr|^k\Bigr]
    \;\le\;
    2^{\,k - 1}
    \sum_{j \in S_s \cup \{0\}}
      q\bigl(j;\bv^*\bigr)\,
      \biggl[\log\!\Bigl(\frac{q(j;\bv)}{q(j;\bv^*)}\Bigr)\biggr]^k
    \,\mathbb{E}\bigl[n_s^k\bigr].
\end{equation*}

Finally, using \Cref{lm:prob_error}, we obtain:
\[
\mathbb{E}\!\Bigl[\bigl|g_{s}^{\MNL}(\bv) - \bar{g}_{s}^{\MNL}(\bv)\bigr|^k\Bigr]
\;\le\;
2^{\,k - 1}\,4^{\,k-2}\,\tau^{\,k-2}
\sum_{j \in S_s \cup \{0\}}
  q\bigl(j;\bv^*\bigr)\,
  \log^2\!\Bigl(\frac{q(j;\bv)}{q(j;\bv^*)}\Bigr)
\,\mathbb{E}\bigl[n_s^k\bigr].
\]
Using the similar argument in \Cref{eq:ineq2_mod}, we have
\(
\mathbb{E}\!\Bigl[\bigl|g_{s}^{\MNL}(\bv) - \bar{g}_{s}^{\MNL}(\bv)\bigr|^k\Bigr]
\;\le\;
\dfrac{k!}{2}\,\mathcal{V}_{s}^{\MNL}\,\bigl(4\,\tau\,c_4\bigr)^{\,k - 2}.
\)
\qed

\subsection{Proof of Lemma \ref{lm:v_2_bound}}\label{appendix:proof_of_v_2_bound}

We simplify the expression for the probability \(\Prob(C^{(i)}_{s} = j | S_{s}, \bp_{s}, \bv, \Feature_s)\) as \(\ChoiceProb(j;\bv)\). Thus, the expression for \(\mathcal{V}_{s}^\MNL{}\) becomes:
\begin{equation}
    \mathcal{V}_{s}^\MNL{} = \Lambda \lambda(S_s, \bp_s; \btheta^*) \sum_{j \in S_{s} \cup\{0\}}
    \ChoiceProb(j;\bv^*) \log^2 \frac{\ChoiceProb(j;\bv)}{\ChoiceProb(j;\bv^*)}.
\end{equation}

Note that
\(
    \sum_{j \in S_{s} \cup\{0\}} \ChoiceProb(j;\bv^*)\left(\frac{\ChoiceProb(j;\bv)}{\ChoiceProb(j;\bv^*)} -1\right) = 0.
\) We have:
\begin{equation}
    \begin{aligned}        
        -g_{s}^\MNL(\bv) & = \Lambda \lambda(S_s, \bp_s; \btheta^*) \sum_{j \in S_{s} \cup\{0\}} \ChoiceProb(j;\bv^*) \left(-\log\left(\frac{\ChoiceProb(j;\bv)}{\ChoiceProb(j;\bv^*)}\right)+\frac{\ChoiceProb(j;\bv)}{\ChoiceProb(j;\bv^*)} -1\right).
    \end{aligned}
\end{equation}

Thus, to prove this lemma, we only need to show that the inequality holds for each component:
\begin{equation}\label{ineq:log_bound}
    -\log\left(\frac{\ChoiceProb(j;\bv)}{\ChoiceProb(j;\bv^*)}\right) + \frac{\ChoiceProb(j;\bv)}{\ChoiceProb(j;\bv^*)} - 1 
    \geq \frac{4\tau_\bv + \exp(-4\tau_\bv) - 1}{16\tau_\bv^2} \log^2 \frac{\ChoiceProb(j;\bv)}{\ChoiceProb(j;\bv^*)},
\end{equation}
which can be proved using \Cref{lm:prob_error}:
\[
\exp\{-4\tau_\bv\} \leq \frac{\ChoiceProb(j;\bv)}{\ChoiceProb(j;\bv^*)} \leq \exp\{4\tau_\bv\}.
\]
\qed 

\begin{remark}\label{remark:improve}
    Our proof simplifies the approach presented in \cite{chen2020dynamic} and improves upon their result. Specifically, we refine the bound on \(\mathcal{SV}_{t}^{\MNL}\) from a second-order bound:
    \begin{equation*}
        \mathcal{SV}_{t}^{\MNL} \leq  2\left(1+\max _j\left(\frac{\left|\ChoiceProb(j;\bv)-\ChoiceProb(j;\bv^*)\right|}{\ChoiceProb(j;\bv^*)}\right)\right)^2 \big|G_{t}^{\MNL}(\bv)\big|,
    \end{equation*}
    to a logarithmic-order bound:
    \begin{equation*}
        \mathcal{SV}_{t}^{\MNL} \leq  \frac{\log^2 \min _j\left(\frac{\ChoiceProb(j;\bv)}{\ChoiceProb(j;\bv^*)}\right)}{-\log \min _j\left(\frac{\ChoiceProb(j;\bv)}{\ChoiceProb(j;\bv^*)}\right) + \min _j\left(\frac{\ChoiceProb(j;\bv)}{\ChoiceProb(j;\bv^*)}\right) - 1} \big|G_{t}^{\MNL}(\bv)\big|.
    \end{equation*}
    This enhancement demonstrates a significant improvement in the asymptotic characterization of the sensitivity of the value function.
\end{remark}

\section{Proof of Lemma \ref{lm:bv_error}}\label{appendix::proof_of_bv_error}
This section provides a proof for the Lemma \ref{lm:bv_error}. Using a similar argument as in the proof of \Cref{lm:theta_initial_error}, we see that there exists \(\bar{\bv}_t = \xi \bv^* + (1-\xi) \widehat{\bv}_t\) for some \(\xi \in (0,1)\) such that
\begin{equation}\label{eq:G_2_equal_theta_I_theta}
G_{t}^\MNL (\widehat{\bv}_t) = -\frac{1}{2}\left(\widehat{\bv}_t - \bv^*\right)^{\top} I_{t}^\MNL\left(\bar{\bv}_t\right)\left(\widehat{\bv}_t - \bv^*\right).
\end{equation}

If \(\bar{\bv}_t\) is close to \(\bv^*\), then \(I_{t}^\MNL\left(\bar{\bv}_t\right)\) is also close to \(I_{t}^\MNL\left(\bv^*\right)\) due to continuity. Before diving into the details for the proof, we first present a lemma, which will be proved in the next subsection, to facilitate the proof.

\begin{lemma}\label{lm:I_v_close_to_I_v_star}
For \(\forall \bv^{(1)},\bv^{(2)} \) such that \(\|\bv^{(2)} - \bv^{(1)}\| \leq 2 \tau_\bv\),  
\begin{equation}
    c_8 I_{t}^\MNL\left(\bv^{(2)}\right) \succeq I_{t}^\MNL\left(\bv^{(1)}\right)
\end{equation}
where \(c_8\) is defined in \Cref{eq:df_c_8}. 
. Additionally, the same result holds when replacing \(I_{t}^\MNL\) with \(M_{t}^\MNL\):
\begin{equation}
    c_8 M_{t}^\MNL\left(\bv^{(2)}\right) \succeq M_{t}^\MNL\left(\bv^{(1)}\right).
\end{equation}
\end{lemma}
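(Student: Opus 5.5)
The plan is to prove the matrix inequality one period at a time and then sum. Since $I_t^\MNL(\bv)=\sum_{s\le t}M_s^\MNL(\bv)$, and the scalar prefactor $\Lambda\lambda(S_s,\bp_s;\btheta^*)$ does not depend on $\bv$, it suffices to show
\[
c_8\,\phi(S,\bp,\bz;\bv^{(2)})\succeq \phi(S,\bp,\bz;\bv^{(1)})
\]
for every fixed $(S,\bp,\bz)$. The second claim, the one for $M_t^\MNL$, is exactly this per-period statement. The first claim then follows by summing over $s$, because PSD orderings are preserved under sums.

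The key observation is that $\phi$ is a covariance matrix. Let $J$ be drawn from $S\cup\{0\}$ with probabilities $\ChoiceProb(j;\bv)$, and set $\bz_0:=\mathbf 0$. Then
\[
\phi(S,\bp,\bz;\bv)=\E[\bz_J\bz_J^\top]-\E[\bz_J]\E[\bz_J]^\top=\operatorname{Cov}_{\bv}(\bz_J).
\]
Fix a unit vector $\bu$ and write $a_j=\bu^\top\bz_j$, so $a_0=0$. Then $\bu^\top\phi(\bv)\bu=\operatorname{Var}_{\bv}(a_J)=\min_{c}\sum_{j\in S\cup\{0\}}\ChoiceProb(j;\bv)(a_j-c)^2$. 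Take $c=\bar a^{(2)}$, the mean of $a_J$ under $\bv^{(2)}$. This gives
\[
\bu^\top\phi(\bv^{(1)})\bu\le\sum_j \ChoiceProb(j;\bv^{(1)})(a_j-\bar a^{(2)})^2\le \Big(\max_j \tfrac{\ChoiceProb(j;\bv^{(1)})}{\ChoiceProb(j;\bv^{(2)})}\Big)\,\bu^\top\phi(\bv^{(2)})\bu .
\]

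Next we bound the likelihood ratio. For $j\in S$, \Cref{lm:prob_error} with $\tau=\tau_\bv$ gives a ratio of at most $e^{4\tau_\bv}$. The no-purchase option $j=0$ is not literally covered by that lemma, so I would redo its Grönwall argument for this case. Along the segment $\widetilde\bv_\alpha$,
\[
\tfrac{d}{d\alpha}\log\ChoiceProb(0;\widetilde\bv_\alpha)=-\sum_{k\in S}\ChoiceProb(k;\widetilde\bv_\alpha)\langle\bz_k,\bv^{(1)}-\bv^{(2)}\rangle .
\]
Since $\sum_k \ChoiceProb(k)\le1$ and $\|\bz_k\|\le1$, this derivative has absolute value at most $2\tau_\bv\le4\tau_\bv$. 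Hence this ratio is also at most $e^{4\tau_\bv}$.

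Finally we compare constants. Because $K e^{\bar v-\plow}+1\ge1$, we have $c_8\ge 1+3(e^{4\tau_\bv}-1)\ge e^{4\tau_\bv}$. So $\bu^\top\phi(\bv^{(1)})\bu\le c_8\,\bu^\top\phi(\bv^{(2)})\bu$ for every $\bu$, which is the per-period claim; summing over periods gives the claim for $I_t^\MNL$.

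The only step needing care is the variational characterization of the variance together with the $j=0$ ratio. The approach via an explicit expansion of $q_jq_k$ cross terms, which presumably produces the $3(\cdot)(K e^{\bar v-\plow}+1)$ form of $c_8$, is unnecessary here, since the variance argument yields the sharper constant $e^{4\tau_\bv}$ directly.
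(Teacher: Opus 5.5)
Your proof is correct, and it takes a different route from the paper's.

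\textbf{The paper's route.} The paper also reduces to a single period, but then compares coefficients. It first lower-bounds $\phi(\bv^{(2)})$ by $\ChoiceProb(0;\bv^{(2)})\sum_{i\in S}\ChoiceProb(i;\bv^{(2)})\bz_i\bz_i^\top$, using $\bz_i\bz_j^\top+\bz_j\bz_i^\top\preceq\bz_i\bz_i^\top+\bz_j\bz_j^\top$. The same trick upper-bounds $\phi(\bv^{(1)})-\phi(\bv^{(2)})$ by a matrix $\sum_i c_i\bz_i\bz_i^\top$. It then shows each $c_i\le 3(e^{4\tau_\bv}-1)\ChoiceProb(i;\bv^{(2)})$, using only the ratio bound for $j\in S$. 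Finally it converts $\ChoiceProb(i;\bv^{(2)})$ into $\ChoiceProb(i;\bv^{(2)})\ChoiceProb(0;\bv^{(2)})$ via $\ChoiceProb(0;\bv^{(2)})\ge 1/(Ke^{\bar v-\plow}+1)$. That last step is where the factor $(Ke^{\bar v-\plow}+1)$ in $c_8$ comes from.

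\textbf{Your route.} Viewing $\phi$ as $\operatorname{Cov}_{\bv}(\bz_J)$ and using $\operatorname{Var}(X)\le\E[(X-c)^2]$ avoids both the diagonal lower bound and the no-purchase lower bound. The cost is that you need the likelihood ratio at $j=0$ as well, and you supply it correctly (even with the better factor $e^{2\tau_\bv}$).

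\textbf{What your route buys.}
\begin{itemize}
\item The constant is $e^{4\tau_\bv}$, free of $K$, $\bar v$ and $\plow$. It could replace $c_8$ wherever the lemma is invoked, namely in $\omega_\bv$, in $c_0$, and in \Cref{lemma:revenue_estimate_2}.
\item The only hypothesis used is $\|\bz_j\|_2\le 1$. Prices and parameter norms never enter.
\end{itemize}

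The second point matters. The paper's final step needs $\bv^{(2)\top}\bz_k\le\bar v$, i.e.\ effectively $\|\bv^{(2)}\|_2\le\bar v$. That is not a hypothesis of the lemma. It is also not guaranteed where the lemma is applied: $\widehat\bv_{t-1}$, or the intermediate point between $\bv^*$ and $\widehat\bv_t$, can have norm up to $\bar v+\tau_\bv$. Your argument proves the lemma exactly as stated.
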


Using \Cref{lm:I_v_close_to_I_v_star}, we can see
\begin{equation}\label{eq:G_2_leq_theta_I_theta}
-G_{t}^\MNL (\bv) \geq \frac{1}{2c_8} \left(\widehat{\bv}_t - \bv^*\right)^{\top} I_{t}^\MNL\left(\bv^*\right)\left(\widehat{\bv}_t - \bv^*\right).
\end{equation}

Using \Cref{eq:g_2_convergence_rate_2} of \Cref{lm:g_2_convergence_rate} and the fact that \(G_{t}^\MNL\left(\widehat{\bv}_t\right) \leq 0 \leq \bar{G}_{t}^\MNL\left(\widehat{\bv}_t\right)\), we have:
\begin{multline*}
        \left|G_{t}^\MNL\left(\widehat{\bv}_t\right)\right| \leq  \bar{G}_{t}^\MNL\left(\widehat{\bv}_t\right) - G_{t}^\MNL\left(\widehat{\bv}_t\right) 
        \leq  \exp(\bar{x}) + \sqrt{\frac{8\exp(\bar{x}) \log T}{T\Lambda}} + \frac{8\log T}{T\Lambda} 
        + 8 \tau_\bv c_4((\DimFeature + 2)\log T + \DimFeature  \log (6 \tau_\bv \Lambda)) \\
         +\sqrt{\big(\left|G_{t}^\MNL(\widehat{\bv}_t)\right| + 2\exp(\bar{x})\big) 4c_5((\DimFeature + 2)\log T + \DimFeature \log (6 \tau_\bv \Lambda))}
\end{multline*}
for all \( T_0 \leq t \leq T\) with probability \(1- 2 T^{-1}\).
Using \Cref{lm:ineq}, we have 
\begin{equation*}
    \begin{aligned}
        \left|G_{t}^\MNL\left(\widehat{\bv}_t\right)\right| \leq  4\exp(\bar{x}) + 2\sqrt{\frac{8\exp(\bar{x}) \log T}{T\Lambda}} + \frac{16\log T}{T\Lambda}
         + (16 \tau_\bv c_4 + 4c_5)((\DimFeature + 2)\log T + \DimFeature \log (6 \tau_\bv \Lambda)).
    \end{aligned}
\end{equation*}

Notice that \(G_{t}^\MNL\left(\widehat{\bv}_t\right) \leq 0\). Combining \Cref{eq:G_2_leq_theta_I_theta} and the last inequality, we have with probability \(1- 2 T^{-1}\),
\begin{multline}
        \left(\widehat{\bv}_t - \bv^*\right)^{\top} I_{t}^\MNL\left(\bv^*\right)\left(\widehat{\bv}_t - \bv^*\right) 
        \leq 2c_8\left|G_{t}^\MNL\left(\widehat{\bv}_t\right)\right| 
        \leq  8c_8\exp(\bar{x}) + 4c_8\sqrt{\frac{8\exp(\bar{x}) \log T}{T\Lambda}} \\ + \frac{32c_8\log T}{T\Lambda} 
         + 8(4 \tau_\bv c_4 + c_5)c_8((\DimFeature + 2)\log T + \DimFeature \log (6 \tau_\bv \Lambda)).
\end{multline}

Similarly, we have 
\begin{multline}
        \left(\widehat{\bv}_t - \bv^*\right)^{\top} I_{t}^\MNL\left(\widehat{\bv}_t\right)\left(\widehat{\bv}_t - \bv^*\right) 
        \leq 2c_8\left|G_{t}^\MNL\left(\widehat{\bv}_t\right)\right| 
        \leq  8c_8\exp(\bar{x}) + 4c_8\sqrt{\frac{8\exp(\bar{x}) \log T}{T\Lambda}} + \frac{32c_8\log T}{T\Lambda} \\
         + 8(4 \tau_\bv c_4 + c_5)c_8((\DimFeature + 2)\log T + \DimFeature \log (6 \tau_\bv \Lambda)).
\end{multline}
\qed

\subsection{Proof of Lemma \ref{lm:I_v_close_to_I_v_star}}

To prove this lemma, we only need to show
\begin{equation*}
    \big[3(\exp(4\tau_\bv ) - 1)(K \exp(\bar{v} - p_l) + 1) + 1\big]M_{s}^\MNL\left(\bv^{(2)}\right) \succeq  M_{s}^\MNL\left(\bv^{(1)}\right),
\end{equation*}
for all \(s\), which is equivalent to 
\begin{equation*}
    M_{s}^\MNL\left(\bv^{(1)}\right) - M_{s}^\MNL\left(\bv^{(2)}\right) \preceq 3(\exp(4\tau_\bv ) - 1)(K \exp(\bar{v} - p_l) + 1)   M_{s}^\MNL\left(\bv^{(2)}\right).
\end{equation*}
Here, for simplicity, we will write \(\ChoiceProb(C^{(i)}_{s} = j \mid S_{s}, \bp_{s}, \bv, \Feature_s)\) as \(\ChoiceProb(j;\bv)\), \(\Feature_{is}\) as \(\bz_i\), and \(\ChoiceProb(i;\bv^{(2)}) - \ChoiceProb(i;\bv^{(1)})\) as \(\delta \ChoiceProb(i;\bv)\).

Note that
\[
\left(\bz_i-\bz_j\right)\left(\bz_i-\bz_j\right)^{\top}=\bz_i \bz_i^{\top}+\bz_j \bz_j^{\top}-\bz_i \bz_j^{\top}-\bz_j \bz_i^{\top} \succeq 0,
\]
which implies \(\bz_i \bz_i^{\top}+\bz_j \bz_j^{\top} \succeq \bz_i \bz_j^{\top}+\bz_j \bz_i^{\top}\). Therefore, let's express the matrix \(M_{s}^\MNL(\bv^{(2)})\) in terms of the arrival rate and probabilities:

\begin{align*}
\frac{M_{s}^\MNL\left(\bv^{(2)}\right)}{\Lambda \lambda(S_s, \bp_s;\btheta^*)}  
    &=\sum_{i \in S_t} \ChoiceProb(i;\bv^{(2)}) \bz_{i} \bz_{i}^{\top}-\sum_{i \in S_t} \sum_{j \in S_t}\ChoiceProb(i;\bv^{(2)}) \ChoiceProb(j;\bv^{(2)}) \bz_{i} \bz_{j}^{\top} \\
    &=\sum_{i \in S_t} \ChoiceProb(i;\bv^{(2)}) \bz_{i} \bz_{i}^{\top}-\frac{1}{2} \sum_{i \in S_t} \sum_{j \in S_t} \ChoiceProb(i;\bv^{(2)}) \ChoiceProb(j;\bv^{(2)})\left(\bz_{i} \bz_{j}^{\top}+\bz_{j} \bz_{i}^{\top}\right) \\
    &\succeq \sum_{i \in S_t} \ChoiceProb(i;\bv^{(2)}) \bz_{i} \bz_{i}^{\top}-\frac{1}{2} \sum_{i \in S_t} \sum_{j \in S_t} \ChoiceProb(i;\bv^{(2)})\ChoiceProb(j;\bv^{(2)})\left(\bz_{i} \bz_{i}^{\top}+\bz_{j} \bz_{j}^{\top}\right) \\
    &=\sum_{i \in S_t} \ChoiceProb(i;\bv^{(2)}) \bz_{i} \bz_{i}^{\top}-\sum_{i \in S_t} \sum_{j \in S_t} \ChoiceProb(i;\bv^{(2)}) \ChoiceProb(j;\bv^{(2)})\bz_{i} \bz_{i}^{\top} \\
    &=\sum_{i \in S_t} \ChoiceProb(i;\bv^{(2)})\left(1-\sum_{j \in S_t} \ChoiceProb(j;\bv^{(2)})\right) \bz_{i} \bz_{i}^{\top} \\
    &= \sum_{i \in S_t} \ChoiceProb(i;\bv^{(2)}) \ChoiceProb(0;\bv^{(2)}) \bz_{i} \bz_{i}^{\top}.
\end{align*}

Next, we consider the difference between \(M_{s}^\MNL({\bv^{(1)}})\) and \(M_{s}^\MNL(\bv^{(2)})\):
\begin{align*}
    \frac{M_{s}^\MNL\left({\bv^{(1)}}\right) - M_{s}^\MNL\left(\bv^{(2)}\right)}{\Lambda \lambda(S_s, \bp_s;\btheta^*)}
    &= -\sum_{i \in S_t} \delta \ChoiceProb(i;\bv) \bz_{i} \bz_{i}^{\top} + \sum_{i \in S_t} \sum_{j \in S_t}\big(\ChoiceProb(i;\bv^{(1)}) \ChoiceProb(j;\bv^{(1)}) -  \ChoiceProb(i;\bv^{(2)}) \ChoiceProb(j;\bv^{(2)})\big) \bz_{i} \bz_{j}^{\top} \\
    \preceq &-\sum_{i \in S_t} \delta \ChoiceProb(i;\bv) \bz_{i} \bz_{i}^{\top} + \sum_{i \in S_t} \sum_{j \in S_t}\big|\ChoiceProb(i;\bv^{(1)}) \ChoiceProb(j;\bv^{(1)}) -  \ChoiceProb(i;\bv^{(2)}) \ChoiceProb(j;\bv^{(2)})\big| \bz_{i} \bz_{i}^{\top}.
\end{align*}

Finally, by comparing the coefficients of \(\bz_i \bz_i^{\top}\), we only need to prove:
\begin{equation*}
    \sum_{j \in S_t}\big|\ChoiceProb(i;\bv^{(1)}) \ChoiceProb(j;\bv^{(1)}) -  \ChoiceProb(i;\bv^{(2)}) \ChoiceProb(j;\bv^{(2)})\big| - \delta \ChoiceProb(i;\bv) \leq 3(\exp(4\tau_\bv ) - 1)(K \exp(\bar{v} - p_l) + 1)  \ChoiceProb(i;\bv^{(2)}) \ChoiceProb(0;\bv^{(2)}),
\end{equation*}
which can be derived as follows:
\begin{align*}
& \sum_{j \in S_t}
\big|
    \ChoiceProb(i;\bv^{(1)}) \ChoiceProb(j;\bv^{(1)}) -  \ChoiceProb(i;\bv^{(2)}) \ChoiceProb(j;\bv^{(2)})
\big|
- \delta \ChoiceProb(i;\bv^{(1)}) \\
\leq & \sum_{j \in S_t}
\left[
    \ChoiceProb(i;\bv^{(2)}) \big|  \delta \ChoiceProb(j;\bv)\big| +  \ChoiceProb(j;\bv^{(1)})\big|\delta \ChoiceProb(i;\bv)\big|
\right]
- \delta \ChoiceProb(i;\bv)  \\
\leq & \ChoiceProb(i;\bv^{(2)}) 
\left[\sum_{j \in S_t}\big| \delta \ChoiceProb(j;\bv)\big|\right] 
+ \big|\delta \ChoiceProb(i;\bv)\big|- \delta \ChoiceProb(i;\bv) \\
\leq & \ChoiceProb(i;\bv^{(2)})  
\left[\sum_{j \in S_t}\big| \delta \ChoiceProb(j;\bv)\big|\right]
+ 2\big|\delta \ChoiceProb(i;\bv)\big|\\
\leq &  \ChoiceProb(i;\bv^{(2)})
\left[\sum_{j \in S_t} (\exp(4\tau_\bv ) - 1) \ChoiceProb(j;\bv^{(2)})\right]
+ 2(\exp(4\tau_\bv ) - 1) \ChoiceProb(i;\bv^{(2)}) \\
\leq & 3(\exp(4\tau_\bv ) - 1) \ChoiceProb(i;\bv^{(2)})\\
\leq & 3(\exp(4\tau_\bv ) - 1)\ChoiceProb(i;\bv^{(2)}) (K \exp(\bar{v} - p_l) + 1)\ChoiceProb(0;\bv^{(2)}).
\end{align*}

Here, we see that the left-hand side of the inequality, involving the probabilities under \(\bv^{(1)}\) and \(\bv^{(2)}\), is bounded by the right-hand side, confirming the required condition.
\qed



\section{Proof of Lemma \ref{lemma:revenue_estimate_2}}\label{appendix::proof_of_revenue_estimate_2}
Now, we turn to prove the Lemma \ref{lemma:revenue_estimate_2} to bound the regret for each period and \Cref{lm:sum_bv_error} to bound the cumulative regret across all periods.
Similar to the proof of \Cref{lemma:revenue_estimate}, unless stated otherwise, all statements are conditioned on the success event in \Cref{lm:bv_error}.
On this event, the following inequalities hold uniformly for all \(t \in \{T_0,\ldots,T-1\}\):
\begin{equation}
    \begin{aligned}
              \left(\widehat{\bv}_t-\bv^*\right)^{\top} I_{t}^\MNL\left(\bv^*\right)\left(\widehat{\bv}_t-\bv^*\right)  \leq \omega_\bv,\quad
              \left(\widehat{\bv}_t-\bv^*\right)^{\top} I_{t}^\MNL\left(\widehat{\bv}_t\right)\left(\widehat{\bv}_t-\bv^*\right)  \leq \omega_\bv, \quad \|\widehat{\bv}_t - \bv^*\| \leq 2\tau_\bv,
    \end{aligned}
\end{equation}

we first present a Lemma \ref{lm:bv_estimate_real_value} to facilitate the proof.
\begin{lemma}\label{lm:bv_estimate_real_value}
    Suppose \Cref{eq:v_error_I} holds uniformly over all $t = T_0, \ldots, T-1$. For all $t>T_0$ and $S \subseteq[N],|S| \leq K$, it holds that      
\begin{equation}
     \begin{aligned}
        \left|r(S, \bp, \Featuret; \widehat{\bv}_{t - 1}) - r(S, \bp, \Featuret;\bv^*)\right|   
        \leq  & 
        \sqrt{c_0\omega_\bv \left\|I_{t-1}^\MNL\left(\widehat{\bv}_{t-1}\right)^{-1 / 2} M_{t}^\MNL\left(\widehat{\bv}_{t-1} \mid S, \bp\right) I_{t-1}^\MNL\left(\widehat{\bv}_{t-1}\right)^{-1 / 2}\right\|_{\mathrm{op}}},\\
        \left|
            r(S, \bp, \Featuret; \widehat{\bv}_{t - 1}) - r(S, \bp, \Featuret;\bv^*)
        \right| 
        \leq &
        \sqrt{ c_0\omega_\bv
        \left\|
            I_{t-1}^\MNL
            \left(
                \bv^*\right)^{-1 / 2} M_{t}^\MNL\left(\bv^* \mid S, \bp
            \right)
            I_{t-1}^\MNL
            \left(\bv^*\right)^{-1 / 2}
        \right\|_{\mathrm{op}}}.
     \end{aligned}
\end{equation}
\end{lemma}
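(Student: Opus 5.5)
Mirroring the arrival-rate argument of \Cref{lemma:revenue_estimate}, I would reduce the revenue gap to a squared inner product that matches the structure of $M_t^\MNL$. Write $r(S,\bp,\Featuret;\bv)=\sum_{j\in S}p_{jt}\ChoiceProb(j;\bv)$. Its gradient is
\[
\nabla_\bv r=\sum_{j\in S}p_{jt}\ChoiceProb(j;\bv)\Big(\Feature_{jt}-\sum_{k\in S}\ChoiceProb(k;\bv)\Feature_{kt}\Big),
\]
which can be rewritten as $\sum_{j\in S}\ChoiceProb(j;\bv)\,(p_{jt}-\bar p)\,\Feature_{jt}$ with the centering $\bar p=\sum_k p_{kt}\ChoiceProb(k;\bv)$. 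By the mean value theorem, $r(\hv_{t-1})-r(\bv^*)=\langle\nabla_\bv r(\widetilde\bv),\hv_{t-1}-\bv^*\rangle$ for some $\widetilde\bv$ on the segment between $\bv^*$ and $\hv_{t-1}$. On the event of \Cref{lm:bv_error}, $\|\widetilde\bv-\bv^*\|\le 2\tau_\bv$.

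\textbf{Step 1 (key).} Bound the rank-one matrix by the Fisher summand:
\[
\nabla_\bv r\,\nabla_\bv r^\top \preceq C\,\phi(S,\bp,\bz_t;\widetilde\bv),
\]
with $\phi$ as in \eqref{eq:df:fisher_mnl}. For a direction $\bu$, set $a_j=\bu^\top\Feature_{jt}$ and treat the choice probabilities as a distribution on $S\cup\{0\}$ with $a_0=0$. Then $\bu^\top\phi\,\bu=\mathrm{Var}_q(a)$, and
\[
\bu^\top\nabla_\bv r=\mathrm{Cov}_q(p,a),
\]
with $p_0=0$. Cauchy--Schwarz gives $\mathrm{Cov}^2\le \mathrm{Var}_q(p)\,\mathrm{Var}_q(a)$. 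To get the factor $(\phigh-\plow)^2$ rather than $\phigh^2$, I would shift $p$ by $\plow$ on $S$; the no-purchase atom then has to be handled separately, since $p_0=0$ is not in $[\plow,\phigh]$. I expect this to produce the factor $1/(\Lambda e^{-\bar v})$ through $\Lambda\lambda\ge\Lambda e^{-\bar x}$ together with a lower bound on $\ChoiceProb(0;\cdot)$. Matching the exact form of $c_0$ is the main obstacle. The bracket $3(e^{4\tau_\bv}-1)(K e^{\bar v-\plow}+1)+1=c_8$ in $c_0$ strongly suggests that the constant arises from moving $\phi(\widetilde\bv)$ to $\phi(\hv_{t-1})$ or $\phi(\bv^*)$ via \Cref{lm:I_v_close_to_I_v_star} (which includes the $M_t$ version), with the remaining factor $(\phigh-\plow)^2/(\Lambda e^{-\bar v})$ coming from the covariance bound.

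\textbf{Step 2.} Multiply by $\Lambda\lambda(S,\bp;\btheta^*)$ so the bound is stated in terms of $M_t^\MNL$. This gives
\[
|r(\hv_{t-1})-r(\bv^*)|^2\le c_0\,(\hv_{t-1}-\bv^*)^\top M_t^\MNL(\bv^\dagger\mid S,\bp)(\hv_{t-1}-\bv^*)
\]
for $\bv^\dagger\in\{\hv_{t-1},\bv^*\}$. Next, insert $I_{t-1}^{\MNL\,1/2}I_{t-1}^{\MNL\,-1/2}$ on each side, evaluated at the same $\bv^\dagger$. The quadratic form is then at most
\[
\big\|\hv_{t-1}-\bv^*\big\|^2_{I^\MNL_{t-1}(\bv^\dagger)}\;\opnorm{I^{\MNL\,-1/2}_{t-1}(\bv^\dagger)\,M^\MNL_t(\bv^\dagger\mid S,\bp)\,I^{\MNL\,-1/2}_{t-1}(\bv^\dagger)}.
\]
\Cref{lm:bv_error} bounds the first factor by $\omega_\bv$ for both choices $\bv^\dagger=\hv_{t-1}$ and $\bv^\dagger=\bv^*$, which yields the two claimed inequalities. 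The only delicate bookkeeping is making sure the $c_8$ from \Cref{lm:I_v_close_to_I_v_star} is applied exactly once, namely in Step 1, so that it is absorbed into $c_0$.
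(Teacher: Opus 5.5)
Your architecture matches the paper's. You apply the mean value theorem at $\widetilde{\bv}$, bound $\nabla_\bv r\,\nabla_\bv r^\top\preceq C\,\phi$, convert via $\phi=M_t^{\MNL}/(\Lambda\lambda(S,\bp;\btheta^*))$, use \Cref{lm:I_v_close_to_I_v_star} once (factor $c_8$) to move from $\widetilde{\bv}$ to $\widehat{\bv}_{t-1}$ or $\bv^*$, and then take $\omega_\bv$ from \Cref{lm:bv_error} after inserting $I_{t-1}^{\MNL\,1/2}I_{t-1}^{\MNL\,-1/2}$. Your covariance form of Step~1 is correct and cleaner than the paper's.

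The gap is the constant you leave open, and it cannot be closed. No argument gives $\nabla_\bv r\,\nabla_\bv r^\top\preceq C(\phigh-\plow)^2\phi$ with $C$ not depending on $\phigh-\plow$. Price every product in $S$ at $\plow$. Then $r=\plow\,(1-\ChoiceProb(0;\bv))$ and $\nabla_\bv r=\plow\,\ChoiceProb(0;\bv)\sum_{k\in S}\ChoiceProb(k;\bv)\Feature_{kt}$, which does not vanish as $\phigh\downarrow\plow$. Your Cauchy--Schwarz step is even tight here when $\boldsymbol{u}^\top\Feature_{jt}$ is constant on $S$, giving the factor $\plow^2\ChoiceProb(0;\bv)(1-\ChoiceProb(0;\bv))$. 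In the pure-assortment case $\plow=\phigh$, which the paper itself mentions after \Cref{th:regret_bound}, $c_0=0$. The lemma would then assert $r(S,\bp,\Featuret;\widehat{\bv}_{t-1})=r(S,\bp,\Featuret;\bv^*)$. By continuity, the stated bound also fails for $\phigh$ close to $\plow$.

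The paper reaches $(\phigh-\plow)^2$ only through a slip. Its identity $\nabla_\bv r=\sum_{j\in S_t\cup\{0\}}\ChoiceProb(j;\bv)(p_{jt}-\plow)\bar{\Feature}_{jt}$ holds only with $p_{0t}=0$ and $\Feature_{0t}=\mathbf{0}$. So $a_0=-\plow$, not $a_0\ge 0$. The positive semidefinite term $\ChoiceProb(0;\bv)\,\plow^2\,\bar{\Feature}_{0t}\bar{\Feature}_{0t}^\top$ is then dropped when passing to the sum over $S_t$. A lower bound on $\ChoiceProb(0;\cdot)$ cannot repair this, since the offending term is proportional to $\ChoiceProb(0;\cdot)$. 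Likewise, $\lambda(S,\bp;\btheta^*)\ge e^{-\bar x}$ yields the factor $e^{\bar x}/\Lambda$. That is at most the $e^{\bar v}/\Lambda$ in $c_0$ only when $\bar x\le\bar v$, so the $e^{-\bar v}$ there is not what this conversion produces.

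So commit to what your Step~1 already proves. Under $\ChoiceProb$, your price variable (with $p_0=0$) takes values in $[0,\phigh]$, hence $\mathrm{Var}_q(p)\le\phigh^2/4$. At any point this gives $\nabla_\bv r\,\nabla_\bv r^\top\preceq\frac{\phigh^2}{4}\phi\preceq\frac{\phigh^2e^{\bar x}}{4\Lambda}M_t^{\MNL}(\cdot\mid S,\bp)$. Your Step~2 then yields both inequalities with $c_0$ replaced by $c_0':=\phigh^2e^{\bar x}c_8/(4\Lambda)$. Keeping the paper's $\plow$-centering but retaining the $j=0$ term gives the weaker factor $\max\{(\phigh-\plow)^2,\plow^2\}$ in place of $\phigh^2/4$.

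This proves a corrected statement, not the one written. Downstream only constants change: \Cref{lemma:revenue_estimate_2}, \Cref{lm:sum_bv_error} and the rate in \Cref{th:regret_bound} survive. However, the bonus \eqref{eq:ucb-R} must be built with $c_0'$; otherwise the optimism step \eqref{eq:UCBlarger} is unjustified.
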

The proof of the lemma is left to the next subsection \ref{lm:bv_estimate_real_value}.

Note $M_{t}^\MNL(\bv^*|S, \bp) \preceq \widehat{M}_{t}^\MNL(\bv| S , \bp)$ and $ \widehat{I}_{t}^\MNL(\bv^*) \preceq I_{t}^\MNL(\bv^*)$, from \Cref{lm:bv_estimate_real_value}, we see
\begin{align*}
    \left|
        r(S, \bp, \Featuret; \widehat{\bv}_{t - 1})-r(S, \bp, \Featuret;\bv^*)
    \right|  
    \leq  
    \sqrt{
         c_0 \omega_\bv
        \left\|
            \widehat{I}^{-1/2}_{t - 1}{}^\MNL(\widehat{\bv}_{t-1})
            \widehat{M}_{t}^\MNL(\widehat{\bv}_{t-1}|S, \bp)
            \widehat{I}^{-1/2}_{t - 1}{}^\MNL(\widehat{\bv}_{t-1})
        \right\|_{\mathrm{op}}
    }.
\end{align*}
Similarly, we have 
\begin{equation*}
    \begin{aligned}
         \left|
            r(S, \bp, \Featuret; \widehat{\bv}_{t - 1})-r(S, \bp, \Featuret;\bv^*)
         \right|     
         \leq 
         \sqrt{c_0 \omega_\bv\left\|\widehat{I}_{t-1}^{-1/2}{}^\MNL\left(\bv^*\right)\widehat{M}_{t}^\MNL\left(\bv^* \mid S, \bp\right) \widehat{I}_{t-1}^{-1/2}{}^\MNL\left(\bv^*\right)\right\|_{\mathrm{op}}}.
    \end{aligned}
\end{equation*}
Using \Cref{lm:I_v_close_to_I_v_star}, we have \(
        \widehat{M}_{t}^\MNL(\widehat{\bv}_{t-1}|S, \bp) \preceq   
        c_8 \widehat{M}_{t}^\MNL\left(\bv^* \mid S, \bp\right)\) and \(\widehat{I}_{t-1}{}^\MNL\left(\bv^*\right) \preceq  c_8 \widehat{I}{}^\MNL_{t - 1}(\widehat{\bv}_{t-1})\). Therefore,
\begin{equation*}
    \left\|
        \widehat{I}^{-\frac{1}{2}}{}^\MNL_{t - 1}(\widehat{\bv}_{t-1})
        \widehat{M}_{t}^\MNL(\widehat{\bv}_{t-1}|S, \bp)
        \widehat{I}^{-1/2}_{t - 1}{}^\MNL(\widehat{\bv}_{t-1})
    \right\|_{\mathrm{op}} 
    \leq
    c^2_8 
    \left\|
        \widehat{I}_{t-1}^{-1/2}{}^\MNL\left(\bv^*\right)
        \widehat{M}_{t}^\MNL\left(\bv^* \mid S, \bp\right) 
        \widehat{I}_{t-1}^{-1/2}{}^\MNL\left(\bv^*\right)
    \right\|_{\mathrm{op}}
\end{equation*}

Thus, we proved the \Cref{lemma:revenue_estimate_2}.
\qed

\subsection{Proof of Lemma \ref{lm:bv_estimate_real_value}}

Note that,
\begin{equation}\label{eq:nabla_bv}
    \begin{aligned}
        \nabla_{\bv} r({S, \bp, \Featuret ; \bv}) = &\sum_{j \in S_t} \ChoiceProb(j;\bv) p_{j t} \Feature_{j t} - \sum_{j \in S_t} \ChoiceProb(j;\bv) p_{j t} \sum_{j \in S_t} \ChoiceProb(j;\bv) \Feature_{j t} \\
        = & \sum_{j \in S_t \cup \{0\} } \ChoiceProb(j;\bv)(p_{j t} - \plow)\left(\Feature_{j t} - \sum_{k \in S_t} \ChoiceProb(k;\bv) \Feature_{k t}\right) .
    \end{aligned}
\end{equation}
For simplicity, we denote 
\begin{equation*}
    \begin{aligned}
        \bar{\Feature}_{j t} = & \left(\Feature_{j t} - \sum_{k \in S_t} \ChoiceProb(k;\bv) \Feature_{k t}\right),\quad
        a_j = p_{j t} - \plow \geq 0.
    \end{aligned}
\end{equation*}

We have 
\begin{equation}\label{eq:partial_r}
    \begin{aligned}
        \nabla r({S, \bp, \Featuret; \bv}) \nabla r({S, \bp, \Featuret;\bv})^{\top} 
        =& \left[\sum_{j \in S_t \cup\{0\}} \ChoiceProb(j;\bv)a_j\bar{\Feature}_{j t} \right]
        \left[\sum_{j \in S_t \cup\{0\}} \ChoiceProb(j;\bv)a_j\bar{\Feature}_{j t} \right]^\top\\
        =&  \sum_{j \in S_t \cup\{0\}, k\in S_t \cup\{0\}} \ChoiceProb(j;\bv)\ChoiceProb(k;\bv)a_j\bar{\Feature}_{j t} a_k\bar{\Feature}^{\top}_{k t}\\
        \preceq & \frac{1}{2}\sum_{j \in S_t \cup\{0\}, k\in S_t \cup\{0\}} \ChoiceProb(j;\bv)\ChoiceProb(k;\bv)
        \left(a^2_j\bar{\Feature}_{j t}\bar{\Feature}_{j t}^\top + a^2_k\bar{\Feature}_{k t}\bar{\Feature}_{k t}^\top\right)\\
        =& \sum_{j \in S_t \cup\{0\}} \ChoiceProb(j;\bv)(1 - \ChoiceProb(0;\bv))
        a^2_j\bar{\Feature}_{j t}\bar{\Feature}_{j t}^\top .
    \end{aligned}
\end{equation}

From the definition of \(M_{t}^\MNL(\bv|S, \bp)\), we have 
\begin{equation}\label{ineq:partial_r_4}
    \begin{aligned}
       \sum_{j \in S_t \cup \{0\}} \ChoiceProb(j;\bv)\bar{\Feature}_{j t}\bar{\Feature}_{j t}^\top
        =&\frac{
         M_{t}^\MNL(\bv|S, \bp) }
        {\Lambda \lambda(S_s, \bp_s; \btheta^*)} .
    \end{aligned}
\end{equation}
Combining \Cref{eq:partial_r,ineq:partial_r_4}, we have

\begin{multline}
        \nabla r({S, \bp, \Featuret; \bv}) \nabla r({S, \bp, \Featuret;\bv})^{\top} 
        \preceq  \sum_{j \in S_t} \ChoiceProb(j;\bv)(1 - \ChoiceProb(0;\bv))
        a^2_j\bar{\Feature}_{j t}\bar{\Feature}_{j t}^\top
        \preceq   \sum_{j \in S_t} \ChoiceProb(j;\bv)
        (\phigh - \plow)^2\bar{\Feature}_{j t}\bar{\Feature}_{j t}^\top \\
        \preceq 
        \frac{
         (\phigh - \plow)^2 }
        {\Lambda \lambda(S_s, \bp_s; \btheta^*)}  M_{t}^\MNL(\bv|S, \bp)
        \preceq 
        \sqrt{
            \frac{(\phigh - \plow)^2}
            {\Lambda \exp(-\bar{v})}  
        }
        M_{t}^\MNL(\bv|S, \bp).
\end{multline}
By the inequality above and the mean value
theorem, there exists \(\widetilde{\bv}_{t-1}=\bv^*+\xi\left(\widehat{\bv}_{t-1}-\bv^*\right)\) for some \(\xi \in(0,1)\) such that
\begin{equation}\label{eq:bv_deta}
  \begin{aligned}
        \left|
        r({S, \bp , \Featuret; \widehat{\bv}_{t-1}}) - r({S, \bp, \Featuret; \bv^*}) \right| 
         =&
        \left|
        \left\langle\nabla r({S, \bp, \Featuret; \widetilde{\bv}_{t-1}}), \widehat{\bv}_{t-1}-\bv^*\right\rangle
        \right| \\
         =&\sqrt{\left( \widehat{\bv}_{t-1}-\bv^*\right)^{\top}\left[\nabla r({S, \bp, \Featuret; \widetilde{\bv}_{t-1}}) \nabla r({S, \bp, \Featuret;\widetilde{\bv}_{t-1}})^{\top}\right]\left(\widehat{\bv}_{t-1}-\bv^*\right)} \\
        \leq &
        \sqrt{
            \frac{(\phigh - \plow)^2}
            {\Lambda \exp(-\bar{v})}
        }
        \sqrt{\left(\widehat{\bv}_{t-1}-\bv^*\right)^{\top}M_{t}^\MNL(\widetilde{\bv}_{t-1}|S, \bp)\left(\widehat{\bv}_{t-1}-\bv^*\right)}.
    \end{aligned}
\end{equation}
Combining inequality (\ref{eq:bv_deta}), \Cref{lm:bv_error}, and \Cref{lm:I_v_close_to_I_v_star}, we can see that 
\begin{align*}
    &\left|
        r({S_t, \bp, \Featuret; \bv_t})-r({S, \bp, \Featuret;\bv^*})
        \right| \\
    \leq&\sqrt{ 
            \frac{(\phigh - \plow)^2 }
            {\Lambda \exp(-\bar{v})}  
         } \times  \\
    &\sqrt{
        \left(
            \widehat{\bv}_{t-1}-\bv^*
        \right)^{\top}
        I_{t-1}^{\MNL \, \frac{1}{2}}
        \left(
            \widehat{\bv}_{t-1}
        \right) 
        I_{t-1}^{\MNL \, -\frac{1}{2}}
        \left(
            \widehat{\bv}_{t-1}
        \right)
        M_{t}^\MNL(\widetilde{\bv}_{t-1}|S, \bp)
        I_{t-1}^{\MNL \, -\frac{1}{2}}
        \left(
            \widehat{\bv}_{t-1}
        \right)
        I_{t-1}^{\MNL \, \frac{1}{2}}
        \left(
            \widehat{\bv}_{t-1}
        \right)
        \left(\widehat{\bv}_{t-1}-\bv^*\right)
    } \\
    \leq &
        \sqrt{
        \frac{
         (\phigh - \plow)^2 }
        {\Lambda \exp(-\bar{v})}  
        }
    \sqrt{
        \omega_\bv
        \left\|
        I_{t-1}^{\MNL \, -\frac{1}{2}}\left(\widehat{\bv}_{t-1}\right)
        M_{t}^\MNL\left(\widetilde{\bv}_{t-1} \mid S, \bp\right) 
        I_{t-1}^{\MNL \, -\frac{1}{2}}\left(\widehat{\bv}_{t-1}\right)^{-1 / 2}\right\|_{\mathrm{op}}
        }\\
    \leq & 
        \sqrt{
        \frac{
         (\phigh - \plow)^2}
        {\Lambda \exp(-\bar{v})}  
        }
    \sqrt{c_8 \omega_\bv 
    \left\|
        I_{t-1}^{\MNL \, -\frac{1}{2}}
        \left(\widehat{\bv}_{t-1}\right) M_{t}^\MNL\left(\widehat{\bv}_{t-1} \mid S, \bp\right) I_{t-1}^{\MNL \, -\frac{1}{2}}\left(\widehat{\bv}_{t-1}\right)
    \right\|_{\mathrm{op}}},
\end{align*}
and
\begin{align*}
    &\left|r({S_t, \bp, \Featuret; \bv_t})-r({S, \bp, \Featuret;\bv^*})\right| 
    \leq  
        \sqrt{
        \frac{
         (\phigh - \plow)^2 }
        {\Lambda \exp(-\bar{v})}  
        }
    \sqrt{
        c_8 \omega_\bv 
        \left\|
            I_{t-1}^{\MNL \, -\frac{1}{2}}\left(\bv^*\right)
            M_{t}^\MNL\left(\bv^* \mid S, \bp\right)
        I_{t-1}^{\MNL \, -\frac{1}{2}}
        \left(\bv^*\right)
        \right\|_{\mathrm{op}}
    }.
\end{align*}

\section{Proof of  Lemma \ref{lm:sum_bv_error}}\label{appendix::proof_of_sum_bv_error}

We define
\begin{align} 
\widetilde{M}_t^\MNL(\bv|S, \bp) &:=
\begin{multlined}[t]
    \Lambda \Bigg( \sum_{j \in S} \ChoiceProb(j, S, \bp, \Feature_t; \bv) \Feature_{jt} \Feature^\top_{jt} 
    - \sum_{j, k\in S} \ChoiceProb(j, S, \bp, \Feature_t; \bv)\ChoiceProb(k, S, \bp,\Feature_t; \bv) \Feature_{jt} \Feature^\top_{kt} \Bigg), 
\end{multlined}\\
\widetilde{I}_{t}^\MNL(\bv) &:= \sum_{s = 1}^t \Lambda
    \left(\sum_{j \in S_s}  \ChoiceProb_s(j;\bv) \Feature_{js} \Feature^\top_{js} 
    - \sum_{j, k\in S_s} \ChoiceProb_s(j;\bv) \ChoiceProb_s(k;\bv) \Feature_{js} \Feature^\top_{ks}  \right).
\end{align}

Similar to the proof of \Cref{lm:sum_lambda_error}, we have:
\begin{equation*}
    \begin{aligned}
        &\sum_{t=T_0+1}^T \min \left\{\bar{p}^2_h, \omega_\bv c_0 \left\|\widehat{I}_{t-1}^{-1 / 2}{}^\MNL\left(\bv^*\right)\widehat{M}_{t}^\MNL\left(\bv^* \mid S_t\right) \widehat{I}_{t-1}^{-1 / 2}{}^\MNL\left(\bv^*\right)\right\|_{\mathrm{op}}\right\}\\
        \leq &\sum_{t=T_0+1}^T \min \left\{\bar{p}^2_h, 
        \exp(2\bar{x})\omega_\bv c_0 
        \left\|\widetilde{I}_{t-1}^{-1 / 2}{}^\MNL\left(\bv^*\right)
        \widetilde{M}_{t}^\MNL\left(\bv^* \mid S_t\right) 
        \widetilde{I}_{t-1}^{-1 / 2}{}^\MNL
        \left(\bv^*\right)
        \right\|_{\mathrm{op}}\right\}
        \leq   c_7 \log \frac{\operatorname{det} \widetilde{I}_{T}^\MNL\left(\bv^*\right)}{\operatorname{det} \widetilde{I}_{T_0}^\MNL\left(\bv^*\right)}.
    \end{aligned}
\end{equation*}

Note that \(\sigma_{\min}\left(\widetilde{I}_{T_0}^\MNL\left(\bv^*\right)\right) \geq  \Lambda T_0 \sigma_0.
\)
This implies 
\(
    \log \left(\operatorname{det} \widetilde{I}_{T_0}^\MNL\left(\bv^*\right)\right) \geq \DimFeature \log(\Lambda T_0 \sigma_0).
\)
On the other hand, we have:
\begin{equation*}
    \begin{aligned}
        &\operatorname{tr}(\widetilde{I}_{T}^\MNL\left(\bv^*\right)) 
        = \sum_{s = 1}^T \Lambda  \sum_{j \in S_{s}} \ChoiceProb(j; \bv) \left( \Feature^{\top}_{j s} - \sum_{ k\in S_{s}}\ChoiceProb(k;  \bv) \Feature^{\top}_{ k s}\right) 
        \left( \Feature_{j s} - \sum_{k\in S_{s}}\ChoiceProb(k;  \bv) \Feature_{ k s}\right)
   \leq  4 T \Lambda .
    \end{aligned}
\end{equation*}
Therefore, we can bound 
\(
        \log \left(\operatorname{det}\widetilde{I}_{T}^\MNL\left(\bv^*\right)\right) \leq \DimFeature \log \frac{ \operatorname{tr}(\widetilde{I}_{T}^\MNL\left(\bv^*\right)) }{\DimFeature} \leq \DimFeature \log \frac{   4 T \Lambda }{\DimFeature}.
\)

As a result, we obtain \(
     \log \frac{\operatorname{det} \widetilde{I}_{T}^\MNL\left(\bv^*\right)}{\operatorname{det} \widetilde{I}_{T_0}^\MNL\left(\bv^*\right)} \leq \DimFeature \log \frac{4 T}{\DimFeature} - \DimFeature \log( T_0 \sigma_0),
\)
which completes the proof of the  inequality.

\section{Proof of Theorem \ref{th:lower_bound} and Theorem \ref{th:lower_bound_asymp}}
\label{appendix::lower_bound}

Note that the regret of policy $\pi$ at time $T$ depends on $\bv,\btheta$ and the data generation mechanism of $\bz_{i,t}$. Therefore, we can denote the regret as $\mathcal{R}^{\pi}(T; \bv, \btheta, \bz) $ to emphasize the dependence on  $ \bv, \btheta, \bz$.

Suppose $P_{\bv, \btheta, \Feature}$ is a distribution over $(\bv, \btheta, \bz)$. Clearly, 
\begin{equation}
    \inf_{\pi}\sup_{\bv,\btheta,\bz}
    \mathcal{R}^{\pi}(T;\bv,\btheta,\bz)
    \geq  \inf_{\pi}\, \E_{\bv ,\Feature \sim P_{\bv, 
    \btheta, \Feature,} } 
    [\mathcal{R}^{\pi}(T;\bv,\btheta,\bz)]
\end{equation}
for any such distribution. When we take a supremum over such distributions we arrive at a lower bound for the minimax optimal rate of the regret.

In the following, we consider the following two cases:
\begin{itemize}
    \item Case I. In Section \ref{sec:case-i}, we set $\btheta = 0$ and consider $P_{\bv, \btheta, \Feature} = P_{\bv, \Feature}$. 
    \item Case II. In Section \ref{sec:case-ii}, we set $\bv = 0$ and consider $P_{\bv, \btheta, \Feature} = P_{\btheta}$. 
\end{itemize}
For both cases, we fix the features over time: $\bz_{i,t} = \bz_i$ for all $i\in [N], t\ge 1$.

\subsection{Case I}
\label{sec:case-i}
When $\btheta = 0$, the problem of interest reduces to the following.

\begin{instance}
\label{inst:const-poisson}
The seller has $N$ distinct products, and  each product $i \in [N]$ is associated with a fixed feature vector $\Feature_i \in \mathbb{R}^d$.  In each period $t\in [T]$, the seller selects an assortment $S_t \subseteq [N]$ with $|S_t| = K$ and sets prices of products in the assortment. For simplicity, we use a price vector 
$\bp_t = (p_{1t}, p_{2t}, \ldots, p_{Nt}) \in [\plow, \phigh]^{N}$ to represent the pricing decision (and set prices corresponding to the out-of-assortment products to be $\phigh$).
 The number of arrived customers for each period follows Poisson distribution with a known positive arrival rate $\Lambda$.
Under the multinomial logit (MNL) model, each arriving customer selects product $i \in S_t$ with probability 
\(
  \frac{\exp\!\bigl(\Feature_i^\top \bv - p_{it}\bigr)}
       {1 \;+\; \sum_{j \in S_t} \exp\!\bigl(\Feature_j^\top \bv - p_{jt}\bigr)},
\)
and makes no purchase with the remaining probability. Here, $\bv \in \mathbb{R}^d$ is an unknown parameter 
that influences the purchase likelihoods.
\end{instance}

Note that the feature vectors are fixed, thus the optimal strategy is time-invariant. We denote the  optimal strategy as  \((S^*, \bp^*)\) and the strategy seller adopted as \((S_t, \bp_t) = \pi(H_t)\) where \(H_t\) is defined in \Cref{df:H_t}. 

Substituting $\Lambda_t = \Lambda$ into \Cref{eq:regret_equal_form}, we have
 \begin{equation*}
        \mathcal{R}^{\pi}(T; \bv, 0, \Feature)  = \sum_{t = 1}^{T} \Lambda \Big\{ r(S^*, \bp^*) - \E_\pi \big[ r(S_t, \bp_t) \big]\Big\} .
 \end{equation*}

Next, we establish the connection between the regret of our-true-time-horizon problem and the regret of the customer-horizon problem.

Denote $M_t = \sum_{s = 1}^{t} n_s$ as the number of arriving customers in the first $t$ periods. 
The $n$-th customer is uniquely indexed by a pair $(t(n), i(n))$, where $t(n)$ indicates the period in which the customer arrives and $i(n)$ indicates the order of the arrival within period $t(n)$. The assortment-pricing offered to the $n$-th customer is thus \((S_{t(n)}, \bp_{t(n)})\). 
Then the regret can be rewritten as 
 \begin{equation}\label{eq:regret_time_form}
        \mathcal{R}^{\pi}(T; \bv, 0, \Feature)  = \E \left [\sum_{n = 1}^{M_T} \Big\{ r(S^*, \bp^*) - \E_\pi \big[ r(S_{t(n)}, \bp_{t(n)}) \big]\Big\}\right] .
    \end{equation}
 It is straightforward that $M_T$ follows a Poisson distribution with parameter $\Lambda T$.

To relate our period-based problem to the customer-level formulation in \cite{agrawal2019mnl}, let
\(P_{\bv,\Feature}\) be a prior on \((\bv,\Feature)\). Taking expectation with respect to this prior yields
\begin{equation}\label{eq:exp_regret}
\E_{\bv ,\Feature \sim P_{\bv,\Feature} } \mathcal{R}^{\pi}(T; \bv, 0, \Feature) = \E_{\bv,\Feature} \left(\E \left [\sum_{n = 1}^{M_T} \Big\{ r(S^*(\bv,\Feature), \bp^*(\bv,\Feature)) - \E_\pi \big[ r(S_{t(n)}, \bp_{t(n)}) \big]\Big\}\right]\right), 
\end{equation}
where $(S^*(\bv,\Feature), \bp^*(\bv,\Feature))$ denotes the optimal assortment–price pair 
under parameters $(\bv,\Feature)$.

To bound this expectation of regret, we consider the following  dynamic assortment-pricing problem that has similar form but can adjust the assortment and pricing per-customer.

\begin{instance}\footnote{The problem studied in \cite{agrawal2019mnl} is a special case of this instance.}
\label{inst:customer_level}
The seller has $N$ distinct products,  
each product associated with a fixed feature vector $\bz_i \in \mathbb{R}^d$ over $\widetilde{T}$ customer arrivals. 
Using similar notations in \Cref{inst:const-poisson}, 
for each customer $n\in[\widetilde{T}]$, 
the seller selects an assortment $S_n \subseteq [N]$ with $|S_n| = K$ and sets the price vector 
$\bp_n = (p_{1n}, p_{2n}, \ldots, p_{Nn})  \in [\plow, \phigh]^N$. Under the multinomial logit (MNL) model, 
customer $n$ purchases product $i \in S_n$ with probability 
  $\frac{\exp\!\bigl(\Feature_{i}^\top \bv - p_{in}\bigr)}
       {1 \;+\; \sum_{j \in S_n} \exp\!\bigl(\Feature_{j}^\top \bv - p_{jn}\bigr)}$,
and makes no purchase with the remaining probability.

We define the history before customer $n$ as
\begin{equation}\label{df:Hprime_n}
H'_{n}
\;:=\;
\bigl(S_1,\bp_1,j_1,\;
      S_2,\bp_2,j_2,\;
      \ldots,\;
      S_{n-1},\bp_{\,n-1},j_{\,n-1}\bigr),
\end{equation}
with the convention $H'_1=\varnothing$, where $j_i$ denotes the purchase outcome of customer $i$.
A policy at the customer level is a sequence $\pi'=\{\pi'_n\}_{n=1}^{\widetilde{T}}$ of (possibly stochastic) functions
\(
\pi'_n:\ H'_n \mapsto (S_n,\bp_n).
\)
 We define the collection of legitimate policy with $\tilde{T}$ customer arrivals as \(\mathcal{A}'(\widetilde{T})\).



\end{instance}

Next, we define a subset of policies that restricts the changes of assortment-pricing to limited points $(m_1, m_1 + m_2,\cdots, m_1 + \cdots + m_T)$ as 
\begin{align}\label{df:strategy_pi''}
    \mathcal{A}''(\widetilde{T}, m_1, m_2, \ldots, m_T) := \Bigg\{  (\pi_1,\cdots,\pi_{\tilde{T}}) \Big| &  \pi_{j} (H_j') = \pi_{i} (H_i') , \\ & \text{ if exists an } s \text{ such that } \sum_{t=1}^s m_t \le i,j < \sum_{t=1}^{s+1} m_t   \Bigg\},
\end{align} 
where $m_1,m_2,\cdots, m_T,\tilde{T}$ satisfy $\sum_{j=1}^T m_j = \tilde{T}$. 

Clearly, when we know the number of customer arrivals in the problem described in \ref{inst:const-poisson} and condition on these numbers, the original policy can be considered as a policy in this restricted class for the customer-level problem, which is in a smaller set than the entire legitimate policy space.

Then, by taking the conditional expectation with respect to customer arrival ($n_1, \cdots, n_T$) first, we can simplify \Cref{eq:exp_regret} as 
\begin{equation}
\begin{aligned}
    &\E_{\bv ,\Feature \sim P_{\bv,\Feature} } \mathcal{R}^{\pi}(T; \bv, 0, \Feature) \\
    = &  \E \left\{\E \left[   \E_{\bv,\Feature} \left (\sum_{n = 1}^{M_T} \Big\{ r(S^*(\bv,\Feature), \bp^*(\bv,\Feature)) - \E_\pi \big[ r(S_{t(n)}, \bp_{t(n)}) \big]\Big\}\right)  \Bigg| M_T , n_1, n_2, \ldots, n_T\right]\right\} \\
    \ge&  \E \left\{\E \left[  \inf_{\pi''\in  \mathcal{A}''(M_T, n_1, n_2, \ldots, n_T)  } \E_{\bv,\Feature} \left (\sum_{n = 1}^{M_T} \Big\{ r(S^*(\bv,\Feature), \bp^*(\bv,\Feature)) - \E_{\pi''} \big[ r(S_{n}, \bp_{n}) \big]\Big\}\right)  \Bigg| M_T , n_1, n_2, \ldots, n_T\right]\right\} \\
    \ge & \E \left\{\E \left[  \inf_{\pi'\in  \mathcal{A}'(M_T)  } \E_{\bv,\Feature} \left (\sum_{n = 1}^{M_T} \Big\{ r(S^*(\bv,\Feature), \bp^*(\bv,\Feature)) - \E_{\pi'} \big[ r(S_{n}, \bp_{n}) \big]\Big\}\right)  \Bigg| M_T , n_1, n_2, \ldots, n_T\right]\right\} \\
    = & \E \left\{\E \left[  \inf_{\pi'\in  \mathcal{A}'(M_T)  } \E_{\bv,\Feature} \left (\sum_{n = 1}^{M_T} \Big\{ r(S^*(\bv,\Feature), \bp^*(\bv,\Feature)) - \E_{\pi'} \big[ r(S_{n}, \bp_{n}) \big]\Big\}\right)  \Bigg| M_T \right]\right\}
\end{aligned}
\end{equation}

For simplicity, for some $\pi' \in \mathcal{A}' (m) $,  we denote 
\begin{equation*}
    \widehat{\mathcal{R}}^*(m, \pi', \bv, \Feature) = \sum_{n = 1}^{m} \Big\{ r(S^*(\bv,\Feature), \bp^*(\bv,\Feature)) - \E_{\pi'} \big[ r(S_{n}, \bp_{n}) \big]\Big\}.
\end{equation*}

By monotonicity (each summand is nonnegative as \(S^*(\bv,\Feature), \bp^*(\bv,\Feature)\) are optimal assortment-pricing), if \(M_T \geq \lceil \Lambda T\rceil\), we have 
\begin{equation*}
    \widehat{\mathcal{R}}^*(M_T, \pi', \bv, \Feature) \geq \widehat{\mathcal{R}}^*(\lceil \Lambda T\rceil, \pi', \bv, \Feature).
\end{equation*}
Therefore,
\begin{equation}\label{ineq:r_to_r_hat}
    \begin{aligned}
    \inf_{\pi} \sup_{\btheta, \bv,\bz}\mathcal{R}^{\pi}(T; \bv, \btheta, \Feature)  \geq & \inf_{\pi} \sup_{ \bv,\bz}\mathcal{R}^{\pi}(T; \bv, 0, \Feature)\\
    \geq & \E \left\{\E \left[ \inf_{\pi'\in \mathcal{A}'(M_T)}  \E_{\bv,\Feature} \left (\sum_{n = 1}^{M_T} \Big\{ r(S^*(\bv,\Feature), \bp^*(\bv,\Feature)) - \E_{\pi'} \big[ r(S_{n}, \bp_{n}) \big]\Big\}\right)  \bigg| M_T \right]\right\}\\
    = & \E \left\{\E \left[ \inf_{\pi'\in \mathcal{A}'(M_T)}  \E_{\bv,\Feature}  \widehat{\mathcal{R}}^*(M_T, \pi', \bv, \Feature) \bigg| M_T \right]\right\}\\
    \geq & 
    \E \left\{
    \E \left[ 
    \inf_{\pi'\in \mathcal{A}'(M_T)} 
    \E_{\bv,\Feature}
    \widehat{\mathcal{R}}^*(\lceil \Lambda T\rceil, \pi', \bv, \Feature) \mathbf{1}_{\{M_T \geq \lceil \Lambda T\rceil\}} \bigg| M_T \right]\right\}\\
    \geq & 
    \inf_{\pi' \in \mathcal{A}'(\lceil \Lambda T \rceil)} \E_{\bv,\Feature} 
    \widehat{\mathcal{R}}^*(\lceil \Lambda T\rceil, \pi', \bv, \Feature) \Prob[M_T \geq\lceil \Lambda T\rceil]
    \end{aligned}
\end{equation}

\subsubsection{Adversarial construction I.}
Suppose $\min\{\DimFeature - 2, N \}\ge K, \Lambda \geq 1 $.
\begin{instance}(Worst-case instance I)\label{df:worst_case_example}
First, we let \(d, \bar{K}\) be positive integers satisfying
\[
\bar{K} \;=\; \min\!\Bigl\{\Bigl\lfloor\frac{\DimFeature - K + 1}{3}\Bigr\rfloor,\; K\Bigr\},
\qquad
d \;=\; \DimFeature - K + \bar{K}.
\]
Then clearly, we have 
\(
    d\le \DimFeature, \quad d- \DimFeature = K- \bar{K}, \quad  d \ge 4\bar{K} -1 .
\)
Next, we fix \(\epsilon \in \bigl(0,\, \min\{\bar{v}/\sqrt{\DimFeature}, 1\}\bigr]\) and define a set of $\DimFeature$-dimension vectors, $\mathbf{V}$, as follows.

For each subset \(W\subseteq [d]\) with \(|W|=\bar{K}\), define the corresponding parameter vector \(\bv_W\in\mathbb{R}^{\DimFeature}\) as
\[
\bv_W(i) \;=\;
\begin{cases}
\epsilon, & i\in W,\\[2pt]
0, & i\in [d]\setminus W,\\[2pt]
\epsilon, & i\in\{d+1,\ldots,\DimFeature\}.
\end{cases}
\qquad i\in[\DimFeature].
\]
Clearly, $\|\bv_W\|_2 \le \bar{v}$, satisfying the boundedness required in \Cref{as:v&assortment_new}.

Collecting all such vectors gives the parameter set $\mathbf{V}$:
\[
\mathbf{V}\;=\;\{\bv_W: W\subseteq [d],\, |W|=\bar{K}\}\;=\;\{\bv_W: W\in \mathcal{W}_{\bar{K}}\},
\]
where, for simplicity of notation, \(\mathcal{W}_{\bar{K}}\) denotes the class of all subsets of  \([d]\) whose size is $\bar{K}$. 

Let the (time-invariant) product features $\{\Feature_i\}_{1\le i \le N}$ be the standard basis in \(\mathbb{R}^{\DimFeature}\) , i.e.,
the $j$-th element of $\Feature_i$ is defined as
\[
\Feature_i(j) \;=\; \begin{cases} 1, & j=i,\\ 0, & j\ne i . \end{cases}
\]
It is straightforward that $\|\Feature_i\|_2 \le 1$. So this construction of features satisfies the boundedness requirement in \Cref{assumption::y_exist}.
\end{instance}

Clearly, for parameter $\bv_{W}$ and aforementioned features $\{ \Feature_i \}_{1 \le i\le N}$, the optimal assortment $S^*(\bv_{W}, \Feature)$ is $W \cup   \{d + 1, d + 2, \ldots, \DimFeature\} $.

\paragraph{Regret of each customer.}

Next, we derive an explicit lower bound  on the per-customer regret. 

We use \(\mathbb{E}_W\) and \(\mathbb{P}_W\) to denote the expectation and probability measure under the model parameterized by \(\bv_W\) and the 
policy \(\pi^\prime\), respectively. The following lemma establishes a lower bound for 
\( r\bigl(S^*(\bv_{W}, \Feature), \bp^*(\bv_{W}, \Feature)\bigr) - r\bigl(S_n, \bp_n\bigr)\) by comparing \(S^*(\bv_{W})\) with \(S_n\).

\begin{lemma}\label{lm:regret_lower_bound_tilde}
    Suppose \(\epsilon \leq 1\).
    Then, for any customer \(n\),
    \[
      r\bigl(S^*(\bv_{W}, \Feature), \bp^*(\bv_{W}, \Feature)\bigr) - r\bigl(S_n, \bp_n\bigr)
      \;\geq\;         
      c_{10} \epsilon
     \left(\bar{K} - \bigl|S_n \cap W\bigr|\right).
    \]
    where \(c_{10}\) is defined in \Cref{df_c_10}.
\end{lemma}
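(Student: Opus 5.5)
The plan is to compare $S_n$ with $S^*(\bv_W,\Feature)=W\cup\{d+1,\ldots,\DimFeature\}$ by an exchange argument on prices. Since the features are standard basis vectors, product $i$ has attraction $a_i=\bv_W(i)\in\{0,\epsilon\}$. The \emph{good} products $G:=W\cup\{d+1,\ldots,\DimFeature\}$ have $a_i=\epsilon$ and all others have $a_i=0$. Because $|\{d+1,\ldots,\DimFeature\}|=K-\bar K$, the assortment $S_n$ contains at least $\bar K-|S_n\cap W|$ products with $a_i=0$; call this set $B_n$.

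\textbf{Step 1: reduce to well-behaved prices.} For fixed $S$ write $r=r(S,\bp)$. Then
\[
\partial r/\partial p_i=\ChoiceProb(i)\bigl(1-(p_i-r)\bigr).
\]
So any $p_i<r+1$ can be raised toward $\min\{r+1,\phigh\}$ without decreasing revenue (since $r$ increases, the condition persists along the path). Hence I may replace $\bp_n$ by $\tilde\bp_n$ with $r(S_n,\tilde\bp_n)\ge r(S_n,\bp_n)$ and
\[
p_i-r(S_n,\tilde\bp_n)\ \ge\ \delta:=\min\bigl\{1,\ \phigh-r(S_n,\tilde\bp_n)\bigr\}\qquad\text{for all } i\in S_n .
\]
Next I bound $\delta$ away from zero. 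We have $r\le \phigh\,(1-\ChoiceProb(0))$, and
\[
\ChoiceProb(0)\ \ge\ \bigl(1+K e^{\epsilon-\plow}\bigr)^{-1}\ \ge\ \bigl(1+Ke^{1-\plow}\bigr)^{-1}.
\]
Therefore $\delta\ge \min\{1,\ \phigh/(1+Ke^{1-\plow})\}$, which is a constant depending only on $\plow,\phigh,K$.

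\textbf{Step 2: swap bad products for good ones.} Let $S'$ be obtained from $S_n$ by replacing each product of $B_n$ with a distinct good product not in $S_n$, keeping its price. Such products exist because $|G|=K$ and $|S_n|=K$. This swap raises the attraction $a_i$ of exactly $|B_n|$ slots from $0$ to $\epsilon$. Along the path $a_i\mapsto a_i+s$ for $s\in[0,\epsilon]$,
\[
\partial r/\partial a_i=\ChoiceProb(i)\,(p_i-r).
\]
I need this to stay at least $\ChoiceProb(i)\,\delta/2$ along the path. Since $r$ increases along the path, the margin $p_i-r$ shrinks, so I have to control the increase of $r$ over an $\epsilon\le 1$ move. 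One option is to do the swaps one at a time and re-apply Step 1 after each swap; the alternative is to use that the change in $r$ is $O(\epsilon)$ times $\max_j q_j$. I must check this does not spoil the constant.

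Integrating the derivative gives
\[
r(S',\tilde\bp)-r(S_n,\tilde\bp_n)\ \ge\ c\,\epsilon\,|B_n|,
\]
with
\[
c=\tfrac{\delta}{2}\,\min_i\ChoiceProb(i)\ \ge\ \tfrac{\delta}{2}\,\frac{e^{-\phigh}}{1+Ke^{1-\plow}} .
\]
Finally, $r(S^*,\bp^*)\ge r(S',\tilde\bp)$ by optimality, and $|B_n|\ge \bar K-|S_n\cap W|$, which yields the claim with $c_{10}:=c$.

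\textbf{Main obstacle.} The delicate step is keeping the margin $p_i-r$ uniformly positive throughout the swap. If $\phigh$ is close to the achievable revenue, raising prices is blocked by the box $[\plow,\phigh]$, and this is exactly where the bound $\phigh-r\ge \phigh\,\ChoiceProb(0)$ is needed. I would first try doing the swaps one at a time, re-running the price adjustment of Step 1 after each swap so that the margin bound is restored, and then summing the per-swap gains.
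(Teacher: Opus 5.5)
\textbf{Verdict: there is a gap.} Your strategy is sound, but Step 2 is not actually proved, and the constant you end with is not the $c_{10}$ in the statement. The strategy itself is fine: pass to a revenue-maximizing price for $S_n$, swap the zero-attraction products $B_n$ for the missing good products at fixed prices (which yields exactly $S^*$), and invoke optimality of $(S^*,\bp^*)$.

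\textbf{The margin claim in Step 2 is false as stated.} You claim that along the path (all $|B_n|$ attractions raised simultaneously) the margin $p_i-r$ stays at least $\delta/2$. This fails for $\epsilon$ near $1$. Take a common price $p$ with all $K$ slots of $S_n$ bad and raise every attraction by $s$. Then $p-r(s)=p/(1+Ke^{s-p})$, so the margin shrinks by the factor $(1+Ke^{-p})/(1+Ke^{\epsilon-p})$. This factor tends to $e^{-\epsilon}$ as $Ke^{-p}$ grows, and $e^{-\epsilon}<1/2$ once $\epsilon>\log 2$. The regime is realistic: at the unconstrained optimum $Ke^{-p}=p-1$, which grows with $K$. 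You list two possible repairs but carry out neither.

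\textbf{The constant does not match $c_{10}$.} Even a repaired version of your argument yields only a constant of the form $\delta e^{-\phigh}/(1+Ke^{1-\plow})$ with $\delta=\min\{1,\phigh-r\}\le 1$. The statement fixes $c_{10}=\plow e^{-\phigh}/(1+Ke^{1-\plow})^2$. When the lower price bound binds (e.g.\ $\plow$ large), the true margin is $\plow-r=\plow q_0>1$. Capping $\delta$ at $1$ discards this, so your constant is strictly smaller than $c_{10}$ there. You therefore cannot simply set $c_{10}:=c$.

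\textbf{How to close both gaps.}
\begin{enumerate}
\item Your formula $\partial r/\partial p_i=q_i(1-p_i+r)$ already shows that a maximizer $\tilde{\bp}$ of $r(S_n,\cdot)$ over the price box is uniform on $S_n$: $\tilde p_i=\min\{\max\{1+r,\plow\},\phigh\}=:\tilde p$ for every $i$. At a uniform price $r=\tilde p(1-q_0)$, hence $\tilde p-r=\tilde p\,q_0\ge \plow/(1+Ke^{1-\plow})$.
\item No path or margin tracking is needed, because at fixed prices the swap can be evaluated exactly. Let $U'$ be the post-swap MNL denominator and $r:=r(S_n,\tilde{\bp})$. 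Then
\[
r(S',\tilde{\bp})-r(S_n,\tilde{\bp})=\frac{(e^{\epsilon}-1)\sum_{i\in B_n}e^{-\tilde p_i}\,(\tilde p_i-r)}{U'},
\]
where the margin is taken at the pre-swap revenue. Using $e^\epsilon-1\ge\epsilon$, $e^{-\tilde p_i}\ge e^{-\phigh}$ and $U'\le 1+Ke^{1-\plow}$ gives exactly $c_{10}\,\epsilon\,|B_n|$.
\end{enumerate}

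This is the paper's proof in different form. The paper shows the optimal prices for $S_n$ are a common $p_1$, using concavity in the choice probabilities and a directional-derivative argument; your KKT observation would shortcut that step. It then bounds $r(S^*,\bp^*)\ge r(S^*,p_1\mathbf{1})$ and computes the difference in closed form. In its notation $W_1,W_2$ for the good and bad products of $S_n$, the factor $p_1/(1+|W_1|e^{\epsilon-p_1}+|W_2|e^{-p_1})$ is precisely the margin $p_1q_0$.
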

\begin{remark}\label{remark:c_10}
    When \(\plow = \Omega (\log K)\) and \(\phigh = \Omega (\log K)\), we have 
    \(c_{10} = \Omega (\frac{\log K}{K})\).

    When \(\plow = \Omega (1)\) and \(\phigh = \Omega (1)\), we have 
    \(c_{10} = \Omega (\frac{1}{K^2})\).
\end{remark}
Next, we establish a lower bound on the cumulative regret. Define \(\widetilde{N}_i := \sum_{n=1}^{\lceil \Lambda T \rceil} \mathbf{1}\{i \in S_n\}\). By \Cref{lm:regret_lower_bound_tilde}, it follows that
\begin{equation*}
    \begin{aligned} 
        &\widehat{\mathcal{R}}^*(\ceil{\Lambda T}, \pi', \bv_{W}, \Feature)
        =
        \mathbb{E}_W \sum_{n=1}^{\ceil{\Lambda T}} 
        \Bigl(
            r\bigl(S^*(\bv_{W}, \Feature), \bp^*(\bv_{W}, \Feature)\bigr)- r\bigl(S_n, \bp_n
            \bigr)
        \Bigr) \\
        \geq& \mathbb{E}_W \sum_{n=1}^{\ceil{\Lambda T}}
        c_{10} \epsilon
        \left(
            \bar{K} - \bigl|S_n \cap W\bigr|
        \right)
        = \;        
        c_{10} \epsilon
        \Bigl(\bar{K} \ceil{\Lambda T} - \sum_{i \in W} \mathbb{E}_W[\widetilde{N}_i]\Bigr),
        \quad 
        \forall\,W \in \mathcal{W}_{\bar{K}}.
    \end{aligned}
\end{equation*}

Denote \(\mathcal W_{\bar K}^{(i)} := \{W\in\mathcal W_{\bar K}: i\in W\}\) and
\(\mathcal W_{\bar K-1} := \{W\subseteq[d]: \lvert W\rvert=\bar K-1\}\).
We take the prior \(P_{\bv,\Feature}\) to be the uniform distribution over
\(\{(\bv_W,\Feature):\, W\in\mathcal W_{\bar K}\}\); that is, sample \(W\sim{\rm Unif}(\mathcal W_{\bar K})\) and set \((\bv,\Feature)=(\bv_W,\Feature)\).
Therefore, using the previous Inequality \ref{ineq:r_to_r_hat} on minimax rates
\begin{equation*}
    \begin{aligned}
    &\E_{\bv ,\Feature \sim P_{\bv,\Feature} } \mathcal{R}^{\pi}(T; \bv, 0, \Feature) /  \Prob[M_T \geq \ceil{\Lambda T}] \\
    \geq & \inf_{\pi' \in \mathcal{A}'(\lceil \Lambda T \rceil)}
    \E_{\bv,\Feature}  
    \widehat{\mathcal{R}}^*(\lceil \Lambda T\rceil, \pi', \bv, \Feature) 
    \\
        \geq &
        \inf_{\pi' \in \mathcal{A}'(\lceil \Lambda T \rceil)}
        c_{10} \epsilon
        \biggl(
            \bar{K} \ceil{\Lambda T} - 
            \frac{1}{|\mathcal{W}_{\bar{K}}|} 
            \sum_{W \in \mathcal{W}_{\bar{K}}}\sum_{i\in W}
            \mathbb{E}_W[\widetilde{N}_i]
        \biggr)\\
        = &
        \inf_{\pi' \in \mathcal{A}'(\lceil \Lambda T \rceil)}
        c_{10} \epsilon
        \biggl(
            \bar{K} \ceil{\Lambda T}- 
                \frac{1}{|\mathcal{W}_{\bar{K}}|} 
                \sum_{i=1}^d \sum_{W \in \mathcal{W}_{\bar{K}}^{(i)}} \mathbb{E}_W[\widetilde{N}_i]
        \biggr)\\
        = &
        \inf_{\pi' \in \mathcal{A}'(\lceil \Lambda T \rceil)}
        c_{10} \epsilon
        \biggl(
            \bar{K} \ceil{\Lambda T}-
                \frac{1}{|\mathcal{W}_{\bar{K}}|}
                \sum_{W \in \mathcal{W}_{\bar{K}-1}} \sum_{i \notin W} \mathbb{E}_{W \cup \{i\}}[\widetilde{N}_i]
        \biggr) \\
        \geq&
        \inf_{\pi' \in \mathcal{A}'(\lceil \Lambda T \rceil)}
        c_{10} \epsilon
        \biggl(
            \bar{K} \ceil{\Lambda T}-
                \frac{|\mathcal{W}_{\bar{K}-1}|}{|\mathcal{W}_{\bar{K}}|} 
                \max_{W \in \mathcal{W}_{\bar{K}-1}} 
                \sum_{i \notin W} \mathbb{E}_{W \cup \{i\}}[\widetilde{N}_i]
        \biggr)  
        \\
        =& 
        \inf_{\pi' \in \mathcal{A}'(\lceil \Lambda T \rceil)}
        c_{10} \epsilon
        \biggl(
            \bar{K} \ceil{\Lambda T} -
                \frac{|\mathcal{W}_{\bar{K}-1}|}{|\mathcal{W}_{\bar{K}}|} 
                \max_{W \in \mathcal{W}_{\bar{K}-1}} 
                \sum_{i \notin W}
                \left[
                \mathbb{E}_{W \cup \{i\}}[\widetilde{N}_i]
                - \mathbb{E}_W[\widetilde{N}_i] 
                + \mathbb{E}_W[\widetilde{N}_i] 
                \right]
        \biggr).
    \end{aligned}
\end{equation*}

Since \(\sum_{i \notin W} \mathbb{E}_{W}[\widetilde{N}_i] \leq \sum_{i=1}^d \mathbb{E}_W[\widetilde{N}_i] \leq  \bar{K} \ceil{\Lambda T}\) and \(\frac{|\mathcal{W}_{\bar{K}-1}|}{|\mathcal{W}_{\bar{K}}|} =\frac{\binom{d}{\bar{K} - 1}}{\binom{d}{\bar{K}}} =\frac{\bar{K}}{(d - \bar{K} + 1)} \leq 1/3,\) we obtain
\begin{equation*}
    \begin{aligned}
        \frac{\E_{\bv ,\Feature \sim P_{\bv,\Feature} } \mathcal{R}^{\pi}(T; \bv, 0, \Feature) }{  \Prob[M_T \geq \ceil{\Lambda T}]}
        \geq &
        \inf_{\pi' \in \mathcal{A}'(\lceil \Lambda T \rceil)}
        c_{10} \epsilon
        \Bigl(
            \frac{2}{3}\bar{K} \ceil{\Lambda T} - 
            \frac{\bar{K}}{d - \bar{K} + 1}
            \max_{W \in \mathcal{W}_{\bar{K}-1}}
            \sum_{i \notin W}
            \bigl|
                \mathbb{E}_{W \cup \{i\}}[\widetilde{N}_i] - \mathbb{E}_W[\widetilde{N}_i]
            \bigr|
        \Bigr).
    \end{aligned}
\end{equation*}

\paragraph{Pinsker's inequality.}
Finally, we derive an upper bound on 
\(\bigl|\mathbb{E}_{W \cup\{i\}}\bigl[\widetilde{N}_i\bigr]-\mathbb{E}_W\bigl[\widetilde{N}_i\bigr]\bigr|\) 
for any \(W \in \mathcal{W}_{\bar{K}-1}\):
\begin{equation*}
    \begin{aligned}
        \Bigl|
            \mathbb{E}_W
            \bigl[
                \widetilde{N}_i
            \bigr] 
            - 
            \mathbb{E}_{W \cup \{i\}}
            \bigl[
                \widetilde{N}_i
            \bigr]
        \Bigr| 
        \leq&
        \sum_{j=0}^{\ceil{\Lambda T}} 
        j \,\Bigl|
            \mathbb{P}_W
            \bigl[
                \widetilde{N}_i=j
            \bigr]
            -
            \mathbb{P}_{W \cup \{i\}}
            \bigl[
                \widetilde{N}_i=j
            \bigr]
        \Bigr|\\
        \leq &
        \ceil{\Lambda T} \cdot
        \sum_{j=0}^{\ceil{\Lambda T}}
        \Bigl|
            \mathbb{P}_W
            \bigl[
                \widetilde{N}_i=j
            \bigr]
            -
            \mathbb{P}_{W \cup \{i\}}
            \bigl[
                \widetilde{N}_i=j
            \bigr]
        \Bigr|\\
        \leq& 2 \ceil{\Lambda T} \cdot 
        \bigl\|
            \mathbb{P}_W - \mathbb{P}_{W \cup \{i\}}
        \bigr\|_{\mathrm{TV}} 
        \;\leq\;
        \ceil{\Lambda T} \cdot 
        \sqrt{
            2 \mathrm{KL}(\mathbb{P}_W \,\|\, \mathbb{P}_{W \cup \{i\}})
        }.
    \end{aligned}
\end{equation*}
Here \(\|\mathbb{P}_W - \mathbb{P}_{W \cup \{i\}}\|_{\mathrm{TV}}\) and \(\mathrm{KL}(\mathbb{P}_W \| \mathbb{P}_{W \cup \{i\}})\) denote the total variation distance and the Kullback–Leibler divergence, respectively; the last inequality uses Pinsker’s inequality.

The following lemma provides an upper bound on the Kullback–Leibler divergence.

\begin{lemma}\label{lm:bound_kl_div}
Suppose\(\epsilon \leq 1\). For any \(W \in \mathcal{W}_{\bar{K}-1}\) and \(i \in[d]\),
\[
\mathrm{KL}(\mathbb{P}_W \,\|\, \mathbb{P}_{W \cup \{i\}}) \;\leq\; c_{11} \,\cdot\, \mathbb{E}_W[\widetilde{N}_i] \,\cdot\, \epsilon^2,
\]
where \(c_{11} > 0\) is defined in \Cref{df:c_11}.
\end{lemma}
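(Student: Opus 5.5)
The plan is to use the chain rule for KL divergence over the customer-level history $H'_{\lceil\Lambda T\rceil+1}$ of \Cref{inst:customer_level}, and then bound each per-customer term by a quadratic in $\epsilon$.

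\textbf{Step 1: chain rule.} The policy $\pi'$ is common to both models, and the only model-dependent randomness is the purchase outcome $j_n$ given $(S_n,\bp_n)$. Since the policy's (possibly randomized) action kernels cancel, the chain rule gives
\[
\mathrm{KL}(\mathbb{P}_W\,\|\,\mathbb{P}_{W\cup\{i\}})=\sum_{n=1}^{\lceil\Lambda T\rceil}\mathbb{E}_W\Bigl[\mathrm{KL}\bigl(q(\cdot;S_n,\bp_n,\bv_W)\,\|\,q(\cdot;S_n,\bp_n,\bv_{W\cup\{i\}})\bigr)\Bigr],
\]
where $q(\cdot;S,\bp,\bv)$ is the MNL distribution on $S\cup\{0\}$.

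\textbf{Step 2: only assortments containing $i$ contribute.} With the standard-basis features, the utility of product $k$ is $\bv(k)-p_k$. The vectors $\bv_W$ and $\bv_{W\cup\{i\}}$ differ only in coordinate $i$, which is $0$ under the first and $\epsilon$ under the second. If $i\notin S_n$, every utility of offered products coincides, so the two choice distributions are identical and the summand vanishes. The sum therefore reduces to $\sum_n \mathbb{E}_W[\mathbf{1}\{i\in S_n\}\,\mathrm{KL}_n]$, and it remains to show $\mathrm{KL}_n\le c_{11}\epsilon^2$ uniformly over feasible $(S_n,\bp_n)$ with $i\in S_n$. Summing then yields $c_{11}\epsilon^2\,\mathbb{E}_W[\widetilde N_i]$.

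\textbf{Step 3: per-customer bound (the main technical step).} I will bound the KL by the $\chi^2$ divergence:
\[
\mathrm{KL}(P\|Q)\le\sum_j \frac{(P_j-Q_j)^2}{Q_j}.
\]
Write $P=q(\cdot;\bv_W)$ and $Q=q(\cdot;\bv_{W\cup\{i\}})$. By \Cref{lm:prob_error}, applied with $\|\bv_W-\bv_{W\cup\{i\}}\|_2=\epsilon$ (the argument there gives ratio bounds $e^{\pm2\epsilon}$), every probability ratio satisfies $P_j/Q_j\in[e^{-2\epsilon},e^{2\epsilon}]$. Hence
\[
|P_j-Q_j|\le (e^{2\epsilon}-1)\,Q_j\le (e^2-1)\cdot 2\epsilon\, Q_j \quad\text{for } \epsilon\le1,
\]
using convexity of $x\mapsto e^{2x}$ on $[0,1]$. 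Summing over $j$ gives $\chi^2\le 4(e^2-1)^2\epsilon^2$.

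An alternative, which yields a sharper constant involving the choice probabilities (and hence $\plow,\phigh,K$), is a second-order Taylor expansion of the KL in the scalar parameter $v_i\in[0,\epsilon]$. The Fisher information along that coordinate is $q_i(1-q_i)\le 1/4$, which gives $\mathrm{KL}\le \tfrac{1}{2}\epsilon^2\sup q_i(1-q_i)$, up to third-order terms controlled by the same ratio bounds.

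Either route gives a constant $c_{11}$ independent of $T$, $W$ and $i$, and I would define $c_{11}$ accordingly in \Cref{df:c_11}. The only delicate point is ensuring uniformity over prices, which holds because the ratio bound from \Cref{lm:prob_error} does not depend on $\bp$.
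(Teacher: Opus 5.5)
Your Steps 1 and 2 are the paper's argument: the chain rule over customers with the policy kernels cancelling, zero contribution whenever $i\notin S_n$, and domination of each per-customer KL by $\sum_j (P_j-Q_j)^2/Q_j$ via $\log(1+y)\le y$. The gap is in Step 3 and in your closing remark that you would ``define $c_{11}$ accordingly''. The lemma asserts the bound with the specific constant of \Cref{df:c_11}, which is a function of $\plow,\phigh,K$. Your ratio argument yields only the absolute constant $4(e^2-1)^2$.

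That weaker bound does not imply the stated one, because $c_{11}$ can be arbitrarily small. For instance, for $\plow=\phigh=p$ one has $c_{11}\approx 3e^{2-p}\to 0$ as $p\to\infty$. The substitution also matters downstream. \Cref{remark:c_11} (that $c_{11}$ is of order $1/K$ when $\plow,\phigh\asymp\log K$) is what gives $\LowerC{1}$ of order $\log K$ in \Cref{th:lower_bound_asymp}. With an absolute constant you would only get $\LowerC{1}$ of order $\log K/\sqrt{K}$.

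The uniform multiplicative bound $|P_j-Q_j|\le (e^{2\epsilon}-1)Q_j$ gives $\chi^2\le (e^{2\epsilon}-1)^2$ no matter how small the purchase probabilities are. It ignores that the only thing that changes is the attraction of the single product $i$, which is at most $e^{\epsilon-\plow}$. The loss is concentrated at the no-purchase outcome: $Q_0$ can be close to $1$ while $|P_0-Q_0|\le e^{\epsilon-\plow}\epsilon/(1+Ke^{-\phigh})^2$.

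The paper instead bounds $|P_j-Q_j|$ additively in three cases: $j=0$, $j\in S_n\setminus\{i\}$, and $j=i$. Each case keeps the factor $e^{\epsilon-\plow}$. It then divides by $Q_j\ge e^{-\phigh}/(1+Ke^{\epsilon-\plow})$, and this is exactly what produces \Cref{df:c_11}.

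Your second route would also work if completed properly. In the scalar $s=v_i\in[0,\epsilon]$ the choice model is an exponential family, so $\mathrm{KL}=\int_0^\epsilon(\epsilon-s)\,q_i(s)(1-q_i(s))\,ds$ holds exactly, with no third-order remainder. Replace the bound $1/4$ by $q_i(s)\le e^{\epsilon-\plow}/(1+Ke^{-\phigh})$. This gives the per-customer bound $\tfrac12\epsilon^2 e^{1-\plow}/(1+Ke^{-\phigh})$, which is already dominated by $\epsilon^2$ times the last term in the bracket of \Cref{df:c_11}, so it proves the lemma as stated.

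A minor point: \Cref{lm:prob_error} is stated only for $j\in S_t$, so the $j=0$ ratio in your Step 3 needs a one-line check. It holds with $e^{\pm\epsilon}$.
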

\begin{remark}\label{remark:c_11}
    When \(\plow = \Omega (\log K)\) and \(\phigh = \Omega (\log K)\), we have 
    \(c_{11} = \Omega(\frac{1}{K})\).

    When \(\plow = \Omega (1)\) and \(\phigh = \Omega (1)\), we have 
    \(c_{11} = \Omega (\frac{1}{K})\).
\end{remark}

Combining \Cref{lm:bound_kl_div} with the  bound above yields
\begin{equation*}
    \begin{aligned}
    \frac{\E_{\bv ,\Feature \sim P_{\bv,\Feature} } \mathcal{R}^{\pi}(T; \bv, 0, \Feature) }{  \Prob[M_T \geq \ceil{\Lambda T}] }
    \;\geq\; &
        \inf_{\pi' \in \mathcal{A}'(\lceil \Lambda T \rceil)}
        c_{10} \epsilon
    \Bigl(
        \frac{2\bar{K} \ceil{\Lambda T}}{3}
        \;-\;
        \frac{\bar{K}\ceil{\Lambda T}}{d - \bar{K} + 1} \sum_{i=1}^d \sqrt{2 c_{11}\,\mathbb{E}_W[\widetilde{N}_i]\,\epsilon^2}
    \Bigr).        
    \end{aligned}
\end{equation*}
Applying the Cauchy-Schwarz inequality, and since \(\sum_{i=1}^d \mathbb{E}_W[\widetilde{N}_i] \leq \bar{K}\,\ceil{\Lambda T}\)
\[
\sum_{i=1}^d 
\sqrt{2c_{11}\,\mathbb{E}_W[\widetilde{N}_i]\,\epsilon^2} 
\;\;\leq\;\;
\sqrt{d}\,\cdot\,\sqrt{\sum_{i=1}^d 2c_{11}\,\mathbb{E}_W[\widetilde{N}_i]\,\epsilon^2}
\;\leq\;  \sqrt{\,2c_{11}\,d\,\bar{K}\,\ceil{\Lambda T}\,\epsilon^2}.
\]
Thus, we obtain
\begin{equation}
\label{eq:case_1_1_final}
\begin{aligned}
    \E_{\bv ,\Feature \sim P_{\bv,\Feature} } \mathcal{R}^{\pi}(T; \bv, 0, \Feature) 
    \;\geq &\;
    \sup_{W \in \mathcal{W}_{\bar{K}}} \left [ \inf_{ \pi' \in \mathcal{A}'(\lceil \Lambda T \rceil)} \widehat{\mathcal{R}}^*(\ceil{\Lambda T}, \pi', \bv_{W}, \Feature)\right] \Prob[M_T \geq \ceil{\Lambda T}]\\
    \;\geq &\; 
        c_{10} \epsilon
    \Bigl(
        \frac{2\bar{K}\,\ceil{\Lambda T}}{3}\;-\;
        \frac{ \bar{K}\ceil{ \Lambda T}}{d - \bar{K} + 1}\,
        \sqrt{
            2 c_{11}\,d\, \bar{K} \, \ceil{\Lambda T}\, \epsilon^2
        }
    \Bigr) \Prob[M_T \geq \ceil{\Lambda T}].
\end{aligned}
\end{equation}

To further simplify Inequality \eqref{eq:case_1_1_final}, we first have the following lower bound for the last probability term:
\begin{lemma}\label{lm:poi_bound}
Let $M_T \sim \mathrm{Poi}(\mu)$ with $\mu=\Lambda T \ge 1$. Then
\[
\mathbb{P}\!\bigl[M_T \ge \lceil \mu\rceil\bigr]
\;\ge\;
\frac{1}{2}\;-\;\frac{1}{\sqrt{2\pi}}
\;>\; 0.1.
\]
\end{lemma}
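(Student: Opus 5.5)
The plan is to exploit the fact that for a Poisson variable with integer-or-not mean $\mu\ge 1$, the mass strictly below $\lceil\mu\rceil$ can be controlled by a normal-type approximation plus a single-point-mass correction. Write $m=\lceil\mu\rceil$, so that
\[
\Prob[M_T\ge m]=1-\Prob[M_T\le m-1].
\]
It therefore suffices to show that $\Prob[M_T\le m-1]\le \tfrac12+\tfrac{1}{\sqrt{2\pi}}$.

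The cleanest route is through the known median result for the Poisson distribution. The paper already cites \citet{Choi1994}, which concerns Gamma medians and, via the Poisson–Gamma duality $\Prob[M_T\le k]=\Prob[\mathrm{Gamma}(k+1,1)>\mu]$, yields that the median of $\mathrm{Poi}(\mu)$ satisfies $\mu-\log 2\le \mathrm{med}<\mu+1/3$. From this, $\Prob[M_T\le m-1]\le \Prob[M_T\le \lfloor\mu\rfloor]$. I would then split the target event as
\[
\Prob[M_T\le m-1]\le \Prob[M_T<\mu]+\Prob[M_T=\lfloor\mu\rfloor]\mathbf 1\{\mu\in\mathbb Z\}.
\]
In fact, for non-integer $\mu$ we have $m-1<\mu$, so only $\Prob[M_T<\mu]$ matters. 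For integer $\mu$, $m=\mu$, and again the relevant quantity is $\Prob[M_T\le \mu-1]=\Prob[M_T<\mu]$. So in every case
\[
\Prob[M_T\ge m]=\Prob[M_T\ge\mu],
\]
and it suffices to show $\Prob[M_T\ge\mu]\ge \tfrac12-\tfrac1{\sqrt{2\pi}}$.

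For this I would use
\[
\Prob[M_T\ge\mu]\ge \Prob[M_T\ge \mathrm{med}]-\Prob[\mu\le M_T<\mathrm{med}]+\Prob[\mathrm{med}\le M_T<\mu]\,\mathbf 1\{\mathrm{med}<\mu\}.
\]
Since the median exceeds $\mu$ by less than $1/3$, at most one integer $k=\lceil\mu\rceil$ lies in $[\mu,\mathrm{med})$. Hence
\[
\Prob[M_T\ge\mu]\ge \tfrac12-\max_k \Prob[M_T=k].
\]

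The key remaining bound is on the maximal Poisson point mass. The mode probability is
\[
e^{-\mu}\mu^{\lfloor\mu\rfloor}/\lfloor\mu\rfloor! .
\]
Using Stirling's lower bound $k!\ge\sqrt{2\pi k}(k/e)^k$ with $k=\lfloor\mu\rfloor\ge1$, together with the inequality $e^{-\mu}\mu^k\le e^{-k}k^k$ (because $x\mapsto e^{-x}x^k$ is maximized at $x=k$), this gives
\[
\max_k\Prob[M_T=k]\le \frac{1}{\sqrt{2\pi k}}\le \frac{1}{\sqrt{2\pi}}.
\]
Combining the pieces yields $\Prob[M_T\ge\lceil\mu\rceil]\ge \tfrac12-\tfrac1{\sqrt{2\pi}}\approx0.101>0.1$.

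The main obstacle is the step using the median bound. One has to be careful that the Poisson median statement, deduced from Choi's Gamma result through the Poisson–Gamma identity, gives $\Prob[M_T\ge \lceil\mu\rceil]\ge\tfrac12$ up to one atom rather than only a weaker statement.

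If that transfer proves delicate, the fallback is the direct classical fact $\Prob[\mathrm{Poi}(\mu)\le \mu-1]\le\tfrac12$ for $\mu\ge1$, which is a Teicher-type inequality. With it, the same one-atom argument yields the claim.
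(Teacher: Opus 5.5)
Your plan (Choi's median bound for the Poisson law plus one point mass bounded via Stirling) is the paper's, but the step from the median to $\lceil\mu\rceil$ is wrong as written. Your displayed decomposition has its signs reversed. When $\mathrm{med}<\mu$ the exact identity is $\mathbb{P}[M_T\ge\mu]=\mathbb{P}[M_T\ge\mathrm{med}]-\mathbb{P}[\mathrm{med}\le M_T<\mu]$, not a lower bound with a plus sign. When $\mathrm{med}\ge\mu$, the term $\mathbb{P}[\mu\le M_T<\mathrm{med}]$ should be added rather than subtracted. For instance, at $\mu=1.5$ the median is $1$, and your inequality would assert $\mathbb{P}[M_T\ge 2]\ge 0.777+0.335>1$; in fact $\mathbb{P}[M_T\ge 2]\approx 0.442<\tfrac12$. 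Your justification (``the median exceeds $\mu$ by less than $1/3$, so at most one integer lies in $[\mu,\mathrm{med})$'') therefore treats the harmless case and uses the wrong side of Choi's bound. As written, the conclusion $\mathbb{P}[M_T\ge\mu]\ge\tfrac12-\max_k\mathbb{P}[M_T=k]$ is unsupported exactly where an atom must be paid for.

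The repair uses the lower bound $\mathrm{med}\ge\mu-\log 2>\mu-1$. If $\mathrm{med}\ge\mu$, then $\mathbb{P}[M_T\ge\mu]\ge\mathbb{P}[M_T\ge\mathrm{med}]\ge\tfrac12$ with no correction. If $\mathrm{med}<\mu$, then $\mathrm{med}=\lfloor\mu\rfloor$ is the only integer in $[\mathrm{med},\mu)$. Hence $\mathbb{P}[M_T\ge\mu]=\mathbb{P}[M_T\ge\mathrm{med}]-\mathbb{P}[M_T=\mathrm{med}]\ge\tfrac12-\tfrac{1}{\sqrt{2\pi}}$ by your correct Stirling bound on the mode. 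This is precisely the paper's two-case argument ($m=\lceil\mu\rceil$ versus $m=\lceil\mu\rceil-1$), and the upper bound $\mathrm{med}<\mu+\tfrac13$ is not needed. Your ``fallback'' paragraph is the correct argument. The fact $\mathbb{P}[M_T\le\mu-1]<\tfrac12$ follows for every real $\mu\ge 1$ from $\mathrm{med}>\mu-1$, taking $\mathrm{med}$ as the least integer with $\mathbb{P}[M_T\le\mathrm{med}]\ge\tfrac12$. The one-atom step then gives the claim, so you should present that version as the main line.
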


Next, we will bound the middle term by discussing two scenarios of $\Lambda T$ and setting $\epsilon$ correspondingly.
\begin{enumerate}
    \item If
\[
\Lambda T 
\;\ge\;
\frac{\DimFeature^{2}}{18\,c_{11}\,(\DimFeature - K + 1)}\,
\max\!\left\{1,\ \frac{\DimFeature}{\bar v^{2}}\right\},
\]
set \(\epsilon = \sqrt{\dfrac{(d - \bar{K} + 1)^2}{18\,c_{11}\,d\,\bar{K}\,\lceil \Lambda T \rceil}}\),
then \(\epsilon \in (0,\,\min\{\bar{v}/\sqrt{\DimFeature}, 1\}]\).  
Substituting this choice of \(\epsilon\) into Inequality \eqref{eq:case_1_1_final} yields

\begin{equation*}
\begin{aligned}
    \E_{\bv ,\Feature \sim P_{\bv,\Feature} } \mathcal{R}^{\pi}(T; \bv, 0, \Feature) 
    \geq &
    \frac{c_{10}(d - \bar{K} + 1) \sqrt{\bar{K}}}{90\sqrt{2 c_{11} d}} 
    \sqrt{
        \ceil{\Lambda T}
    } \geq 
    \frac{c_{10} \sqrt{d \bar{K}}}{120\sqrt{2 c_{11}}} 
    \sqrt{
        \ceil{\Lambda T}
    }\\
    \geq &
    \frac{c_{10}}{120\sqrt{2c_{11}}}
\min\!\bigl\{
\frac{\DimFeature - K - 1}{\sqrt{3}},\;
\sqrt{\DimFeature K}
\bigr\}
\sqrt{\lceil \Lambda T \rceil}.
\end{aligned}
\end{equation*}

\item If 
\[
\Lambda T 
\;<\;
\frac{\DimFeature^{2}}{18\,c_{11}\,(\DimFeature - K + 1)}\,
\max\!\left\{1,\ \frac{\DimFeature}{\bar v^{2}}\right\},
\]
we set \(\epsilon = \min\{\bar{v}/\sqrt{\DimFeature}, 1\}\). Substituting this choice of \(\epsilon\) into Inequality \eqref{eq:case_1_1_final} yields

\begin{equation*}
    \begin{aligned}
    \E_{\bv ,\Feature \sim P_{\bv,\Feature} } \mathcal{R}^{\pi}(T; \bv, 0, \Feature) 
    \geq & c_{10} \epsilon \frac{\bar{K}\lceil \Lambda T \rceil}{30}
    =   c_{10} \min\{\bar{v}/\sqrt{\DimFeature}, 1\} \frac{\min\!\Bigl\{\frac{\DimFeature - K - 1}{3},\; K\Bigr\}\lceil \Lambda T \rceil}{30}.
    \end{aligned}
\end{equation*}
\end{enumerate}

Thus, we claim that 
\begin{equation*}
    \begin{aligned}
    \E_{\bv ,\Feature \sim P_{\bv,\Feature} }  \mathcal{R}^{\pi}(T; \bv, 0, \Feature) \geq c_{9, 4} \min\!\left\{1,\ \sqrt{\frac{\bar v^{2}}{\DimFeature}}\right\} \sqrt{\Lambda T},
    \end{aligned}
\end{equation*}

for some positive constant $c_{9,4}$ only depends on $ K, \plow, \phigh, \bar{v}, \DimFeature$.


Besides, if \(K \leq \dfrac{\DimFeature + 1}{4}\), we have \(d = \DimFeature, \bar{K} = K \) and
\begin{equation}
    \begin{aligned}
        \liminf_{T \to \infty}\dfrac{\E_{\bv ,\Feature \sim P_{\bv,\Feature}}     \mathcal{R}^{\pi}(T; \bv, 0, \Feature)}{\sqrt{\DimFeature  \Lambda T}} \geq 
        \frac{c_{10}\sqrt{K}}{120\sqrt{2c_{11}}} := \LowerC{1} .
    \end{aligned}
\end{equation}

\begin{remark}\label{remark:c_9_1}
    When \(\plow = \Omega (\log K)\) and \(\phigh = \Omega (\log K)\), we have 
    \(\LowerC{1} = \Omega(\log K)\).

    When \(\plow = \Omega (1)\) and \(\phigh = \Omega (1)\), we have 
    \(\LowerC{1} = \Omega (\frac{1}{K})\).
\end{remark}

\subsubsection{Proof of Lemma \ref{lm:regret_lower_bound_tilde}}

For simplicity, we define 
\begin{equation*}
    \begin{aligned}
        W_1 = & S_n \cap (W \cup \{d + 1 , d + 2, \ldots, N\}), \quad
        W_2 =  S_n \setminus (W \cup \{d + 1 , d + 2, \ldots, N\}).
    \end{aligned}
\end{equation*}
Note that 
\begin{equation*}
\begin{aligned}
     r\bigl(S_n, \bp_{n}\bigr) 
     =& 
    \frac{\sum_{j \in W_1} \bp_{jn} \exp(\epsilon - \bp_{jn}) + 
    \sum_{j \in W_2} \bp_{jn} \exp(- \bp_{jn})}
    {1 + \sum_{j \in W_1} \exp(\epsilon  - \bp_{jn}) + 
    \sum_{j \in W_2}  \exp(- \bp_{jn})}\\
    =& \sum_{j \in W_1} \bp_{jn} \ChoiceProbn{\bv_W}{j}
    = \sum_{j \in W_1} 
    \log \left(
        \frac{\ChoiceProbn{\bv_W}{0}}{\ChoiceProbn{\bv_W}{j}}
        + \epsilon
    \right)\ChoiceProbn{\bv_W}{j}
    + \sum_{j \in W_2} 
    \log \left(
        \frac{\ChoiceProbn{\bv_W}{0}}{\ChoiceProbn{\bv_W}{j}}
    \right)\ChoiceProbn{\bv_W}{j}
\end{aligned}
\end{equation*}
Thus, \( r\bigl(S_n, \bp_{n}\bigr) \) is concave with respect to \((\ChoiceProbn{\bv_W}{0}, \ChoiceProbn{\bv_W}{1}, \ldots, \ChoiceProbn{\bv_W}{n})\).
Since all feasible \((\ChoiceProbn{\bv_W}{0}, \ChoiceProbn{\bv_W}{1}, \ldots, \ChoiceProbn{\bv_W}{n})\) form a convex set, the supremum of \(r\bigl(S_n, \bp_{n}\bigr)\) is attained when the values \(\ChoiceProbn{\bv_W}{j}\) are equal within \(W_1\) and within \(W_2\).
Consequently, the optimal prices are constant within each group: all products in \(W_1\) share the same optimal price and all products in \(W_2\) share the same optimal price. We denote these prices by \(p_1\) (for \(W_1\)) and \(p_2\) (for \(W_2\)).

We first show that the optimal prices satisfy \(p_1 = p_2\).
Define
\[
D \;=\; 1 + |W_1|\,\exp(\epsilon - p_1) + |W_2|\,\exp(-p_2), 
\qquad 
N \;=\; |W_1|\,p_1 \exp(\epsilon - p_1) + |W_2|\,p_2 \exp(-p_2),
\]
so the period-\(t\) revenue is \(r(p_1,p_2)=N/D\).
If \(W_1\) or \(W_2\) is empty, then  the problem reduces to a single price decision, and the conclusion trivially holds. When both \(W_1\) and \(W_2\) are not empty, a direct calculation yields
\[
\frac{\partial r}{\partial p_1}
= \frac{|W_1|\,\exp(\epsilon - p_1)}{D^2}\,\bigl[(1-p_1)D + N\bigr], 
\qquad
\frac{\partial r}{\partial p_2}
= \frac{|W_2|\,\exp(-p_2)}{D^2}\,\bigl[(1-p_2)D + N\bigr],
\]
and the identity
\[
\bigl[(1-p_1)D+N\bigr] - \bigl[(1-p_2)D+N\bigr] \;=\; (p_2 - p_1)\,D .
\]

Suppose \(p_1\neq p_2\).
Consider the ascent direction
\[
\delta p \;=\; (\delta p_1,\delta p_2)
\;:=\;
\Bigl(\tfrac{D^2(p_2-p_1)}{|W_1|\,\exp(\epsilon-p_1)},\ \tfrac{D^2(p_1-p_2)}{|W_2|\,\exp(-p_2)}\Bigr),
\qquad
\widehat{\delta p}\;:=\;\frac{\delta p}{\|\delta p\|_\infty}.
\]
The directional derivative at \((p_1,p_2)\) along the  direction \(\widehat{\delta p}\) is
\[
\bigl\langle \nabla r(p_1,p_2),\,\widehat{\delta p}\bigr\rangle
= \frac{D}{\|\delta p\|_\infty}\,(p_2-p_1)^2 \;>\; 0 .
\]

Because \(p_1,p_2\in[\bar p_l,\bar p_h]\), for any \(\eta\in(0,|p_2 - p_1|]\) the perturbed point \((p_1,p_2)+\eta\,\widehat{\delta p}\) remains feasible. Therefore, \(\widehat{\delta p}\) is a feasible direction, and first-order necessary optimality requires every feasible directional derivative to be nonpositive at an optimal point. This contradiction shows that \(p_1\neq p_2\) cannot occur. Hence every optimal solution must satisfy \(p_1=p_2\).

Since $p_1=p_2$, we have 
\begin{equation*}
    \begin{aligned}
      r\bigl(S^*(\bv_{W}, \Feature), \bp^*(\bv_{W}, \Feature)\bigr)
      - r\bigl(S_n, \bp_n\bigr)
      \geq\ &
      \frac{K p_1 \exp(\epsilon - p_1)}
           {1 + K \exp(\epsilon - p_1)}
      -
      \frac{|W_1| p_1 \exp(\epsilon - p_1) + 
            |W_2| p_1 \exp(-p_1)}
           {1 + |W_1| \exp(\epsilon - p_1) + 
                 |W_2| \exp(-p_1)}\\[6pt]
      =\ &
      \frac{|W_2| p_1 \bigl(\exp(\epsilon - p_1) - \exp(-p_1)\bigr)}
           {\bigl(1 + K \exp(\epsilon - p_1)\bigr)
            \bigl(1 + |W_1| \exp(\epsilon - p_1) + |W_2| \exp(-p_1)\bigr)}.
    \end{aligned}
\end{equation*}

Note that for small $\epsilon>0$,
$\exp(\epsilon - p_1) - \exp(-p_1)
= \exp(-p_1)\bigl(\exp(\epsilon) - 1\bigr)
\ge \exp(-p_1)\,\epsilon$,
we further obtain
\begin{equation*}
    r\bigl(S^*(\bv_{W}, \Feature), \bp^*(\bv_{W}, \Feature)\bigr)
    - r\bigl(S_n, \bp_n\bigr)
    \ge
    \frac{|W_2|\, \epsilon\,  p_1 \exp(-p_1)}
         {\bigl(1 + K \exp(1 - p_1)\bigr)^2}
    \geq
    \frac{|W_2|\, \epsilon\,  \plow \exp(-\phigh)}
         {\bigl(1 + K \exp(1 - \plow)\bigr)^2} .
\end{equation*}
For simplicity, we denote
\begin{equation}\label{df_c_10}
    c_{10}
    = \frac{\plow \exp(-\phigh)}
           {\bigl(1 + K \exp(1 - \plow)\bigr)^2},
\end{equation}
which gives the final statement.

\qed

\subsubsection{Proof of Lemma \ref{lm:bound_kl_div}}\label{app:proof_bound_kl_div}

\begin{equation*}
    \begin{aligned}
        \mathrm{KL}
        \left(
            \mathbb{P}_W 
            \left(
                \cdot \mid S_n, \bp_n
            \right) 
            \|  
            \mathbb{P}_{W \cup \{i\}}
            \left(
            \cdot \mid S_n, \bp_n
            \right)
        \right)
        =&\sum_{j \in S_n \cup\{0\}}
        q(j, S_n, \bp_n, \bz_n; \bv_{W} )
        \log 
            \frac{q(j, S_n, \bp_n, \bz_n; \bv_{W} )}{q(j, S_n, \bp_n, \bz_n; \bv_{W \cup \{i\}} ) }
             \\
        \leq & 
        \sum_j q(j, S_n, \bp_n, \bz_n; \bv_{W} )
        \frac{q(j, S_n, \bp_n, \bz_n; \bv_{W} )-q(j, S_n, \bp_n, \bz_n; \bv_{W \cup \{i\}} )}
        {q(j, S_n, \bp_n, \bz_n; \bv_{W \cup \{i\}} )} \\
        =& \sum_j 
        \frac{\left|q(j, S_n, \bp_n, \bz_n; \bv_{W} )-q(j, S_n, \bp_n, \bz_n; \bv_{W \cup \{i\}})\right|^2}
        {q(j, S_n, \bp_n, \bz_n; \bv_{W \cup \{i\}})}.
    \end{aligned}
\end{equation*}
where the inequality holds due to $\log (1+y) \leq y$ for all $y>-1$. Because $q(j, S_n, \bp_n, \bz_n; \bv_{W \cup \{i\}}) \geq 
        \exp(-\phigh)/(1 + K \exp(\epsilon-\plow))$ for all $j \in S_n \cup\{0\}$. Thus, the inequality above is reduced to
\begin{equation}\label{eq:kl_bound}
    \begin{aligned}
        &\mathrm{KL}
            \left(
                \mathbb{P}_W 
                \left(
                    \cdot \mid S_n, \bp_n
                \right) 
                \|  
                \mathbb{P}_{W \cup \{i\}}
                \left(
                \cdot \mid S_n, \bp_n
                \right)
            \right)\\
        \leq&
        \exp(\phigh) (K \exp(\epsilon-\plow) + 1)
        \cdot \sum_{j \in S_n \cup\{0\}}
        \left|q(j, S_n, \bp_n, \bz_n; \bv_{W} )-q(j, S_n, \bp_n, \bz_n; \bv_{W \cup \{i\}})\right|^2.
    \end{aligned}
\end{equation}
We next upper bound $\left|q(j, S_n, \bp_n, \bz_n; \bv_{W} )-q(j, S_n, \bp_n, \bz_n; \bv_{W \cup \{i\}})\right|$ in several scenarios separately.

\paragraph{ \textbf{Scenario 1}: $j=0$. }
We have
\begin{equation*}
    \begin{aligned}
        &\left|
            q(j, S_n, \bp_n, \bz_n; \bv_{W} )-q(j, S_n, \bp_n, \bz_n; \bv_{W \cup \{i\}})
        \right| \\
        =& 
        \left|
            \frac{1}{1+\sum_{k \in S_n} \exp \left(\bz_{kn}^{\top} \bv_W - \bp_k\right)}
            -\frac{1}{1+\sum_{k \in S_n} \exp \left(\bz_{kn}^{\top} \bv_{W \cup\{i\}} - \bp_k \right)}
        \right| \\
        = &       
        \left|
            \frac{
                \sum_{k \in S_n } \left[\exp\left(\bz_{kn}^{\top} \bv_{W \cup\{i\}} - \bp_k \right) -  \exp \left(\bz_{kn}^{\top} \bv_W - \bp_k\right)\right]
                }
                {\left(
                    1+\sum_{k \in S_n} \exp \left(\bz_{kn}^{\top} \bv_W - \bp_k\right)
                \right) 
                \left(
                    1+\sum_{k \in S_n} \exp \left(\bz_{kn}^{\top} \bv_{W \cup\{i\}} - \bp_k \right)
                \right)
                }
        \right| 
        \\
        \leq& 
        \frac{1}{(1+K\exp(-\phigh))^2} \cdot \exp(\epsilon-\plow)
        \sum_{k \in S_n}
        \left|
            \bz_{kn}^{\top}\left(\bv_W-\bv_{W \cup\{i\}}\right)
        \right| 
        \leq 
        \frac{ \exp(\epsilon-\plow) \epsilon}
        {(1+K\exp(-\phigh))^2} .
\end{aligned}
\end{equation*}
Here, we use \(\exp(-\phigh) \leq \exp \left(\bz_{kn}^{\top} \bv_W - \bp_k\right) \leq \exp(\epsilon - \plow) \), \( \exp(-\phigh) \leq \exp \left(\bz_{kn}^{\top} \bv_{W \cup\{i\}} - \bp_k \right) \leq \exp(\epsilon - \plow)\), and \(|\exp(a) -  \exp (b)|\leq \exp(\max\{a, b\}) |a - b|\).


\paragraph{ \textbf{Scenario 2}: $j>0$ and $i \neq j$.}
We have
\begin{equation*}
    \begin{aligned}
        &\left|
            q(j, S_n, \bp_n, \bz_n; \bv_{W} )-q(j, S_n, \bp_n, \bz_n; \bv_{W \cup \{i\}})
        \right| \\
        =& 
        \left|
            \frac{\exp \left(\bz_{jn}^{\top} \bv_W - \bp_j\right)}
            {1+\sum_{k \in S_n} \exp \left(\bz_{kn}^{\top} \bv_W - \bp_k\right)}
            -\frac{\exp \left(\bz_{jn}^{\top} \bv_{W \cup\{i\}} - \bp_j \right)}
            {1+\sum_{k \in S_n} \exp \left(\bz_{kn}^{\top} \bv_{W \cup\{i\}} - \bp_k \right)}
        \right| \\
        =&  
        \exp \left(\bz_{jn}^{\top} \bv_W - \bp_j\right)
        \left|
            \frac{1}{1+\sum_{k \in S_n} \exp \left(\bz_{kn}^{\top} \bv_W - \bp_k\right)}
            -\frac{1}{1+\sum_{k \in S_n} \exp \left(\bz_{kn}^{\top} \bv_{W \cup\{i\}} - \bp_k \right)}
        \right|\\
        \leq&  
        \exp(\epsilon-\plow)
        \frac{\exp(\epsilon-\plow) \epsilon}
        {(1+K\exp(-\phigh))^2}
        = 
        \frac{\exp(2\epsilon-2\plow) \epsilon}
        {(1+K\exp(-\phigh))^2}.
\end{aligned}
\end{equation*}

The last two (in)equalities holds because $\exp \left(\bz_{jn}^{\top} \bv_W - \bp_j\right)=\exp \left(\bz_{jn}^{\top} \bv_{W\cup\{i\}} - \bp_j\right) \leq \exp(\epsilon-\plow)$ for $i \neq j$.

\paragraph{\textbf{Scenario 3}: $j = i$.}
We have
\begin{equation*}
    \begin{aligned}
        &\left|
            q(j, S_n, \bp_n, \bz_n; \bv_{W} )-q(j, S_n, \bp_n, \bz_n; \bv_{W \cup \{i\}})
        \right| \\
        =& 
        \left|
            \frac{\exp \left(\bz_{jn}^{\top} \bv_W - \bp_j\right)}
            {1+\sum_{k \in S_n} \exp \left(\bz_{kn}^{\top} \bv_W - \bp_k\right)}
            -\frac{\exp \left(\bz_{jn}^{\top} \bv_{W \cup\{i\}} - \bp_j \right)}
            {1+\sum_{k \in S_n} \exp \left(\bz_{kn}^{\top} \bv_{W \cup\{i\}} - \bp_k \right)}
        \right| \\
        \leq &  
        \exp \left(\bz_{jn}^{\top} \bv_W - \bp_j\right)
        \left|
            \frac{1}{1+\sum_{k \in S_n} \exp \left(\bz_{kn}^{\top} \bv_W - \bp_k\right)}
            -\frac{1}{1+\sum_{k \in S_n} \exp \left(\bz_{kn}^{\top} \bv_{W \cup\{i\}} - \bp_k \right)}
        \right|\\
        &+
        \left|
            \exp \left(\bz_{jn}^{\top} \bv_W - \bp_j\right) 
            -\exp \left(\bz_{jn}^{\top} \bv_{W \cup \{i\}} - \bp_j\right) 
        \right|
        \frac{1}{1+\sum_{k \in S_n} \exp \left(\bz_{kn}^{\top} \bv_{W \cup\{i\}} - \bp_k \right)}
        \\
        \leq&  
        \frac{\exp(2\epsilon-2\plow) \epsilon}
        {(1+K\exp(-\phigh))^2}
        + \exp(\epsilon-\plow) \epsilon \cdot \frac{1}{1+K \exp(-\phigh)}\\
        \leq & 
        \frac{\exp(2\epsilon-2\plow) \epsilon}
        {(1+K\exp(-\phigh))^2}
        + 
        \frac{\exp(\epsilon-\plow) \epsilon}
        {1+K \exp(-\phigh)}.
\end{aligned}
\end{equation*}

Combining all upper bounds on $\left|
            q(j, S_n, \bp_n, \bz_n; \bv_{W} )-q(j, S_n, \bp_n, \bz_n; \bv_{W \cup \{i\}})
        \right|$ and \Cref{eq:kl_bound}, we have
\begin{equation*}
    \begin{aligned}
    &\mathrm{KL}
    \left(
        \mathbb{P}_W 
        \left(
            \cdot \mid S_n, \bp_n
        \right) 
        \|  
        \mathbb{P}_{W \cup \{i\}}
        \left(
        \cdot \mid S_n, \bp_n
        \right)
    \right) \\
     \leq  &
    \exp(\phigh)(1 + K \exp(\epsilon-\plow))
    \cdot
    \left[
        \frac{ \exp(4\epsilon-4\plow) \epsilon^2}
        {(1+K\exp(-\phigh))^4} (K + 1) 
        +
        \frac{ \exp(2\epsilon-2\plow) \epsilon^2}
        {(1+K\exp(-\phigh))^4} 
        +\frac{2\exp(2\epsilon-2\plow) \epsilon^2}
        {(1+K \exp(-\phigh))^2}
    \right] \\
    \leq &
    \exp(\phigh)(1 + K \exp(\epsilon-\plow))
    \cdot
    \left[
        \frac{ \exp(4-4\plow) \epsilon^2}
        {(1+K\exp(-\phigh))^4} (K + 1) 
        +
        \frac{ \exp(2-2\plow) \epsilon^2}
        {(1+K\exp(-\phigh))^4} 
        +\frac{2\exp(2-2\plow) \epsilon^2}
        {(1+K \exp(-\phigh))^2}
    \right] \\
     = & c_{11} \epsilon^2,
    \end{aligned}
\end{equation*}
where 
\begin{equation}\label{df:c_11}
    c_{11} =  \exp(\phigh)(1 + K \exp(1-\plow))
    \left[
        \frac{ \exp(4-4\plow) }{(1+K\exp(-\phigh))^4} (K + 1) 
        +
        \frac{\exp(2-2\plow)}{(1+K\exp(-\phigh))^4} 
        +\frac{2\exp(2-2\plow)}{(1+K \exp(-\phigh))^2}
    \right].
\end{equation}

The above upper bound holds for arbitrary $S_n$. If \(i \notin S_n\), we can calculate the KL divergence exactly: \(\mathrm{KL}\!\left(
    \mathbb{P}_W (\cdot \mid S_n, \bp_n)
    \,\big\|\,  
    \mathbb{P}_{W \cup \{i\}} (\cdot \mid S_n, \bp_n)
  \right) = 0\). Thus, by the chain rule for KL divergence for sequential observations,
\begin{equation*}
\begin{aligned}
\mathrm{KL}\!\bigl(\mathbb{P}_{W}\,\|\,\mathbb{P}_{W\cup\{i\}}\bigr)
& = \mathbb{E}_{W}\!\left[
\log \frac{\prod_{n=1}^{\lceil \Lambda T \rceil}\mathbb{P}_{W}\!\bigl(j_n \mid H'_{n-1}\bigr)}
               {\prod_{n=1}^{\lceil \Lambda T \rceil}\mathbb{P}_{W\cup\{i\}}\!\bigl(j_n \mid H'_{n-1}\bigr)}
\right] 
=\mathbb{E}_{W}\!\left[
\sum_{n=1}^{\lceil \Lambda T \rceil}
\log \frac{\mathbb{P}_{W}\!\bigl(j_n \mid H'_{n-1}\bigr)}
          {\mathbb{P}_{W\cup\{i\}}\!\bigl(j_n \mid H'_{n-1}\bigr)}
\right] \\[4pt]
& = \sum_{n=1}^{\lceil \Lambda T \rceil}
\mathbb{E}_{W}\!\left[
\log \frac{\mathbb{P}_{W}\!\bigl(j_n \mid H'_{n-1}\bigr)}
          {\mathbb{P}_{W\cup\{i\}}\!\bigl(j_n \mid H'_{n-1}\bigr)}
\right] = \sum_{n=1}^{\lceil \Lambda T \rceil}
\mathbb{E}_{W}\!\left[
\mathrm{KL}\!\left(
\mathbb{P}_{W}(j_n  \mid H'_{n-1})
\,\big\|\,
\mathbb{P}_{W\cup\{i\}}(j_n  \mid H'_{n-1})
\right)
\right] \\[4pt]
&= \mathbb{E}_{W}\!\left[
\sum_{n=1}^{\lceil \Lambda T \rceil}
\mathrm{KL}\!\left(
\mathbb{P}_{W}(j_n  \mid S_n,\bp_n)
\,\big\|\,
\mathbb{P}_{W\cup\{i\}}(j_n  \mid S_n,\bp_n)
\right)
\right] 
\le \mathbb{E}_{W}\!\left[
\sum_{n=1}^{T} c_{11}\,\epsilon^2\,\mathbf{1}\{\,i\in S_n\,\}
\right]
\;=\;
c_{11}\,\epsilon^2\,\mathbb{E}_{W}\!\bigl[\widetilde{N}_i\bigr].
\end{aligned}
\end{equation*}


which proves the lemma.\qed

\subsubsection{Proof of Lemma \ref{lm:poi_bound}}
Let $m:=\operatorname{Med}(M_T)$ be the median,     which is defined to be the least integer  such that $P\left(M_T \leq m\right) \geq \frac{1}{2}$. Since $\mu\ge 1$, we have  $m\ge 1$ (otherwise $m=0$ would force $\mathbb{P}(M_T=0)\ge 1/2$, i.e., $\mu\le \log 2<1$, a contradiction arises). By the classical bounds for Poisson medians
(see \citet{Choi1994}), we have
\[
-\log 2 \;\le\; m-\mu \;<\; \frac{1}{3}.
\]
We have 
\[
\lceil \mu\rceil-1 \;\le\; \lceil \mu-\ln 2\rceil \;\le\; m \;\le\; \lfloor \mu+\tfrac13\rfloor \;\le\; \lceil \mu\rceil.
\]

Hence
\[
\lceil \mu\rceil - 1 \;\le\; m \;\le\; \lceil \mu\rceil,
\]
so $m \in \{\lceil \mu\rceil - 1,\ \lceil \mu\rceil\}$. Consider two cases.

\textbf{Case 1:} $m=\lceil \mu\rceil$. By the definition of the median, $\mathbb{P}(M_T \leq m - 1) \leq 1/2$. Therefore, 
$\mathbb{P}(M_T \ge m) \ge 1/2$, and thus
$\mathbb{P}(M_T \ge \lceil \mu\rceil) \ge 1/2$.

\textbf{Case 2:} $m=\lceil \mu\rceil - 1$. Then
\[
\mathbb{P}(M_T \ge \lceil \mu\rceil)
= \mathbb{P}(M_T \ge m+1)
= \mathbb{P}(M_T \ge m) - \mathbb{P}(M_T = m)
\;\ge\; \frac{1}{2} - \mathbb{P}(M_T = m).
\]
To bound $\mathbb{P}(M_T = m)$ uniformly in $\mu$, note that for fixed integer $m\ge 0$,
the function $g(\mu)=\exp(-\mu)\mu^m$ is maximized at $\mu=m$; hence
\[
\mathbb{P}(M_T = m) 
= \exp(-\mu)\frac{\mu^m}{m!}
\;\le\; \exp(-m)\frac{m^m}{m!}
\;\le\; \frac{1}{\sqrt{2\pi m}},
\]
where the last step uses Stirling’s lower bound $m!\ge \sqrt{2\pi m}\,(m/e)^m$. 
Note that \(m \geq 1\) , we further get
$\mathbb{P}(M_T=m)\le 1/\sqrt{2\pi}$.
Therefore,
\[
\mathbb{P}(M_T \ge \lceil \mu\rceil)
\;\ge\;
\frac{1}{2} - \frac{1}{\sqrt{2\pi}}
\;>\; 0.1.
\]

Combining the two cases completes the proof.

\qed
\subsubsection{Adversarial construction II.} Suppose $\min\{\DimFeature - 2, N \}\ge K, \DimFeature \geq 4, \Lambda \geq 1 $.
\begin{instance}(Worst-case instance II)\label{df:worst_case_example_2}
First, we let
\[
H(p)\;:=\;-p\log p-(1-p)\log(1-p),
\]
and define
\begin{equation}\label{eq:explicit-d-kbar}
d \;:=\; \min\left\{\left\lfloor \frac{\log\!\big((N-\DimFeature)/K\big)-\tfrac14\log 3}{\,H(1/4)\,}\right\rfloor , \DimFeature\right\},
\qquad
\bar K \;:=\; \Bigl\lfloor \frac{d+1}{4}\Bigr\rfloor .
\end{equation}
Suppose 
\begin{equation}\label{eq:phi-condition}
\log\!\frac{N-\DimFeature}{K} \;\ge\; \tfrac14\log 3 \;+\; 4\,H(1/4),
\end{equation}
which in particular guarantees $d\ge 4$ (hence $d>3$) and \(d \leq \DimFeature\) under the choice \eqref{eq:explicit-d-kbar}.

The following lemma verifies that the explicit choice of \((d,\bar K)\) in \eqref{eq:explicit-d-kbar} satisfies several properties required in subsequent analysis.

\begin{lemma}\label{lem:catalog-feasible-phi}
Let $d$ and $\bar K$ be chosen as in \eqref{eq:explicit-d-kbar}.
If  condition \eqref{eq:phi-condition} holds, then
\[
K\binom{d}{\bar K}+\DimFeature-d \;\le\; N.
\]
\end{lemma}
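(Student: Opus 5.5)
The inequality is elementary, so I would verify it directly from the definitions in \eqref{eq:explicit-d-kbar}. First bound the binomial coefficient by the entropy estimate
\[
\binom{d}{\bar K}\;\le\;\exp\!\bigl(d\,H(\bar K/d)\bigr),
\]
which holds for all $0\le \bar K\le d$. Since $\bar K=\lfloor (d+1)/4\rfloor$, we have $\bar K/d\le (d+1)/(4d)$. This is slightly above $1/4$, so I cannot simply write $H(\bar K/d)\le H(1/4)$. The plan is to split according to $d \bmod 4$.

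If $\bar K\le d/4$ (i.e.\ $d\not\equiv 3 \pmod 4$), monotonicity of $H$ on $[0,1/2]$ gives $H(\bar K/d)\le H(1/4)$. Hence $\binom{d}{\bar K}\le e^{dH(1/4)}$, and no further correction is needed.

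If $d\equiv 3\pmod 4$, then $\bar K=(d+1)/4$. Here I would use $\binom{d}{\bar K}=\frac{\bar K}{d+1-\bar K}\binom{d+1}{\bar K}$. Because $\bar K/(d+1)=1/4$ exactly, the entropy bound gives $\binom{d+1}{\bar K}\le e^{(d+1)H(1/4)}$, and the prefactor equals $\bar K/(3\bar K)=1/3$. Therefore $\binom{d}{\bar K}\le \tfrac13 e^{H(1/4)}e^{dH(1/4)}$. Since $H(1/4)\approx0.562$, we have $\tfrac13 e^{H(1/4)}\approx 0.585\le 3^{1/4}$. In both cases, then, $\binom{d}{\bar K}\le 3^{1/4}e^{dH(1/4)}$. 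This gap is the reason the $\tfrac14\log 3$ slack appears in the definition of $d$.

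Next, the definition of $d$ gives $d\,H(1/4)\le \log\!\big((N-\DimFeature)/K\big)-\tfrac14\log 3$. Hence
\[
K\binom{d}{\bar K}\;\le\;K\cdot 3^{1/4}\cdot 3^{-1/4}\,\frac{N-\DimFeature}{K}\;=\;N-\DimFeature .
\]
Adding $\DimFeature-d$ gives $K\binom{d}{\bar K}+\DimFeature-d\le N-d\le N$, which is the claim.

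Condition \eqref{eq:phi-condition} enters only to guarantee $d\ge 4$. This makes $\bar K\ge1$ and keeps the entropy bound meaningful. It also ensures the floor in \eqref{eq:explicit-d-kbar} is nonnegative, so that $d\le\DimFeature$ keeps $\DimFeature-d\ge0$.

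The only delicate step is the $d\equiv3\pmod4$ case, since $\bar K/d$ there exceeds $1/4$. If the $3^{1/4}$ constant turns out not to fit, I would fall back on a cruder bound such as $\binom{d}{\bar K}\le\binom{d+1}{\bar K}$ combined with the $\tfrac13$ ratio, and recheck the numerical constant $e^{H(1/4)}/3\le 3^{1/4}$.
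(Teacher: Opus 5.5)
\textbf{A false identity in the case $d\equiv 3 \pmod 4$.} Your argument for this case relies on $\binom{d}{\bar K}=\frac{\bar K}{d+1-\bar K}\binom{d+1}{\bar K}$, which is wrong. The correct relation is $\binom{d}{k}=\frac{d+1-k}{d+1}\binom{d+1}{k}$. For example, with $d=3$, $k=1$ it gives $3=\tfrac34\cdot 4$, whereas your formula gives $\tfrac13\cdot 4$. The factor $\frac{k}{d+1-k}$ you wrote is actually the ratio $\binom{d}{k-1}/\binom{d}{k}$.

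With $\bar K=(d+1)/4$ the prefactor is therefore $3/4$, not $1/3$. Your estimate $\tfrac13 e^{H(1/4)}\approx 0.585$ and the slack it suggests are spurious. The case still closes, but with no room to spare. Since $H(1/4)=\log 4-\tfrac34\log 3$, i.e.\ $e^{H(1/4)}=4\cdot 3^{-3/4}$, you get $\binom{d}{\bar K}\le \tfrac34 e^{H(1/4)}e^{dH(1/4)}=3^{1/4}e^{dH(1/4)}$, with equality in the constant.

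Your proposed fallback would not rescue the argument. The bound $\binom{d}{\bar K}\le\binom{d+1}{\bar K}$ alone yields the constant $e^{H(1/4)}\approx 1.755>3^{1/4}$. Adding ``combined with the $\tfrac13$ ratio'' just reuses the wrong factor.

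\textbf{Comparison with the paper.} Once corrected, your argument is valid. It differs from the paper's only in how the excess of $\bar K/d$ over $1/4$ is absorbed.

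The paper avoids any case split. It keeps $\binom{d}{\bar K}\le e^{dH(\alpha)}$ with $\alpha=\bar K/d\le\tfrac14+\tfrac1{4d}$. It then uses concavity of $H$ through the tangent line at $1/4$, namely $H(x)\le H(\tfrac14)+(x-\tfrac14)\log 3$. This gives $dH(\alpha)\le dH(\tfrac14)+\tfrac14\log 3$ for every $d$. It also shows directly why the slack in \eqref{eq:explicit-d-kbar} is exactly $\tfrac14\log 3 = d\cdot\tfrac1{4d}\cdot H'(\tfrac14)$.

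Your route reaches the same constant through $\log\tfrac34+H(\tfrac14)=\tfrac14\log 3$. For $d\not\equiv 3$ it uses monotonicity of $H$, which gives a slightly better constant there.

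The final step matches the paper: the definition of $d$ gives $3^{1/4}e^{dH(1/4)}\le(N-\DimFeature)/K$, and then $d\ge 0$ finishes. Note that only $d\ge 0$ is used at the end, not $d\le\DimFeature$.
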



Next, we fix \(\epsilon\in(0,\, \min\{\bar{v}/\sqrt{\bar{K}}, 1\}]\), and define a set of $\DimFeature$-dimension vectors, $\mathbf{V}$, as follows.

    For each subset \(W\subseteq[d]\) with \(|W|=\bar{K}\), define the corresponding parameter vector
\(\bv_W\in\mathbb{R}^{\DimFeature}\) as
\[
\bv_W(i)=
\begin{cases}
\epsilon,& i\in W,\\
0,& i\notin W,
\end{cases}
\qquad i\in[\DimFeature].
\]
Since $\|\bv_W\|_2 \le \bar{v}$, this construction satisfies the boundedness condition in \Cref{as:v&assortment_new}.

Collecting these vectors gives the parameter set
\[
\mathbf V=\{\bv_W: W\subseteq[d],\,|W|=\bar{K}\}
=\{\bv_W: W\in\mathcal W_{\bar K}\},
\]
where, for simplicity of notation, \(\mathcal{W}_{\bar{K}}\) denotes the class of all subsets of  \([d]\) whose size is $\bar{K}$.

\begin{definition}(Feature construction) \label{df_feature}
The (time-invariant) product features are constructed as follows.
For each \(U\in\mathcal W_{\bar K}\), define \(\Feature_U\in\mathbb{R}^{\DimFeature}\) by
\[
\Feature_U(i)=
\begin{cases}
1/\sqrt{\bar K},& i\in U,\\
0,& i\notin U,
\end{cases}
\qquad i\in[\DimFeature],
\]
and for each \(j\in\{ d+1,\ldots,\DimFeature\}\), let \(\Feature_{\{j\}}\) be the \(j\)-th standard basis vector scaled by $\epsilon$, i.e.,
\end{definition}
\[ [\Feature_{\{j\}}]_i \;=\; \begin{cases} \epsilon, & i=j,\\ 0, & i\ne j.\end{cases} \]
The catalog contains at least \(K\) products with feature vector \(\Feature_U\) for every \(U\in\mathcal W_{\bar K}\),
and one product with feature vector \(\Feature_{\{j\}}\) for each \(j\in\{d+1,\ldots,\DimFeature\}\);
\Cref{lem:catalog-feasible-phi} shows that  the total number of products does not exceed \(N\). Besides the aforementioned products, the rest of the products all have feature $\Feature_{\{d_z\}}$.

It is straightforward that \(\|\Feature_{U}\|_2 \leq 1\) and $\|\Feature_{\{i\}}\|_2 \le 1$. So this construction of features satisfies the boundedness requirement in \Cref{assumption::y_exist}.

Clearly for any $\Feature$ such that $\|\Feature\|\le 1$, $\Feature^\top \bv_W \le \|\bv_W\| = \epsilon \sqrt{\bar{K}}$, which is attained by $\Feature = \Feature_W$. Hence, selecting the $K$ products with feature $\Feature_W$ gives the optimal assortment of size $K$.


\end{instance}

\paragraph{Regret of each customer.}

Next, we derive an explicit lower bound on the per-customer regret. 


By the construction of product features, we can suppose assortment $S_n$ contains $K$ products with features $\Feature_{U_{1,n}}, \Feature_{U_{2,n}},\cdots, \Feature_{U_{K,n}}$, where the set $U_{i,n}$ is either in $\mathcal{W}_{\bar{K}}$ or $\{\{d_1\}, \cdots, \{d_z\}\}$. We single out the feature of the product in the assortment $S_n$ with the highest intrinsic value and denote its corresponding set to be $$\tilde{U}_n = \argmax_{ 1\le k \le K} \langle  \Feature_{U_{k,n}}, \bv_W \rangle .$$


We use \(\mathbb{E}_W\) and \(\mathbb{P}_W\) to denote the expectation and probability measure under the model parameterized by \(\bv_W\) and the 
policy \(\pi^\prime\), respectively. The following lemma establishes a lower bound for 
\( r\bigl(S^*(\bv_{W}, \Feature), \bp^*(\bv_{W}, \Feature)\bigr) - r\bigl(S_t, \bp_t\bigr)\) by comparing \(\widetilde{U}_n\) with \(W\).

\begin{lemma}\label{lm:regret_lower_bound_tilde2}
    Suppose \(\epsilon\in(0,\,\bar{v}/\sqrt{\bar{K}}]\).
    Then,
    \[
      r\bigl(S^*(\bv_{W}, \Feature), \bp^*(\bv_{W}, \Feature)\bigr) - r\bigl(S_n, \bp_n\bigr)
      \;\geq\;         
      \frac{c_{12} \epsilon}{\sqrt{\bar{K}}}
     \left(\bar{K} - \bigl|\widetilde U_n \cap W\bigr|\right),
    \]
    where 
        $c_{12}\;:=\;\frac{K\,\bar p_l\,\exp(-\bar p_h)}
        {\bigl(K\,\exp(\bar v-\bar p_l)+1\bigr)^2}$.
   
\end{lemma}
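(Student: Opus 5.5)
We mirror the proof of \Cref{lm:regret_lower_bound_tilde}. Write $u_k:=\langle \Feature_{U_{k,n}},\bv_W\rangle$ for the utilities of the $K$ products in $S_n$. The construction gives $u_k=\epsilon|U_{k,n}\cap W|/\sqrt{\bar K}$ if $U_{k,n}\in\mathcal W_{\bar K}$, and $u_k=0$ for the basis-type products (their coordinates lie outside $[d]\supseteq W$). Hence every $u_k\le \tilde u:=\epsilon|\widetilde U_n\cap W|/\sqrt{\bar K}$. The optimal assortment consists of $K$ copies of $\Feature_W$, each with utility $u^*=\epsilon\sqrt{\bar K}$, so
\[
u^*-\tilde u=\frac{\epsilon}{\sqrt{\bar K}}\bigl(\bar K-|\widetilde U_n\cap W|\bigr).
\]
It therefore suffices to show
\[
r(S^*,\bp^*)-r(S_n,\bp_n)\ge c_{12}\,(u^*-\tilde u).
\]

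\textbf{Step 1 (reduction).} The revenue $\sum_k p_k e^{u_k-p_k}/(1+\sum_k e^{u_k-p_k})$ is nondecreasing in each $u_k$ when prices are fixed: raising $u_k$ shifts choice mass toward item $k$, which has positive price, and away from the no-purchase option. Since prices are chosen optimally, the maximum revenue is also nondecreasing in each $u_k$. Therefore
\[
r(S_n,\bp_n)\le R(\tilde u),
\]
where $R(u)$ is the optimal revenue when all $K$ products have utility $u$ and prices lie in $[\plow,\phigh]$. By the equal-price argument in the proof of \Cref{lm:regret_lower_bound_tilde} (a group of symmetric products shares a single price), $R(u)=\max_{p}\,Kp\,e^{u-p}/(1+Ke^{u-p})$. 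The optimum satisfies $R(u^*)=r(S^*,\bp^*)$.

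\textbf{Step 2 (lower-bound the increment).} Evaluate at a common price $p$ that is optimal for $\tilde u$:
\[
R(u^*)-R(\tilde u)\;\ge\;\frac{Kp\,e^{u^*-p}}{1+Ke^{u^*-p}}-\frac{Kp\,e^{\tilde u-p}}{1+Ke^{\tilde u-p}}.
\]
By the mean value theorem this difference equals
\[
\frac{Kp\,e^{\xi-p}}{(1+Ke^{\xi-p})^2}\,(u^*-\tilde u)
\]
for some $\xi\in[\tilde u,u^*]$. Using $p\ge\plow$, $e^{\xi-p}\ge e^{-\phigh}$ (as $\xi\ge0$), and $e^{\xi-p}\le e^{\bar v-\plow}$ (as $\xi\le u^*=\epsilon\sqrt{\bar K}\le\bar v$), the derivative factor is at least
\[
\frac{K\plow e^{-\phigh}}{(1+Ke^{\bar v-\plow})^2}=c_{12}.
\]
This yields the claim.

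\textbf{Main obstacle.} The delicate step is the monotonicity claim in Step 1, which is needed for heterogeneous $u_k$ under optimized prices. We will verify it by differentiating the revenue in $u_k$ at fixed prices. The derivative is $q_k\,(p_k-r)$, and at an optimal price vector the first-order conditions give $p_k-r\ge 1>0$ whenever the constraint is interior. At the boundary $p_k=\plow$ we still need $\plow\ge r$. We expect this can be arranged, or avoided altogether by lower-bounding $R(\tilde u)$ through the explicit comparison with the homogeneous instance used in \Cref{lm:regret_lower_bound_tilde}.
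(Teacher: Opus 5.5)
Your overall route is the same as the paper's. You replace $S_n$ by $K$ products that all have the best utility $\tilde u$ found in $S_n$, then compare with $K$ copies of $\Feature_W$ at the common price that is optimal for $\tilde u$, and finally bound the resulting one-variable difference. Step 2 is correct: the paper writes the difference out explicitly and uses $e^x-1\ge x$, while your mean-value bound yields the same constant $c_{12}$.

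The gap is Step 1, which you leave open, and whose stated justification is false. At fixed prices, revenue is \emph{not} nondecreasing in $u_k$, because $\partial r/\partial u_k=q_k(p_k-r)$ is negative whenever $p_k<r$. Raising a cheap item's utility takes choice mass from expensive items, not only from the no-purchase option. So only the optimal-price version can hold, and you have not proved it.

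It closes in a few lines, and your worry about the boundary $p_k=\plow$ is misplaced. Since $\partial r/\partial p_k=q_k(1+r-p_k)$ and $r\le(1-q_0)\phigh<\phigh$, any optimal $\bp^*$ satisfies $p_k^*\ge r^*$ for every $k$. Otherwise $p_k^*<\phigh$, and raising $p_k$ strictly increases revenue. Equivalently, KKT at a binding lower bound $p_k^*=\plow$ forces $\plow\ge r^*+1$.

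Now keep $\bp^*$ fixed, set $D=1+\sum_j e^{u_j-p_j^*}$, and raise $u_k$ so that $e^{u_k-p_k^*}$ grows by $\Delta>0$. The revenue becomes $\frac{D}{D+\Delta}\,r^*+\frac{\Delta}{D+\Delta}\,p_k^*\ge r^*$. Hence the optimal revenue is nondecreasing in each $u_k$. Raising the utilities to $\tilde u$ one coordinate at a time, re-optimizing prices after each step, gives $r(S_n,\bp_n)\le R(\tilde u)$.

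The paper's own proof uses this reduction silently: it starts directly from $\sup_{\bp} r(\widetilde S_n,\bp)$ for an undefined assortment $\widetilde S_n$ of $K$ copies of $\Feature_{\widetilde U_n}$. With the patch above, your argument is more complete than the paper's. Defining $R(\tilde u)$ abstractly, rather than through an assortment in the catalog, also avoids having to check that $K$ copies of $\Feature_{\widetilde U_n}$ exist.
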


\begin{remark}\label{remark:c_12}
    When \(\plow = \Omega (\log K)\) and \(\phigh = \Omega (\log K)\), we have 
    \(c_{12} = \Omega(\log K)\).

    When \(\plow = \Omega (1)\) and \(\phigh = \Omega (1)\), we have 
    \(c_{12} = \Omega(\frac{1}{K})\). 
\end{remark}

Next, we establish a lower bound on the cumulative regret. Define \(\widetilde{N}_i := \sum_{n=1}^{\ceil{\Lambda T}} \mathbf{1}\{i \in \widetilde{U}_n\}\). By \Cref{lm:regret_lower_bound_tilde2}, it follows that
\begin{multline*}
        \widehat{\mathcal{R}}^*(\ceil{\Lambda T}, \pi', \bv_{W}, \Feature)
        =
        \mathbb{E}_W \sum_{n=1}^{\ceil{\Lambda T}} 
        \Bigl(
            r\bigl(
                S^*(\bv_{W}, \Feature), \bp^*(\bv_{W}, \Feature)\bigr) - r\bigl(S_n, \bp_n
            \bigr)
        \Bigr) \\
        \geq
        \mathbb{E}_W \sum_{n=1}^{\ceil{\Lambda T}}
        \frac{c_{12} \epsilon}{\sqrt{\bar{K}}}
        \left(
            \bar{K} - \bigl|\widetilde U_n\cap W\bigr|
        \right)
        = \;    
        \frac{c_{12} \epsilon}{\sqrt{\bar{K}}}
        \Bigl(\bar{K} \ceil{\Lambda T} - \sum_{i \in W} \mathbb{E}_W[\widetilde{N}_i]\Bigr),
        \quad 
        \forall\,W \in \mathcal{W}_{\bar K}.
\end{multline*}

Denote \(\mathcal{W}_{\bar{K}}^{(i)} := \{W \in \mathcal{W}_{\bar{K}} : i \in W\}\) and \(\mathcal{W}_{\bar{K}-1} := \{W \subseteq[d] : |W|=\bar{K} - 1\}\). 
We give $\bv$ a uniform distribution over $\mathcal{W_{\bar{K}}}$, and fix the feature construction to be \Cref{df_feature}. We use $P_{\bv,\Feature}$ to denote this joint distribution. Using Inequality \eqref{ineq:r_to_r_hat} on minimax rate gives
\begin{equation*}
    \begin{aligned}
   &\E_{\bv ,\Feature \sim P_{\bv,\Feature} } \mathcal{R}^{\pi}(T; \bv, 0, \Feature) /  \Prob[M_T \geq \ceil{\Lambda T}] \\
    \geq & \inf_{\pi' \in \mathcal{A}'(\lceil \Lambda T \rceil)} 
    \E_{\bv,\Feature}  
    \widehat{\mathcal{R}}^*(\lceil \Lambda T\rceil, \pi', \bv, \Feature) 
    \\
        \geq &
        \inf_{\pi' \in \mathcal{A}'(\lceil \Lambda T \rceil)}        
        \frac{c_{12} \epsilon}{\sqrt{\bar{K}}}
        \biggl(
            \bar{K} \ceil{\Lambda T} - 
            \frac{1}{|\mathcal{W}_{\bar{K}}|} 
            \sum_{W \in \mathcal{W}_{\bar{K}}}\sum_{i\in W}
            \mathbb{E}_W[\widetilde{N}_i]
        \biggr)\\
        = &
        \inf_{\pi' \in \mathcal{A}'(\lceil \Lambda T \rceil)}
        \frac{c_{12} \epsilon}{\sqrt{\bar{K}}}
        \biggl(
            \bar{K} \ceil{\Lambda T}- 
                \frac{1}{|\mathcal{W}_{\bar{K}}|} 
                \sum_{i=1}^d \sum_{W \in \mathcal{W}_{\bar{K}}^{(i)}} \mathbb{E}_W[\widetilde{N}_i]
        \biggr)\\
        = &
        \inf_{\pi' \in \mathcal{A}'(\lceil \Lambda T \rceil)}
        \frac{c_{12} \epsilon}{\sqrt{\bar{K}}}
        \biggl(
            \bar{K} \ceil{\Lambda T}-
                \frac{1}{|\mathcal{W}_{\bar{K}}|}
                \sum_{W \in \mathcal{W}_{\bar{K}-1}} \sum_{i \notin W} \mathbb{E}_{W \cup \{i\}}[\widetilde{N}_i]
        \biggr) \\
        \geq&
        \inf_{\pi' \in \mathcal{A}'(\lceil \Lambda T \rceil)}
        \frac{c_{12} \epsilon}{\sqrt{\bar{K}}}
        \biggl(
            \bar{K} \ceil{\Lambda T}-
                \frac{|\mathcal{W}_{\bar{K}-1}|}{|\mathcal{W}_{\bar{K}}|} 
                \max_{W \in \mathcal{W}_{\bar{K}-1}} 
                \sum_{i \notin W} \mathbb{E}_{W \cup \{i\}}[\widetilde{N}_i]
        \biggr)\\
        =& 
        \inf_{\pi' \in \mathcal{A}'(\lceil \Lambda T \rceil)}
        \frac{c_{12} \epsilon}{\sqrt{\bar{K}}}
        \biggl(
            \bar{K} \ceil{\Lambda T} -
                \frac{|\mathcal{W}_{\bar{K}-1}|}{|\mathcal{W}_{\bar{K}}|} 
                \max_{W \in \mathcal{W}_{\bar{K}-1}} 
                \sum_{i \notin W}
                \left[
                \mathbb{E}_{W \cup \{i\}}[\widetilde{N}_i]
                - \mathbb{E}_W[\widetilde{N}_i] 
                + \mathbb{E}_W[\widetilde{N}_i] 
                \right]
        \biggr).
    \end{aligned}
\end{equation*}

Since \(\sum_{i \notin W} \mathbb{E}_W[\widetilde{N}_i] \leq \sum_{i=1}^d \mathbb{E}_W[\widetilde{N}_i] \leq  \bar{K} \ceil{\Lambda T}\) and \(\frac{|\mathcal{W}_{\bar{K}-1}|}{|\mathcal{W}_{\bar{K}}|} =\frac{\binom{d}{\bar{K} - 1}}{\binom{d}{\bar{K}}} =\frac{\bar{K}}{(d - \bar{K} + 1)} \leq 1/3,\) we obtain
\begin{equation*}
    \begin{aligned}
        &
        \frac{\E_{\bv ,\Feature \sim P_{\bv,\Feature} } \mathcal{R}^{\pi}(T; \bv, 0, \Feature) }{ \Prob[M_T \geq \ceil{\Lambda T}] }
        \geq 
        \inf_{\pi' \in \mathcal{A}'(\lceil \Lambda T \rceil)}
        \frac{c_{12} \epsilon}{\sqrt{\bar{K}}}
        \Bigl(
            \frac{2}{3}\bar{K} \ceil{\Lambda T} - 
            \frac{\bar{K}}{d - \bar{K} + 1}
            \max_{W \in \mathcal{W}_{\bar{K}-1}}
            \sum_{i \notin W}
            \bigl|\,
                \mathbb{E}_{W \cup \{i\}}[\widetilde{N}_i] - \mathbb{E}_W[\widetilde{N}_i]
            \,\bigr|
        \Bigr).
    \end{aligned}
\end{equation*}

\paragraph{Pinsker's inequality.}
Finally, we derive an upper bound on 
\(\bigl|\mathbb{E}_{W \cup\{i\}}\bigl[\widetilde{N}_i\bigr]-\mathbb{E}_W\bigl[\widetilde{N}_i\bigr]\bigr|\) 
for any \(W \in \mathcal{W}_{\bar{K}-1}\):
\begin{equation*}
    \begin{aligned}
        \Bigl|
            \mathbb{E}_W
            \bigl[
                \widetilde{N}_i
            \bigr] 
            - 
            \mathbb{E}_{W \cup \{i\}}
            \bigl[
                \widetilde{N}_i
            \bigr]
        \Bigr| 
        \leq&
        \sum_{j=0}^{\ceil{\Lambda T}} 
        j \,\Bigl|
            \mathbb{P}_W
            \bigl[
                \widetilde{N}_i=j
            \bigr]
            -
            \mathbb{P}_{W \cup \{i\}}
            \bigl[
                \widetilde{N}_i=j
            \bigr]
        \Bigr|\\
        \leq &
        \ceil{\Lambda T} \cdot
        \sum_{j=0}^{\ceil{\Lambda T}}
        \Bigl|
            \mathbb{P}_W
            \bigl[
                \widetilde{N}_i=j
            \bigr]
            -
            \mathbb{P}_{W \cup \{i\}}
            \bigl[
                \widetilde{N}_i=j
            \bigr]
        \Bigr|\\
        \leq& 2\,\ceil{\Lambda T} \cdot 
        \bigl\|
            \mathbb{P}_W - \mathbb{P}_{W \cup \{i\}}
        \bigr\|_{\mathrm{TV}} 
        \;\leq\;
        \ceil{\Lambda T} \cdot 
        \sqrt{
            2\, \mathrm{KL}(\mathbb{P}_W \,\|\, \mathbb{P}_{W \cup \{i\}})
        }.
    \end{aligned}
\end{equation*}
Here \(\|\mathbb{P}_W - \mathbb{P}_{W \cup \{i\}}\|_{\mathrm{TV}}\) and \(\mathrm{KL}(\mathbb{P}_W \| \mathbb{P}_{W \cup \{i\}})\) denote the total variation distance and the Kullback–Leibler divergence, respectively; the last inequality uses Pinsker’s inequality.

For every $i \in[d]$ define random variables $N_i:=\sum_{t=1}^{\ceil{\Lambda T}}  \sum_{\Feature_U \in S_t} \mathbf{1}\{i \in U\}$.  The following lemma provides an upper bound on the Kullback–Leibler divergence.

\begin{lemma}\label{lm:bound_kl_div2}
Suppose \(\epsilon < 1\). For any \(W \in \mathcal{W}_{\bar{K}-1}\) and \(i \in[d]\),
\[
\mathrm{KL}(\mathbb{P}_W \,\|\, \mathbb{P}_{W \cup \{i\}}) \;\leq\; c_{13} \,\cdot\, \mathbb{E}_W[N_i] \,\cdot\, \frac{\epsilon^2}{\bar{K}},
\]
where \(c_{13}\) is defined in \Cref{df:c_13}.
\end{lemma}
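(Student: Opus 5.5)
We follow the same template as the proof of \Cref{lm:bound_kl_div}: first bound the per-customer KL divergence conditional on the offered assortment, then sum over customers with the chain rule. Fix a customer $n$ with offered $(S_n,\bp_n)$. Using $\log(1+y)\le y$ exactly as before,
\[
\mathrm{KL}\bigl(\mathbb{P}_W(\cdot\mid S_n,\bp_n)\,\|\,\mathbb{P}_{W\cup\{i\}}(\cdot\mid S_n,\bp_n)\bigr)
\le \sum_{j\in S_n\cup\{0\}}\frac{|q(j;\bv_W)-q(j;\bv_{W\cup\{i\}})|^2}{q(j;\bv_{W\cup\{i\}})}.
\]
All utilities $\Feature^\top\bv-p$ lie in $[-\phigh,\ \epsilon\sqrt{\bar K}-\plow]\subseteq[-\phigh,\ \bar v-\plow]$. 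Hence the denominators are bounded below by $\exp(-\phigh)/(1+K\exp(\bar v-\plow))$, and every choice probability is bounded above by a constant.

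The key difference from Instance I is the size of the utility perturbation. For a product with feature $\Feature_U$ we have
\[
\Feature_U^\top(\bv_{W\cup\{i\}}-\bv_W)=\frac{\epsilon}{\sqrt{\bar K}}\,\mathbf 1\{i\in U\}.
\]
Products with features $\Feature_{\{j\}}$ for $j>d$ satisfy $\Feature_{\{j\}}^\top(\bv_{W\cup\{i\}}-\bv_W)=0$, since $i\le d$. We then redo the three scenarios ($j=0$, $j\neq$ affected products, $j$ affected) with $|e^a-e^b|\le e^{\max\{a,b\}}|a-b|$.

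For each outcome $j$, the difference $|q(j;\bv_W)-q(j;\bv_{W\cup\{i\}})|$ is bounded by a constant times
\[
\frac{\epsilon}{\sqrt{\bar K}}\Bigl(m_{i,n}\,q_{\max}+\mathbf 1\{j\text{ affected}\}\Bigr),
\qquad m_{i,n}:=\#\{\Feature_U\in S_n: i\in U\}.
\]
The first term comes from the shift in the denominator and the second from the numerator when product $j$ itself has $i\in U$. Squaring and summing over the at most $K+1$ outcomes gives a bound of order $\frac{\epsilon^2}{\bar K}(m_{i,n}^2 q_{\max}^2(K+1)+m_{i,n})$. Since $m_{i,n}\le K$, this is at most $c_{13}\,\frac{\epsilon^2}{\bar K}\,m_{i,n}$, where $c_{13}$ absorbs $K$, $\exp(\phigh)$, and $\exp(\bar v-\plow)$. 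When $m_{i,n}=0$ the two conditional laws coincide and the KL divergence is exactly $0$, so the bound holds in all cases.

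By the chain rule for sequential observations (identical to the display closing the proof of \Cref{lm:bound_kl_div}), summing over $n\le\lceil\Lambda T\rceil$ gives
\[
\mathrm{KL}(\mathbb{P}_W\|\mathbb{P}_{W\cup\{i\}})\le c_{13}\frac{\epsilon^2}{\bar K}\,\mathbb{E}_W\Bigl[\sum_n m_{i,n}\Bigr]=c_{13}\frac{\epsilon^2}{\bar K}\,\mathbb{E}_W[N_i].
\]
The main obstacle is the first (denominator) term. It scales with $m_{i,n}$ rather than with an indicator, so we need $m_{i,n}^2\lesssim K m_{i,n}$ to keep the bound linear in $N_i$. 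This is fine, but it costs an extra factor of $K$ in $c_{13}$. We must also track carefully that the perturbation is $\epsilon/\sqrt{\bar K}$ and not $\epsilon$, since that is what produces the $1/\bar K$ factor.
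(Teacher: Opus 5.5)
Your proposal is correct and is essentially the paper's proof. It uses the $\chi^2$-type bound from $\log(1+y)\le y$ and a uniform lower bound on the choice probabilities. It then runs the three-case analysis ($j=0$, unaffected $j$, affected $j$) driven by the utility shift $\frac{\epsilon}{\sqrt{\bar K}}\mathbf{1}\{i\in U\}$, linearizes in $m_{i,n}$, and applies the chain rule over customers.

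The one difference is the constant. You do not land on the literal $c_{13}$ of \eqref{df:c_13}, but your bookkeeping is the sound one. The paper's derivation caps every utility at $\epsilon-\plow$ and uses $m_{i,n}\le\bar K$. In this instance, however, utilities under $\bv_{W\cup\{i\}}$ reach $\sqrt{\bar K}\,\epsilon-\plow$, and an assortment of $K$ copies of some $\Feature_U$ with $i\in U$ gives $m_{i,n}=K$.

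You should therefore write your constant out explicitly. It depends on $K$, $\plow$, $\phigh$ and $\bar v$, but not on $T$, $\epsilon$, $W$ or $i$, which is what the downstream $\sqrt{T}$ argument needs.
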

\begin{remark}\label{remark:c_13}
    When \(\plow = \Omega (\log K)\) and \(\phigh = \Omega (\log K)\), we have 
    \(c_{13} = \Omega(\frac{1}{K})\).

    When \(\plow = \Omega (1)\) and \(\phigh = \Omega (1)\), we have 
    \(c_{13} = \Omega(\frac{1}{K})\). 
\end{remark}

Using \Cref{lm:bound_kl_div2} with the  bound above yields
\[
    \E_{\bv ,\Feature \sim P_{\bv,\Feature} } \mathcal{R}^{\pi}(T; \bv, 0, \Feature) /  \Prob[M_T \geq \ceil{\Lambda T}] 
    \;\geq\; 
        \inf_{\pi' \in \mathcal{A}'(\lceil \Lambda T \rceil)}
        \frac{c_{12} \epsilon}{\sqrt{\bar{K}}}
    \Bigl(
        \frac{2\bar{K} \ceil{\Lambda T}}{3}
        \;-\;
        \frac{\bar{K}\ceil{\Lambda T}}{d - \bar{K} + 1} \sum_{i=1}^d \sqrt{2 c_{13}\,\mathbb{E}_W[N_i]\,\frac{\epsilon^2}{\bar{K}}}
    \Bigr).
\]
Applying the Cauchy-Schwarz inequality,
\[
\sum_{i=1}^d 
\sqrt{2c_{13}\,\mathbb{E}_W[N_i]\,\frac{\epsilon^2}{ \bar{K}}} 
\;\;\leq\;\;
\sqrt{d}\,\cdot\,\sqrt{\sum_{i=1}^d 2c_{13}\,\mathbb{E}_W[N_i]\,\frac{\epsilon^2}{\bar{K}}}.
\]
Since \(\sum_{i=1}^d \mathbb{E}_W[N_i] \leq K \bar{K}\,\ceil{\Lambda T}\), we further have
\[
\sum_{i=1}^d \sqrt{2c_{13} \,\mathbb{E}_W[N_i]\,\frac{\epsilon^2}{\bar{K}}} 
\;\leq\;  \sqrt{\,2c_{13} \,d \, K \,\,\ceil{\Lambda T}\,\epsilon^2}.
\]
Thus, we obtain
\begin{equation}\label{eq:case_1_2_final}
\begin{aligned}
    \E_{\bv ,\Feature \sim P_{\bv,\Feature} } \mathcal{R}^{\pi}(T; \bv, 0, \Feature)   
    \;\geq &\;
    \sup_{W \in \mathcal{W}_{\bar{K}}} 
    \left [ 
    \inf_{\pi' \in \mathcal{A}'(\lceil \Lambda T\rceil)} \widehat{\mathcal{R}}^*(\ceil{\Lambda T}, \pi', \bv_{W}, \Feature)
    \right] 
    \Prob[M_T \geq \ceil{\Lambda T}]\\
    \;\geq &\; 
        \frac{c_{12} \epsilon}{\sqrt{\bar{K}}}
    \Bigl(
        \frac{2\bar{K}\,\ceil{\Lambda T}}{3}\;-\;
        \frac{ \bar{K}\ceil{ \Lambda T}}{d - \bar{K} + 1}\,
        \sqrt{
            2c_{13} \,d \, K \, \ceil{\Lambda T}\, \epsilon^2
        }
    \Bigr) \Prob[M_T \geq \ceil{\Lambda T}].
\end{aligned}
\end{equation}

Note that by Lemma \ref{lm:poi_bound} , $\Prob(M_T\ge  \ceil{\Lambda T}) \ge 0.1 $. Next, we will bound the middle term by discussing two scenarios of $\Lambda T$ and setting $\epsilon$ correspondingly.

\begin{itemize}
    \item If 
    \[\Lambda T\ \ge\
\frac{
\dfrac{\log\!\big((N-\DimFeature)/K\big)-\tfrac14\log 3}{H(1/4)} + 5
}{
32\,c_{13} K 
}
\;
\max\!\left\{
1,\,
\frac{
\dfrac{\log\!\big((N-\DimFeature)/K\big)-\tfrac14\log 3}{H(1/4)} + 1
}{
4\,\bar v^{2}
}
\right\},\]
by setting \(\epsilon = \sqrt{\dfrac{(d - \bar{K} + 1)^2}{18\,c_{13}\,d\,K\,\ceil{\Lambda T}}}\), we have \(\epsilon\in(0,\,\min\{\bar{v}/\sqrt{\bar{K}}, 1\})\). Substituting this choice of \(\epsilon\) into Inequality \eqref{eq:case_1_2_final} yields
\begin{equation*}
\begin{aligned}
    &\E_{\bv ,\Feature \sim P_{\bv,\Feature} } \mathcal{R}^{\pi}(T; \bv, 0, \Feature)  \geq 
    \frac{c_{12} (d - \bar{K} + 1)\sqrt{\bar{K}}}{90\sqrt{2c_{13}  d K}} 
    \sqrt{
        \ceil{\Lambda T}
    } \geq \frac{c_{12}  \sqrt{d\bar{K}}}{120\sqrt{2c_{13} K}} 
    \sqrt{
        \ceil{\Lambda T}
    }\\
    \ge\ &
\frac{c_{12}}{240\sqrt{2c_{13} K }}\,
\min\left\{\dfrac{\log((N-\DimFeature)/K)-\tfrac14\log 3}{H(1/4)}, \DimFeature\right\}
\sqrt{\lceil\Lambda T\rceil}.
\end{aligned}
\end{equation*}
\item If 
\[\Lambda T\ <\
\frac{
\dfrac{\log\!\big((N-\DimFeature)/K\big)-\tfrac14\log 3}{H(1/4)} + 5
}{
32\,c_{13} K
}
\;
\max\!\left\{
1,\,
\frac{
\dfrac{\log\!\big((N-\DimFeature)/K\big)-\tfrac14\log 3}{H(1/4)} + 1
}{
4\,\bar v^{2}
}
\right\},\]
we set \(\epsilon = \min\{\bar{v}/\sqrt{\bar{K}}, 1\}\). Substituting this choice of \(\epsilon\) into Inequality \eqref{eq:case_1_2_final} yields
\begin{equation*}
    \begin{aligned}
        &\E_{\bv ,\Feature \sim P_{\bv,\Feature} } \mathcal{R}^{\pi}(T; \bv, 0, \Feature)  \geq  \frac{c_{12}  \min\{\bar{v}/\sqrt{\bar{K}}, 1\}}{\sqrt{\bar{K}}} \frac{\bar{K} \lceil  \Lambda T \rceil}{30} \\
        \geq &\frac{c_{12}\, \min\{\bar{v}/\sqrt{\DimFeature}, 1\}}{30}\,\sqrt{
        \Biggl\lfloor \frac{
        \Bigl\lfloor \dfrac{\log((N-\DimFeature)/K)-\tfrac14\log 3}{H(1/4)} \Bigr\rfloor +1
        }{4} \Biggr\rfloor
        }\;
        \lceil \Lambda T \rceil.
    \end{aligned}
\end{equation*}
\end{itemize}

Combining two cases, we have 
\begin{equation*}
    \E_{\bv ,\Feature \sim P_{\bv,\Feature} } \mathcal{R}^{\pi}(T; \bv, 0, \Feature)  \geq c_{9, 5} \sqrt{\log\!\big((N-\DimFeature)/K\big)} \min\{\bar{v}/\sqrt{\DimFeature}, 1\}\sqrt{\Lambda T},
\end{equation*}
for some positive constant $c_{9,2}$ only depends on $ K, \plow, \phigh, \bar{v}, \DimFeature$.

Besides, if $\DimFeature \leq \left\lfloor \frac{\log\!\big((N-\DimFeature)/K\big)-\tfrac14\log 3}{\,H(1/4)\,}\right\rfloor$,
we have \(d = \DimFeature\) and 
\begin{equation}
    \begin{aligned}
        \liminf_{T \to \infty}\dfrac{\E_{\bv ,\Feature \sim P_{\bv,\Feature}}     \mathcal{R}^{\pi}(T; \bv, 0, \Feature)}{\DimFeature \sqrt{\Lambda T}} \geq 
        \frac{c_{12}}{120\sqrt{2c_{13} K }} 
        := \LowerC{2}. 
    \end{aligned}
\end{equation}
\begin{remark}\label{remark:c_9_2}
    When \(\plow = \Omega (\log K)\) and \(\phigh = \Omega (\log K)\), we have 
    \(\LowerC{2} = \Omega(\log K)\).

    When \(\plow = \Omega (1)\) and \(\phigh = \Omega (1)\), we have 
    \(\LowerC{2} = \Omega(\frac{1}{{K}})\).

\end{remark}

\subsubsection{Proof of Lemma \ref{lem:catalog-feasible-phi}}

Set $k:=\bar K=\lfloor(d+1)/4\rfloor$ and $\alpha:=k/d$.
We use two standard ingredients.

\emph{(i) Entropy upper bound for binomials.}
For any integers $0\le k\le d$,
\begin{equation}\label{eq:entropy-bound-app}
\binom{d}{k}\ \le\ \exp\!\big(d\,H(k/d)\big),
\end{equation}
which follows from $\sum_{j=0}^{d}\binom{d}{j}p^{\,j}(1-p)^{d-j}=1$ by taking $p=k/d$.

\emph{(ii) Supporting-line bound at $p=\tfrac14$.}
Since $H$ is concave on $[0,1]$ and $H'(p)=\log\!\big(\tfrac{1-p}{p}\big)$,
\begin{equation}\label{eq:supporting-line-app}
H(x)\ \le\ H\!\left(\tfrac14\right)+\big(x-\tfrac14\big)\log 3\qquad(x\in[0,1]).
\end{equation}
Because $k=\lfloor(d+1)/4\rfloor$, we have $\alpha=\tfrac{k}{d}\le \tfrac14+\tfrac{1}{4d}$; hence by \eqref{eq:supporting-line-app},
\begin{equation}\label{eq:H-alpha-app}
H(\alpha)\ \le\ H\!\left(\tfrac14\right)+\frac{\log 3}{4d}.
\end{equation}

Combining \eqref{eq:entropy-bound-app} and \eqref{eq:H-alpha-app} gives
\begin{equation}\label{eq:binom-master-app}
\binom{d}{k}
\ \le\ 
\exp\!\Big(d\,H(\tfrac14)+\tfrac14\log 3\Big)
\ =\ e^{\,dH(1/4)}\,3^{1/4}.
\end{equation}

By the explicit choice \eqref{eq:explicit-d-kbar},
$d \;\le\; \frac{\log\!\big((N-\DimFeature)/K\big)-\tfrac14\log 3}{H(1/4)}$, i.e.,
$e^{\,dH(1/4)}\,3^{1/4} \;\le\; \frac{N-\DimFeature}{K}$.
Applying this to \eqref{eq:binom-master-app} yields
\[
K\binom{d}{\bar K} \;\le\; N-\DimFeature,
\]
hence $K\binom{d}{\bar K}+\DimFeature-d\le N$, as claimed. The identity
$\bar K=\lfloor(d+1)/4\rfloor$ holds by construction in \eqref{eq:explicit-d-kbar}.

Finally, note that the condition \eqref{eq:phi-condition} implies
\(
\log\!\big((N-\DimFeature)/K\big)-\tfrac14\log 3 \ge 4\,H(1/4),
\)
so the right-hand side of \eqref{eq:explicit-d-kbar} is at least $4$; thus $d\ge 4$ (in particular $d>3$), as required for the instance.

\qed

\subsubsection{Proof of Lemma \ref{lm:regret_lower_bound_tilde2}}
Because all products in \(\widetilde{S}_n\) have the same feature vector \(\Feature_{\widetilde{U}_n}\), the optimal price for each product in \(\widetilde{S}_n\) is identical.
Let this price be \(p^*_{\widetilde{U}_n}\) and the corresponding price vector be \(\bp^*_{\widetilde{U}_n}\).
Then
\begin{equation*}
\begin{aligned}
&\, r\bigl(S^*(\bv_{W}, \Feature), \bp^*(\bv_{W}, \Feature)\bigr) - \sup_{\bp}\, r\bigl(\widetilde{S}_n, \bp\bigr)\\
\ge\;& r\bigl(S^*(\bv_{W}, \Feature), \bp^*_{\widetilde{U}_n}\bigr) - r\bigl(\widetilde{S}_n, \bp^*_{\widetilde{U}_n}\bigr)
=\;
\frac{K\,p^*_{\widetilde{U}_n}\,\exp(\bv_W^{\top}\Feature_{W}-p^*_{\widetilde{U}_n})}
{K\,\exp(\bv_W^{\top}\Feature_{W}-p^*_{\widetilde{U}_n})+1}
-
\frac{K\,p^*_{\widetilde{U}_n}\,\exp(\bv_W^{\top}\Feature_{\widetilde{U}_n}-p^*_{\widetilde{U}_n})}
{K\,\exp(\bv_W^{\top}\Feature_{\widetilde{U}_n}-p^*_{\widetilde{U}_n})+1}\\
=\;&
\frac{K\,p^*_{\widetilde{U}_n}\,\exp(-p^*_{\widetilde{U}_n})
\bigl(\exp(\bv_W^{\top}\Feature_{W})-\exp(\bv_W^{\top}\Feature_{\widetilde{U}_n})\bigr)}
{\bigl(K\,\exp(\bv_W^{\top}\Feature_{W}-p^*_{\widetilde{U}_n})+1\bigr)
 \bigl(K\,\exp(\bv_W^{\top}\Feature_{\widetilde{U}_n}-p^*_{\widetilde{U}_n})+1\bigr)}\\[2pt]
\overset{(a)}{\ge}\;&
\frac{K\,\bar p_l\,\exp(-\bar p_h)}
{\bigl(K\,\exp(\bar v-\bar p_l)+1\bigr)^2}
\bigl(\exp(\bv_W^{\top}\Feature_{W})-\exp(\bv_W^{\top}\Feature_{\widetilde{U}_n})\bigr)\\
\ge\;&
\frac{K\,\bar p_l\,\exp(\bv_W^{\top}\Feature_{\widetilde{U}_n}-\bar p_h)}
{\bigl(K\,\exp(\bar v-\bar p_l)+1\bigr)^2}
\left(
\exp\!\left(
\frac{\epsilon}{\sqrt{\bar K}}
\bigl(\bar K-|\widetilde U_n\cap W|\bigr)
\right)
-1
\right)\\
\ge\;&
\frac{K\,\bar p_l\,\exp(-\bar p_h)}
{\bigl(K\,\exp(\bar v-\bar p_l)+1\bigr)^2}
\cdot
\frac{\epsilon}{\sqrt{\bar K}}
\bigl(\bar K-|\widetilde U_n\cap W|\bigr),
\end{aligned}
\end{equation*}
where \((a)\) uses the facts that \(\bar p_l \le p^*_{\widetilde{U}_n}\le \bar p_h\) and \(\bv_W^{\top}\Feature_{\cdot}\le \bar v\).
We also used that
\(\bv_W^{\top}\Feature_{W}-\bv_W^{\top}\Feature_{\widetilde{U}_n}
=\frac{\epsilon}{\sqrt{\bar K}}\bigl(\bar K-|\widetilde U_n\cap W|\bigr)\ge 0\)
and \(\exp(x)-1\ge x\) for all \(x\ge 0\).
Setting
\[
c_{12}\;:=\;\frac{K\,\bar p_l\,\exp(-\bar p_h)}
{\bigl(K\,\exp(\bar v-\bar p_l)+1\bigr)^2},
\]
we obtain the stated inequality.
\qed

\subsubsection{Proof of Lemma \ref{lm:bound_kl_div2}}

\begin{equation*}
    \begin{aligned}
        \mathrm{KL}
        \left(
            \mathbb{P}_W 
            \left(
                \cdot \mid S_n, \bp_n
            \right) 
            \|  
            \mathbb{P}_{W \cup \{i\}}
            \left(
            \cdot \mid S_n, \bp_n
            \right)
        \right)
        =&\sum_{j \in S_n \cup\{0\}}
        q(j, S_n, \bp_n, \Feature_n; \bv_{W} )
        \log 
            \frac{q(j, S_n, \bp_n, \Feature_n; \bv_{W} )}{q(j, S_n, \bp_n, \Feature_n; \bv_{W \cup \{i\}} ) }
             \\
        \leq & 
        \sum_j q(j, S_n, \bp_n, \Feature_n; \bv_{W} )
        \frac{q(j, S_n, \bp_n, \Feature_n; \bv_{W} )-q(j, S_n, \bp_n, \Feature_n; \bv_{W \cup \{i\}} )}
        {q(j, S_n, \bp_n, \Feature_n; \bv_{W \cup \{i\}} )} \\
        =& \sum_j 
        \frac{\left|q(j, S_n, \bp_n, \Feature_n; \bv_{W} )-q(j, S_n, \bp_n, \Feature_n; \bv_{W \cup \{i\}})\right|^2}
        {q(j, S_n, \bp_n, \Feature_n; \bv_{W \cup \{i\}})}.
    \end{aligned}
\end{equation*}
where the inequality holds because $\log (1+y) \leq y$ for all $y>-1$. Because $q(j, S_n, \bp_n, \Feature_n; \bv_{W \cup \{i\}}) \geq 
        e^{\phigh}(1 + K e^{\epsilon-\plow})$ for all $j \in S_n \cup\{0\}$. Thus, the inequality above is reduced to
\begin{equation}\label{eq:kl_bound2}
    \begin{aligned}
        &\mathrm{KL}
            \left(
                \mathbb{P}_W 
                \left(
                    \cdot \mid S_n, \bp_n
                \right) 
                \|  
                \mathbb{P}_{W \cup \{i\}}
                \left(
                \cdot \mid S_n, \bp_n
                \right)
            \right)
        \leq
        e^{\phigh}(1 + K e^{\epsilon-\plow})
         \sum_{j \in S_n \cup\{0\}}
        \left|q(j, S_n, \bp_n, \Feature_n; \bv_{W} )-q(j, S_n, \bp_n, \Feature_n; \bv_{W \cup \{i\}})\right|^2.
    \end{aligned}
\end{equation}
We next upper bound $\left|q(j, S_n, \bp_n, \Feature_n; \bv_{W} )-q(j, S_n, \bp_n, \Feature_n; \bv_{W \cup \{i\}})\right|$ separately. 

First consider $j=0$. We have

\begin{equation*}
    \begin{aligned}
        &\left|
            q(j, S_n, \bp_n, \Feature_n; \bv_{W} )-q(j, S_n, \bp_n, \Feature_n; \bv_{W \cup \{i\}})
        \right| \\
        =& 
        \left|
            \frac{1}{1+\sum_{k \in S_n} \exp \left(\Feature_{kn}^{\top} \bv_W - \bp_k\right)}
            -\frac{1}{1+\sum_{k \in S_n} \exp \left(\Feature_{kn}^{\top} \bv_{W \cup\{i\}} - \bp_k \right)}
        \right| \\
        = &       
        \left|
            \frac{\sum_{k \in S_n} \left[\exp\left(\Feature_{kn}^{\top} \bv_{W \cup\{i\}} - \bp_k \right) -  \exp \left(\Feature_{kn}^{\top} \bv_W - \bp_k\right)\right]}
            {\left(
                1+\sum_{k \in S_n} \exp \left(\Feature_{kn}^{\top} \bv_W - \bp_k\right)
            \right) 
            \left(
                1+\sum_{k \in S_n} \exp \left(\Feature_{kn}^{\top} \bv_{W \cup\{i\}} - \bp_k \right)
            \right)}
        \right| 
        \\
        \leq& 
        \frac{1}{(1+K\exp(-\phigh))^2} \cdot \exp(\epsilon -\plow)
        \sum_{k \in S_n}
        \left|
            \Feature_{kn}^{\top}\left(\bv_W-\bv_{W \cup\{i\}}\right)
        \right| 
        \leq
        \frac{\exp(\epsilon-\plow) \epsilon}
        {\sqrt{\bar{K}}(1+K\exp(-\phigh))^2}  \sum_{\Feature_U \in S_n} \mathbf{1}\{i \in U\}.
\end{aligned}
\end{equation*}

Here, we use \(\exp(-\phigh) \leq \exp \left(\bz_{kn}^{\top} \bv_W - \bp_k\right) \leq \exp(\epsilon - \plow) \), \( \exp(-\phigh) \leq \exp \left(\bz_{kn}^{\top} \bv_{W \cup\{i\}} - \bp_k \right) \leq \exp(\epsilon - \plow)\), and \(|\exp(a) -  \exp (b)|\leq \exp(\max\{a, b\}) |a - b|\).

For $j>0$ corresponding to \(\Feature_{jn} = \Feature_U\) and $i \notin U$, we have
\begin{equation*}
    \begin{aligned}
        &\left|
            q(j, S_n, \bp_n, \Feature_n; \bv_{W} )-q(j, S_n, \bp_n, \Feature_n; \bv_{W \cup \{i\}})
        \right| \\
        =& 
        \left|
            \frac{\exp \left(\Feature_{jn}^{\top} \bv_W - \bp_j\right)}
            {1+\sum_{k \in S_n} \exp \left(\Feature_{kn}^{\top} \bv_W - \bp_k\right)}
            -\frac{\exp \left(\Feature_{jn}^{\top} \bv_{W \cup\{i\}} - \bp_j \right)}
            {1+\sum_{k \in S_n} \exp \left(\Feature_{kn}^{\top} \bv_{W \cup\{i\}} - \bp_k \right)}
        \right| \\
        =&  
        \exp \left(\Feature_{jn}^{\top} \bv_W - \bp_j\right)
        \left|
            \frac{1}{1+\sum_{k \in S_n} \exp \left(\Feature_{kn}^{\top} \bv_W - \bp_k\right)}
            -\frac{1}{1+\sum_{k \in S_n} \exp \left(\Feature_{kn}^{\top} \bv_{W \cup\{i\}} - \bp_k \right)}
        \right|\\
        \leq&  
        \exp(\epsilon-\plow)
        \frac{\exp(\epsilon-\plow) \epsilon}
        {\sqrt{\bar{K}}(1+K\exp(-\phigh))^2} \sum_{\Feature_U \in S_n} \mathbf{1}\{i \in U\}
        = 
        \frac{\exp(2\epsilon-2\plow) \epsilon}
        {\sqrt{\bar{K}}(1+K\exp(-\phigh))^2} \sum_{\Feature_U \in S_n} \mathbf{1}\{i \in U\}.
\end{aligned}
\end{equation*}

Here the  inequality holds because $\exp \left(\Feature_{jn}^{\top} \bv_W - \bp_j\right)=\exp \left(\Feature_{jn}^{\top} \bv_{W\cup\{i\}} - \bp_j\right) \leq \exp(\epsilon-\plow)$, since $i \notin U$.    Moreover, the number of indices \(j\) satisfying this condition is 
\[
K - \sum_{\Feature_U \in S_n} \mathbf{1}\{i \in U\}.
\]

For $j>0$ corresponding to \(\Feature_{jn} = \Feature_U\) and $i \in U$, we have

\begin{equation*}
    \begin{aligned}
        &\left|
            q(j, S_n, \bp_n, \Feature_n; \bv_{W} )-q(j, S_n, \bp_n, \Feature_n; \bv_{W \cup \{i\}})
        \right| \\
        =& 
        \left|
            \frac{\exp \left(\Feature_{jn}^{\top} \bv_W - \bp_j\right)}
            {1+\sum_{k \in S_n} \exp \left(\Feature_{kn}^{\top} \bv_W - \bp_k\right)}
            -\frac{\exp \left(\Feature_{jn}^{\top} \bv_{W \cup\{i\}} - \bp_j \right)}
            {1+\sum_{k \in S_n} \exp \left(\Feature_{kn}^{\top} \bv_{W \cup\{i\}} - \bp_k \right)}
        \right| \\
        \leq &  
        \exp \left(\Feature_{jn}^{\top} \bv_W - \bp_j\right)
        \left|
            \frac{1}{1+\sum_{k \in S_n} \exp \left(\Feature_{kn}^{\top} \bv_W - \bp_k\right)}
            -\frac{1}{1+\sum_{k \in S_n} \exp \left(\Feature_{kn}^{\top} \bv_{W \cup\{i\}} - \bp_k \right)}
        \right|\\
        &+
        \left|
            \exp \left(\Feature_{jn}^{\top} \bv_W - \bp_j\right) 
            -\exp \left(\Feature_{jn}^{\top} \bv_{W \cup \{i\}} - \bp_j\right) 
        \right|
        \frac{1}{1+\sum_{k \in S_n} \exp \left(\Feature_{kn}^{\top} \bv_{W \cup\{i\}} - \bp_k \right)}
        \\
        \leq&  
        \frac{\exp(2\epsilon-2\plow) \epsilon}
        {\sqrt{\bar{K}}(1+K\exp(-\phigh))^2}  \sum_{\Feature_U \in S_n} \mathbf{1}\{i \in U\}
        + \frac{\exp(\epsilon-\plow) \epsilon}{\sqrt{\bar{K}}}\cdot \frac{1}{1+K \exp(-\phigh)}\\
        \leq & 
        \frac{\exp(2\epsilon-2\plow) \epsilon}
        {\sqrt{\bar{K}}(1+K\exp(-\phigh))^2}  \sum_{\Feature_U \in S_n} \mathbf{1}\{i \in U\}
        + 
        \frac{\exp(\epsilon-\plow) \epsilon}
        {\sqrt{\bar{K}}(1+K \exp(-\phigh))}.
    \end{aligned}
\end{equation*} 
Besides,  the number of indices \(j\) satisfying this condition is \(\sum_{\Feature_U \in S_n} \mathbf{1}\{i \in U\}\).

Combining all upper bounds on $\left|
            q(j, S_n, \bp_n, \Feature_n; \bv_{W} )-q(j, S_n, \bp_n, \Feature_n; \bv_{W \cup \{i\}})
        \right|$ and \Cref{eq:kl_bound2}, we have
\begin{equation*}
    \begin{aligned}
    &\mathrm{KL}
    \left(
        \mathbb{P}_W 
        \left(
            \cdot \mid S_n, \bp_n
        \right) 
        \|  
        \mathbb{P}_{W \cup \{i\}}
        \left(
        \cdot \mid S_n, \bp_n
        \right)
    \right) \\
    \leq &  \exp(\phigh)(1 + {K} \exp(\epsilon-\plow))\epsilon^2/\bar{K}
     \bigg[  \frac{ \exp(2\epsilon-2\plow) }
        {(1+K\exp(-\phigh))^4}  \left(\sum_{\Feature_U \in S_n} \mathbf{1}\{i \in U\}\right)^2\\
        & + \frac{ \exp(4\epsilon-4\plow)}
        {(1+K\exp(-\phigh))^4} \left(\sum_{\Feature_U \in S_n} \mathbf{1}\{i \in U\}\right)^2
        \left(\bar{K} - \sum_{\Feature_U \in S_n} \mathbf{1}\{i \in U\}\right)
    \\
     & +  \frac{8 \exp(4\epsilon-4\plow)}
        {(1+K\exp(-\phigh))^4} \left(\sum_{\Feature_U \in S_n} \mathbf{1}\{i \in U\}\right)^2
        \sum_{\Feature_U \in S_n} \mathbf{1}\{i \in U\}
        + \frac{2\exp(2\epsilon-2\plow)}
        {(1+K \exp(-\phigh))^2} \sum_{\Feature_U \in S_n} \mathbf{1}\{i \in U\} \bigg]\\
    \leq & 
    \frac{c_{13} \epsilon^2}{\bar{K}}  \sum_{\Feature_U \in S_n} \mathbf{1}\{i \in U\},
    \end{aligned}
\end{equation*}

where  we use \(\sum_{\Feature_U \in S_n} \mathbf{1}\{i \in U\} \leq \bar{K}\), \(\epsilon \leq 1\) and 
\begin{equation} \label{df:c_13}
    c_{13} = 
    e^{\phigh}(1 + K e^{\epsilon-\plow})
    \left[
        \frac{2 e^{4-4\plow}\bar{K}}
        {(1+Ke^{-\phigh})^4} 
        +
        \frac{ e^{2-2\plow}\bar{K} }
        {(1+Ke^{-\phigh})^4} 
        +\frac{2e^{2-2\plow}}
        {(1+K e^{-\phigh} ))^2}
    \right].
\end{equation}

Finally, summing over \(n=1,\ldots,\lceil \Lambda T\rceil\) and applying the chain rule for KL divergence (together with the tower property) yields
\begin{equation*}
\begin{aligned}
&\mathrm{KL}\!\bigl(\mathbb{P}_{W}\,\|\,\mathbb{P}_{W\cup\{i\}}\bigr)
= \mathbb{E}_{W}\!\left[
\log \frac{\prod_{n=1}^{\lceil \Lambda T \rceil}\mathbb{P}_{W}\!\bigl(j_n \mid H'_{n-1}\bigr)}
               {\prod_{n=1}^{\lceil \Lambda T \rceil}\mathbb{P}_{W\cup\{i\}}\!\bigl(j_n \mid H'_{n-1}\bigr)}
\right] 
= \mathbb{E}_{W}\!\left[
\sum_{n=1}^{\lceil \Lambda T \rceil}
\log \frac{\mathbb{P}_{W}\!\bigl(j_n \mid H'_{n-1}\bigr)}
          {\mathbb{P}_{W\cup\{i\}}\!\bigl(j_n \mid H'_{n-1}\bigr)}
\right] \\[4pt]
&= \sum_{n=1}^{\lceil \Lambda T \rceil}
\mathbb{E}_{W}\!\left[
\log \frac{\mathbb{P}_{W}\!\bigl(j_n \mid H'_{n-1}\bigr)}
          {\mathbb{P}_{W\cup\{i\}}\!\bigl(j_n \mid H'_{n-1}\bigr)}
\right] 
= \sum_{n=1}^{\lceil \Lambda T \rceil}
\mathbb{E}_{W}\!\left[
\mathrm{KL}\!\left(
\mathbb{P}_{W}(j_n  \mid H'_{n-1})
\,\big\|\,
\mathbb{P}_{W\cup\{i\}}(j_n  \mid H'_{n-1})
\right)
\right] \\[4pt]
&= \mathbb{E}_{W}\!\left[
\sum_{n=1}^{\lceil \Lambda T \rceil}
\mathrm{KL}\!\left(
\mathbb{P}_{W}(j_n \mid S_n,\bp_n)
\,\big\|\,
\mathbb{P}_{W\cup\{i\}}(j_n  \mid S_n,\bp_n)
\right)
\right] \;\le\;
\mathbb{E}_W\!\left[\sum_{n=1}^{\lceil \Lambda T\rceil}\ 
c_{13}\,\epsilon^2\, \sum_{\Feature_U \in S_n} \mathbf{1}\{i \in U\}\right]
=
c_{13}\,\epsilon^2\,\mathbb{E}_W[N_i].    
\end{aligned}
\end{equation*}

\subsection{Case II}
\label{sec:case-ii}
\subsubsection{Intermediate Problem}

Consider the case $\bv = 0$,  the problem of interest reduces to the following:
\begin{equation*}
    \inf_{\pi}\sup_{\btheta, \bv}\mathcal{R}^{\pi}(T; \bv, \btheta, \Feature) 
    \geq\inf_{\pi} \sup_{ \btheta}\mathcal{R}^{\pi}(T; 0, \btheta, \Feature) .
\end{equation*}

In this setting, the problem can be formulated as follows:

\begin{instance}(Assortment-dependent arrivals with MNL demand)\label{inst:assort_dep}
The seller has $N$ distinct products, and  each product $i \in [N]$ is associated with a fixed feature vector $\Feature_i \in \mathbb{R}^\DimRank$.  For each period $t\in [T]$, the seller chooses an assortment $S_t \subseteq [N]$ with $|S_t| = K$ and sets prices of products in the assortment. For simplicity, we use a price vector 
$\bp_t = (p_{1t}, p_{2t}, \ldots, p_{Nt}) \in [\plow, \phigh]^{N}$ to represent the pricing decision (and set prices corresponding to the out-of-assortment products to be $\phigh$). Customer arrivals in period \(t\) follows Poisson distribution with mean
$\Lambda_t \;=\; \Lambda \exp\!\bigl(\btheta^{\top}\bx(S_t)\bigr)$,
where \(\Lambda>0\) is known, \(\btheta\in\mathbb{R}^\DimRank\) is an unknown parameter, and \(\bx(S_t)\in\mathbb{R}^\DimRank\) is a known function of the assortment.
Under the MNL model (price-only utilities), a customer arriving in period \(t\) chooses \(i\in S_t\) with probability
$\frac{\exp(-p_{it})}{\,1+\sum_{j\in S_t}\exp(-p_{jt})\,}$,
\text{and makes no purchase with the remaining probability.}
\end{instance}



Let \(S^*(\theta)\in\arg\max_{|S|=K}\exp\!\bigl(\btheta^{*\top}\bx(S)\bigr)\) and define
\(\Lambda^* := \Lambda \exp\!\bigl(\btheta^{*\top}\bx(S^*)\bigr)\).
Note that the per-customer revenue only depends on prices. Therefore, $r(S_t,\bp_t ) \le \sup_{S , \bp} r(S,\bp) = \sup_{\bp} r(S_t,\bp):= r^* $, which further gives 
\begin{equation}
\label{ineq:regret_lambda}
    \mathcal{R}^{\pi}(T; 0, \btheta, \Feature) =  \sum_{t = 1}^{T} \Lambda^* r^* - \sum_{t = 1}^{T} \E \left( \Lambda_t  r(S_t,\bp_t) \right) \geq \sum_{t = 1}^{T} (\Lambda^* -  \Exp\Lambda_t) r^*.
\end{equation}



By the definition of per-customer reward, we have
 \begin{equation}\label{eq:r_low_bound}
     r^* = \sup_{\bp \in \PriceSet} \sum_{i \in S_t} \frac{\exp(-p_{it})}{\,1+\sum_{j\in S_t}\exp(-p_{jt})\,} \geq \frac{K\plow\exp(-\phigh)}{1 + K\exp(-\plow)}.
 \end{equation}


Plugging in the lower bound of $r^*$ back to Inequality \eqref{ineq:regret_lambda} gives
\begin{equation}
     \mathcal{R}^{\pi}(T; 0, \btheta, \Feature)  \ge \sum_{t = 1}^{T} (\Lambda^* -  \Exp\Lambda_t) \frac{K\plow\exp(-\phigh)}{1 + K\exp(-\plow)}.
\end{equation}

The problem then reduces to a problem with constant per-customer-revenue but assortment-dependent arrivals.
\begin{instance}(Assortment-dependent arrivals with constant per-customer revenue)\label{inst:assort_dep_r}
We consider \(N\) products over \(T\) periods. In each period \(t\), the seller selects an assortment \(S_t\subseteq[N]\) with \(|S_t|= K\).
Customer arrivals in period \(t\) follows Poisson distribution with mean
$\Lambda_t \;=\; \Lambda \exp\!\bigl(\btheta^{*\top}\bx(S_t)\bigr)$,
where \(\Lambda>0\) is known, \(\btheta^*\in\mathbb{R}^\DimRank\) is unknown, and \(\bx(S_t)\in\mathbb{R}^\DimRank\) is a known function of the assortment.

The expected revenue for each customer is $r:=  \dfrac{K\plow\exp(-\phigh)}{1 + K\exp(-\plow)}$.


\end{instance}


\paragraph{Adversarial construction.}
\begin{instance}[Adversarial construction]\label{df:worst_case_example_3}
Suppose $\min\{\DimRank - 2, N \}\ge K, \Lambda \geq 1 $.

Let \(\bar{K}, \DimRank \in \mathbb{Z}_{++}\) satisfy
$\bar{K} \;=\; \min\!\Bigl\{\Bigl\lfloor\frac{\DimRank - K + 1}{3}\Bigr\rfloor,\; K\Bigr\}$,
$d \;=\; \DimRank - K + \bar{K}$,
and fix a small parameter \(\epsilon \in \bigl(0,\,1/\sqrt{{K}}\bigr)\).
For each \(W \subseteq [d]\) with \(|W|=\bar{K}\), define \(\btheta_W \in \mathbb{R}^{\DimRank}\) coordinatewise by
\[
[\btheta_W]_i \;=\;
\begin{cases}
\epsilon, & i \in W,\\[2pt]
0, & i \in [d]\setminus W,\\[2pt]
\epsilon, & i \in \{d+1,\ldots,\DimRank\}.
\end{cases}
\]
Clearly, $\|\btheta_W\|_2 \le 1$, satisfying the boundedness required in \Cref{assump:theta}. 
Collect these in
\[
\mathbf{\Theta}
\;=\; \{\btheta_W : W \in \mathcal{W}_{\bar{K}}\}
\;=\; \{\btheta_W : W \subseteq [d],\, |W|=\bar{K}\},
\]
where \(\mathcal{W}_{\bar{K}}\) is the family of all \(\bar{K}\)-subsets of \([d]\).
Finally, define the assortment-dependent vector \(\bx(S_t)\in\mathbb{R}^{\DimRank}\) by
\[
[\bx(S_t)]_i \;=\;
\begin{cases}
\dfrac{\bar{x}}{\sqrt{K}}, & i \in S_t \cap [\DimRank],\\
0, & i \notin S_t \cap [\DimRank].
\end{cases}
\]
Since $\|\bx\|_2 \le \bar{x}$, the constructed   assortment-dependent vector \(\bx\)  satisfy the boundedness condition in \Cref{assump:x}.

\end{instance}

We use \(\mathbb{E}_W\) and \(\mathbb{P}_W\) to represent the law parameterized by \(\btheta_W\) and the 
policy \(\pi\), respectively. The following lemma establishes a lower bound for 
\( \Lambda^* - \Lambda_t\) by comparing \(S_t\) with \(W\).

\begin{lemma}\label{lm:lambda_lower_bound_tilde}
    We have 
    \[
      \Lambda^* - \Lambda_t
      \;\geq\;         
      \frac{\Lambda \bar{x}\epsilon }{\sqrt{K}}
            \left(\bar{K}- \bigl|S_t \cap W\bigr|\right).
    \]
\end{lemma}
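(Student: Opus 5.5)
The plan is to compute both arrival rates explicitly under the construction of \Cref{df:worst_case_example_3}, and then pass from the gap in exponents to the gap in rates using convexity of the exponential.

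\textbf{Exponent of an arbitrary assortment.} For any assortment $S$ with $|S|=K$, the definitions of $\btheta_W$ and $\bx(S)$ give
\[
\btheta_W^\top \bx(S)=\frac{\bar{x}\epsilon}{\sqrt{K}}\Bigl(|S\cap W|+\bigl|S\cap\{d+1,\ldots,\DimRank\}\bigr|\Bigr).
\]
Indices in $[d]\setminus W$ or outside $[\DimRank]$ contribute nothing.

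\textbf{Maximizing assortment.} Consider the candidate $S^\circ:=W\cup\{d+1,\ldots,\DimRank\}$. Since $d-\DimRank=\bar K-K$, it has size $\bar K+(K-\bar K)=K$. It attains the largest possible count of ``rewarded'' indices, namely $K$. Hence
\[
\log(\Lambda^*/\Lambda)=\frac{\bar{x}\epsilon}{\sqrt{K}}\,K=:a^* .
\]
Here I take, as in the Case I construction, that the indices $1,\ldots,\DimRank$ correspond to available products, so $S^\circ$ is feasible; in any event $\Lambda^*$ is at least the rate of $S^\circ$, and that is all the argument uses.

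\textbf{Exponent of the played assortment.} For $S_t$, write
\[
a_t:=\log(\Lambda_t/\Lambda)=\btheta_W^\top\bx(S_t).
\]
Because $|S_t\cap\{d+1,\ldots,\DimRank\}|\le \DimRank-d=K-\bar K$, we get
\[
a_t\le \frac{\bar x\epsilon}{\sqrt K}\bigl(|S_t\cap W|+K-\bar K\bigr),
\qquad\text{so}\qquad
a^*-a_t\ge \frac{\bar x\epsilon}{\sqrt K}\bigl(\bar K-|S_t\cap W|\bigr)\ge 0 .
\]

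\textbf{From exponents to rates.} By convexity of the exponential, $e^{a^*}-e^{a_t}\ge e^{a_t}(a^*-a_t)$. Since $a_t\ge 0$ (all coordinates of $\btheta_W$ and $\bx$ are nonnegative), $e^{a_t}\ge 1$, so
\[
\Lambda^*-\Lambda_t=\Lambda\bigl(e^{a^*}-e^{a_t}\bigr)\ge \Lambda(a^*-a_t)\ge \frac{\Lambda\bar x\epsilon}{\sqrt K}\bigl(\bar K-|S_t\cap W|\bigr).
\]

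Every step is elementary. The only point needing care is the counting bound on $|S_t\cap\{d+1,\ldots,\DimRank\}|$ and the feasibility of $S^\circ$. The sign observation $a_t\ge0$ is what removes the exponential factor.
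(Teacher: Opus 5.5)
Your proof is correct and follows the paper's route: compute $\btheta_W^\top\bx(\cdot)$ for the optimal and the played assortment, then pass from exponents to rates via $e^{a}-e^{b}\ge a-b$ for $0\le b\le a$. It is cleaner than the paper's one-line argument, which writes $\Lambda^*=\Lambda\exp(\btheta_W^\top\bx(W))$ and asserts an equality in the last step; you instead use the actual maximizer $W\cup\{d+1,\ldots,\DimRank\}$, bound $|S_t\cap\{d+1,\ldots,\DimRank\}|\le K-\bar K$ to obtain an inequality, justify the exponential step through $a_t\ge 0$, and make explicit the tacit feasibility assumption that indices $1,\ldots,\DimRank$ correspond to products.
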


Next, we establish a lower bound on the cumulative regret. Define \(\widetilde{N}_i := \sum_{t=1}^{T} \mathbf{1}\{i \in S_t\}\). By \Cref{lm:lambda_lower_bound_tilde}, it follows that, $\forall\,W \in \mathcal{W}_{\bar K}$,
\begin{equation*}
    \begin{aligned} 
        &\mathcal{R}^{\pi}(T; 0, \btheta, \Feature) 
        \geq
        \mathbb{E}_W \sum_{t=1}^{T} 
        \Bigl(
            \Lambda^* - \Lambda_t
        \Bigr) r 
        \geq \mathbb{E}_W \sum_{t=1}^{T}   
      \frac{\Lambda \bar{x}\epsilon }{\sqrt{K}}
       \left(\bar{K}- \bigl|S_t \cap W\bigr|\right) r
        = \;          
      \frac{\Lambda \bar{x}\epsilon }{\sqrt{K}}
        \Bigl(\bar{K} T - \sum_{i \in W} \mathbb{E}_W[\widetilde{N}_i]\Bigr) r.
    \end{aligned}
\end{equation*}

Denote \(\mathcal{W}_{\bar{K}}^{(i)} := \{W \in \mathcal{W}_{\bar{K}} : i \in W\}\) and \(\mathcal{W}_{\bar{K}-1} := \{W \subseteq[\DimRank] : |W|= \bar{K}- 1\}\).
Let \(P_{\btheta}\) be the uniform prior on \(\{\btheta_W:\, W\in\mathcal W_{\bar K}\}\); that is, sample \(W\sim{\rm Unif}(\mathcal W_{\bar K})\) and set \(\btheta=\btheta_W\). Therefore,

\begin{equation*}
    \begin{aligned}
        \E_{\btheta \sim P_{\btheta} } 
        \Bigl[ 
            \mathcal{R}^{\pi}(T; 0, \btheta, \Feature) 
        \Bigr] 
        \geq &      
      \frac{\Lambda \bar{x}\epsilon }{\sqrt{K}}
        \biggl(
            \bar{K} T - 
                \frac{1}{|\mathcal{W}_{\bar{K}}|} 
                \sum_{W \in \mathcal{W}_{\bar{K}}}\sum_{i\in W}
                \mathbb{E}_W[\widetilde{N}_i]
        \biggr) r 
        =    
      \frac{\Lambda \bar{x}\epsilon }{\sqrt{K}}
        \biggl(
            \bar{K} T - 
                \frac{1}{|\mathcal{W}_{\bar{K}}|} 
                \sum_{i=1}^d \sum_{W \in \mathcal{W}_{\bar{K}}^{(i)}} \mathbb{E}_W[\widetilde{N}_i]
        \biggr) r \\
        = &           
      \frac{\Lambda \bar{x}\epsilon }{\sqrt{K}}
        \biggl(
            \bar{K} T -
                \frac{1}{|\mathcal{W}_{\bar{K}}|}
                \sum_{W \in \mathcal{W}_{\bar{K}-1}} \sum_{i \notin W} \mathbb{E}_{W \cup \{i\}}[\widetilde{N}_i]
        \biggr) r  \\
        \geq&         
      \frac{\Lambda \bar{x}\epsilon }{\sqrt{K}}
        \biggl(
            \bar{K} T -
                \frac{|\mathcal{W}_{\bar{K}-1}|}{|\mathcal{W}_{\bar{K}}|} 
                \max_{W \in \mathcal{W}_{\bar{K} - 1}} 
                \sum_{i \notin W} \mathbb{E}_{W \cup \{i\}}[\widetilde{N}_i] 
        \biggr)r\\
        =& 
      \frac{\Lambda \bar{x}\epsilon }{\sqrt{K}}
        \biggl(
            \bar{K} T  -
                \frac{|\mathcal{W}_{\bar{K}-1}|}{|\mathcal{W}_{\bar{K}}|} 
                \max_{W \in \mathcal{W}_{\bar{K}-1}} 
                \sum_{i \notin W}
                \left[
                \mathbb{E}_{W \cup \{i\}}[\widetilde{N}_i]
                - \mathbb{E}_W[\widetilde{N}_i] 
                + \mathbb{E}_W[\widetilde{N}_i] 
                \right]
        \biggr) r.
    \end{aligned}
\end{equation*}

Since \(\sum_{i \notin W} \mathbb{E}_W[\widetilde{N}_i] \leq \sum_{i=1}^\DimRank \mathbb{E}_W[\widetilde{N}_i] \leq \bar{K} T \) and \(\frac{|\mathcal{W}_{\bar{K}-1}|}{|\mathcal{W}_{\bar{K}}|} =\frac{\binom{\DimRank}{\bar{K} - 1}}{\binom{\DimRank}{\bar{K}}} =\frac{\bar{K}}{(\DimRank - \bar{K} + 1)} \leq 1/3,\) we finally get
\begin{equation*}
    \begin{aligned}
        &\E_{\btheta \sim P_{\btheta} } 
        \Bigl[ 
            \mathcal{R}^{\pi}(T; 0, \btheta, \Feature) 
        \Bigr]
        \geq 
      \frac{\Lambda \bar{x}\epsilon }{\sqrt{K}}
        \Bigl(
            \frac{2}{3}
            \bar{K} T - 
            \frac{\bar{K}}{d - \bar{K} + 1}
            \max_{W \in \mathcal{W}_{\bar{K}-1}}
            \sum_{i \notin W}
            \bigl|
                \mathbb{E}_{W \cup \{i\}}[\widetilde{N}_i] - \mathbb{E}_W[\widetilde{N}_i]
            \bigr|
        \Bigr) r.
    \end{aligned}
\end{equation*}

\paragraph{Pinsker's inequality.}
Finally, we focus on upper bounding 
\(\bigl|\mathbb{E}_{W \cup\{i\}}\bigl[\widetilde{N}_i\bigr]-\mathbb{E}_W\bigl[\widetilde{N}_i\bigr]\bigr|\) 
for any \(W \in \mathcal{W}_{\bar K-1}\):
\begin{equation*}
    \begin{aligned}
        \Bigl|
            \mathbb{E}_W
            \bigl[
                \widetilde{N}_i
            \bigr] 
            - 
            \mathbb{E}_{W \cup \{i\}}
            \bigl[
                \widetilde{N}_i
            \bigr]
        \Bigr| 
        \leq&
        \sum_{j=0}^{T} 
        j \,\Bigl|
            \mathbb{P}_W
            \bigl[
                \widetilde{N}_i=j
            \bigr]
            -
            \mathbb{P}_{W \cup \{i\}}
            \bigl[
                \widetilde{N}_i=j
            \bigr]
        \Bigr|
        \leq 
        T\cdot 
        \sum_{j=0}^{T}
        \Bigl|
            \mathbb{P}_W
            \bigl[
                \widetilde{N}_i=j
            \bigr]
            -
            \mathbb{P}_{W \cup \{i\}}
            \bigl[
                \widetilde{N}_i=j
            \bigr]
        \Bigr|\\
        \leq& 2T\cdot 
        \bigl\|
            \mathbb{P}_W - \mathbb{P}_{W \cup \{i\}}
        \bigr\|_{\mathrm{TV}} 
        \;\leq\;
        T\cdot 
        \sqrt{
            2 \mathrm{KL}(\mathbb{P}_W \,\|\, \mathbb{P}_{W \cup \{i\}})
        }.
    \end{aligned}
\end{equation*}
where \(\|\mathbb{P}_W - \mathbb{P}_{W \cup \{i\}}\|_{\mathrm{TV}}\) is the total variation distance and \(\mathrm{KL}(\mathbb{P}_W \| \mathbb{P}_{W \cup \{i\}})\) is the Kullback-Leibler divergence.

The following lemma bounds the KL divergence.
\begin{lemma}\label{lm:bound_kl_div_poi}
For any \(W \in \mathcal{W}_{\bar{K} - 1}\) and \(i \in[\DimRank ]\),
\[
\mathrm{KL}(\mathbb{P}_W \,\|\, \mathbb{P}_{W \cup \{i\}}) 
\;\leq\; \exp(2\bar{x}) \Lambda \,\cdot\, \frac{\bar x^2}{K}\, \mathbb{E}_W[\widetilde{N}_i] \,\cdot\, \epsilon^2.
\]
\end{lemma}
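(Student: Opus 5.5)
The plan is to reduce the KL divergence between the two laws of the full observation sequence to a sum of per-period Poisson KL divergences via the chain rule, and then bound each per-period term by a quadratic in $\epsilon$. In Instance \ref{inst:assort_dep_r} the only unknown-dependent observation in period $t$ is the arrival count $n_t\sim\mathrm{Poisson}(\Lambda\exp(\btheta^\top\bx(S_t)))$. Customer choices depend only on prices, with $\bv=0$, so they have the same conditional law under $\btheta_W$ and $\btheta_{W\cup\{i\}}$ and contribute nothing. Exactly as in the proofs of \Cref{lm:bound_kl_div,lm:bound_kl_div2}, the tower property will give
\[
\mathrm{KL}(\mathbb{P}_W\,\|\,\mathbb{P}_{W\cup\{i\}})=\mathbb{E}_W\Bigl[\sum_{t=1}^T \mathrm{KL}\bigl(\mathrm{Poi}(\Lambda_t^{W})\,\|\,\mathrm{Poi}(\Lambda_t^{W\cup\{i\}})\bigr)\Bigr],
\]
where $\Lambda_t^{U}=\Lambda\exp(\btheta_U^\top\bx(S_t))$.

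Next I will compute the per-period term. For Poisson laws, $\mathrm{KL}(\mathrm{Poi}(a)\|\mathrm{Poi}(b))=a\log(a/b)-a+b=a\,\phi(\log(b/a))$ with $\phi(u)=e^u-1-u$. The two parameters differ only in coordinate $i$, by $\epsilon$, so $\log(\Lambda_t^{W\cup\{i\}}/\Lambda_t^{W})=\epsilon[\bx(S_t)]_i$. This equals $\epsilon\bar x/\sqrt K$ if $i\in S_t$ and $0$ otherwise. Hence the KL is zero unless $i\in S_t$, which produces the factor $\mathbf 1\{i\in S_t\}$ and, after summing, $\mathbb{E}_W[\widetilde N_i]$.

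When $i\in S_t$, I will use $\phi(u)\le \tfrac{u^2}{2}e^{u}$ for $u\ge0$, with $u=\epsilon\bar x/\sqrt K$. This gives a KL of at most $\Lambda_t^W\,\tfrac{\epsilon^2\bar x^2}{2K}e^{u}$. The remaining step is to bound $\Lambda_t^W e^{u}=\Lambda_t^{W\cup\{i\}}\le \Lambda e^{\|\btheta\|_2\|\bx\|_2}\le\Lambda e^{\bar x}$, using $\|\btheta_U\|_2\le1$ and $\|\bx\|_2\le\bar x$. This yields a per-period bound of $\tfrac12\Lambda e^{\bar x}\tfrac{\bar x^2}{K}\epsilon^2$, which is at most the stated $e^{2\bar x}\Lambda\tfrac{\bar x^2}{K}\epsilon^2$. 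The looser constant $e^{2\bar x}$ leaves room if one prefers the cruder bound $a\log(a/b)-a+b\le (a-b)^2/b$, together with $|a-b|\le \Lambda e^{\bar x}u$ and $b\ge\Lambda e^{-\bar x}$.

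The only step needing care is justifying the chain-rule reduction: conditional on $H_t$, the pair $(S_t,\bp_t)$ is determined by the policy, which is common to both measures. The choice likelihood ratio is identically $1$ because $\bv=0$ under both parameters. Any policy randomization also has a likelihood ratio of $1$. With these three points, summing over $t$ finishes the proof.
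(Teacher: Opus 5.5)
Your proof is correct and follows the paper's route: the chain rule reduces the KL to $\mathbb{E}_W$ of a sum of per-period Poisson KL divergences, each of which vanishes unless $i\in S_t$ and is otherwise bounded quadratically in $u=\epsilon\bar x/\sqrt K$. The only difference is the per-period inequality. You use the exact form $a\,\phi(u)$ with $\phi(u)\le \tfrac{u^2}{2}e^{u}$ and $b\le\Lambda e^{\bar x}$, which gives the sharper constant $\tfrac12 e^{\bar x}$. The paper instead uses $\mathrm{KL}\le (a-b)^2/b$ together with $|a-b|\le \Lambda e^{\bar x}u$ and $b\ge \Lambda$.

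One small correction to your closing aside: that cruder route with $b\ge\Lambda e^{-\bar x}$ yields $e^{3\bar x}$, not $e^{2\bar x}$. To get $e^{2\bar x}$ you need $b\ge\Lambda$, which holds because $\btheta_U$ and $\bx(S_t)$ are entrywise nonnegative.
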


Using \Cref{lm:bound_kl_div_poi} and the bound above, we obtain
\[
\E_{\btheta \sim P_{\btheta} } 
\Bigl[ 
    \mathcal{R}^{\pi}(T; 0, \btheta, \Feature) 
\Bigr] 
\;\geq\;
      \frac{\Lambda \bar{x}\epsilon }{\sqrt{K}}
\Bigl(
    \frac{2\bar{K}T}{3}\;-\;
    \frac{\bar{K} T}{\DimRank  - \bar{K} + 1}\sum_{i=1}^\DimRank 
    \sqrt{ 2\exp(2\bar{x})\Lambda \,\frac{\bar x^2}{K}\, \mathbb{E}_W[\widetilde{N}_i]  \,\epsilon^2}
\Bigr) r.
\]
Applying the Cauchy-Schwarz inequality and since \(\sum_{i=1}^\DimRank  \mathbb{E}_W[\widetilde{N}_i]  \leq {K}\,T\),
\[
\sum_{i=1}^\DimRank 
\sqrt{
    2e^2 \Lambda \,\frac{\bar x^2}{K}\, \mathbb{E}_W[\widetilde{N}_i] \,\epsilon^2
    } 
\;\;\leq\;\;
\sqrt{ 2\exp(2\bar{x}) \Lambda }\,\frac{\bar x}{\sqrt K}\,
\sqrt{
    \DimRank \sum_{i=1}^\DimRank  \mathbb{E}_W[\widetilde{N}_i]
}\,\epsilon \leq\;
\sqrt{ 2\exp(2\bar{x}) \DimRank  \Lambda T}\,{\bar x}\,
\,\epsilon.
\]
Thus, we obtain
\begin{equation*}
\begin{aligned}
    \E_{\btheta \sim P_{\btheta} } \mathcal{R}^{\pi}(T; 0, \btheta, \Feature) 
    \geq &\;         
      \frac{\Lambda \bar{x}\epsilon }{\sqrt{K}}
    \Bigl(
        \frac{2\bar{K}T}{3}\;-\;
        \frac{\bar{K} T}{\DimRank  - \bar{K} + 1}\,
        \sqrt{
            2\exp(2\bar{x}) \bar{x}^2\DimRank  \Lambda T \,\epsilon^2
        }
    \Bigr) r.
\end{aligned}
\end{equation*}
\begin{itemize}
    \item If 
        $\Lambda T \ \ge\ \frac{2(\DimRank - K + 1)K}{9\,\exp(2\bar{x})\,\bar x^{2}}$,
setting 
    $\epsilon = \frac{\DimRank  - \bar{K} + 1}{3 \exp(\bar{x}) \bar{x}} \sqrt{\frac{2}{\DimRank \Lambda T}}$,
we have \(\epsilon \in \bigl(0,\,1/\sqrt{K}\bigr]\). Thus, we obtain

\begin{equation*}
\begin{aligned}
    \E_{\btheta \sim P_{\btheta} } \mathcal{R}^{\pi}(T; 0, \btheta, \Feature) 
    \geq &
    \frac{(\DimRank  - \bar{K} + 1)\bar{K} }{9\sqrt{2}\exp(\bar{x})\bar{x} \sqrt{\DimRank  K}} \sqrt{\Lambda T} r \geq 
    \frac{\sqrt{\DimRank }\bar{K} }{12\sqrt{2}\exp(\bar{x})\bar{x} \sqrt{K}} \sqrt{\Lambda T} r \\
    \geq & 
    \frac{1 }{12\sqrt{2}\exp(\bar{x})\bar{x} } \sqrt{\Lambda T} r \min\{\sqrt{\DimRank K}, \sqrt{\frac{4(\DimRank - K - 1)}{3K}} \frac{(\DimRank - K - 1)}{3}\} .
\end{aligned}
\end{equation*}
\item If 
        $\Lambda T \ <\ \frac{\DimRank - K + 1}{12\,\exp(2\bar{x})\,\bar x^{2}}\;K$,
setting $\epsilon = 1/\sqrt{K}$, we have 
\begin{equation*}
    \begin{aligned}
        & \E_{\btheta \sim P_{\btheta} } \mathcal{R}^{\pi}(T; 0, \btheta, \Feature) \geq  
        \frac{\Lambda \bar{x}\epsilon}{\sqrt{K}}
        \frac{\bar{K}T}{3}
        \geq 
        \frac{ \bar{x}}{K}
        \min\!\Bigl\{\Bigl\lfloor\frac{\DimRank - K + 1}{3}\Bigr\rfloor,\; K\Bigr\} 
        \frac{\Lambda T}{3} .
    \end{aligned}
\end{equation*}
\end{itemize}

Combining two cases, we have 
\begin{equation*}
    \begin{aligned}
        \E_{\btheta \sim P_{\btheta} } \mathcal{R}^{\pi}(T; 0, \btheta, \Feature) \geq  c_{9,6} \sqrt{\Lambda T}  
        \min\!\Bigl\{\Bigl\lfloor\frac{\DimRank - K + 1}{3}\Bigr\rfloor,\; K\Bigr\} ,
    \end{aligned}
\end{equation*}
for some positive constant $c_{9,3}$ only depends on $ K, \plow, \phigh, \bar{x}, \DimRank$.

Besides, if \(K \leq \dfrac{\DimFeature + 1}{4}\), we have \(d = \DimFeature, \bar{K} = K \) and
\begin{equation}
    \begin{aligned}
        \liminf_{T \to \infty} \dfrac{ \E_{\btheta \sim P_{\btheta} } \mathcal{R}^{\pi}(T; 0, \btheta, \Feature)}{\sqrt{\DimRank  \Lambda T}} \geq 
        \frac{K\sqrt{K}\plow\exp(-\phigh)}{12\sqrt{2}\exp(\bar{x})\bar{x}(1 + K\exp(-\plow))}
        := \LowerC{3}.
    \end{aligned}
\end{equation}
\begin{remark}\label{remark:c_9_3}
    When \(\plow = \Omega (\log K)\) and \(\phigh = \Omega (\log K)\), we have 
    \(\LowerC{3} = \Omega(\frac{\sqrt{K}\log K}{\exp(\bar{x}) \bar{x}})\). 

    When \(\plow = \Omega (1)\) and \(\phigh = \Omega (1)\), we have 
    \(\LowerC{3} = \Omega(\frac{\sqrt{K}}{\exp(\bar{x}) \bar{x}})\). 
    
\end{remark}

\subsubsection{Proof of Lemma \ref{lm:lambda_lower_bound_tilde}}
We have
\begin{equation*}
    \begin{aligned}
            \Lambda^* - \Lambda_t
            =  \Lambda (\exp( \btheta_W^\top \bx(W)  ) - \exp(\btheta_{W}^\top \bx(S_t)) )
            \geq   \Lambda ( \btheta_W^\top \bx(W)  
            - \btheta_{W}^\top \bx(S_t)) 
            = \frac{\Lambda \bar{x}\epsilon }{\sqrt{K}}
            \left(\bar{K}- \bigl|S_t \cap W\bigr|\right).
    \end{aligned}
\end{equation*}

\subsubsection{Proof of Lemma \ref{lm:bound_kl_div_poi}}
Note that for two Poisson distribution \(P\left(\lambda_1\right), P\left(\lambda_2\right)\), we have 
\begin{equation*}
    \begin{aligned}
        D_{\mathrm{KL}}\left(P\left(\lambda_1\right) \| P\left(\lambda_2\right)\right)
        =
        \lambda_1 \log \left(\frac{\lambda_1}{\lambda_2}\right)-\lambda_1+\lambda_2 
        \leq  \lambda_1 \frac{\lambda_1 - \lambda_2}{\lambda_2} -\lambda_1+\lambda_2 
        =\frac{(\lambda_1 - \lambda_2)^2}{\lambda_2} .
    \end{aligned}
\end{equation*}
Note that 
\begin{equation*}
    \begin{aligned}
        \Lambda
        \exp\left( 
            \btheta_{W \cup \{i\}}^\top \bx(S_t)) 
        \right) 
        - \Lambda 
        \exp\left(
            \btheta_W^\top \bx(S_t)
        \right)  
        \leq  & 
          \exp(\bar{x}) \Lambda \epsilon \frac{\bar x}{\sqrt K}
           \mathbf{1}\{i \in S_t\},\\
        \Lambda
        \exp\left( 
            \btheta_{W \cup \{i\}}^\top \bx(S_t) 
        \right) \geq & \Lambda.
    \end{aligned}
\end{equation*}
Substituting \(\lambda_1, \lambda_2\), we have 
\begin{equation*}
    \begin{aligned}
        \mathrm{KL}\!\left(
        \mathbb{P}_{W}(\cdot \mid S_t,\bp_t)
        \,\big\|\,
        \mathbb{P}_{W\cup\{i\}}(\cdot \mid S_t,\bp_t)
        \right)
        \;\leq\;&
        \frac{\left( \exp(\bar{x}) \Lambda \frac{\bar x}{\sqrt K}\epsilon 
         \mathbf{1}\{i \in S_t\}\right)^2}{\Lambda} 
         = 
        \exp(2\bar{x}) \Lambda \frac{\bar x^2}{ K}\epsilon^2   \mathbf{1}\{i \in S_t\}.
    \end{aligned}
\end{equation*}

Finally, summing over \(t=1,\ldots, T \) and applying the chain rule for KL divergence (together with the tower property) yields
\begin{equation*}
\begin{aligned}
\mathrm{KL}\!\bigl(\mathbb{P}_{W}\,\|\,\mathbb{P}_{W\cup\{i\}}\bigr)
&= \mathbb{E}_{W}\!\left[
\log \frac{\prod_{t=1}^{T}\mathbb{P}_{W}\!\bigl(\cdot\mid H_{t-1}\bigr)}
               {\prod_{t=1}^{T}\mathbb{P}_{W\cup\{i\}}\!\bigl(\cdot \mid H_{t-1}\bigr)}
\right] 
= \mathbb{E}_{W}\!\left[
\sum_{t=1}^{T}
\log \frac{\mathbb{P}_{W}\!\bigl(\cdot \mid H_{t-1}\bigr)}
          {\mathbb{P}_{W\cup\{i\}}\!\bigl(\cdot \mid H_{t-1}\bigr)}
\right] \\[4pt]
&= \sum_{t=1}^{T}
\mathbb{E}_{W}\!\left[
\log \frac{\mathbb{P}_{W}\!\bigl(\cdot \mid H_{t-1}\bigr)}
          {\mathbb{P}_{W\cup\{i\}}\!\bigl(\cdot \mid H_{t-1}\bigr)}
\right] 
= \sum_{t=1}^{T}
\mathbb{E}_{W}\!\left[
\mathrm{KL}\!\left(
\mathbb{P}_{W}(\cdot\mid H_{t-1})
\,\big\|\,
\mathbb{P}_{W\cup\{i\}}(\cdot\mid H_{t-1})
\right)
\right] \\[4pt]
&= \mathbb{E}_{W}\!\left[
\sum_{t=1}^{T}
\mathrm{KL}\!\left(
\mathbb{P}_{W}(\cdot \mid S_t,\bp_t)
\,\big\|\,
\mathbb{P}_{W\cup\{i\}}(\cdot \mid S_t,\bp_t)
\right)
\right] \\[4pt]
&\;\le\;
\mathbb{E}_W\!\left[\sum_{t=1}^{T}\ \exp(2\bar{x}) \Lambda \frac{\bar x^2}{ K}\epsilon^2 \mathbf{1}\{i \in S_t\}\right]
=
\exp(2\bar{x}) \Lambda \frac{\bar x^2}{ K}\epsilon^2\,\mathbb{E}_W[N_i].    
\end{aligned}
\end{equation*}
\subsection{Conclusion (aggregation over the three instances)}

From the three instances established above, we obtain the following three claims.

\begin{enumerate}
\item
If $\min\{\DimFeature-2,\,N\}\ge K$ and $\Lambda\ge 1$, then there exists a corresponding problem instance such that
\begin{equation*}
    \inf_{\pi}\sup_{\bv,\btheta,\bz}
        \mathcal{R}^{\pi}(T;\bv,\btheta,\bz)
    \ \ge\
     c_{9, 4} \min\!\left\{1,\ \sqrt{\frac{\bar v^{2}}{\DimFeature}}\right\} \sqrt{\Lambda T}.
\end{equation*}

\item
If $\Lambda\ge 1$ and $\log\!\frac{N-\DimFeature}{K}\ge 8\log 2-\frac{11}{4}\log 3$, then there exists a corresponding problem instance such that
\begin{equation*}
    \inf_{\pi}\sup_{\bv,\btheta,\bz}
        \mathcal{R}^{\pi}(T;\bv,\btheta,\bz)
    \ \ge\
     c_{9, 5}\sqrt{\log\!\left(\frac{N-\DimFeature}{K}\right)}
     \min\left\{\frac{\bar{v}}{\sqrt{\DimFeature}},\, 1\right\}\sqrt{\Lambda T}.
\end{equation*}

\item
If $\min\{\DimRank-2,\,N\}\ge K$ and $\Lambda\ge 1$, then there exists a corresponding problem instance such that
\begin{equation*}
    \inf_{\pi}\sup_{\bv,\btheta,\bz}
        \mathcal{R}^{\pi}(T;\bv,\btheta,\bz)
    \ \ge\
     c_{9,6}\min\!\Bigl\{\Bigl\lfloor\frac{\DimRank - K + 1}{3}\Bigr\rfloor,\; K\Bigr\}\sqrt{\Lambda T}.
\end{equation*}
\end{enumerate}

Besides, define the following events (conditions):
$\mathcal{E}_4 := \left\{\min\{\DimFeature-2,\,N\}\ge K,\ \Lambda\ge 1\right\},
\mathcal{E}_5 := \left\{\Lambda\ge 1,\ \log\!\frac{N-\DimFeature}{K}\ge 8\log 2-\frac{11}{4}\log 3\right\},
\mathcal{E}_6 := \left\{\min\{\DimRank-2,\,N\}\ge K,\ \Lambda\ge 1\right\}.$

Consequently, if at least one of $\mathcal{E}_4,\mathcal{E}_5,\mathcal{E}_6$ holds, then there exists a problem instance such that
\begin{equation*}
\inf_{\pi}\sup_{\bv,\btheta,\bz}
    \mathcal{R}^{\pi}(T;\bv,\btheta,\bz)
\ \ge\
c_9\,\sqrt{\Lambda T},
\end{equation*}
where we set
\begin{equation*}
\begin{aligned}
c_9
\;:=\;
\max\Bigg\{
\mathbf{1}\{\mathcal{E}_4\}\,c_{9,4}\min\!\left\{1,\ \sqrt{\bar v^{2}/\DimFeature}\right\},\ 
\mathbf{1}\{\mathcal{E}_5\}\,c_{9,5}\sqrt{\log\!\left(\frac{N-\DimFeature}{K}\right)}
\min\!\left\{\frac{\bar v}{\sqrt{\DimFeature}},\,1\right\},\ \\
\mathbf{1}\{\mathcal{E}_6\}\,c_{9,6}\min\!\Bigl\{\Bigl\lfloor\frac{\DimRank-K+1}{3}\Bigr\rfloor,\ K\Bigr\}
\Bigg\}.
\end{aligned}
\end{equation*}
By construction, $c_9>0$ depends only on $\DimFeature$, $\DimRank$, $K$, $\plow$, $\phigh$, and $\bar x$.



Besides, if we denote 
\begin{equation*}
    \begin{aligned}
        \mathcal{E}_1 = 
        \left\{K \leq \dfrac{\DimFeature + 1}{4}\right\},\quad 
        \mathcal{E}_2 = \left\{\DimFeature \leq \left\lfloor \frac{\log\!\big((N-\DimFeature)/K\big)-\tfrac14\log 3}{\,H(1/4)\,}\right\rfloor\right\}, \quad
        \mathcal{E}_3 = \left\{K \leq \dfrac{\DimFeature + 1}{4}\right\}.
    \end{aligned}
\end{equation*}

We have 
\begin{equation*}
\begin{aligned}
    &\liminf_{T\to \infty} \frac{\mathcal{R}^{\pi}(T;\bv,\btheta,\bz)}{\sqrt{\Lambda T}}\\ 
    =& \max\left\{
      \mathbf{1}\{\mathcal{E}_1\}\mathbf{1}\{\mathcal{E}_4\}   \frac{c_{10}\sqrt{\DimFeature K}}{120\sqrt{2c_{11}}},
      \mathbf{1}\{\mathcal{E}_2\}\mathbf{1}\{\mathcal{E}_5\}   \frac{c_{12}\DimFeature}{120\sqrt{2c_{13}}},
      \mathbf{1}\{\mathcal{E}_3\}\mathbf{1}\{\mathcal{E}_6\}     \frac{K\plow\exp(-\phigh)}{12\sqrt{2}\exp(\bar{x})\bar{x}(1 + K\exp(-\plow))}
    \right\}\\
    =&  \max\left\{
      \mathbf{1}\{\mathcal{E}_1\}\mathbf{1}\{\mathcal{E}_4\} \LowerC{1},
      \mathbf{1}\{\mathcal{E}_2\}\mathbf{1}\{\mathcal{E}_5\}  \LowerC{2},
      \mathbf{1}\{\mathcal{E}_3\}\mathbf{1}\{\mathcal{E}_6\}  \LowerC{3}
    \right\}.
\end{aligned}
\end{equation*}

\end{document}